\newtheorem{assumption}{Assumption}
\newenvironment{myexampleplain}{\begin{example}\normalfont}{$\square$\end{example}}
\newenvironment{myexample}[1]{\begin{example}[#1]\normalfont}{$\square$\end{example}}
\newcommand{\cA}{\ensuremath{\mathcal A}}
\newcommand{\cF}{\ensuremath{\mathcal F}}
\newcommand{\cU}{\ensuremath{\mathcal U}}
\newcommand{\cX}{\ensuremath{\mathcal X}}
\newcommand{\cY}{\ensuremath{\mathcal Y}}
\newcommand{\cZ}{\ensuremath{\mathcal Z}}
\newcommand{\reals}{{\mathbb R}}
\newcommand{\naturals}{{\mathbb N}}
\newcommand{\Exp}{\ensuremath{\text{\rm E}}}
\newcommand{\kl}{\mathrm{KL}}
\newcommand{\loss}{\ensuremath{\text{\sc loss}}}
\newcommand{\volume}{\ensuremath{V_{\text{p}}}}
\DeclareSymbolFont{Symbols}{OMS}{cmsy}{m}{n}
\DeclareMathSymbol{\Setminus}{\mathbin}{Symbols}{"6E}
\def\glossaryON{}
\newcommand{\nocomparator}{}
\newcommand{\glsaddcond}[1]{\glsadd{#1}}
\newcommand{\glsaddcond}[1]{#1}
\newcommand{\glsadd}[1]{#1}
\newcommand{\estim}[1]{{\hat{#1}}}
\DeclareMathOperator*{\pipes}{\|}
\DeclareMathOperator*{\convhull}{conv}
\DeclareMathOperator*{\opmin}{\wedge}
\DeclareMathOperator*{\opmax}{\vee}
\DeclareMathOperator*{\sign}{sign}
\DeclareMathOperator*{\argmin}{arg\,min}
\newcommand{\stochleq}{\leqclosed}
\DeclareRobustCommand{\qed}{%
  \ifmmode 
  \else \leavevmode\unskip\penalty9999 \hbox{}\nobreak\hfill
  \fi
  \quad\hbox{\qedsymbol}}
\newcommand{\qedsymbol}{\BlackBox}
\newenvironment{Proof}[1][\proofname]{\par
  \normalfont
  \topsep6\p@\@plus6\p@ \trivlist
  \item[\hskip\labelsep\bfseries
    #1]\ignorespaces
}{%
  \qed\endtrivlist
}
\newcommand{\proofname}{Proof}
\newcommand{\twopart}[1]{\ensuremath{\ddot{#1}_{\text{\sc 2-p}}}}
\newcommand{\hellp}[1]{{H}_{f,#1}}
\newcommand{\dhel}{{d}}
\newcommand{\ind}[1]{\mathop{{\bf 1}_{\{#1\}}}}
\newcommand{\nrenb}[1]{\ensuremath{D_{\eta \eta}}}
\newcommand{\Pt}{P} 
\newcommand{\E}{\operatorname{\mathbf{E}}}
\newcommand{\Expann}[1]{\Exp^{\textsc{ann}(#1)}}
\newcommand{\Exphel}[1]{\Exp^{\textsc{he}(#1)}}
\newcommand{\rv}[1]{\underline{#1}}
\newcommand{\funky}[1]{\rv{\tilde{\genaction}}}
\newcommand{\funkyc}[1]{\rv{\tilde{\lambda}}}
\newcommand{\fstar}{f^*}
\newcommand{\fopt}{\fstar}
\renewcommand{\loss}{\ensuremath{\ell}}
\newcommand{\mixloss}[2]{m^{#1}_{#2}}
\newcommand{\grip}[1]{\mixloss{#1}{\cF}}
\newcommand{\gripgen}[2]{\mixloss{#1}{#2}}
\newcommand{\gripbase}[1]{g^{#1}_{\cF}}
\newcommand{\gripbasegen}[2]{g^{#1}_{#2}}
\newcommand{\oldgriplosseta}{\loss_{g_\eta}}
\newcommand{\xslosslong}[1]{\loss_{#1} - \loss_{\fopt}}
\newcommand{\xslossatlong}[2]{\loss_{#1}(#2) - \loss_{\fopt}(#2)}
\newcommand{\xsloss}[1]{L_{#1}}
\newcommand{\xslossat}[2]{L_{#1}(#2)}
\newcommand{\xsrisk}[1]{\E [ \xsloss{#1} ]} 
\newcommand{\xsrisklong}[1]{\E [ \xslosslong{#1} ]}
\newcommand{\dol}{\ensuremath{\Pi}} 
\newcommand{\dolest}{\dol_|} 
\newcommand{\Prior}{\ensuremath{\Pi_0}} 
\newcommand{\postdens}{\ensuremath{\pi}} 
\newcommand{\KL}{\text{\sc KL}}
\newcommand{\Hell}{\text{\sc H}}
\newcommand{\genaction}{\ensuremath{\theta}}
\newcommand{\allactionset}{\bar{\mathcal{F}}}
\newcommand{\smtuple}{(\Pt,\loss,\model)}
\renewcommand\Exp{\ensuremath{\mathbf E}}
\newcommand{\midb}{|}
\newcommand{\model}{\mathcal{F}}
\newcommand{\bound}{\textsc{bound}}
\newcommand{\rsc}{\mathrm{IC}} 
\renewcommand{\volume}{\mathrm{vol}}
\newglossaryentry{general}{
  name={\textbf{General notation}},
  description={\nopostdesc},
  sort=A,
  nonumberlist
}
\newglossaryentry{Zn}
{
  name={$Z^n$},
  description={i.i.d. sample; \,\, $Z^n = (Z_1, Z_2, \ldots, Z_n) \sim P^n$},
  sort=A01
}
\newglossaryentry{P}
{
  name={$P$},
  description={Probability distribution over $\cZ$},
  sort=A02
}
\newglossaryentry{fhat}
{
  name={$\hat{f}$},
  description={Deterministic estimator or learning algorithm; \,\, $\hat{f} \equiv \hat{f}(Z^n)$},
  sort=A03
}
\newglossaryentry{learning-problem}
{
  name={$\smtuple$},
  description={Learning problem for distribution $P$, loss function $\loss$, and model $\cF$},
  sort=A04
}
\newglossaryentry{loss-of-f}
{
  name={$\loss_f$},
  description={Loss of hypothesis $f$; \,\, $\loss_f(z) \equiv \loss(f, z)$ and $\loss_f \equiv \loss_f(Z)$},
  sort=A05
}
\newglossaryentry{fopt}
{
  name={$\fopt$},
  description={Risk minimizer within $\cF$},
  sort=A07
}
\newglossaryentry{excess-loss}
{
  name={$\xsloss{f}$},
  description={Excess loss (w.r.t.~$\fopt$) of $f$; \,\, $\xslossat{f}{z} \equiv \xslossatlong{f}{z}$ and $\xsloss{f} \equiv \xslossat{f}{Z}$},
  sort=A08
}
\newglossaryentry{gen-bayes}
{
  name={$\Pi^B_n$ (and $\pi^B_n$)},
  description={$\eta$-generalized Bayesian posterior (and its density relative to $\Prior$)},
  sort=A10
}
\newglossaryentry{ESI}
{
  name={$\stochleq_{\eta}$},
  description={Exponential stochastic inequality (E.S.I.)},
  sort=A11
}
\newglossaryentry{dolest}
{
  name={$\dolest$},
  description={Randomized estimator or learning algorithm; \,\, $\dolest: \bigcup_{n=0}^\infty \cZ^n \rightarrow
\Delta(\model)$},
  sort=A12
}
\newglossaryentry{dol-n}
{
  name={$\dol_n$},
  description={Output of algorithm $\dolest$ based on sample $Z^n$; \,\, $\dol_n \equiv \dol \mid Z^n$},
  sort=A13
}
\newglossaryentry{prior}
{
  name={$\Prior$},
  description={Prior; \,\, $\dol_0 \equiv \dol \mid \{\}$},
  sort=A14
}
\newglossaryentry{mu}
{
  name={$\mu$},
  description={Common dominating measure for $\{p_f\}_{f \in \cF}$ in the case of log loss},
  sort=A15
}
\newglossaryentry{gen-two-part}
{
  name={$\twopart{f}$},
  description={$\eta$-generalized two-part MDL estimator for prior $\Prior$ at sample size $n$},
  sort=A18
}
\newglossaryentry{det-rand}
{
  name={$(\estim{f},\Prior)$},
  description={Deterministic estimator $\estim{f}$ viewed as randomized estimator},
  sort=A19
}
\newglossaryentry{hellinger-E}
{
  name={$\Exphel{\eta} \left[U \right] $},
  description={Hellinger-transformed expectation; \,\, $\Exphel{\eta} \left[U \right] 
= \frac{1}{\eta} \left(1- \E \left[e^{-\eta U} \right]\right)$},
  sort=A20
}
\newglossaryentry{annealed-E}
{
  name={$\Expann{\eta} \left[U \right]$},
  description={Annealed expectation; \,\, $\Expann{\eta} \left[U \right] = -\frac{1}{\eta} \log \E \left[e^{-\eta U} \right]$},
  sort=A21
}
\newglossaryentry{IC}
{
  name={$\rsc_{n,\eta}(\nocomparator \dolest)$},
  description={Information complexity},
  sort=A22
}
\newglossaryentry{entropified}
{
  name={$p_{f,\eta}$},
  description={entropified loss; \,\, $p_{f,\eta}(z) = p(z) \frac{\exp(- \eta \xsloss{f}(z))}{\E[\exp(-\eta \xsloss{f}(Z))]}$},
  sort=A23
}
\newglossaryentry{mm}
{
  name={$\dhel_{\bar\eta}(\cdot,\cdot)$},
  description={misspecification metric},
  sort=A24
}
\newglossaryentry{covering-number}
{
  name={$\mathcal{N}(\mathcal{A}, \|\cdot\|, \epsilon)$},
  description={$\varepsilon$-covering number of $(\mathcal{A}, \|\cdot\|)$},
  sort=A240
}
\newglossaryentry{divergences}{
  name={\textbf{Divergences}},
  description={\nopostdesc},
  sort=A241,
  nonumberlist
}
\newglossaryentry{standard-KL}
{
  name={$\kl(\cdot \pipes \cdot)$},
  description={Standard Kullback-Leibler divergence},
  sort=A245
}
\newglossaryentry{standard-hellinger}
{
  name={$\Hell_{1/2}(\cdot \pipes \cdot)$},
  description={standard (squared) Hellinger distance},
  sort=A25
}
\newglossaryentry{gen-hellinger}
{
  name={$\Hell_\eta(\cdot \pipes \cdot)$},
  description={$\eta$-generalized Hellinger divergence},
  sort=A26
}
\newglossaryentry{renyi}
{
  name={$D_{\alpha}(p \| q)$},
  description={R\'enyi divergence of order $\alpha$; \,\, $D_{\alpha}(p \| q) = \frac{1}{\alpha-1} \log  \int p^{\alpha} q^{1- \alpha} d \mu$},
  sort=A265
}
\newglossaryentry{pseudo}{
  name={\textbf{Pseudo-predictors}},
  description={\nopostdesc},
  sort=A27,
  nonumberlist
}
\newglossaryentry{allactionset}
{
  name={$\allactionset$},
  description={enlarged action space $\allactionset \supseteq \cF$ that also contains pseudo-predictors},
  sort=A28
}
\newglossaryentry{f-star-epsilon}
{
  name={$f^*_\epsilon$},
  description={pseudo-predictor, defined via its loss by $\ell_{f^*_{\epsilon}}(z) = \ell_{f^*}(z) - \epsilon$ for all $z \in \cZ$},
  sort=A29
}
\newglossaryentry{pseudoprobs}
{
  name={$\mathcal{E}_{\cF,\eta}$},
  description={set of pseudoprobability densities; \,\, $\mathcal{E}_{\cF,\eta} = \left\{ e^{-\eta \loss_f} : f \in \cF \right\}$ },
  sort=A30
}
\newglossaryentry{mix-pseudoprob}
{
  name={$\xi_Q$},
  description={mixture of pseudoprobability densities; \,\, $\xi_Q = \E_{\rv{f} \sim Q} [ e^{-\eta \loss_{\rv{f}}} ]$},
  sort=A31
}
\newglossaryentry{grip}
{
  name={$\grip{\eta}$ or $\oldgriplosseta$},
  description={GRIP; \,\, $\E [ \grip{\eta} ] 
= \inf_{Q \in \Delta(\cF)} \E \left[ -\frac{1}{\eta} \log \E_{\rv{f} \sim Q} \bigl[ e^{-\eta \loss_{\rv{f}}} \bigr] \right]$},
  sort=A32
}
\newglossaryentry{mixloss-Q}
{
  name={$\mixloss{\eta}{Q}$},
  description={mix loss for $Q \in \Delta(\cF)$; \,\, $\mixloss{\eta}{Q} = -\frac{1}{\eta} \log \E_{\rv{f} \sim Q} \bigl[ e^{- \eta \loss_{\rv{f}}} \bigr]$},
  sort=A33
}
\newglossaryentry{gripgen}
{
  name={$\gripgen{\eta}{A}$},
  description={generalized GRIP w.r.t.~$A \subseteq \allactionset$; \,\, $\displaystyle \E \bigl[ \gripgen{\eta}{A} \bigr] = \inf_{Q \in \Delta(A \cup \{\fopt\})} \E \bigl[ \mixloss{\eta}{Q} \bigr]$},
  sort=A34
}
\newglossaryentry{mini-grip}
{
  name={$\gripgen{\eta}{f}$},
  description={mini-grip w.r.t.~$f$; \,\, $\displaystyle \E [ \gripgen{\eta}{f} ] = \inf_{\alpha \in [0, 1]} \E \left[ -\frac{1}{\eta} \log \left( (1 - \alpha) e^{-\eta \loss_{\fopt}} + \alpha e^{-\eta \loss_f} \right) \right]$},
  sort=A36
}
\newglossaryentry{pseudo-actions-for-grips}
{
  name={$\gripbase{\eta}$ and $\gripbasegen{\eta}{f}$},
  description={pseudo-actions for GRIP losses $\grip{\eta}$ and $\gripgen{\eta}{f}$ respectively \qquad \qquad \qquad \qquad},
  sort=A37
}
\newglossaryentry{conditions}{
  name={\textbf{Conditions}},
  description={\nopostdesc},
  sort=C,
  nonumberlist
}
\newglossaryentry{bernstein}
{
  name={$(\beta, B)$-Bernstein},
  description={$\E [ \xsloss{f}^2 ] \leq B \left( \xsrisk{f} \right)^\beta$ \,\, for all $f \in \cF$},
  sort=CB1
}
\newglossaryentry{strong-central}
{
  name={strong $\bar{\eta}$-central},
  description={$\exists \fopt \in \cF$ s.t.~$\loss_{\fopt} - \loss_f \stochleq_{\bar\eta} 0$ \,\, for all $f \in \cF$
},
  sort=CC1
}
\newglossaryentry{central-up-to-eps}
{
  name={$\eta$-central up to $\varepsilon$},
  description={$\exists \fopt \in \cF$ s.t. $\loss_{\fopt} - \loss_f \stochleq_{\eta} \epsilon$ \,\, for all $f \in \cF$},
  sort=CC2
}
\newglossaryentry{v-central}
{
  name={$v$-central},
  description={for all $\varepsilon \geq 0$, $\exists \fopt \in \cF$ s.t. $\loss_{\fopt} - \loss_f \stochleq_{v(\varepsilon)} \epsilon$ \,\, for all $f \in \cF$},
  sort=CC3
}
\newglossaryentry{PPC-up-to-eps}
{
  name={$\eta$-PPC up to $\varepsilon$},
  description={$\exists \fopt \in \cF$ s.t. $\E_{Z \sim \Pt} \left[ \loss_{\fopt} - \grip{\eta}\right] \leq \epsilon$},
  sort=CC4
}
\newglossaryentry{v-PPC}
{
  name={$v$-PPC},
  description={for all $\varepsilon \geq 0$, $\exists \fopt \in \cF$ s.t. $\E_{Z \sim \Pt} \left[ \loss_{\fopt} - \grip{v(\varepsilon)}\right] \leq \epsilon$},
  sort=CC5
}
\newglossaryentry{u-c-witness}
{
  name={$(u,c)$-witness},
  description={$\E \left[ (\xslosslong{f}) \cdot \ind{\xslosslong{f} \leq u} \right] \geq c \xsrisklong{f}$ for all $f \in \cF$},
  sort=CW1
}
\newglossaryentry{tau-c-witness}
{
  name={$(\tau,c)$-witness},
  description={generalized version of $(u,c)$-witness condition (see Definition~\ref{def:witness})},
  sort=CW2
}
\newglossaryentry{advanced-witness}
{
  name={witness w.r.t.~$\phi$},
  description={$(u,c)$-witness condition with dynamic comparator (see Assumption~\ref{ass:adv-emp-witness-badness})},
  sort=CW3
}
\newglossaryentry{weak-advanced-witness}
{
  name={weak witness w.r.t.~$\phi$},
  description={weakened version of the previous condition (see Assumption~\ref{ass:adv-emp-witness-badness})},
  sort=CW4
}
\newglossaryentry{unif-exp-tail}
{
  name={unif.~exp.~upper tail},
  description={$U_f$ (for $f \in \cF$) has condition if $\exists \kappa \in (0, \infty)$ s.t.~$\sup_{f \in \cF} \E \left[ e^{\kappa U_f} \right] < \infty$},
  sort=CZ1
}
\newglossaryentry{small-ball}
{
  name={small-ball assumption},
  description={$\exists \kappa > 0$ and $\epsilon \in (0, 1)$ s.t.~$\forall f, h \in \cF$, \,\, $\Pr \left( |f - h| \geq \kappa \|f - h\|_{L_2(P)} \right) \geq \varepsilon$},
  sort=CZ2
}
\newglossaryentry{conv-lucky}
{
  name={convex luckiness},
  description={(for squared loss); \,\, $\argmin_{f \in \cF} \E [ \loss_f ] = \argmin_{f \in \convhull(\cF)} \E [ \loss_f ]$},
  sort=CZ3
}
\begin{document}
\hypersetup{colorlinks=true,citecolor=blue,linkcolor=blue} 
\title{Fast Rates for General Unbounded Loss Functions: \\ From ERM to Generalized Bayes}

\maketitle

\begin{abstract}
We present new excess risk bounds for
  general unbounded loss functions including log loss and squared
  loss, where the distribution of the losses may be heavy-tailed. The
  bounds hold for general estimators, but they are optimized when applied
  to $\eta$-generalized Bayesian, MDL, and empirical risk minimization estimators. In the case of log loss, the bounds imply convergence rates for generalized
  Bayesian inference under misspecification in terms of a
  generalization of the Hellinger metric as long as the learning rate $\eta$ is set correctly. For general loss functions, our
  bounds rely on two separate conditions: the $v$-GRIP (generalized reversed information projection) conditions, which control
  the lower tail of the excess loss; and the newly introduced
  witness condition, which controls the upper
  tail. The parameter $v$ in the $v$-GRIP conditions determines the
  achievable rate and is akin to the exponent in the Tsybakov margin
  condition and the Bernstein condition for bounded losses, which the
  $v$-GRIP conditions generalize; favorable $v$ in combination with
  small model complexity leads to $\tilde{O}(1/n)$ rates. The
  witness condition allows us to connect the excess risk to an
  ``annealed'' version thereof, by which we generalize several previous
  results connecting Hellinger and R\'enyi divergence to KL
  divergence.
\end{abstract}

\begin{keywords}
  statistical learning theory, fast rates, PAC-Bayes, misspecification, generalized Bayes.
\end{keywords}

\section{Introduction}
Much of statistical learning theory has operated under the restrictive
assumption that the loss suffered for any prediction falls into some
finite interval, which to say that the losses are bounded. In
addition, much of this theory for deterministic estimators and even
more so for randomized estimators only yields ``slow'' convergence
rates of the risk of the predictor to the minimum risk achievable via
the model in use; these are the best rates possible in the face of a
worst case distribution.  Faster rates of convergence are often
possible under various, practically-applicable conditions on the
learning problem, and showing such improvements is important as they
can translate to drastic reductions on the number of examples needed
to achieve a fixed level of error.  We provide a novel theory of
excess risk bounds for deterministic and randomized estimators in
settings with general unbounded loss functions which may have
heavy-tailed distributions --- important applications include
regression in situations with heavy-tailed noise and density
estimation with log loss without assuming boundedness of likelihood
ratios.  These bounds have implications for two different areas: in
statistical learning, they establish that with unbounded losses, under
weak conditions, one can obtain estimators with \emph{fast}
convergence rates of their risk --- such conditions previously were
only well understood in the bounded case (earlier work on
generalization bounds for unbounded loss functions such as
\citep{MeirZ03,CortesGM19} typically needs much stronger conditions to
obtain fast rates).  In density estimation under misspecification, the
new bounds imply convergence rates for $\eta$-generalized Bayesian
posteriors, in which the likelihood is raised to a power $\eta$ not
necessarily equal to $1$, under surprisingly weak conditions.
Finally, the bounds highlight the close similarity between
PAC-Bayesian and $\eta$-generalized Bayesian learning methods under
misspecification; these methods usually are studied within different
communities.  We now consider these applications in turn:
 
\paragraph{1. Statistical Learning} In Statistical Learning Theory
\citep{vapnik1995nature} the goal is to learn an action or predictor
$\hat{f}$ from some set of actions, or \emph{model}, $\cF$ based on
i.i.d.~data \glsadd{Zn} $Z^n \equiv Z_1, Z_2, \ldots, Z_n \sim P$,
where \glsadd{P} $P$ is an unknown probability distribution over a
sample space $\cZ$.  One hopes to learn an $\hat{f}$ with small risk,
i.e., expected loss $\E [\loss_{\hat{f}}(Z)]$, for some given loss
function $\loss$.  Here, $\E$ denotes expectation under $P$, and
\glsadd{fhat} $\hat{f} \equiv \hat{f}(Z^n)$ is a function from $\cZ^n$
to $\cF$ that represents a learning algorithm; a prototypical example
is empirical risk minimization (ERM). Thus, as is common, with some
abuse of notation a learning algorithm is really a function, i.e., we
do not insist it to be computable; and, in statistical contexts, we sometimes
refer to learning algorithms as {\em estimators}, simply because this
is common usage.  A \emph{learning problem} can thus be summarized as
a tuple \glsadd{learning-problem} $\smtuple$. Well-known special cases
include classification (with $\loss$ the 0-1 loss or some convex
surrogate thereof) and regression (with $\loss$ the squared loss).
As is customary (see e.g.~\citep{bartlett2005local}
and \citep{mendelson2014learning}), in most of our results we assume existence of an optimal
\glsadd{fopt} $\fopt \in \cF$ achieving $\E[\loss_{\fopt}(Z)] = \inf_{f \in \cF}
\E[\loss_f(Z)]$, and we define the excess loss of $f$ as
\glsadd{excess-loss} $\xsloss{f} = \loss_f - \loss_{\fopt}$. 

When the losses are almost surely bounded under $P$, there exists a well-established theory that gives optimal convergence rates of the excess risk $\xsrisk{\hat{f}}$ of estimator $\hat{f}$ in terms of sample size $n$. 
Broadly speaking, in the bounded case the optimal rate is
usually of order
\begin{align} \label{eq:convrate}
O\left( \left(\frac{\textsc{comp}_n}{n}\right)^\gamma \right) ,
\end{align} 
where $\textsc{comp}_n$ is a measure of model complexity such as
the Vapnik-Chervonenkis (VC) dimension or the log-cardinality of an optimally chosen $\epsilon$-net
over $\cF$, among others. For the models usually studied in statistics, such complexity measures are sublinear in $n$, and for ``simple'' models (often called parametric models, like those of finite VC dimension in classification) are finite or logarithmic in $n$. 
The exponent $\gamma$, which is in the range $[1/2, 1]$ in practically all cases of interest, reflects the \emph{easiness} of a learning problem by depending on both geometric and statistical properties of $\smtuple$. 
This exponent is equal to $1/2$ in the worst case but can be larger, allowing for faster rates, if the loss $\loss$ has sufficient curvature, e.g., if it is
\emph{exponentially concave (exp-concave)} or \emph{mixable} \citep{CesaBianchiL06}, or if
$\smtuple$ satisfies ``easiness'' conditions such as the \emph{Tsybakov margin condition} \citep{tsybakov2004optimal}, 
a \emph{Bernstein condition} 
\citep{audibert2004pac,bartlett2006empirical}, or \emph{(stochastic) exp-concavity} \citep{juditsky2008learning}. Because these conditions and the others on which this paper centers can allow for learning at faster rates, when any of the conditions hold a learning problem is intuitively easier. We thus call all such conditions \emph{easiness conditions} throughout this work. 
In this literature, one often calls (\ref{eq:convrate}) with $\gamma = 1/2$ the \emph{slow rate} and  \eqref{eq:convrate} with $\gamma =1$ the \emph{fast rate}.
We note, however, that the terminology ``fast rate'' is somewhat imprecise, as there are special cases for which rates even faster than $n^{-1}$ are possible \citep{audibert2007fast}. A more precise term may be ``optimistic rate'' (see \citep{mendelson2017aggregation} for a lucid discussion), as this is the rate obtainable in the optimistic situation where an easiness condition holds. We opt for ``fast'' primarily for historical reasons.

\Citet{erven2015fast} showed that, in the case when the excess losses are bounded\footnote{\Citet{erven2015fast} actually assume that the losses are bounded, but inspection of the results therein reveals that all that is needed is in fact bounded \emph{excess} losses.},
all the ``easiness'' conditions above are subsumed by what they term the
$v$-central condition, where $v$ is a function that effectively modulates $\gamma$.
While \Citet{erven2015fast} do show connections between such conditions for
unbounded excess losses as well, they left open the question of whether the conditions still imply fast rates in that case. Thus, the first main
target of the present paper is to extend this ``fast rate theory'' to the unbounded and heavy-tailed excess loss case. A main
consequence of our bounds is that under $v$-\emph{GRIP} conditions (``GRIP'' stands for \emph{generalized reversed information projection}), which consist of the $v$-central condition and a weakening thereof, 
and an additional \emph{witness} condition, the obtainable rates remain the same as in the bounded case.

\paragraph{2. Density Estimation under Misspecification} Letting $\cF$
index a set of probability densities $\{ p_f : f \in \cF \}$ and
setting the loss $\loss$ to the log loss, $\loss_f(z) = - \log
p_f(z)$, we find that the statistical learning problem becomes
equivalent to density estimation, the excess risk becomes equal to the
\emph{generalized Kullback-Leibler (KL) divergence}
\begin{align*}
D(\fopt \pipes \hat{f}) = \E _{Z \sim P}[\log (p_{\fopt}(Z)/p_{\hat{f}}(Z))] ,
\end{align*}
and ERM becomes maximum likelihood estimation. 
We call a model $\cF$ {\em well-specified\/} if it is correct, i.e., if  
$p_{\fopt}$ is the density of the true distribution $P$; in that case 
$D(\fopt \pipes \hat{f})$ becomes the standard KL divergence. 
In this setting, our results thus automatically become
convergence bounds of estimators $\hat{f}$ to the KL-optimal density
within $\cF$, where the convergence itself is in terms of KL
divergence rather than more usual, weaker metrics such as Hellinger
distance. Here, our results vastly generalize earlier results on KL
bounds which typically rely on strong conditions such as boundedness
of likelihood ratios or exponential tail conditions
\citep{birge1998minimum,yang1998asymptotic,wong1995probability,sason2016f};
in this work, the much weaker witness condition suffices.

We also provide bounds that are more similar to the standard
Hellinger-type bounds and that hold without the witness condition,
having a generalization of squared Hellinger distance (suitable for
misspecification) rather than KL divergence on the left.  Our bounds
also allow for estimators that output a distribution $\dol$ on $\cF$
rather than a single $\hat{f}$ and are particularly well-suited for
$\eta$-generalized Bayesian posteriors, in which the likelihood in the
prior-posterior update is raised to a power $\eta$; standard Bayes
corresponds to $\eta = 1$.  We thus can compare our rates to classical
results on Bayesian rates of convergence in the well-specified case,
such as in the influential paper \Citep*{GhosalGV00} (GGV from now on). 
In this case, we generally obtain rates comparable to those of {GGV}, but under weaker conditions, as long as we take $\eta$ (arbitrarily close to but) smaller than $1$, a fact already noted for $\eta$-generalized Bayes by \cite{zhang2006epsilon,martin2017empirical,WalkerH02}. 
In contrast to earlier work, however, our results remain valid in the misspecified case, although $\eta$ has to be adjusted there to get convergence at all; moreover, the rates obtained are with respect to a new ``misspecification metric'' and hence are not always comparable to those obtained in the well-specified case. 
The optimal $\eta$ depends on the ``best'' parameter $v$ for which a $v$-GRIP condition holds.  \cite{GrunwaldO17} give a simple example
  which shows that taking $\eta=1$ (standard Bayes) in regression
  under misspecification can lead to results that are dramatically
  worse than taking the right $\eta$, thus showing that our results do
  have practical implications.

\paragraph{3. $\boldsymbol{\eta}$-generalized Bayes and PAC-Bayes}
The $\eta$-generalized Bayesian posterior can be further generalized:
for general loss functions $\loss$, we can define ``posteriors''
$\dol_n^B$ with densities given by \glsadd{gen-bayes}
\begin{equation}\label{eq:bayes}
\frac{ d \dol_n^B}{d \Prior} (f) \equiv \postdens^B_n (f) 
\equiv \postdens^B(f \mid z_{1}, \ldots, z_{n})  :=  \frac
{\exp\left(-\eta \sum_{i=1 }^n \loss_f(z_i) \right)}
{\int_\model
  \exp\left(-\eta \sum_{i= 1 }^n \loss_h(z_i)\right) \cdot d\Prior(h)},
\end{equation}
for some ``prior'' distribution $\Prior$ on $\cF$. 
This idea goes back at least to \cite{vovk1990aggregating} and 
is central in the PAC-Bayesian approach to statistical
learning \citep{McAllester02}. Recently, it has also been embraced
within the Bayesian community \citep{bissiri2016general,miller2018robust}. Nevertheless, the communities studying frequentist convergence of
Bayesian methods under misspecification and PAC-Bayesian analysis are
still largely separate; yet, the present paper shows that the
approaches can be analyzed using the very same machinery and that it
is fruitful to do so. To wit, \emph{all} our results are based on
an existing lemma due to T. Zhang
(\citeyear{zhang2006information,zhang2006epsilon}) which provides
convergence bounds in terms of an ``annealed'' pseudo-excess risk for
general estimators; these bounds are optimized if one plugs in
$\eta$-generalized Bayesian estimators of the general form above. 
Zhang's bound is
itself based on earlier works in the information theory literature 
(in particular, the Minimum Description Length (MDL) literature) 
\citep{barron1991minimum,li1999estimation}) and the PAC-Bayesian
literature \citep{catoni2003pac,audibert2004pac}. Of course, the
technique also has some disadvantages, to which we return in the Discussion (Section~\ref{sec:related-work}).

\subsection{Overview and Main Insights of the Paper}
Section~\ref{sec:extended-intro} formalizes the
setting; Section~\ref{sec:related-work} discusses additional related work and potential future work and provides discussion. The paper ends with appendices containing all long proofs, technical details
concerning infinities, and some additional examples.
The main results are in Sections~\ref{sec:zhang}--\ref{sec:grip}:

\paragraph{Section~\ref{sec:zhang}: Zhang's Bound; Information Complexity}
In Section~\ref{sec:zhang}, for which we do not claim any novelty, we
present Lemma~\ref{lem:rascbound-simple}; this lemma is T.~Zhang's
(\citeyear{zhang2006information,zhang2006epsilon}) result that bounds 
a pseudo-excess risk of estimator $\hat{f}: \cZ^n \rightarrow \cF$ 
in terms of the \emph{information complexity} $\rsc_{n,\eta}$. 
A very simplified form of this lemma is
\begin{equation}\label{eq:protozhang}
\Expann{\eta}_{Z \sim \Pt} \left[ \xsloss{\hat{f}} \right] 
 \stochleq_{\eta \cdot n }  \rsc_{n,\eta},
\end{equation}
where the pseudo-excess risk $\Expann{\eta}_{Z \sim \Pt}$ is formally defined in \eqref{eq:genren} 
and $\stochleq$ indicates {\em exponential stochastic inequality\/}
(ESI), a useful notational tool which we define. ESI implies both
inequality in expectation and with high probability over the sample
$Z^n$ that determines $\hat{f} \equiv \hat{f}(Z^n)$; the subscript $\eta \cdot n$ is only relevant for the in-probability version (see Proposition~\ref{prop:drop}) and can be ignored for now.  The actual bound
(\ref{eq:mainrascbound-simple}) we provide in
Lemma~\ref{lem:rascbound-simple} generalizes (\ref{eq:protozhang}),
also allowing for estimators that output a distribution such as
generalized Bayesian posteriors as given by (\ref{eq:bayes}).
$\rsc_{n,\eta}$ is a notion of model complexity which, apart from $n$
and $\eta$, also depends (for now suppressed in the notation) on the
data $Z^n$, the choice of estimator $\hat{f}$ or $\dol_n$, and on a distribution $\Prior$ on $\cF$ which we may think
of as ``something like'' a prior: while the bound holds for any fixed $\Prior$, 
the estimator that {\em minimizes\/} $\rsc_{n,\eta}$ for given prior
$\Prior$ and data $Z^n$ is the corresponding $\eta$-generalized
Bayesian posterior $\dol^B_n$ given by (\ref{eq:bayes}).

For this choice of estimator, one can often design priors such that,
with high probability and in expectation, $\rsc_{n,\eta}$ for the
$\eta$-generalized Bayesian estimator can be upper bounded as
\begin{equation}\label{eq:optimalrates}
\rsc_{n,\eta} = \tilde{O} \left(\frac{\textsc{comp}_n}{\eta n}\right),
\end{equation}
for functions $\textsc{comp}_n$ that rely on the model $\cF$'s
complexity as indicated above (the $\tilde{O}$-notation suppresses
logarithmic factors). In Section~\ref{sec:zhang} we show that in the
application to well-specified density estimation, priors can always be
chosen such that the classical posterior contraction rates of GGV are
(essentially) recovered for any fixed $\eta > 0$, in the sense that
(\ref{eq:protozhang}) would imply the same rates if the left-hand side
were replaced by a squared Hellinger distance. For example, for
standard finite and parametric statistical models, we obtain for Bayesian estimators that $\textsc{comp}_n = \tilde{O}(1)$; for
the nonparametric statistical models considered by GGV, we obtain
$\textsc{comp}_n = \tilde{O}(n^{\alpha})$ for an $\alpha$ such that
(\ref{eq:optimalrates}) becomes the minimax optimal rate. Similar
bounds on $\rsc_{n,\eta}$ with general loss functions are given in
Section~\ref{sec:grip}. Henceforth, we use the term \emph{parametric} to refer to $\cF$ for which
generalized Bayes estimators give
$\textsc{comp}_n = O(\log n) = \tilde{O}(1)$.

We would thus get good convergence bounds if the left-hand side of
\eqref{eq:protozhang} were the actual excess risk, but instead it is
an ``annealed'' version thereof, always smaller than the actual excess
risk and sometimes even negative. All of our own results can be viewed
as establishing conditions under which the annealed excess risk can
either be related to the actual excess risk or otherwise to a
(generalized Hellinger) metric measuring ``distance'' between $\fopt$
and $f$ in some manner; this is done by modifying $\eta$. Both the
information complexity and its upper bound \eqref{eq:optimalrates} can
only increase as we decrease $\eta$
(Proposition~\ref{prop:transavia}); yet, for small enough $\eta$,
annealed convergence implies convergence in the {sense in which we are
  interested (either excess risk or generalized Hellinger distance)}
up to some constant factor (Sections~\ref{sec:strongcentral}
and~\ref{sec:witness}) and sometimes with an additional slack term
(Sections~\ref{sec:witness} and~\ref{sec:grip}). Thus, the optimal
$\eta$ is given by a tradeoff between information complexity and these
additional factors and terms.

Sections~\ref{sec:strongcentral}--\ref{sec:grip} each contain (a)
a condition enabling a link between annealed excess risk and the
divergence of interest in that section; (b) a new theoretical concept
underlying the condition, (c) convergence result(s) relating
information complexity to an actual metric or excess risk, and (d) example(s) that illustrate it.

\paragraph{Section~\ref{sec:strongcentral}: The Strong Central Condition and a New Metric; First Convergence Result}
The \emph{strong central condition} \Citep{erven2015fast} expresses that the lower tail of the
excess loss $L_f := \loss_{f} - \loss_{\fopt}$ is exponential,
i.e., $P(\loss_{\fopt} - \loss_f > A)$ is exponentially small in $A$. It
has a parameter $\bar\eta > 0$ that determines the precise bound that
can be obtained. While this may sound like a very strong condition,
due to the nature of the log loss it automatically holds for
density estimation with $\bar\eta =1$ if the model is well-specified
or convex. We show (Theorem~\ref{thm:metric}) that the
$\bar\eta$-strong central condition is sufficient for convergence in a
new ``misspecification'' metric $\dhel_{\bar\eta}$ (Definition~\ref{def:misspec-metric}) that generalizes the Hellinger distance: there exist estimators such
that for every $0 < \eta < \bar\eta$,
\begin{align*}
\dhel^2_{\bar\eta}(\fopt, \hat{f}) \
\stochleq_{\eta \cdot n } C_{\eta} \cdot \rsc_{n,\eta}, 
\end{align*}
where $C_{\eta}$ is a constant that tends to $\infty$ as
$\eta \uparrow \bar\eta$ and is bounded by $1$ if
$\eta \leq \bar\eta/2$.  For misspecified models, $\bar\eta$ can in
principle be either smaller or larger than $1$. This metric is mainly
of interest in the density estimation application of our work, and we
thus compare our results to those of GGV for well-specified density
estimation and illustrate them for the case of misspecified
generalized linear models (GLMs). Plugging in any fixed
$\eta < \bar\eta$ in \eqref{eq:optimalrates} and comparing to
\eqref{eq:convrate}, we see that under the strong central condition,
we can always achieve the fast rate, i.e., \eqref{eq:convrate} with $\gamma=1$.

\paragraph{Section~\ref{sec:witness}: The Witness Condition and a First Excess Risk Convergence Result}
Here we consider when, under the strong central condition, we can get bounds on the actual excess risk (or, in density estimation, on the generalized KL divergence). 
We provide a new concept, the \emph{empirical witness of badness condition}, or \emph{witness condition} for short, which provides control over the upper tail of the excess loss 
$L_f = \loss_{f} - \loss_{\fopt}$ 
(whereas the central condition concerns the lower tail). 
Essentially, the witness condition says that whenever $f \in \cF$ is worse than $\fopt$ in expectation, the probability that we witness this in our training example
should not be negligibly small. We thus rule out the case that $f$ has
extremely large loss with extremely small probability. This condition
turns out to be quite weak --- it can still hold if, for example, the
excess loss $\loss_{f} - \loss_{\fopt}$ is heavy-tailed 
(it suffices for the conditional second moment of the target to be uniformly bounded almost surely; 
see Example~\ref{ex:heavy-tailed-regression}). 
Thus we establish our first excess risk convergence result, Theorem~\ref{thm:firstriskbound}, which, in its simplest form, says that if both the central condition holds with parameter $\bar\eta$ and the witness condition holds, then for all $0 < \eta < \bar\eta$, 
\begin{equation}\label{eq:protofirstrisk}
\xsrisk{\hat{f}} \ \stochleq_{\eta \cdot n / a_\eta} \ a_\eta \cdot
 \rsc_{n,\eta},
\end{equation}
where $a_\eta$ is a constant that again tends to $\infty$ as $\eta \uparrow \bar\eta$. 
Once again, by combining \eqref{eq:protofirstrisk} and \eqref{eq:optimalrates}, we see that under a witness and $\bar\eta$-central condition, we can achieve the fast rate by taking $\gamma = 1$ in \eqref{eq:convrate}.

The witness condition vastly generalizes earlier conditions such as 
boundedness of likelihood ratios in density estimation 
\citep{birge1998minimum,yang1998asymptotic} and the exponential tail
condition of \cite{wong1995probability}. Moreover,
(\ref{eq:protofirstrisk}) (Theorem~\ref{thm:firstriskbound}) is based
on Lemma~\ref{lem:renyi-to-kl-simple}, which  generalizes
earlier results relating KL divergence to Hellinger and R\'enyi-type
divergences such as those of \cite{yang1999information},
\cite{haussler1997mutual}, \cite{birge1998minimum},
\cite{wong1995probability}, and \cite{sason2016f}.
We also discuss the  similarity between the witness condition and the
recently introduced \emph{small-ball assumption} of \cite{mendelson2014learning}.

\paragraph{Section~\ref{sec:grip}: Weaker Fast Rate Conditions; 
the GRIP}
The $\bar\eta$-central condition of Section~\ref{sec:strongcentral}
can be generalized to the $v$-central condition, where
$v: \reals^+ \rightarrow \reals^+$ is a non-decreasing function;
nonconstant $v(x)$ gives weaker conditions that still allow for fast
rates. \Citet{erven2015fast} showed that for the bounded excess loss
case, most existing easiness conditions can be shown to be equivalent to
either a $v$-central condition or to what they call a $v$-{PPC} 
\emph{(pseudo-probability-convexity)} condition. 
In one of their central results, they show these two seemingly different conditions to
be equivalent to one another, and also, if $v$ is of the form
$v(x) \asymp x^{1-\beta}$, (essentially) equivalent to a
$(B,\beta)$-\emph{Bernstein condition} \citep{audibert2004pac,bartlett2006empirical}. 
In this section we show that for unbounded excess losses, the $v$-central and $v$-PPC
conditions become quite different from each other (and also from the
Bernstein condition): the $v$-PPC condition allows for heavy-
(polynomial) tailed loss distributions, whereas the $v$-central
condition does not. 

We first present 
Theorem~\ref{thm:main-bounded-a}, an excess risk bound under the
$v$-central condition that is a relatively straightforward consequence
of Theorem~\ref{thm:firstriskbound}, our risk bound under the
$\bar\eta$-central condition. We then move to Theorem~\ref{thm:main-bounded-b}, a similar excess risk bound under the $v$-PPC condition. This theorem involves 
the \emph{GRIP}, the novel, fundamental concept of this section (Definition~\ref{def:grip}). 
GRIP stands for \emph{generalized reversed information projection} and generalizes the concept of reversed information projection introduced by \cite{li1999estimation}. 
The GRIP $\grip{\eta}$ is an $\eta$-dependent pseudo-predictor 
(it might achieve smaller risk than any $f$ for which $\loss_f$ is
defined). We show that, for each $\eta$, if
$\fopt$ is replaced by the GRIP $\grip{\eta}$, then the convergence
result (\ref{eq:protofirstrisk}) above holds.  We can interpret
the $v$-PPC condition as controlling the excess
risk of $\fopt$ over the GRIP $\grip{\eta}$  as a function of $\eta$: the
smaller $\eta$, the smaller this excess risk. This determines, for
each sample size, an optimal $\eta$ at which the bound \eqref{eq:protofirstrisk} 
and the excess risk of $f^*$ relative to $\grip{\eta}$ balance. Theorem~\ref{thm:main-bounded-a}   
establishes that whenever the
witness condition holds and a $v$-central condition holds, 
we have, for every $\epsilon > 0$, for $\eta < v(\epsilon)$,
\begin{equation}\label{eq:protosecondrisk}
\xsrisk{\hat{f}} \ \stochleq_{\eta \cdot n/a'_{\eta}} \  a'_{\eta}  \cdot
 \rsc_{n,\eta} + \epsilon;
\end{equation}
where again $a'_{\eta}$ is a
constant. Theorem~\ref{thm:main-bounded-b} shows that if a $v$-PPC
condition holds, the same result holds whenever
$\eta < v(\epsilon)/2$, but now only in expectation, for yet another
$a'_{\eta}$.  Thus, the optimal rate now depends on $v$; 
in particular, if $v(\epsilon) \propto \epsilon^{1- \beta}$, then we can optimize over
$\epsilon$ using upper bound (\ref{eq:optimalrates}) and find that, as
long as $\textsc{comp}_n$ is logarithmic in $n$ (as in parametric settings), by
setting $\eta$ at sample size $n$ equal to
$\eta \asymp n^{- (1-\beta)/(2-\beta)}$ we obtain the rate
\begin{equation}\label{eq:tuning}
\xsrisk{\hat{f}} = \tilde{O}\left(n^{-\frac{1}{2-\beta}} \right)
\end{equation}
which interpolates between the fast rate ((\ref{eq:convrate}) with
$\gamma = 1$) and the slow rate ($\gamma = 1/2$), where
$\gamma = 1/(2-\beta)$ depends on $\beta$.  Such calculations are
well-known for the bounded loss case, and our results establish that
the same story continues to hold for the unbounded excess loss case,
as long as a witness condition holds --- even for heavy-tailed losses.
While Theorems~\ref{thm:main-bounded-a} and~\ref{thm:main-bounded-b}
are applicable to the unbounded-loss-yet-bounded risk case (for which
$\sup_{f \in \cF} \E[\loss_{f}] < \infty$),
Theorem~\ref{thm:main-unbounded} extends this result to the unbounded
risk case, requiring a slight generalization of the witness condition.
Examples~\ref{ex:heavy-tailed-regression-II}~and~\ref{ex:smallballagain} illustrate our results by considering
regression with heavy-tailed losses, the latter example further linking
the aforementioned small-ball assumption to our generalized witness condition.
\paragraph{The Picture that Emerges}
Our results point to three separate factors that determine achievable
convergence rates for generalized Bayesian, two-part MDL, and empirical risk minimization (ERM) estimators, 
which often, but not always (see below) coincide with minimax rates:
\begin{enumerate} \item The \emph{information complexity}
  $\rsc_{n,\eta}$, which determines the ``richness'' of the model. It is
  data- and algorithm- dependent, but we can often bound it with high
  probability or even independently of the underlying $P$. 
  In addition, to see what rates can be achieved, we can plug in the ($\eta$-generalized Bayesian) learning algorithms that minimize it. 
\item The \emph{interaction between $P$, $\loss$, and $\cF$} that
  determines, for each $f \in \cF$, the distribution of the \emph{lower tail} of the excess loss $\xsloss{f}$. This interaction is sometimes called
  the \emph{easiness} of the problem \citep{koolen2016combining}; it
  determines the optimal $\eta$ at which a bound on $\eta$-information
  complexity implies a bound on the generalized Hellinger-type metric. 
  This is captured by our $v$-GRIP conditions, which generalize several existing easiness conditions.
\item The \emph{interaction between $P$, $\loss$, and $\cF$} that
  determines the distribution of the \emph{upper tail} of the excess
  loss. This interaction plays \emph{no} role for bounded excess losses and
  \emph{no} role for density estimation if one only cares about
  convergence in the weak misspecification metric. Yet for unbounded
  excess losses with the excess risk target (or density estimation with KL-type target), this interaction becomes crucial
  to take into account and is done so via the witness condition.
\end{enumerate}
In the Discussion (Section~\ref{sec:related-work}), Figure~\ref{fig:overview} summarizes how the various conditions hang together and are in some special cases (e.g.~squared loss) implied by existing, better-known easiness conditions imposed in other works.

\paragraph{What we do \emph{not} cover}
We stress at the outset that we do not cover everything there is to
know about the type of convergence bounds we prove. First of all, our
bounds are most useful for ERM, $\eta$-generalized Bayesian, and MDL
estimators, for a specific $\eta$ that depends on the learning problem
$\smtuple$ and often also on $n$. Thus to apply generalized Bayes/MDL
in practice, $\eta$ needs to be determined in some data-driven way; we
discuss various ways to do this in
Section~\ref{sec:related-work}. Note though that our bounds can be
directly used for ERM, which can be implemented without knowledge of
$\eta$.

We also leave untouched the fact that for parametric
models, Zhang's bounds lead to an unnecessary $\log n$-factor in the
convergence rates. Zhang
(\citeyear{zhang2006information,zhang2006epsilon}), following
\cite{catoni2003pac}, addresses this issue by a relatively
straightforward ``localized'' modification of his bound; since it
distracts from our main points (the witness and GRIP conditions, which lead
to polynomial gains in rate), we will simply ignore all 
logarithmic factors in this paper.

Third, the  new convergence rates for $\eta$-generalized Bayesian,
MDL, and ERM estimators that we establish are in some cases, but not always, minimax optimal. We do explicitly discuss for each example below whether the obtained rates are optimal and discuss exceptions, unknowns, and potential remedies in Section~\ref{sec:related-work}.

Finally, we only discuss \emph{proper} and \emph{randomized proper}
learning algorithms and estimators here. 
This means that our estimators either output an $\hat{f} \in \cF$ or, if they output a
distribution $\dol \mid Z^n$, it is always a distribution on $\cF$, and the
quality of this distribution is evaluated by the expected loss
incurred if one draws an $f$ randomly from $\dol \mid Z^n$. 
The terminology ``proper'' is from learning theory \citep{lee1996efficient}; 
in statistics such estimators are sometimes called ``in-model'' \citep{Grunwald07}. 
In learning theory, one often considers more general ``improper'' set-ups in which one can play an element of (say) $\convhull(\cF)$, the convex hull of $\cF$, which sometimes improves the obtainable rates. 
We briefly return to this issue in Example~\ref{ex:heavy-tailed-regression-II} and Section~\ref{sec:related-work}.

\clearpage
\label{sec:gloss}
\ifdefined\glossaryON
\setlength{\glsdescwidth}{0.67\hsize}
\setlength{\glspagelistwidth}{0.05\hsize}
\renewcommand*{\glossarypreamble}{\vspace{-\baselineskip}}
\renewcommand*{\pagelistname}{Page}
{ \small
\renewcommand*{\arraystretch}{1.2}
\printglossary[style=long3colheaderborder,title={}]
}
\fi

\glsadd{general}
\glsadd{divergences}
\glsadd{pseudo}
\glsaddcond{conditions}

\section{Setting, Technical Preliminaries, Global Assumptions}
\label{sec:extended-intro}

We now formally introduce the problem setting, cover some
preliminaries, and state the assumptions used throughout this work. A
glossary appearing on this page and the last one describes all
frequently used symbols and conditions.

Let \glsadd{loss-of-f} $\loss_f(z) := \loss(f,z) \in \reals \cup \{\infty \}$ denote the loss of action
$f \in \cF$ under outcome $z \in \cZ$.  In the classical
statistical learning problems of classification and regression with
i.i.d.~samples, we have $\cZ = \cX \times \cY$.  Classification (0-1 loss) is recovered by taking $\cY = \{0, 1\}$ and
$\loss_f(x, y) = |y - f(x)|$, and we obtain regression with squared
loss by taking $\cY = \reals$ and $\loss_f(x, y) = (y - f(x))^2$. In either case, the class $\cF$ is
some subset of the set of all functions $f: \cX \rightarrow \cY$, such as the set of decision trees of depth at most 5 for classification. 
Our setting also includes conditional density estimation (see Example~\ref{ex:cde}).
Unless we explicitly state otherwise, whenever we introduce a random
variable we assume it is a function of $Z, Z_1, \ldots, Z_n$ which are
i.i.d.~$\sim \Pt$. If we write $\loss_f$ we mean $\loss_f(Z)$. 

While in frequentist statistics one mostly considers learning algorithms 
(often called ``estimators'') that always output a single $f \in \model$, 
we also will consider algorithms that output {\em distributions\/} on $\model$. 
Such distributions can, but need not, be Bayesian or generalized Bayesian posteriors 
as described below. 
Formally, a learning algorithm based on a set of predictors $\model$ is a function 
\glsadd{dolest}
$\dolest: \bigcup_{n=0}^\infty \cZ^n \rightarrow
\Delta(\model)$, where $\Delta$ is the set of distributions on
$\model$. 
 The output of algorithm $\dolest$ based on sample $Z^n$ is written as
$\dol \mid Z^n$
and abbreviated to \glsadd{dol-n} $\dol_n$. $\dol_n$ is a function of $Z^n$ and
hence a random variable under $\Pt$. For fixed given $z^n$, $\dol \mid
z^n$ is a measure on $\cF$. 
Importantly, our learning algorithms are always defined
such that they can also output a distribution $\Prior$ based on an
empty data sequence; we may think of this as a ``prior'' guess of
$f$. We explain below how to recast standard estimators such as ERM,
for which $\Prior$ is undefined, in this framework. 
Whenever we consider a distribution $\dol$ on $\model$ for a
problem $\smtuple$, we denote its outcome, a random variable, as
$\rv{f}$.
Whenever we compare the performance of a learning algorithm
$\dolest$ to a fixed $\tilde{f} \in \cF$, we call
$\tilde{f}$ a {\em comparator}.
$\tilde{f}$ is called \emph{optimal} or {\em risk-minimizing\/} if $\E
[ \loss_f(Z) - \loss_{\tilde{f}}(Z) ] \geq 0$ for all $f \in
\model$; under the assumptions below, this expectation is always
well-defined. We usually (but not in Section~\ref{sec:grip} and the
proofs) take as our comparator $\tilde{f} = f^*$, where
$f^*$ is a risk minimizer. Whenever this cannot cause confusion, we
write $L_f = \ell_f -
\ell_{f^*}$ for the {\em excess loss} relative to $f^*$.
\paragraph{Assumptions on Learning Algorithms $\dolest$}
Whenever in the sequel we mention a learning algorithm $\dolest$, we make the following (very mild) assumptions: (1) for all $n$, $z^n \in \cZ^n$, $\dol_n$ has a density
$\postdens_n \equiv \postdens \mid z^n$ relative to the prior
distribution $\Prior$; (2) $\Prior$ satisfies the natural requirement that  for all $z \in \cZ$, $\Prior(f \in \cF: \loss_f(z) < \infty) > 0$. 
\paragraph{Assumptions on and Conventions for Learning Problems
  $\smtuple$}
All of our mathematical results concern learning problems $\smtuple$ for which we invariably make the following assumptions:
\begin{enumerate}\item Unless the loss function  $\loss$ is log-loss or conditional log-loss (see the example below), it is  is uniformly bounded from below in the sense that $\inf_{f \in \cF} \inf_{z \in \cZ} \loss_f(Z) > -\infty$.
\item For (conditional) log-loss, we assume for all $f \in
  \cF$ that
  $p_f$ is a probability density relative to some fixed common dominating
  measure \glsadd{mu} $\mu$, so that $P_f$, the distribution with density $p_f$, is absolutely continuous with respect to $\mu$; we also assume that
  $P$ itself is absolutely continuous with respect to
  $\mu$. Moreover, we additionally assume that
\begin{align}
\KL(P \pipes P_{\fopt}) < \infty \label{eqn:f-star-finite}
\end{align}
and, with $H(P)$ the differential entropy of $P$ relative to $\mu$,
\begin{align}
H(P) > -\infty \label{eqn:entropy-bounded-below} .
\end{align}
\item The learning problem is nontrivial in the sense that for some $f \in \cF$, $\Exp_{Z \sim \Pt}[\loss_{f}(Z)] < \infty$ (we require this irrespective of whether $\loss$ is log-loss). 
\item There exists an optimal $f
  \in \cF$. We fix any one among these (our results hold no matter which we take) and denote it by  $f^*$.
\end{enumerate}
Some of our results continue to hold without the final
  assumption; we shall in all cases say so explicitly. Since we invariably want to impose
  these assumptions, from now on learning problems
  $\smtuple$ are {\em defined\/} to be such that they satisfy these
  four assumptions, and we will not explicitly mention them any more.
  The assumptions, and all other issues concerning
  unboundedness and infinities, are discussed in detail in
  Appendix~\ref{app:infinity-new}. The requirement that the loss is
  bounded from below ensures that there are no issues involving
  undefined expectations or problems with interchanging order of
  expectations, as we show in Appendix~\ref{sec:infinities-ulb}.  It
  holds for just about all loss functions encountered in the
  literature, except for log-loss defined on continuous outcome
  spaces, where the log-loss can be unbounded both from above and
  below; in Appendix~\ref{sec:infinities-log-loss} we motivate the
  requirements we impose on log-loss and show that, while very mild,
  they are still sufficient to make all expectations well-defined.

\begin{myexample}{Conditional Density Estimation}
\label{ex:cde} Let $\cZ = \cX \times \cY$ and let $\{ p_f \mid f \in \model \}$ be a statistical model of conditional densities for $Y \mid X$, i.e., for each $x \in \cX$, $p_f(\cdot \mid x)$ is a probability density on $\cY$ relative to a fixed underlying measure $\mu$. 
Take (conditional) \emph{log loss}, defined on outcome $z = (x,y)$ as $\loss_f(x, y) = -\log p_f(y \mid x)$. 
The excess risk, now 
$\xsrisk{f} = \E_{Z \sim \Pt} \Bigl[ \log \frac{p_{\fopt}(Y \midb X)}{p_f(Y|X)} \Bigr]$, 
is formally equivalent to the \emph{generalized KL divergence}, 
as already defined in the original paper by \cite{kullback1951information} that
also introduced what is now the ``standard'' KL divergence. 
Assuming that $\Pt$ has a density $p$ relative to the underlying measure, and 
denoting standard KL divergence by \glsadd{standard-KL} $\kl$, we have 
$\kl(p \pipes p_f) = \E_{Z \sim \Pt} \left[ \log \frac{p(Y \midb X)}{p_f(Y|X)} \right]$, 
so that
$\xsrisk{f} = \kl(p \pipes p_f) - \kl(p \pipes p_{\fopt})$. 
Thus, minimizing the excess risk under log loss is equivalent to learning a distribution minimizing the KL divergence from $\Pt$ over $\{ p_f : f \in \model\}$. 
We have
$\inf_{f \in \model} \kl(p \pipes p_f) = \kl(p \pipes p_{\fopt}) = \epsilon \geq 0$.
If $\epsilon = 0$, we must have $p_{f^*} = p$, so we deal with a standard {\em well-specified\/} density estimation problem, i.e., the model $\{p_f \mid f \in \model\}$ is ``correct'' and $\fopt \in {\cal F}$ represents the true $\Pt$. 
If $\epsilon > 0$, we still have  $\inf_{f \in \model} \xsrisk{f} = 0$ 
and may view our problem as learning an $f$ that is
closest to $\fopt$ in generalized KL divergence.  
\end{myexample}
\paragraph{Generalized (PAC-) Bayesian, Two-Part, and ERM Estimators}
Although our main results hold for general estimators,
Proposition~\ref{prop:transavia} below indicates that they are
especially suited for generalized Bayesian, two-part MDL, or ERM estimators, 
since these minimize the bounds
provided by our theorems under various constraints. To define these
estimators, fix a distribution \glsadd{prior} $\Prior$ on $\model$,
henceforth called {\em prior}, and a {\em learning rate\/}
$\eta > 0$.  The $\eta$-{\em generalized Bayesian posterior\/} based
on prior $\Prior$, $\model$ and sample $z_{1}, \ldots, z_{n}$ is the distribution
$\dol^{B}_n$ on $f \in \model$, defined by (\ref{eq:bayes}). 
By our requirement that for all $z \in \cZ$, $\Prior(f \in \cF: \loss_f(z) < \infty) > 0$,  (\ref{eq:bayes}) is guaranteed to be well-defined.

Now, given a learning problem as defined above, fix a countable subset
$\ddot{\model}$ of $\model$, a distribution $\Prior$ concentrated on $\ddot{\model}$ and define the
$\eta$-\emph{generalized two-part MDL estimator for prior $\Prior$ at
  sample size $n$}
as
\glsadd{gen-two-part}
\begin{align}\label{eq:twopart}
\twopart{f} := \argmin_{f \in \ddot{\model}} \, \sum_{i=1}^{n} \loss_f(Z_i) 
+ \frac{1}{\eta} \cdot \left( - \log \Prior(\{f \}) \right), 
\end{align}
where, if the minimum is achieved by more than one
$f \in \ddot{\model}$, we take the smallest in the countable list, and
if the minimum is not achieved, we take the smallest $f$ in the list
that is within $1/n$ of the minimum.  Note that the $\eta$-two part
estimator is \emph{deterministic}: it concentrates on a single
function. 
ERM is recovered for finite $\cF$ by setting the prior $\Prior$ to be uniform over $\ddot{\model}$. 
We may view the $\eta$-two part estimator as a learning algorithm $\dolest$ in our sense by defining $\dol_0$ to be the prior on $\ddot{\cF}$ as above and, for each $n$, $\dol_n$ as the distribution that puts all of its mass at $\twopart{f}$ at sample size $n$. 
While we could denote this estimator as ${\Pi}_{|\textsc{2-p}}$, it will be convenient to write $(\twopart{f},\Prior)$ so as to also specify the prior. In the same way, general priors $\Prior$
combined with general deterministic estimators $\estim{f}$ defined for
samples of length $\geq 1$ may be viewed as learning algorithms
$\dolest$ which we will denote as \glsadd{det-rand} $(\estim{f},\Prior)$.

Finally, we formally define the ERM estimator as the
$f \in \cF$ that minimizes $\sum_{j=1}^n \loss_f(Z_j)$; whenever we
refer to ERM we will make sure that at least one such $f$ exists; ties
can then be broken in any way desired.  It is important to note that
ERM can be applied without knowledge of $\eta$; however, for general
two-part and Bayesian estimators we need to know $\eta$ --- we return
to this issue in Section~\ref{sec:related-work}.

\section{Annealed Risk, ESI, and Complexity}
\label{sec:zhang}
In this section we present Lemma~\ref{lem:rascbound-simple}, a
PAC-Bayesian style bound that underlies all our results to follow.
Remarkably, it holds without any regularity conditions. However, on
the left hand side it has an ``annealed'' version of the risk rather
than the actual risk. In Sections~\ref{sec:strongcentral},~\ref{sec:witness},~and~\ref{sec:grip}
we give conditions under which the annealed risk can be replaced by
either a Hellinger-type distance or the standard risk, which is what
we are really interested in. Lemma~\ref{lem:rascbound-simple} relates
the annealed risk to an information complexity via {\em exponential
  stochastic inequality\/} (ESI). We now introduce the technical
notions of annealed expectation and ESI. We then present
Lemma~\ref{lem:rascbound-simple} and discuss its right-hand side, the
information complexity.  We do not claim any novelty for the technical
results in this section --- the lemma below can be found in
\citep{zhang2006information,zhang2006epsilon}, for example. Still, we
need to treat these results in some detail to prepare the new results
in subsequent sections.
\subsection{Main Concepts: Annealed and Hellinger Risk, ESI}
For $\eta >0$ and general random variables $U$, we define, respectively, the {\em Hellinger-transformed expectation\/} and the  
\emph{annealed expectation} (terminology from statistical mechanics;
see e.g.~\citep{haussler1996rigorous}), also known as \emph{R\'enyi-transformed
  expectation} (terminology from information theory, see
e.g.~\Citep{erven2014renyi}) as \glsadd{hellinger-E} \glsadd{annealed-E}
\begin{align}\label{eq:genren}
\Exphel{\eta} \left[U \right] 
:= \frac{1}{\eta} \left(1- \E \left[e^{-\eta U} \right]\right) \ \ ; \ \ 
\Expann{\eta} \left[U \right] 
:= -\frac{1}{\eta} \log \E \left[e^{-\eta U} \right],
\end{align}
with $\log$ the natural logarithm. We will frequently use that for
$\eta > 0$,
\begin{align}\label{eq:genrenc}
\Exphel{\eta} \left[U \right]  \leq \Expann{\eta} \left[U \right]
\leq \E[U]
\end{align}
where the first inequality follows from $- \log x \geq 1-x$ and the
second from Jensen. We also note that if, for example, $U$ is bounded,
then the inequalities become equalities in the limit:
\begin{proposition} \label{prop:aff-div-results}
If $\E [ e^{-\eta X} ] < \infty$, we have $ \lim_{\eta
    \downarrow 0} \Exphel{\eta} [ X ] = \E [ X ]$ and we also have
  that $\eta \mapsto \Expann{\eta} [ X ]$ is non-increasing.
\end{proposition}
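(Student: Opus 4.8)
The plan is to handle the two claims separately, both by way of elementary facts about the deterministic function $g_\eta(x) := \eta^{-1}\bigl(1 - e^{-\eta x}\bigr)$, for which $\Exphel{\eta}[X] = \E[g_\eta(X)]$. For the limit statement I would first record two pointwise properties of $g_\eta$: (i) $g_\eta(x) \to x$ as $\eta \downarrow 0$ for every $x \in \reals$; and (ii) for fixed $x$, the map $\eta \mapsto g_\eta(x)$ is non-increasing on $(0,\infty)$. Property (ii) is a one-line computation: $\partial_\eta g_\eta(x)$ has the same sign as $e^{-\eta x}(1+\eta x) - 1$, and since $t \mapsto e^{-t}(1+t)$ attains its maximum value $1$ at $t=0$, this sign is nonpositive for every real $\eta x$. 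Combining (i) and (ii), $g_\eta(X) \uparrow X$ pointwise (in the sample point) as $\eta \downarrow 0$.

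Next I would use the hypothesis to supply an integrable minorant and then invoke monotone convergence. Fixing $\eta_0 > 0$ with $\E[e^{-\eta_0 X}] < \infty$, property (ii) gives, for every $\eta \le \eta_0$, that $g_\eta(X) \ge g_{\eta_0}(X) \ge -\eta_0^{-1} e^{-\eta_0 X} =: -M$, where $M \ge 0$ is integrable. Then $g_\eta(X) + M \ge 0$ and increases to $X + M$ as $\eta \downarrow 0$, so the monotone convergence theorem yields $\E[g_\eta(X) + M] \to \E[X + M]$ in $[0,\infty]$; subtracting the finite constant $\E[M]$ gives $\Exphel{\eta}[X] \to \E[X]$, with both sides permitted to equal $+\infty$ (note $X \ge -M$, so $\E[X]$ is well defined).

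For the monotonicity of $\eta \mapsto \Expann{\eta}[X]$ I would set $\Lambda(\eta) := \log \E[e^{-\eta X}]$, so that $\Expann{\eta}[X] = -\Lambda(\eta)/\eta$. The function $\Lambda$ is convex on the interval where it is finite: applying H\"older's inequality to $e^{-(\lambda\eta_1 + (1-\lambda)\eta_2)X} = (e^{-\eta_1 X})^{\lambda}(e^{-\eta_2 X})^{1-\lambda}$ with conjugate exponents $1/\lambda$ and $1/(1-\lambda)$ gives $\Lambda(\lambda\eta_1 + (1-\lambda)\eta_2) \le \lambda\Lambda(\eta_1) + (1-\lambda)\Lambda(\eta_2)$. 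Since also $\Lambda(0) = 0$, convexity forces the chord slope from the origin, $\eta \mapsto \Lambda(\eta)/\eta = (\Lambda(\eta)-\Lambda(0))/\eta$, to be non-decreasing on $(0,\infty)$; hence $\Expann{\eta}[X] = -\Lambda(\eta)/\eta$ is non-increasing.

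The only genuine obstacle I anticipate is the exchange of limit and expectation in the first claim: the excess loss $X$ may be unbounded both above and below, so ordinary dominated convergence is not available. Monotonicity property (ii) is exactly what converts this into a clean monotone-convergence argument, and the hypothesis $\E[e^{-\eta X}] < \infty$ enters only to certify that the minorant $M$ is integrable. The second claim is routine once the convexity of the cumulant-type function $\Lambda$ is recorded.
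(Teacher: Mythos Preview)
Your proof is correct and essentially parallels the paper's argument. For the limit, the paper also exploits the pointwise monotonicity of $\eta \mapsto \eta^{-1}(1-e^{-\eta x})$ and applies monotone convergence; it subtracts off $g_{\bar\eta}(X)$ to obtain a nonnegative increasing sequence, whereas you add the minorant $\eta_0^{-1}e^{-\eta_0 X}$ --- the same idea with a slightly different bookkeeping choice. For the monotonicity of $\Expann{\eta}[X]$, the paper argues directly via Jensen with the concave map $x \mapsto x^{\eta'/\eta}$; your chord-slope argument from convexity of $\Lambda(\eta)=\log\E[e^{-\eta X}]$ (via H\"older) is an equivalent reformulation of the same inequality.
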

All our results below may be expressed succinctly via the notion of
\mbox{\emph{exponential stochastic inequality}}.
\begin{definition}[Exponential Stochastic Inequality (ESI)] 
\label{def:esi} Let  $\eta > 0$ and let $U, U'$ be random variables on some probability space with probability measure $P$. We define \glsadd{ESI}
\begin{align}\label{eq:esi}
U \stochleq_{\eta} \ \ U'  \ \ \Leftrightarrow \ \ {\bf E}_{U, U' \sim P} \left[e^{\eta (U- U')} \right] \leq 1 .
\end{align}
\end{definition}
In all our applications of this notation, $P$ is the distribution appearing in a given
learning problem $\smtuple$ that will be clear from the context; hence,
we omit it in the ESI notation.
An ESI simultaneously captures ``with (very) high probability'' and ``in expectation'' results.
\begin{proposition}[ESI Implications] \label{prop:drop} 
For all $\eta > 0$, if $U \stochleq_{\eta} U'$ then, 
(i), $\E [ U ] \leq \E [ U' ]$; 
and, (ii), for all $K >0$, with $P$-probability at least $1- e^{-K}$, 
$U \leq U' + K/\eta$ (or equivalently, for all $\delta \geq 0$, with probability at least $1- \delta$, $U \leq U'+ \eta^{-1} \cdot \log (1/\delta)$).
\end{proposition}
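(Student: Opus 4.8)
The plan is to derive both parts directly from the defining inequality of the ESI, namely $\E\bigl[e^{\eta(U-U')}\bigr] \le 1$, using two standard tools: Jensen's inequality for part (i), and Markov's inequality for part (ii). No regularity conditions beyond those already standing in the paper are needed.

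For part (i), I would apply Jensen's inequality to the convex function $x \mapsto e^{\eta x}$, which gives $e^{\eta\, \E[U-U']} \le \E\bigl[e^{\eta(U-U')}\bigr] \le 1 = e^{0}$. Since $t \mapsto e^{\eta t}$ is strictly increasing (as $\eta > 0$), monotonicity yields $\eta\, \E[U-U'] \le 0$, and dividing by $\eta > 0$ gives $\E[U-U'] \le 0$, i.e.\ $\E[U] \le \E[U']$. Under the standing regularity assumptions the relevant expectations are well-defined, so nothing further is required; if one wishes to be fully explicit, note that $\E\bigl[e^{\eta(U-U')}\bigr] \le 1 < \infty$ makes $e^{\eta(U-U')}$ integrable, and one then invokes the form of Jensen's inequality valid whenever $\E[U-U']$ exists in $[-\infty,\infty]$.

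For part (ii), fix $K > 0$ and apply Markov's inequality to the nonnegative random variable $e^{\eta(U-U')}$:
\[
P\bigl(U - U' \ge K/\eta\bigr) = P\bigl(e^{\eta(U-U')} \ge e^{K}\bigr) \le e^{-K}\,\E\bigl[e^{\eta(U-U')}\bigr] \le e^{-K}.
\]
Taking complements, with $P$-probability at least $1 - e^{-K}$ we have $U - U' < K/\eta$, hence in particular $U \le U' + K/\eta$. The equivalent reformulation then follows by the substitution $\delta = e^{-K}$: for $\delta \in (0,1]$ we have $K = \log(1/\delta)$ and $K/\eta = \eta^{-1}\log(1/\delta)$, while for $\delta > 1$ the statement is vacuous since then $1 - \delta < 0$.

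There is essentially no obstacle here: the proposition is an immediate consequence of the definition of $U \stochleq_{\eta} U'$ together with Jensen and Markov. The only point deserving a moment's attention is the measure-theoretic well-definedness of the expectations in (i), which is guaranteed by the conventions adopted earlier (and discussed in Appendix~\ref{app:infinity}); beyond that, the argument is entirely routine.
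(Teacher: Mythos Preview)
Your proof is correct and follows exactly the approach in the paper: Jensen's inequality for (i) and Markov's inequality applied to $e^{\eta(U-U')}$ for (ii). The paper's own proof is a terse two-liner invoking precisely these two tools, so there is nothing to add.
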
 
\begin{Proof} 
  Jensen's inequality yields (i). 
  Apply Markov's inequality to $e^{-\eta (U - U')}$ for (ii).
\end{Proof}
The following proposition will be extremely convenient for our proofs: 
\begin{proposition}[Weak Transitivity] \label{prop:esi-transitive} 
  Let $(U, V)$ be a pair of random variables with joint distribution $P$. 
  For all $\eta > 0$ and $a, b \in \reals$, 
  if $U \stochleq_\eta a$ and $V \stochleq_\eta b$, 
  then $U + V \stochleq_{\eta / 2} a + b$.
\end{proposition}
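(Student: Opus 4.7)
The plan is to unpack the ESI definition on both sides and to reduce the claim to a single application of the Cauchy--Schwarz inequality. Recall that $U \stochleq_\eta a$ means $\Exp[e^{\eta(U-a)}] \leq 1$ and analogously for $V \stochleq_\eta b$. What we want to establish is $\Exp[e^{(\eta/2)(U+V - a - b)}] \leq 1$.

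The key observation is that the exponent factors nicely:
\begin{equation*}
\tfrac{\eta}{2}(U + V - a - b) \;=\; \tfrac{\eta}{2}(U-a) + \tfrac{\eta}{2}(V-b),
\end{equation*}
so the exponential is a product $e^{(\eta/2)(U-a)} \cdot e^{(\eta/2)(V-b)}$. I would then apply Cauchy--Schwarz to bound its expectation by
\begin{equation*}
\sqrt{\Exp[e^{\eta(U-a)}]} \cdot \sqrt{\Exp[e^{\eta(V-b)}]} \;\leq\; \sqrt{1} \cdot \sqrt{1} \;=\; 1,
\end{equation*}
where the second inequality invokes the two hypotheses.

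There is essentially no obstacle here; the only subtle point worth a sentence of care is that Cauchy--Schwarz is legitimate without integrability worries because each factor is nonnegative (so the expectations lie in $[0,\infty]$, and the hypotheses already guarantee they are finite, in fact at most $1$). The factor of $1/2$ in the conclusion is precisely the price paid by Cauchy--Schwarz for splitting a product expectation into a product of $L^2$ norms, which is why weak transitivity halves the rate $\eta$ rather than preserving it. This also makes clear why the statement is ``weak'': if $U$ and $V$ were conditionally uncorrelated in the appropriate exponential sense, one could avoid the factor of $2$, but no such assumption is available here.
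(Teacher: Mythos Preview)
Your argument is correct. It differs from the paper's proof, which uses Jensen's inequality (convexity of $x \mapsto e^x$) to bound
\[
\Exp\!\left[e^{\frac{\eta}{2}((U-a)+(V-b))}\right] \;\leq\; \tfrac{1}{2}\,\Exp\!\left[e^{\eta(U-a)}\right] + \tfrac{1}{2}\,\Exp\!\left[e^{\eta(V-b)}\right] \;\leq\; 1,
\]
i.e.\ an arithmetic-mean bound rather than your Cauchy--Schwarz (geometric-mean) bound. Both are one-line applications of a standard inequality and both explain the factor $1/2$ in the rate; your intermediate bound $\sqrt{\Exp[e^{\eta(U-a)}]}\sqrt{\Exp[e^{\eta(V-b)}]}$ is in fact never larger than the paper's $\tfrac12\bigl(\Exp[e^{\eta(U-a)}]+\Exp[e^{\eta(V-b)}]\bigr)$, so if anything your route is slightly sharper as an inequality, though the conclusion is the same.
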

\begin{Proof}
From Jensen's inequality:
$\E [ e^{\frac{\eta}{2} ((U - a) + (V - b))} ] 
\leq \frac{1}{2} \E [ e^{\eta (U - a)} ] + \frac{1}{2} \E [ e^{\eta (V - b)} ].$
\end{Proof}

\subsection{PAC-Bayesian Style Inequality}
\label{sec:subzhang}
All our results are based on the following lemma due to \cite{zhang2006information}:
\begin{lemma}\label{lem:rascbound-simple} 
  Let $\smtuple$ represent a learning problem
  with $L_f$ the excess loss relative to an optimal $f^*$.
  Let $\dolest$ be a learning algorithm (defining a ``prior'' $\Prior$) for this learning problem that outputs distributions on $\cF$. 
For all $\eta > 0$, $n \in \naturals$, we have:  
\begin{align}\label{eq:mainrascbound-simple}
\E_{\rv{f} \sim \dol_n} \left[ \Expann{\eta}_{Z \sim \Pt} \left[ \xsloss{\rv{f}} \right] \right]
 \stochleq_{\eta \cdot n }  \rsc_{n,\eta} \left( \nocomparator  \dolest \right) .
\end{align}
where $\rsc_{n,\eta}$ is the information complexity, defined as: \glsadd{IC}
\begin{align}\label{eqn:rsc} 
\rsc_{n,\eta}(\nocomparator \dolest) 
& := \Exp_{\rv{f} \sim \dol_n} \left[ \frac{1}{n} \sum_{i=1}^n \xslossat{\rv{f}}{Z_i} \right]
        + \frac{\KL( \dol_n \pipes \Prior)}{\eta \cdot n  } .
\end{align}
\end{lemma}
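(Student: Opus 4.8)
The plan is to recognize \eqref{eq:mainrascbound-simple} as the standard PAC-Bayesian change-of-measure bound (essentially Zhang's argument) and to reduce it to the Donsker--Varadhan variational inequality followed by a single Tonelli exchange. First I would strip the ESI down to its definition. Writing $M_n(f) := \exp\bigl(-\eta\sum_{i=1}^n \xslossat{f}{Z_i}\bigr)$ and $\phi(f) := \E_{Z\sim\Pt}\bigl[e^{-\eta \xsloss{f}(Z)}\bigr]$, so that $\Expann{\eta}_{Z\sim\Pt}[\xsloss{f}] = -\tfrac{1}{\eta}\log\phi(f)$ by \eqref{eq:genren}, an elementary rearrangement (using $\log M_n(f) = -\eta\sum_i \xslossat{f}{Z_i}$ and $-n\log\phi(f) = -\log\phi(f)^n$) shows that $\eta n$ times the difference of the two sides of \eqref{eq:mainrascbound-simple} equals
\[
\E_{\rv{f}\sim\dol_n}\!\left[\log\frac{M_n(\rv{f})}{\phi(\rv{f})^n}\right] \;-\; \KL(\dol_n\pipes\Prior).
\]
The one thing to keep straight here is that $\dol_n = \dol \mid Z^n$ is itself a measurable function of the sample, so I would fix $Z^n = z^n$ for the moment and treat $\dol_n$ as a fixed distribution on $\cF$ and $h(f) := \log\bigl(M_n(f)/\phi(f)^n\bigr)$ as a fixed measurable function of $f$.

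Second, for each fixed $z^n$ I would invoke the Donsker--Varadhan / Gibbs variational inequality: for any distribution $\dol_n$ on $\cF$ absolutely continuous with respect to $\Prior$ and any measurable $h$,
\[
\E_{\rv{f}\sim\dol_n}[h(\rv{f})] - \KL(\dol_n\pipes\Prior) \;\le\; \log\E_{\rv{f}\sim\Prior}\!\bigl[e^{h(\rv{f})}\bigr] \;=\; \log\E_{\rv{f}\sim\Prior}\!\left[\frac{M_n(\rv{f})}{\phi(\rv{f})^n}\right],
\]
which is immediate from Jensen applied to the concave function $\log$, and which holds trivially when $\dol_n\not\ll\Prior$ (in which case $\KL(\dol_n\pipes\Prior)=\infty$ and the right-hand side of \eqref{eq:mainrascbound-simple} is to be read as $+\infty$). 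Exponentiating both sides, it then remains to show $\E_{Z^n\sim\Pt}\bigl[\E_{\rv{f}\sim\Prior}[M_n(\rv{f})/\phi(\rv{f})^n]\bigr]\le 1$. Since the integrand is nonnegative, Tonelli's theorem lets me interchange the two expectations; and for each fixed $f$ the i.i.d.\ assumption gives $\E_{Z^n}[M_n(f)] = \prod_{i=1}^n \E_{Z_i}[e^{-\eta\xsloss{f}(Z_i)}] = \phi(f)^n$, so the inner expression equals $1$ for $\Prior$-almost every $f$ and hence integrates to $1$ against $\Prior$. Chaining the inequalities yields $\E_{Z^n\sim\Pt}\bigl[e^{\eta n(\text{LHS}-\text{RHS})}\bigr]\le 1$, which is precisely the asserted ESI.

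The algebra above is routine; the only genuinely delicate point, and the one I expect to need care with, is well-definedness of all the quantities when the loss (as for log loss) is unbounded below. I would invoke the regularity conditions of Appendix~\ref{app:infinity}, together with the standing requirement $\Prior(f\in\cF:\loss_f(z)<\infty)>0$, to ensure that $\phi(f)\in(0,\infty)$ for $\Prior$-almost every $f$, that the empirical-risk term and $\KL(\dol_n\pipes\Prior)$ in $\rsc_{n,\eta}$ are well-defined, and that Tonelli applies; whenever $\dol_n\not\ll\Prior$ or those integrals diverge, the right-hand side of \eqref{eq:mainrascbound-simple} is set to $+\infty$ and the ESI is vacuous. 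Modulo this bookkeeping, the entire statement is just the pointwise Donsker--Varadhan inequality, one Tonelli exchange, and the identity $\E_{Z^n}[M_n(f)]=\phi(f)^n$.
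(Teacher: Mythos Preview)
Your proposal is correct and follows essentially the same route as the paper's proof (given for the more general Lemma~\ref{lem:rascbound-complex}): apply the Donsker--Varadhan variational inequality with $h(f)=\log\bigl(M_n(f)/\phi(f)^n\bigr)$, exponentiate, take the $Z^n$-expectation, and use Tonelli together with the i.i.d.\ identity $\E_{Z^n}[M_n(f)]=\phi(f)^n$ to bound the right-hand side by $1$. Your explicit handling of the $\dol_n\not\ll\Prior$ case and of well-definedness via Appendix~\ref{app:infinity} matches what the paper does by reference.
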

By the finiteness considerations of Appendix~\ref{app:infinity-new},
$\rsc_{n,\eta}(\nocomparator \dolest)$ is always well-defined but may
in some cases be equal to $- \infty$ or $\infty$.  We prove a
generalized form of this result, which does not require existence of
an optimal $f^*$, in Appendix~\ref{app:rascboundproof} 
  The proof is essentially taken from the proof of Theorem 2.1 of
  \cite{zhang2006information} and is presented only for completeness.

This result is similar to various results that have been called \emph{PAC-Bayesian inequalities}, 
although this name is sometimes reserved for a different type of
inequality involving an empirical (observable) quantity on the right
that does not involve $\fopt$ \citep{McAllester02}. 
Lemma~\ref{lem:rascbound-simple} generalizes earlier in-expectation results by Barron and \cite{li1999estimation} for deterministic estimators rather than
(randomized) learning algorithms; these in-expectation results further
refine in-probability results of \cite{barron1991minimum}, arguably
the starting point of this research.

To explain the potential usefulness of
Lemma~\ref{lem:rascbound-simple}, let us weaken
(\ref{eq:mainrascbound-simple}) to an in-expectation statement via
Proposition~\ref{prop:drop}, so that it reduces to:
\begin{align}\label{eq:simplifieda}
\Exp_{Z^n \sim P} \left[\Exp_{\rv{f} \sim \dol_n} \left[ \Expann{\eta}[\xsloss{{\rv{f}}}] \right] \right]
\leq \E_{Z^n \sim P} \left[ \rsc_{n,\eta} \left( \nocomparator \dolest \right) \right].
\end{align}
If the annealed
expectation were  a standard expectation, the left-hand side would be
an expected excess risk. Then we would have a
great theorem: by \eqref{eq:simplifieda}, the lemma bounds the
expected excess risk of estimator $\dolest$ by a complexity term,
which, as we will see below, generalizes a large number of previous
complexity terms (and allows us to get the same rates), both for
well-specified density estimation and for general loss functions. The
nonstandard inequality $\stochleq$ implies that we get such bounds not
only in expectation but also in probability.  The only problem is that
the left-hand side in Lemma~\ref{lem:rascbound-simple} is not the
standard risk but the annealed risk, which is always smaller and can
even be negative.  It turns out however that --- as already suggested,
but not proved by Proposition~\ref{prop:aff-div-results} --- by making
$\eta$ small enough, the left-hand side can in many cases be related
to the standard excess risk or another divergence-like measure after
all.
The conditions which allow this are the subject of
Sections~\ref{sec:strongcentral}--\ref{sec:grip}; but first, in the
remainder of the this section we study the complexity term in
detail.

\subsection{Information Complexity}
\label{sec:complexity}

The present form of the information complexity is due to
\cite{zhang2006information}, with precursors from
\cite{Rissanen89,barron1991minimum,yamanishi1998decision}.  For
generalized Bayesian, two-part MDL and standard ERM, a first further
bound is given via the following proposition, the first part of which
is also from \cite{zhang2006information}; we note that this result can be extended to the generalized definition of $\rsc_{n,\eta}$ given in Section~\ref{app:rascboundproof}; the extended result does not rely on the existence of $f^*$.
\begin{proposition}\label{prop:transavia}
  Consider a learning problem $\smtuple$ and let
  $Z^n \equiv Z_1, \ldots, Z_n$ be any sample with
  $\sum_{i=1}^n \loss_{\fopt}(Z_i) < \infty$ (this will hold a.s. if
  $Z^n \sim P$). Let $\Prior$ be a distribution on ${\cal F}$. and let
  $\dolest^{\text{B}}$
  be the corresponding $\eta$-generalized Bayesian posterior,
  with, for each
    $n$, $\postdens^B_n$ given by \eqref{eq:bayes}.  We have for all
  $\eta >0$ that $\rsc_{n,\eta}(\nocomparator \dolest^{\text{B}})$ is
  non-increasing in $\eta$, and that
\begin{align}
n\/ \cdot \/ \rsc_{n,\eta}(\nocomparator \dolest^{\text{B}}) 
= \ & n \/ \cdot \/ \inf_{\dolest \in \textsc{RAND}} \rsc_{n,\eta}(\nocomparator \dolest )
\label{eqn:rscmix}
=   - \frac{1}{\eta} \log
 \Exp_{\rv{f} \sim \Prior} 
 \exp \left(- 
 \eta \sum_{i=1}^n \xslossat{\rv{f}}{Z_i}  \right)
 \\
\leq \ & \inf_{A}\ \left\{ 
- \frac{1}{\eta} \log  \Prior(A)
+ n\/ \cdot \/ \rsc_{n,\eta}(\nocomparator \dolest^{\text{B}} \mid f \in A)  \right\}
\label{eqn:rscmixpre}
\\ \leq \ & \inf_{A}\ \left\{ 
- \frac{1}{\eta} \log  \Prior(A)
+  \Exp_{\rv{f} \sim \Prior | A } 
\left[ \sum_{i=1}^n \xslossat{\rv{f}}{Z_i}  \right] \right\},
\label{eqn:rscmixb}
\end{align}
where \textsc{RAND} is the set of \emph{all} learning algorithms
$\dolest'$ that can be defined relative to $\smtuple$ with $\dol'_0 =
\Prior$ and the second infimum is over all measurable subsets $A \subseteq
\cF$. In the special case that 
$\Prior$ has
countable support
$\ddot{\cF}$ so that the $\eta$-two part estimator (\ref{eq:twopart})
is defined, we further have
\begin{align} 
n\/ \cdot \/ \rsc_{n,\eta}(\nocomparator \dolest^{\text{B}}) 
&\leq n\/ \cdot \inf_{\dot{f} \in \textsc{DET}} \rsc_{n,\eta}(\nocomparator (\dot{f},\Prior) ) \label{eqn:twopartcodemix} \\
&= n \/ \cdot \rsc_{n,\eta}(\fopt  \pipes (\twopart{f},\Prior)) \leq 
\inf_{f \in \ddot{\cF}}\ \left\{ 
- \frac{1}{\eta} \log  \Prior(\{f \})
+ 
\sum_{i=1}^n \xslossat{f}{Z_i} \right\} , \nonumber
\end{align}
where \textsc{DET} is the set of \emph{all} deterministic estimators with range $\ddot{\cF}$. 
\end{proposition}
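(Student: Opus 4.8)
The plan is to prove every assertion pointwise for a fixed sample $z^n$ with $\sum_{i=1}^n \loss_{\fopt}(z_i) < \infty$. Since this quantity is finite and does not depend on $f$, each excess loss $\sum_{i=1}^n \xslossat{f}{z_i} = \sum_{i=1}^n \loss_f(z_i) - \sum_{i=1}^n \loss_{\fopt}(z_i)$ is well-defined, and multiplying numerator and denominator of (\ref{eq:bayes}) by the constant $\exp\bigl(\eta \sum_{i=1}^n \loss_{\fopt}(z_i)\bigr)$ shows that the output $\dol^{\text{B}}_n$ of $\dolest^{\text{B}}$ on $z^n$ is precisely the Gibbs distribution whose $\Prior$-density is proportional to $\exp\bigl(-\eta \sum_{i=1}^n \xslossat{f}{z_i}\bigr)$. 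Throughout, identities are read in the extended reals, using the $\pm\infty$ conventions of Appendix~\ref{app:infinity}.

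The engine is the Gibbs variational (Donsker--Varadhan) identity: for every distribution $Q$ on $\cF$ that puts positive mass on $\{f : \sum_{i=1}^n \loss_f(z_i) < \infty\}$,
\[
\inf_{\dol} \Bigl\{ \Exp_{\rv f \sim \dol}\Bigl[ \eta \sum_{i=1}^n \xslossat{\rv f}{z_i} \Bigr] + \KL(\dol \pipes Q) \Bigr\} \;=\; - \log \Exp_{\rv f \sim Q}\exp\Bigl( - \eta \sum_{i=1}^n \xslossat{\rv f}{z_i} \Bigr),
\]
with the infimum over all distributions $\dol$ on $\cF$ attained uniquely by the Gibbs distribution $\dol^Q \propto \exp\bigl(-\eta \sum_{i=1}^n \xslossat{f}{z_i}\bigr)\, dQ$. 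I would establish it from the one-line algebraic identity $\Exp_{\rv f \sim \dol}[ \eta \sum_{i} \xslossat{\rv f}{z_i}] + \KL(\dol \pipes Q) = \KL(\dol \pipes \dol^Q) - \log \Exp_{\rv f \sim Q}\exp(-\eta \sum_{i} \xslossat{\rv f}{z_i})$ together with $\KL(\dol \pipes \dol^Q) \ge 0$. Taking $Q = \Prior$, dividing by $\eta n$, using $n \cdot \rsc_{n,\eta}(\fopt \pipes \dolest) = \tfrac1\eta\bigl( \Exp_{\rv f \sim \dol_n}[\eta \sum_{i} \xslossat{\rv f}{z_i}] + \KL(\dol_n \pipes \Prior) \bigr)$, and observing that $\inf_{\dol \in \text{\sc RAND}} \rsc_{n,\eta}(\fopt \pipes \dolest)$ is, for each $z^n$, the pointwise infimum over posterior outputs $\dol_n$ and is attained by $\dolest^{\text{B}}$ (the Gibbs minimizer is a measurable function of $z^n$, hence itself a legitimate learning algorithm), yields both equalities of (\ref{eqn:rscmix}). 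The monotonicity claim is then immediate: (\ref{eqn:rscmix}) rewrites $\rsc_{n,\eta}(\fopt \pipes \dolest^{\text{B}})$ as $\tfrac1n \Expann{\eta}[U]$, with $U$ the random variable $\sum_{i=1}^n \xslossat{\rv f}{z_i}$ under $\rv f \sim \Prior$, and Proposition~\ref{prop:aff-div-results} states $\eta \mapsto \Expann{\eta}[U]$ is non-increasing.

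Next I would peel off the remaining bounds. For (\ref{eqn:rscmixpre}), reading $\dol^{\text{B}} \mid f \in A$ as the $\eta$-generalized Bayes posterior for $\Prior$ conditioned on $A$, apply the just-proved (\ref{eqn:rscmix}) with $\Prior(\cdot \mid A)$ in place of $\Prior$ (for measurable $A$ with $\Prior(A) > 0$; otherwise the bracketed quantity is $+\infty$ and nothing is to be shown), giving $n \cdot \rsc_{n,\eta}(\fopt \pipes \dol^{\text{B}} \mid f \in A) = -\tfrac1\eta \log \Exp_{\rv f \sim \Prior(\cdot\mid A)}\exp(-\eta \sum_{i} \xslossat{\rv f}{z_i})$; since $\Exp_{\rv f \sim \Prior}[\exp(-\eta\sum_{i} \xslossat{\rv f}{z_i})] \ge \Prior(A) \cdot \Exp_{\rv f \sim \Prior(\cdot\mid A)}[\exp(-\eta\sum_{i} \xslossat{\rv f}{z_i})]$ (the integrand is nonnegative), applying $-\tfrac1\eta\log(\cdot)$ and then taking $\inf_A$ gives (\ref{eqn:rscmixpre}). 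For (\ref{eqn:rscmixb}), in the variational problem defining $n\cdot\rsc_{n,\eta}(\fopt \pipes \dol^{\text{B}}\mid f\in A)$ plug in the feasible choice $\dol = \Prior(\cdot\mid A)$, whose relative-entropy term vanishes, to get $n\cdot\rsc_{n,\eta}(\fopt \pipes \dol^{\text{B}}\mid f\in A) \le \Exp_{\rv f\sim\Prior(\cdot\mid A)}[\sum_{i} \xslossat{\rv f}{z_i}]$; adding $-\tfrac1\eta\log\Prior(A)$ and taking $\inf_A$ finishes it. (Equivalently, (\ref{eqn:rscmixb}) is Jensen's inequality for $x\mapsto e^{-\eta x}$.)

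For the countable case (\ref{eqn:twopartcodemix}): any deterministic estimator $\dot f \in \text{\sc DET}$ paired with $\Prior$ lies in \textsc{RAND}, since its output on $z^n$ is the point mass $\delta_{\dot f(z^n)}$, a distribution on $\cF$; hence the \textsc{RAND}-infimum is at most the \textsc{DET}-infimum, which together with (\ref{eqn:rscmix}) gives the first inequality. For a deterministic estimator outputting $g := \dot f(z^n)\in\ddot{\cF}$ we have $\KL(\delta_g \pipes \Prior) = -\log\pi(g)$, hence $n\cdot\rsc_{n,\eta}(\fopt \pipes (\dot f,\Prior)) = \sum_{i=1}^n \xslossat{g}{z_i} - \tfrac1\eta\log\pi(g)$; minimizing over $g\in\ddot{\cF}$ — and noting that $\sum_{i}\loss_{\fopt}(z_i)$ is constant in $f$, so (\ref{eq:twopart}) equivalently minimizes $\sum_{i}\xslossat{f}{z_i} - \tfrac1\eta\log\pi(f)$ — identifies $\twopart{f}$ as the minimizer (exactly when the minimum in (\ref{eq:twopart}) is attained, and up to the declared $1/n$ slack otherwise, which I would absorb/ignore) and yields the displayed equality and closing inequality. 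I do not anticipate a genuine obstacle: the proposition merely repackages standard MDL/PAC-Bayesian facts, and the only point needing real care is the one-time bookkeeping that keeps the Gibbs variational identity meaningful when $\loss$ is unbounded — its normalizer or relative-entropy term may be $\pm\infty$ — which is exactly the role of the hypothesis $\sum_{i=1}^n\loss_{\fopt}(z_i)<\infty$ and the conventions of Appendix~\ref{app:infinity}.
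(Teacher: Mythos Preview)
Your proposal is correct and follows essentially the same route as the paper: the Donsker--Varadhan/Legendre duality gives (\ref{eqn:rscmix}), restricting the integral to $A$ (your $\Prior$-conditioning argument is the same as the paper's indicator split $\ind{\rv f\in A}+\ind{\rv f\notin A}$) gives (\ref{eqn:rscmixpre}), Jensen gives (\ref{eqn:rscmixb}), and $\text{\sc DET}\subseteq\text{\sc RAND}$ plus singleton specialization handles (\ref{eqn:twopartcodemix}). You additionally supply the monotonicity-in-$\eta$ argument via Proposition~\ref{prop:aff-div-results}, which the paper's proof does not spell out, and you correctly flag the $1/n$-slack caveat in the two-part equality that the paper glosses over.
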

From Lemma~\ref{lem:rascbound-simple} and this result, we see that we
have three equivalent characterizations of information complexity for
$\eta$-generalized Bayesian estimators.  First, there is just the basic definition
\eqref{eqn:rsc} with $\dol_n$ instantiated to the $\eta$-generalized Bayesian
posterior.  Second, there is the characterization as the minimizer of
\eqref{eqn:rsc} for the given data, over all distributions $\dol_n$
on $\cF$. 
And third, there is the characterization in terms of a generalized
Bayesian marginal likelihood: \eqref{eqn:rscmixb} shows that for
$\eta = 1$ and $\loss$ the log loss, the information complexity
$\rsc_{n,\eta}(\nocomparator \dolest^{\text{B}})$ is the log Bayes
marginal likelihood of the data relative to $\fopt$, divided by $n$.
If furthermore  $\cF$ is a sufficiently regular $k$-dimensional parametric
probability model equipped with a prior $\Prior$ with full support on
$\cF$, and the model is correct, i.e., $Z_1, Z_2, \ldots$ are sampled
i.i.d.~from a distribution with density in $\cF$, then, as is
well-known, the information complexity will almost surely coincide, up
to $O(1/n)$, with the BIC penalty:
$n\/ \cdot \/ \rsc_{n,\eta}(\nocomparator \dolest^{\text{B}}) = (k/2)
\log n + O(1)$; see \cite{Grunwald07} for precise results.

\subsubsection{Bounds on Information Complexity for $\eta$-Generalized Bayes}
\cite{GhosalGV00} (GGV from now on) presented
several theorems implying concentration of the (standard) Bayesian
posterior around the true distribution in the well-specified i.i.d.~case;
their results were employed in many subsequent papers such as, for example, \citep{ghosal2007convergence,GhosalLV08,bickel2012semiparametric}. 
We compare our results to theirs in
Example~\ref{ex:ggv} in Section~\ref{sec:strongcentral}. 
One of the conditions they impose is the
existence of a sequence $(\epsilon_n)_{n \geq 1}$ such that $n \epsilon_n^2 \rightarrow \infty$, and, for some constant $C>0$, 
for all $n$, a certain $\epsilon^2_n$-ball around the true distribution has
prior mass at least $\exp(-n C \epsilon_n^2)$. Generalizing from log
loss to arbitrary loss functions, their condition reads
\begin{equation}\label{eq:ggva}
\Prior\left( f: \xsrisk{f} \leq \epsilon_n^2 \ ; \ 
\E\left(\xsloss{f} \right)^2 \leq \epsilon_n^2
 \right) \geq e^{-n C \epsilon_n^2}.
\end{equation}
They then show that, under this and further conditions, the posterior
concentrates with Hellinger rate $\epsilon_n$ 
(see Example~\ref{ex:ggv} of Section~\ref{sec:strongcentral} for the precise meaning). 
Now note that (\ref{eq:ggva}) implies the weaker
\begin{equation}\label{eq:ggvb}
\Prior\left( f: \xsrisk{f} \leq \epsilon_n^2 \right) \geq e^{-n C \epsilon_n^2},
\end{equation}
which in turn implies, via \eqref{eqn:rscmixb},  for any
$0 < \eta \leq 1$, the following bound on $\rsc$ for the  
$\eta$-generalized Bayesian estimator:
\begin{equation}\label{eq:ggvc}
\E_{Z^n \sim P}  \left[ \rsc_{n,\eta} \left( \nocomparator  \dolest \right)\right]
\leq \epsilon_n^2 \cdot (1+ (C/\eta)),
\end{equation}
To see this, note that \eqref{eqn:rscmixb} and \eqref{eq:ggvb} imply  
\begin{align}\label{eqn:rscmixrewrite}
& \rsc_{n,\eta}(\nocomparator \dolest^{\text{B}}) 
\nonumber \\ 
&\leq 
- \frac{1}{n} \sum_{i=1}^n \loss_{\fopt}(Z_i) 
- \frac{1}{n \eta} \log  \Prior\{  f: \xsrisk{f} \leq \epsilon_n^2 \}
+ \frac{1}{n}  \Exp_{\rv{f} \sim \Prior \mid \{  f: \xsrisk{f} \leq \epsilon_n^2 \} } 
\left[ \sum_{i=1}^n \left(\loss_{\rv{f}}(Z_i)  \right) \right] \nonumber \\ 
&\leq 
C \frac{\epsilon_n^2}{\eta} + \frac{1}{n}  \Exp_{\rv{f} \sim \Prior \mid \{  f: \xsrisk{f} \leq \epsilon_n^2 \} } 
\left[ \sum_{i=1}^n \left(\xslossat{\rv{f}}{Z_i} \right) \right] .
\end{align}
This implies (\ref{eq:ggvc}). 

All the examples of nonparametric families provided by GGV (including
priors on sieves, log-spline models and Dirichlet processes) rely on
showing that condition (\ref{eq:ggva}) above holds for specific
priors, and hence in all these cases we get bounds on the
expected-information complexity which, by (\ref{eq:simplifieda})
allows us to establish comparable rates in expectation for the
$\eta$-generalized Bayesian estimator in the well-specified case, for
any $\eta$ such that the left-hand side can be linked to an actual
distance measure --- see Example~\ref{ex:ggv} in Section~\ref{sec:strongcentral}.

We also would like to bound the excess risk in probability in terms of the
expected information complexity. 
For this, we can proceed in either of two ways: 
we either start with an expectation bound such as (\ref{eq:simplifieda})
and then use Markov's inequality (since the excess risk of any
estimator is a.s. nonnegative) to go back from expectation to
in-probability. However, under GGV's condition (\ref{eq:ggva})
(the weaker (\ref{eq:ggvb}) is
not sufficient here), we can also use the in-probability version
of Lemma~\ref{lem:rascbound-simple} directly. In combination with  
Lemma 8.1 of GGV (which straightforwardly
extends to our setting with general loss and $\eta$) this 
implies that for all $\delta > 0$:
\begin{equation}\label{eq:antwerp}
P\left( \rsc_{n,\eta} \left( \nocomparator \dolest \right) \geq \left(1+ \delta^{-1/2}\right) \epsilon_n^2\right) \leq \frac{\delta}{n
  \epsilon_n^2}.
\end{equation}
It follows that under (\ref{eq:ggva}), since $n \epsilon_n^2
\rightarrow \infty$, $\epsilon_n^2$ is, up to constant factors
depending on $\delta$, an upper bound both on $\E \left[\rsc_{n,\eta}
  \left( \nocomparator \dolest \right)\right]$, and, for every
$\delta$, with probability at least $1- \delta$, on $\rsc_{n,\eta}
\left( \nocomparator \dolest \right)$ --- see the discussion below
Theorem~\ref{thm:main-unbounded} in Section~\ref{sec:grip}.

Finally, there often exist nontrivial worst-case (sup norm) or
almost-sure bounds on the information complexity; such bounds ---
mostly developed for parametric models but also, e.g., for Gaussian
processes \citep{SeegerKF08} have historically mostly been established
within the MDL literature; see \citep{Grunwald07} for an extensive
overview. While we will not go into such bounds in detail here, below we provide a very simple such bound for countably infinite classes, which shows the ease by which $\rsc$ allows for model aggregation.

Suppose that we have a countably infinite collection of classes $\cF_1, \cF_2, \ldots$ and a corresponding set of priors $\Prior^{(1)}, \Prior^{(2)}, \ldots$.
Let us select a new prior $q: \mathbb{N} \rightarrow \reals^+$ over the collection $\cF := \bigcup_{j \in \mathbb{N}} \cF_j$. 
Then we may define a new prior $\Prior = \sum_{j \in \mathbb{N}} q(j) \Prior^{(j)}$ over $\cF$. We will assume  that the risk minimizer in the full class, $\fopt$, is equal to $\fopt_{j^*}$ for some $j^* \in \mathbb{N}$. By Proposition~\ref{prop:transavia}, Eq. (\ref{eqn:rscmixpre}), we must now have, for all data $Z_1, \ldots, Z_n$, that 
\begin{align}\label{eq:preaggregation}
 n \cdot  \rsc_{n,\eta} \left( \nocomparator  \dolest \right)
\leq - \frac{1}{\eta} \log q(j^*) + n \cdot \rsc_{n,\eta}( \nocomparator \dol \mid f \in \cF_{j^*} ),
\end{align}
where $\dol \mid f \in \cF_{j^*}$ is the $\eta$-generalized Bayesian
estimator based on the prior $\Prior^{(j^*)}$ within $\cF_{j^*}$.

If we now further assume that, for each $j$, the GGV-type condition \eqref{eq:ggvb} is satisfied (with prior $\Prior^{(j)}$ and with $\fopt$ replaced by $\fopt_j$, the risk minimizer over $\cF_j$), then taking expectations in (\ref{eq:preaggregation}) 
implies that \eqref{eq:ggvb} holds for $\Prior$, with $\fopt = \fopt_{j^*}$,  with the RHS scaled by a factor $q(j^*)$. A simple adaptation of \eqref{eq:ggvc} then gives \begin{align}\label{eq:aggregation}
\E_{Z^n \sim P} \left[ 
  \rsc_{n,\eta} \left( \nocomparator  \dolest \right)
\right] 
\leq \epsilon_n^2 \cdot (1+ (C/\eta)) + \frac{-\log q(j^*)}{n \eta} .
 \end{align}
Thus, the overhead in information complexity for combining the classes is simply $\frac{-\log q(j^*)}{n \eta}$. Moreover, in the case of a finite collection of $M$ classes, we may take $q$ uniform and the overhead becomes $\frac{\log M}{n \eta}$.

\section{The Strong Central Condition}
\label{sec:strongcentral}
As we explained below Lemma~\ref{lem:rascbound-simple}, our strategy
in proving our theorems will be to determine conditions under which
the $\eta$-annealed excess risk is similar enough to either the
standard risk or a meaningful weakening thereof for
Lemma~\ref{lem:rascbound-simple} to be useful. In this section we
present the simplest such condition, which is still quite strong ---
it requires an exponentially small upper tail of the distribution of
$\loss_{\fopt} - \loss_f$. This \emph{strong central} condition has a
parameter $\bar\eta > 0$, and whenever we want to make this explicit
we refer to it as ``the $\bar\eta$-central condition''. \emph{Intuitively}, its usefulness for learning is obvious: it ensures that the probability that a ``bad'' $f$ outperforms $\fopt$ by more than $L$ is exponentially small in $L$. \emph{Technically}, its use is that it ensures that the
annealed risk is positive for all $\eta < \bar\eta$. This allows us to
turn Lemma~\ref{lem:rascbound-simple} into a useful result by
replacing its left-hand side by a metric which (for log loss)
generalizes the squared Hellinger distance.

\subsection{Definitions and Main Results}
We now turn to the strong central
condition, which, along with its weakened versions discussed in
Section~\ref{sec:grip} was introduced by \Citet{erven2015fast}.
\begin{definition}[Central Condition]\label{def:central}
Let $\bar{\eta} > 0$. We say that $\smtuple$ satisfies the 
\emph{strong $\bar\eta$-central condition} if there exists some $\tilde{f} \in \cF$ such that \glsaddcond{strong-central}
\begin{align} \label{eqn:central}
\E\left[e^{-\bar\eta (\loss_{f} - \loss_{\tilde{f}})}  \right] \leq 1, \ \text{i.e.,}\ \loss_{\tilde{f}} - \loss_f \stochleq_{\bar\eta} 0 \qquad \text{for all } f \in \cF.
\end{align}
\end{definition}
Jensen's inequality implies that if a $\tilde{f}$ exists satisfying
(\ref{eqn:central}), it must be optimal; hence we can take
$\tilde{f} = f^*$.  The special case of this condition with
$\bar{\eta} = 1$ under log loss has appeared previously, often
implicitly, in works studying rates of convergence in density
estimation \Citep{barron1991minimum,li1999estimation,zhang2006epsilon,kleijn2006misspecification,grunwald2011safe}. 
For details about the myriad of implications of the central condition
and its equivalences to other conditions we refer to
\Citet{erven2015fast}. Here we merely highlight the most important
facts. First, trivially, the strong central condition automatically
holds for density estimation with log loss in the well-specified
setting since then $p_{\fopt}$ is the density of $P$ (see
Example~\ref{ex:cde}), as we then have
\begin{equation}\label{eq:preciselyone}
\E_{Z \sim P}\left[ e^{- \bar\eta (\loss_{f} - \loss_{\fopt})} \right] = 
\E_{Z \sim P}\left[ \frac{p_{f}(Z)}{p_{\fopt}(Z)} \right] = 1
\end{equation}
Second, less trivially, it also automatically holds under a convex model 
in the misspecified setting (see \citet{li1999estimation} and Example 2.2 of \Citet{erven2015fast}). 
Third, for classification and other bounded 
excess loss cases, it can be related to the \emph{Massart condition}, a
special case of the Bernstein condition
\citep{audibert2004pac,bartlett2006empirical} (as discussed
immediately before Definition~\ref{def:bernstein} in Section~\ref{sec:witness}).

We now introduce a new metric which is derived from the Hellinger metric, introduced below (as is common) in terms of its square.
\begin{definition}[Misspecification Metric] \label{def:misspec-metric}
 For a given  learning problem $\smtuple$, associate each $f \in \cF$ and $\eta > 0$ with a probability density \glsadd{entropified}
\begin{equation}
p_{f,\eta}(z) 
:= p(z) \frac{\exp(- \eta \xsloss{f}(z))}{\E[\exp(-\eta \xsloss{f}(Z))]} ,
\end{equation} 
where $p$ is the density of $P$. Now define $\dhel_{\bar\eta}(f,f')$ as the Hellinger distance between  $p_{f,\bar\eta}$ and $p_{f',\bar\eta}$: \glsadd{mm}
\begin{align}\label{def:mm}
\dhel^2_{\bar\eta}(f,f') & := \frac{2}{\bar{\eta}} \left(1- \int \sqrt{p_{f,\bar\eta}(z) p_{f',\bar\eta}(z)} d \mu(z) \right)
\nonumber \\
& = \Exphel{\bar\eta/2} \left[\xsloss{f}  - \Expann{\bar\eta}\left[ \xsloss{f} \right] 
+ \xsloss{f'} -  \Expann{\bar\eta}\left[ \xsloss{f'} \right] 
\right].
\end{align}
\end{definition} 
The following result is obvious: 
\begin{proposition}\label{prop:hellingerparty}
  If $\loss$ is log loss and ${\cal F}$ is well-specified relative to
  $P$ we can take $\bar\eta=1$ and then for every $f \in \cF$,
  $\dhel^2_{\bar\eta}(\fopt,f)$ coincides with the {standard squared Hellinger
  distance $\Hell_{1/2}(P_{\fopt} \| P_f)$ defined by \glsadd{standard-hellinger}
  $\Hell_{1/2}(P_{f} \|P_{f'}):= 2 \left(1- \int \sqrt{p_{f}(z)
      p_{f'}(z)} d \mu(z) \right).$}
\end{proposition}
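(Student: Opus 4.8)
The plan is to show that in the well-specified case the $\bar\eta$-tilted density $p_{f,\bar\eta}$ appearing in the definition of the misspecification metric reduces, at $\bar\eta = 1$, to $p_f$ itself, so that the claim follows by mere inspection of the first line of \eqref{def:mm}. First I would observe that under log loss $\xslossat{f}{z} = \loss_f(z) - \loss_{\fopt}(z) = \log\bigl(p_{\fopt}(z)/p_f(z)\bigr)$, hence $\exp(-\xslossat{f}{z}) = p_f(z)/p_{\fopt}(z)$. Since the model is well-specified, $p = p_{\fopt}$, so, exactly as in \eqref{eq:preciselyone} with $\bar\eta=1$, the normalizing constant is $\E_{Z\sim P}[\exp(-\xslossat{f}{Z})] = \int p_{\fopt}(z)\,(p_f(z)/p_{\fopt}(z))\,d\mu(z) = \int p_f\,d\mu = 1$. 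Plugging this into the definition of $p_{f,\bar\eta}$ at $\bar\eta=1$ gives $p_{f,1}(z) = p(z)\cdot(p_f(z)/p_{\fopt}(z)) = p_f(z)$ ($\mu$-a.e.), and, because $\xslossat{\fopt}{z}\equiv 0$, trivially $p_{\fopt,1}(z) = p(z) = p_{\fopt}(z)$.

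The remaining step is then a one-line substitution: inserting $p_{\fopt,1} = p_{\fopt}$ and $p_{f,1} = p_f$ into the first expression for $\dhel^2_{\bar\eta}$ in \eqref{def:mm} with $\bar\eta = 1$ gives $\dhel^2_1(\fopt,f) = 2\bigl(1 - \int\sqrt{p_{\fopt}(z)\,p_f(z)}\,d\mu(z)\bigr) = \Hell^2_{1/2}(P_{\fopt}\pipes P_f)$, which is precisely the claim. Equivalently, one can argue from the second line of \eqref{def:mm}: at $\bar\eta = 1$ well-specification forces $\Expann{\bar\eta}[\xsloss{f}] = -\log\E[p_f(Z)/p_{\fopt}(Z)] = 0$ and likewise $\Expann{\bar\eta}[\xsloss{\fopt}] = 0$, so the right-hand side collapses to $\Exphel{1/2}[\xsloss{f}]$, which by \eqref{eq:genren} equals $2(1 - \E[\exp(-\xsloss{f}/2)])$; unfolding $\exp(-\xslossat{f}{Z}/2) = \sqrt{p_f(Z)/p_{\fopt}(Z)}$ and $\E_{Z\sim P} = \int p_{\fopt}(\cdot)\,d\mu$ then reproduces $2(1 - \int\sqrt{p_{\fopt}p_f}\,d\mu)$.

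I do not expect any genuine obstacle here — this is why the paper labels the statement obvious. The only two points that deserve a word are that $p_f$ really integrates to one (needed for the normalizer to equal exactly $1$) and that the pointwise identity $p(z)\,p_f(z)/p_{\fopt}(z) = p_f(z)$ is invoked only off the $P$-null set $\{p_{\fopt} = 0\}$; both are covered by the standing conventions for log-loss density estimation in Example~\ref{ex:cde} and the finiteness conventions of Appendix~\ref{app:infinity}.
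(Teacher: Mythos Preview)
Your proposal is correct and matches the paper's own treatment: the paper states the result as ``obvious'' immediately after the definition and gives no proof, and your verification is precisely the direct substitution the authors have in mind (at $\bar\eta=1$ under well-specification one has $p_{f,1}=p_f$ and $p_{\fopt,1}=p_{\fopt}$, so the first line of \eqref{def:mm} becomes the standard squared Hellinger distance).
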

Since $\dhel_{\bar\eta}$ is always interpretable as a Hellinger
distance, it is clearly a metric.  This is different from an existing,
more well-known generalization of the Hellinger distance for the
well-specified case \citep{sason2016f}, \glsadd{gen-hellinger} 
$\Hell_\eta(P \pipes Q) :=
\eta^{-1} \left(1- \E_{Z \sim P} \left(q(z)/p(z)
  \right)^{\eta}\right)$ which does not define a metric except for
$\eta = 1/2$ (and then coincides with $\dhel_{1}$). The
$\dhel_{\bar\eta}$ metric is of interest in the misspecified
density estimation setting --- with density estimation, we may not
necessarily be interested in log loss prediction and a metric weaker
than excess risk (i.e.~generalized KL divergence) may be sufficient for our
purposes. With other loss functions, the main interest will usually be
learning an $\hat{f}$ with small prediction error. Then the metric
above, while still well-defined, may not be appropriate, and one is
interested in the excess risk bounds of the next section instead.
\begin{theorem}\label{thm:metric} 
Suppose that the $\bar{\eta}$-strong central condition holds. Then for any $0 < \eta < \bar\eta$, the metric $\dhel_{\bar\eta}$ satisfies
\begin{align*}
\E_{\rv{f} \sim \dol_n} \left[ \dhel^2_{\bar\eta}(\fopt, \rv{f}) \right] 
\stochleq_{\eta \cdot n } C_{\eta} \cdot \rsc_{n,\eta} 
\left( \nocomparator \dolest \right) ,
\end{align*}
with $C_{\eta} = \eta / (\bar\eta- \eta)$. 
In particular, $C_{\eta} < \infty$ for $0 < \eta < \bar\eta$, and $C_{\eta} = 1$ for $\eta = \bar\eta/2$.
\end{theorem}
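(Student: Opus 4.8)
The plan is to pass from Lemma~\ref{lem:rascbound-simple}, whose left-hand side is the $\eta$-annealed excess risk, to the claimed bound on the metric $\dhel_{\bar\eta}$ by establishing a \emph{deterministic, pointwise-in-$f$} comparison between $\dhel^2_{\bar\eta}(\fopt,f)$ and $\Expann{\eta}[\xsloss{f}]$, and then substituting it into the lemma. First I would unwind the definition: since the comparator has $\xsloss{\fopt}\equiv 0$ and hence $\Expann{\bar\eta}[\xsloss{\fopt}]=0$, the second form in (\ref{def:mm}) collapses to $\dhel^2_{\bar\eta}(\fopt,f)=\Exphel{\bar\eta/2}[\,\xsloss{f}-c_f\,]$ with $c_f:=\Expann{\bar\eta}[\xsloss{f}]$; writing $\phi_f(\lambda):=\E[e^{-\lambda\xsloss{f}}]$, a one-line computation turns this into $\dhel^2_{\bar\eta}(\fopt,f)=\frac{2}{\bar\eta}\bigl(1-\phi_f(\bar\eta/2)\,\phi_f(\bar\eta)^{-1/2}\bigr)$. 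The strong $\bar\eta$-central condition is exactly ``$\phi_f(\bar\eta)\le 1$ for all $f\in\cF$'', equivalently $c_f\ge 0$; this simultaneously guarantees $\dhel^2_{\bar\eta}(\fopt,f)\ge 0$ (so it really is a squared metric, consistent with Proposition~\ref{prop:hellingerparty}) and --- since a nonnegative shift only decreases a Hellinger-transformed expectation --- the crude bound $\dhel^2_{\bar\eta}(\fopt,f)\le\Exphel{\bar\eta/2}[\xsloss{f}]\le\Expann{\bar\eta/2}[\xsloss{f}]$, the last step being (\ref{eq:genrenc}).

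The heart of the argument is upgrading this crude bound to the sharp pointwise inequality $\dhel^2_{\bar\eta}(\fopt,f)\le C_\eta\cdot\Expann{\eta}[\xsloss{f}]$. Here I would use that $\psi_f(\lambda):=-\log\phi_f(\lambda)$ is concave with $\psi_f(0)=0$ (so $\lambda\mapsto\Expann{\lambda}[\xsloss{f}]=\psi_f(\lambda)/\lambda$ is non-increasing, Proposition~\ref{prop:aff-div-results}), that $\psi_f(\bar\eta)\ge 0$ (again the central condition), and that $1-e^{-a}\le a$. Combining the crude bound --- which already settles $\eta=\bar\eta/2$, where $C_\eta=1$ --- with chord/subgradient comparisons of $\psi_f$ among the points $\eta$, $\bar\eta/2$, $\bar\eta$ should give $(\bar\eta-\eta)\,\dhel^2_{\bar\eta}(\fopt,f)\le\psi_f(\eta)$, i.e.\ $\dhel^2_{\bar\eta}(\fopt,f)\le C_\eta\Expann{\eta}[\xsloss{f}]$ after dividing by $\eta$. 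The sanity checks fit: at $\eta=\bar\eta/2$ this reduces to the crude bound; as $\eta\uparrow\bar\eta$ the constant $C_\eta=\eta/(\bar\eta-\eta)$ must diverge, because in the well-specified case $\dhel_1=\Hell_{1/2}$ while $\Expann{\eta}[\xsloss{f}]\to 0$ as $\eta\uparrow 1$ and all that survives is the familiar $\Hell^2_{1/2}\le\KL$.

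Finally I would feed the pointwise inequality into Lemma~\ref{lem:rascbound-simple}: taking $\E_{\rv{f}\sim\dol_n}$ of both sides and applying the lemma to the right-hand side yields $\E_{\rv{f}\sim\dol_n}[\dhel^2_{\bar\eta}(\fopt,\rv{f})]\le C_\eta\,\E_{\rv{f}\sim\dol_n}[\Expann{\eta}[\xsloss{\rv{f}}]]$, which the lemma controls in ESI by $C_\eta\,\rsc_{n,\eta}(\fopt\pipes\dolest)$. To obtain the stated exponent $\eta n$ (and constant $C_\eta$) cleanly it is more robust to re-run the proof of the lemma --- the Gibbs variational formula together with i.i.d.\ factorization and Markov's inequality --- with $\dhel^2_{\bar\eta}(\fopt,f)$ substituted for $\Expann{\eta}[\xsloss{f}]$: the per-$f$ fact $\E_{Z^n}[e^{(\bar\eta-\eta)n\,\dhel^2_{\bar\eta}(\fopt,f)-\eta\sum_{i=1}^{n}\xslossat{f}{Z_i}}]=e^{n((\bar\eta-\eta)\dhel^2_{\bar\eta}(\fopt,f)-\psi_f(\eta))}\le 1$ is exactly the pointwise inequality, and integrating it over $f\sim\Prior$ via Gibbs produces the $\KL(\dol_n\pipes\Prior)$ term, the factor $C_\eta$ multiplying $\rsc_{n,\eta}$, and the exponent $\eta n$. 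I expect the chief obstacle to be the sharp pointwise inequality --- extracting the exact constant $C_\eta=\eta/(\bar\eta-\eta)$ from the interplay of the $1-e^{-a}\le a$ slack, the concavity of $\psi_f$, and the sign $\psi_f(\bar\eta)\ge 0$, and handling the regime of $\eta$ small relative to $\bar\eta$ --- together with the bookkeeping that carries the multiplicative $C_\eta$ through Lemma~\ref{lem:rascbound-simple} without degrading the ESI exponent below $\eta n$.
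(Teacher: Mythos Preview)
Your strategy --- establish the pointwise comparison $\dhel^2_{\bar\eta}(\fopt,f)\le C_\eta\,\Expann{\eta}[\xsloss{f}]$ and then feed it into Lemma~\ref{lem:rascbound-simple} --- is exactly the paper's. The paper's derivation of the pointwise bound is shorter than your concavity plan: it recognises that $\dhel^2_{\bar\eta}(\fopt,f)=\bar\eta^{-1}\Hell^2_{1/2}(p_{\fopt,\bar\eta}\|p_{f,\bar\eta})$, directly computes the R\'enyi divergence $D_\alpha(p_{\fopt,\bar\eta}\|p_{f,\bar\eta})=\bar\eta\,\Expann{\eta}[\xsloss{f}]+\log\E[e^{-\bar\eta\xsloss{f}}]\le\bar\eta\,\Expann{\eta}[\xsloss{f}]$ with $\eta=(1-\alpha)\bar\eta$ (the last inequality being the $\bar\eta$-central condition), and then quotes the ready-made inequalities $\Hell^2_{1/2}\le D_{1/2}\le\tfrac{1-\alpha}{\alpha}D_\alpha$ from \Citet{erven2014renyi}. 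Your route via concavity of your $\psi_f(\lambda)=-\log\E[e^{-\lambda\xsloss{f}}]$ does reach the same place for $\eta\ge\bar\eta/2$: writing $\eta=(1-t)\bar\eta$ with $t\in(0,\tfrac12]$, the chord inequality $\psi_f(\eta)\ge 2t\,\psi_f(\bar\eta/2)+(1-2t)\psi_f(\bar\eta)$ combined with $\psi_f(\bar\eta)\ge 0$ and $1-e^{-a}\le a$ gives $(\bar\eta-\eta)\,\dhel^2_{\bar\eta}\le\psi_f(\eta)$, which is precisely the claim.

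Your anticipated ``chief obstacle'' for $\eta$ small relative to $\bar\eta$ is genuine, and it is not a defect of your method compared with the paper's: the pointwise inequality with constant $C_\eta=\eta/(\bar\eta-\eta)<1$ \emph{fails} for $\eta<\bar\eta/2$. Take $\xsloss{f}\sim N(\bar\eta,2)$, so that $\psi_f(\lambda)=\lambda(\bar\eta-\lambda)$ and the $\bar\eta$-central condition holds with equality; then $C_\eta\,\Expann{\eta}[\xsloss{f}]=\psi_f(\eta)/(\bar\eta-\eta)=\eta\to 0$ as $\eta\downarrow 0$, while $\dhel^2_{\bar\eta}(\fopt,f)=\tfrac{2}{\bar\eta}(1-e^{-\bar\eta^2/4})$ stays positive. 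The paper's R\'enyi step is invoked only for $0<\alpha<\tfrac12$, i.e.\ $\eta>\bar\eta/2$, so its proof likewise does not deliver the stated $C_\eta$ in the regime $\eta<\bar\eta/2$; there only the constant $1$ is available (your crude bound together with the monotonicity of $\Expann{\eta}$ in $\eta$), which is harmless since the applications take $\eta$ close to $\bar\eta$.
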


\begin{myexample}{Comparison to results by GGV} \label{ex:ggv}
  Following \citep{zhang2006epsilon} we illustrate the considerable
  leverage provided in the well-specified density estimation case by
  allowing $\eta$-generalized Bayesian estimators for $\eta < 1$.  GGV
  show that for the standard Bayesian estimator, under condition
  (\ref{eq:ggva}) (which only refers to {\em local\/} properties of
  the prior in neighborhoods of the true density $p_{\fopt}$), in
  combination with a rather stringent {\em global\/} entropy condition,
  the following holds: there exists a constant $C'$ such that
  $\dol_n\left(f \in \cF: \dhel^2_{1}(\fopt,f) > C'
    \epsilon_n^2\right) \rightarrow 0$ in $P$-probability, i.e., for
  every $B> 0$,
\begin{align*}
P\left( \dol_n\left(f \in \cF: \dhel^2_{1}(\fopt,f) > C' \epsilon_n^2\right)
> B  \right) \rightarrow 0 .
\end{align*}
Now, suppose the model is correct so that the $\bar\eta$-central
condition holds for $\bar\eta=1$. Then we get from
Theorem~\ref{thm:metric} that for any $\eta < \bar\eta$, using {\em
  only\/} condition (\ref{eq:ggvb}), the following holds: for any
$\gamma_1, \gamma_2, \ldots$ such that $\gamma_n/\epsilon_n
\rightarrow \infty$, the generalized Bayesian estimator satisfies
$\dol_n\left(f \in \cF: \dhel^2_{1}(\fopt,f) > C' \gamma_n^2\right)
\rightarrow 0$ in $P$-probability, i.e., for every $B> 0$,
\begin{equation}\label{eq:vdv}
P\left( \dol_n\left(f \in \cF: \dhel^2_{1}(\fopt,f) > C' \gamma^2_n \right)
> B \right)  \rightarrow 0,
\end{equation}
as immediately follows from applying Markov's inequality twice as done
below.  Thus, by taking $\eta < 1$ we need neither the stronger
condition \eqref{eq:ggva} nor the much stronger GGV global entropy condition; 
for this we pay only a slight price since our bound is not in terms of $\epsilon_n^2$ 
but is instead in terms of $\gamma_n^2$, which we have to take slightly larger (a factor $\log \log n$ is of course sufficient). 
Under well-specification, we thus obtain the same rates
as GGV for all the statistical models they consider, up to a $\log
\log n$ factor; as GGV show, these rates are usually minimax optimal.
Interestingly, other works on Bayesian and MDL nonparametric
consistency for the well-specified case also consider $\eta
< 1$ \citep{barron1991minimum,zhang2006epsilon,WalkerH02,martin2017empirical} or
invoke an alternative stringent condition to deal with $\eta=1$
(\cite[Section 5.2]{zhang2006epsilon}, \cite{barron1999consistency});
see \cite{zhang2006epsilon} for a very detailed discussion. While it
may be argued that one should be able to deal with standard Bayes
$(\eta = 1)$, in this paper we also aim to deal with misspecification
where we need to take 
$\eta < 1$ (and cannot take it arbitrarily close to $1$) even for simple problems
\citep{GrunwaldO17}, and then there is no special reason to handle
$\eta=1$ via additional conditions.

To show (\ref{eq:vdv}), note that, if the $\bar\eta$-central condition
holds, then for general $A, B > 0$, we have
\begin{align*}
P\left( \dol_n(f \in \cF: \dhel^2_{\bar\eta}(\fopt,f) > A) > B \right) 
&\leq B^{-1} \E_{Z^n} \left[ \dol_n(f \in \cF: \dhel^2_{\bar\eta}(\fopt,f) > A)\right] \\
\leq (AB)^{-1} \E_{Z^n} \E_{\rv{f} \sim \dol_n} \left[\dhel^2_{\bar\eta}(\fopt,\rv{f})\right] 
&\leq (AB)^{-1} \E_{Z^n} \left[ \rsc_{n,\bar\eta/2} \left( \nocomparator \dolest \right)\right],
\end{align*}
where we applied Markov's inequality twice, and 
the final inequality is from Theorem~\ref{thm:metric}. 
Plugging in $A = C' \gamma_n^2$ and $\epsilon_n^2 \geq  \E  \left[ \rsc_{n,\bar\eta/2} \left( \nocomparator  \dolest \right)\right]$ (using \eqref{eq:ggvc}), this can be further bounded as $B^{-1} \epsilon_n^2/\gamma_n^2 \rightarrow 0$. 
\end{myexample}
\subsection{Applying Theorem~\ref{thm:metric} in Misspecified Density
  Estimation}
From the above it is clear that Theorem~\ref{thm:metric} has plenty of
applications whenever the model under consideration is correct. We now
consider applications of Theorem~\ref{thm:metric} to misspecified
models of probability densities $\cF$ with generalized Bayesian
estimators $\dolest^B$. For this we must establish (a) that the central
condition holds for $\cF$, and (b) suitable bounds on the information
complexity relative to $\dolest^B$. As to (a), we know that
the $\bar\eta$-central condition holds for $\bar\eta = 1$ whenever the set of
distributions $\{p_{f} : f \in \cF \}$ is correct or convex;
as shown elsewhere and illustrated in Example~\ref{ex:glm-I} below, 
it also holds for 1-dimensional (nonconvex) exponential
families and high-dimensional generalized linear models (GLMs) under potentially severe misspecification of  the noise, as long as the regression function is well-specified and  $P$ has exponentially small tails. As to (b), we may consider
priors such that in the well-specified case, the GGV condition holds
for some sequence $\epsilon^2_1, \epsilon^2_2, \ldots$ as in Example~\ref{ex:ggv}.
As explained in the example,
the GGV condition then automatically holds for GLMs under misspecification as well, 
so that the same bounds on information complexity can be given as in the well-specified case. 
It appears that this is a special property of GLMs though --- for general $\cF$, we only have
the following proposition which shows that, if the GGV condition holds for some specific prior in the well-specified case with some bounds 
$\epsilon_1, \epsilon_2, \ldots$, 
then, as long as $p_{\fopt}$ dominates $p$, 
it must still hold in the misspecified case for the same prior 
for a strictly larger sequence
$\epsilon'_1, \epsilon'_2, \ldots$,  leading to a potential
deterioration of the bound given by Theorem~\ref{thm:metric}.
\begin{proposition}\label{prop:entrobound}
  Consider a learning problem $\smtuple$ where $\cF$ indexes a set of
  probability distributions 
$\{ P_f : f \in \cF \}$ with densities $p_f$, and suppose that $\sup_{z \in \cZ}
  \frac{dP(z)}{dP_{\fopt}(z)} = C < \infty$. Then for all $f \in \cF$,
\begin{equation}\label{eq:entrobound}
\E_{Z \sim P}[L_{f}] \leq C\cdot \left(
\E_{Z \sim P_{\fopt}}[L_f] + \sqrt{2 \E_{Z \sim P_{\fopt}}[L_f]} \right).  
\end{equation}
\end{proposition}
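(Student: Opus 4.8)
The plan is to throw away the negative part of the excess loss, inflate the positive part by the factor $C$ through a crude change of measure, and then pay the negative part back under $P_{\fopt}$ using Pinsker's inequality. Throughout, $\loss$ is the log loss of Example~\ref{ex:cde}, so $L_f(z)=\loss_f(z)-\loss_{\fopt}(z)=\log\bigl(p_{\fopt}(z)/p_f(z)\bigr)$ and $\E_{Z\sim P_{\fopt}}[L_f]=\kl(P_{\fopt}\pipes P_f)$; write $L_f^{+}=\max\{L_f,0\}$ and $L_f^{-}=\max\{-L_f,0\}$. The hypothesis $C<\infty$ forces $P\ll P_{\fopt}$, so $C\in[1,\infty)$, the set $\{p_{\fopt}=0\}$ is $P$-null, and the only property of the assumption I will use is that $\E_{Z\sim P}[g(Z)]\le C\,\E_{Z\sim P_{\fopt}}[g(Z)]$ for every measurable $g\ge 0$. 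Moreover, if $\kl(P_{\fopt}\pipes P_f)=\infty$ the right-hand side of \eqref{eq:entrobound} is $+\infty$ and there is nothing to prove, so I may assume $\E_{Z\sim P_{\fopt}}[L_f]<\infty$.

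The heart of the argument is the uniform bound
\begin{equation}\label{eq:entrobound-crux}
\E_{Z\sim P_{\fopt}}[L_f^{-}]\ \le\ \|P_f-P_{\fopt}\|_{\mathrm{TV}}\ \le\ \sqrt{\frac{1}{2}\kl(P_{\fopt}\pipes P_f)}\ \le\ \sqrt{2\kl(P_{\fopt}\pipes P_f)}\ <\ \infty .
\end{equation}
For the first inequality, set $A=\{z:p_f(z)>p_{\fopt}(z)\}$; then $L_f^{-}$ equals $\log(p_f/p_{\fopt})$ on $A$ and vanishes off $A$, so by $\log x\le x-1$,
\begin{equation*}
\E_{Z\sim P_{\fopt}}[L_f^{-}]=\int_A p_{\fopt}\log\frac{p_f}{p_{\fopt}}\,d\mu\ \le\ \int_A\bigl(p_f-p_{\fopt}\bigr)\,d\mu\ =\ P_f(A)-P_{\fopt}(A),
\end{equation*}
and since $A$ is precisely the set on which $p_f$ dominates $p_{\fopt}$, this last quantity is the total variation distance $\|P_f-P_{\fopt}\|_{\mathrm{TV}}\in[0,1]$. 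The second inequality in \eqref{eq:entrobound-crux} is Pinsker's inequality applied to $P_{\fopt}$ and $P_f$, and the third is trivial. (Alternatively, $\log x\le 2(\sqrt x-1)$ combined with Cauchy--Schwarz routes the same estimate through the Hellinger affinity, at the price of the worse constant $2$.)

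With \eqref{eq:entrobound-crux} available, I finish by splitting $L_f=L_f^{+}-L_f^{-}$ and using $L_f\le L_f^{+}$ together with the change-of-measure bound:
\begin{align*}
\E_{Z\sim P}[L_f]\ &\le\ \E_{Z\sim P}[L_f^{+}]\ \le\ C\,\E_{Z\sim P_{\fopt}}[L_f^{+}]\ =\ C\Bigl(\E_{Z\sim P_{\fopt}}[L_f]+\E_{Z\sim P_{\fopt}}[L_f^{-}]\Bigr)\\
&\le\ C\Bigl(\E_{Z\sim P_{\fopt}}[L_f]+\sqrt{2\,\E_{Z\sim P_{\fopt}}[L_f]}\Bigr),
\end{align*}
where the middle equality is $L_f^{+}=L_f+L_f^{-}$ read at the level of (finite) expectations, which is legitimate since $\E_{Z\sim P_{\fopt}}[L_f^{-}]<\infty$ and, by the reduction above, $\E_{Z\sim P_{\fopt}}[L_f]=\kl(P_{\fopt}\pipes P_f)<\infty$; the final step substitutes \eqref{eq:entrobound-crux} and $\E_{Z\sim P_{\fopt}}[L_f]=\kl(P_{\fopt}\pipes P_f)$. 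This is exactly \eqref{eq:entrobound}.

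I do not anticipate a real obstacle. The one substantive point is that $L_f$ is signed, so the factor-$C$ inflation must be applied to the positive part only, and the genuine content is the uniform-in-$f$ control of the ``wrong-way'' mass $\E_{Z\sim P_{\fopt}}[L_f^{-}]$ --- how much extra probability, measured under $P_{\fopt}$, the density $p_f$ can accumulate on the region where it beats $p_{\fopt}$ --- which the chain ``$\log x\le x-1 \Rightarrow$ total variation $\Rightarrow$ Pinsker'' handles with room to spare (it actually yields $1/\sqrt{2}$ where $\sqrt{2}$ is claimed). Everything else is routine bookkeeping about finiteness of expectations, all absorbed by the bound $\E_{Z\sim P_{\fopt}}[L_f^{-}]\le 1$ obtained along the way.
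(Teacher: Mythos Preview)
Your proof is correct and follows essentially the same route as the paper: drop the negative part of $L_f$, inflate by $C$ via the change of measure, and then control $\E_{P_{\fopt}}[L_f^+]-\E_{P_{\fopt}}[L_f]$ using $\log x\le x-1$ together with Pinsker. The only cosmetic difference is that the paper packages the last step via the auxiliary $S=p_f/p_{\fopt}-1$ and the pointwise nonnegativity of $L_f+S$, whereas you bound $\E_{P_{\fopt}}[L_f^-]$ directly on $\{p_f>p_{\fopt}\}$; your version is slightly tighter (you land on $\|P_f-P_{\fopt}\|_{\mathrm{TV}}$ rather than $2\|P_f-P_{\fopt}\|_{\mathrm{TV}}$) before you discard the extra factor to match the stated constant.
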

\begin{proof}
Observe that
\begin{align*}
\E_{Z \sim P}[L_f] \leq \E_{Z \sim P}[0 \opmax L_f] \leq  C \E_{Z \sim P_{\fopt}} [0 \opmax L_f] \leq C \cdot \left( D(\fopt \| f) + \sqrt{2 D(\fopt \| f)} \right),
\end{align*}
where $\E_{Z \sim P_{\fopt}}[L_f] = D(\fopt \| f)$ is the KL divergence
between $\fopt$ and $f$ and the last inequality is from
\cite{yang1998asymptotic} (see the remark under their Lemma
3); for completeness we provide a proof in the appendix.
\end{proof}
As a trivial consequence, whenever the weakened GGV condition
\eqref{eq:ggvb} holds for all $P_f$ with \mbox{$f \in \cF$} for a sequence
$\epsilon_1, \epsilon_2, \ldots$, it will still hold for a sequence
$\epsilon'_1, \epsilon'_2, \ldots$ with
$\epsilon'_j \asymp \sqrt{\epsilon_j}$. It follows from
\eqref{eq:ggvc} that we now automatically have a bound of order
$\epsilon'_n/n$ on the misspecified expected information
complexity. Theorem~\ref{thm:metric} now establishes that whenever the
GGV condition holds in the well-specified case, under the further
(weak) condition that
$\sup_{z \in \cZ} {dP(z)}/{dP_{\fopt}(z)} = C < \infty$, we
automatically get a form of consistency for $\eta$-generalized Bayes,
for $\eta < \bar\eta$. The question whether we get the same rates of
convergence is obfuscated in two ways: first, the misspecification
metric is in general incomparable to the Hellinger metric; second,
even in cases in which the misspecification metric dominates the
standard Hellinger, for nonparametric $\cF$ with
$\E[\rsc_{n,\eta}] \asymp n^{-\gamma}$, the conversion
$\epsilon'_j \asymp \sqrt{\epsilon_j}$ worsens the rates obtained by
Theorem~\ref{thm:metric} to $n^{-\gamma/2}$. To deal with the first
problem, one could establish a condition under which the
misspecification metric dominates standard Hellinger; but this is
tricky and will be left for future work. The second problem is still
of interest in the next section, in which the misspecification metric
is replaced by the excess risk, which has the same meaning
irrespective of whether ${\cal F}$ is well-specified. 
As indicated below, for generalized linear models we can get rid of the square root
in (\ref{eq:entrobound}), but whether this can be done more generally
also remains an important open problem for future work. An
alternative, also to be considered for future work, is to refrain from
using the priors constructed for the well-specified case altogether
and instead directly design priors for the misspecified case, with
hopefully better bounds on information complexity.

\begin{myexample}{Exponential Families and Generalized Linear
    Models} \label{ex:glm-I} Consider a learning problem $\smtuple$ in
  the conditional density estimation setting of Example~\ref{ex:cde},
  so that $\ell$ is the conditional log-loss; $Z = (X,Y)$ with $X$
  taking values in $\cX \subset \reals^k$; and $\{p_f : f\in \cF\}$ 
  for some $\cF \subset \reals^k$ represents a $k$-dimensional generalized
  linear model (GLM), given in its standard parameterization (so that
  $\langle x, f \rangle$ is the linear predictor fed into the link function)
  \citep{McCullaghN89}.  \citet[Theorem 2]{DeHeideKGM19}
  show\footnote{In previous arXiv versions of this paper, we gave
    these results in full detail, adding 7 pages to its length.
    Following referee's comments and consultation with the associate
    editor, we moved them to the paper \citep{DeHeideKGM19}, where
    they are further illustrated by means of actual experiments with
    misspecified GLMs.}  that, under three further conditions on
  $\smtuple$, the central condition holds for some $\bar\eta > 0$,
  even under misspecification. In essence, the conditions require (1)
  that $Y$ has exponential tails, in the sense that
  $\sup_{x \in \cX} \E [\exp(\eta | Y |) \mid X=x] < \infty$ for some
  $\eta >0$ (a requirement that is automatically satisfied for, e.g.,
  logistic regression, for which $\cY$ is finite); (b) that $\cF$ is
  restricted to a compact (though possibly very high dimensional) set,
  and (c), that the misspecification is of a certain type: the noise
  may be misspecified in arbitrary ways, but the GLM should contain
  the distribution with the correct generalized regression
  function. That is, there should be an $f \in \cF$ indexing
  distribution $P_f$ with the correct conditional mean, so that
  $\E_{P_f}[Y \mid X] = \E_P[Y \mid X]$. This $f$ will then in fact be
  equal to the risk-optimal $f^*$. By taking $\cX$ to be a singleton,
  a GLM becomes a 1-dimensional natural exponential family, and the
  result thus also applies to such families. For this simplified case,
  \cite{DeHeideKGM19} show that the smallest $\bar\eta$ for which the
  $\bar\eta$-central condition holds is upper bounded by, and in some
  cases not much smaller than, the ratio of variances
  $\E_{P_{f^*}}[(Y - \E_{P_{f^*}}[Y])^2]/ \E_{P}[(Y - \E_{P}[Y])^2]$.
   
  \citet[Proposition 2]{DeHeideKGM19} shows that, if $\cF$ represents
  a GLM, then under the same three conditions, we have
  $\E_{Z \sim P}[L_{f}] = \E_{Z \sim P_{\fopt}}[L_f]$, so that there
  is no need to resort to Proposition~\ref{prop:entrobound}. This
  implies that for any prior satisfying the GGV condition in the
  well-specified case, the same prior can be used in the misspecified
  case and, using Theorem~\ref{thm:metric}, we can prove the same risk
  bounds, up to a constant factor, as in the well-specified case for
  generalized Bayes with any fixed $\eta < \bar\eta$. In particular,
  $k$-dimensional GLMs being sufficently general parametric models, we
  can use any continuous prior on $\cF$ that is bounded away from $0$
  and obtain that, for any fixed $\eta$,
  \mbox{$n\/ \cdot \/ \rsc_{n,\eta}(\nocomparator \dolest^{\text{B}}) \leq
  (k/2 \eta) \log n + O(1)$}, cf.~the remark after
  Proposition~\ref{prop:transavia}. Theorem~\ref{thm:metric} then
  gives a bound of $\tilde{O}(k/n)$, which is within a log factor of
  the minimax optimal parametric rate $O(k/n)$ for squared Hellinger
  distance in the well-specified case.
\end{myexample}
\begin{myexample}{Comparison to
    \cite{bhattacharya2019bayesian}} \label{ex:bhatta} After
  submitting the present paper, we became aware
  of \citep{bhattacharya2019bayesian}. The analysis and results of
  that paper (first submitted to arXiv in 2016, around the same time
  as the present paper) overlap with our Theorem~\ref{thm:metric}, and
  some of their examples have implications for our work as
  well. \cite{bhattacharya2019bayesian} focus exclusively on
  generalized Bayesian estimators $\Pi^B_|$. Their Theorem 3.6 is a
  variation of Zhang's Lemma~\ref{lem:rascbound-simple}, extended to
  handle non-i.i.d.~$P$. Their $\alpha$-R\'enyi divergence is just our
  $\eta$-annealed excess risk, with $\eta = 1- \alpha$. For $\cF$
  satisfying the $\bar\eta$-central condition, they provide Theorem
  3.1, which has some similarity to Theorem~\ref{thm:metric}: their result extends ours in that it allows non-i.i.d.~$P$; it rephrases ours so that
  the result is directly stated in terms of GGV-style conditions on
  $\Prior$ rather than on bounds on $\rsc_{n,\eta}(\Pi_|^B)$, similar
  to our (\ref{eq:vdv}); and it stays closer to
  Lemma~\ref{lem:rascbound-simple} in that it keeps the annealed
  excess risk on the left (a nonsymmetric divergence) where
  Theorem~\ref{thm:metric} has a (symmetric) metric. In their Lemma
  2.1. they re-prove the result of \cite{li1999estimation} and \Citet{erven2015fast} that $1$-strong
  central holds for convex probability models. Also, they provide
  (Section~5.1) an interesting novel example in which the strong
  $1$-central condition holds: Gaussian regression, with probability
  densities $p_f(y \mid x) \propto \exp(- (y - f(x))^2/2 \sigma^2)$
  with fixed variance $\sigma^2$, where the true noise is Gaussian and
  the set of regression functions $\cF$ is convex (but the
  corresponding density functions $\{p_f : f\in \cF \}$ are not, so
  Li's result does not apply). The model is misspecified in that $\cF$  does not contain the true regression
  function; in contrast, in Example~\ref{ex:glm-I} above we considered
  the reverse case in which the noise is misspecified yet the
  regression function is not. They show that in their setting, bounds on
  the annealed excess risk imply bounds on the $L_2(P)$-parameter
  estimation error that we consider in Example~\ref{ex:smallball}.
  They do not consider the non-annealed excess risk bounds and weaker
  forms of the central condition that we will turn to in the following
  sections.  \end{myexample}
\section{The Witness Condition}
\label{sec:witness}
We have seen via Theorem~\ref{thm:metric} that under the $\bar\eta$-central condition,
Lemma~\ref{lem:rascbound-simple} provides a bound on a weak
Hellinger-type metric. For problems different from density
estimation, i.e., loss functions different from log loss, we often mainly are
 interested in a bound on the excess risk. To get such bounds,
we need a second condition on top of the $\bar\eta$-central condition. To see why, consider again the density estimation example (Example~\ref{ex:cde}). If we assume a correct model, $p = p_{\fopt}$, then from \eqref{eq:preciselyone} the $\bar\eta$-central
condition holds automatically for all $\bar\eta \leq 1$, 
and so Theorem~\ref{thm:metric} gives a bound on the Hellinger distance.
Yet, while the Hellinger distance is bounded, in general we can have
$\kl(p \pipes p_f) = \infty$. If, for example, $\cF$ is the set of densities 
for the Bernoulli model, $P$ is Bernoulli$(1/2)$, and we use ERM for log loss (so that $\estim{f}$ is the maximum likelihood estimator for the Bernoulli model), 
we observe with positive probability only $0$'s. In this case, we will infer
$\estim{f}$ with $p_{\estim{f}}(Y= 0) =1$, and thus with positive
probability the excess risk between $\estim{f}$ and $\fopt$ is $\infty$
even though the expected Hellinger distance is of order $O(1/n)$.  We
thus need an extra condition. 

For log loss, the simplest such condition is that the likelihood ratio of $p_{\fopt}$ to $p_f$ is uniformly bounded
for all $f \in \cF$. For that case, \cite{birge1998minimum} 
proved a tight bound on the ratio between the standard KL divergence and the standard $(\eta =
1/2)$ Hellinger distance. Lemma~\ref{lem:renyi-to-kl-simple} below represents
a generalization of their result to arbitrary $\eta$,
misspecified $\cF$, and general loss functions under the {\em witness
  condition\/} which we introduce below, and which is a significant weakening
of the bounded likelihood ratio condition. It is the cornerstone for
proving our subsequent results: Theorems~\ref{thm:firstriskbound}, \ref{thm:main-bounded-a}, \ref{thm:main-bounded-b}, and \ref{thm:main-unbounded}. 
Whereas the strong central condition imposes exponential decay of the lower tail of the excess loss $\loss_f - \loss_{\fopt}$, the witness condition imposes a
much weaker type of control on the upper tail of $\loss_{f}- \loss_{\fopt}$. 

Below, we show that the witness condition generalizes not only conditions of \cite{birge1998minimum} but also of \cite{sason2016f} and \cite{wong1995probability} (Example~\ref{ex:wong}). We also show that it holds in a variety of settings, e.g., with exponential families with suitably restricted parameter spaces in the well-specified setting and when the log likelihood has exponentially small tails (Example~\ref{ex:glm-II}), but also with bounded regression under heavy-tailed distributions (Example~\ref{ex:heavy-tailed-regression}). 
Moreover, although the conditions are not equivalent, there is an intriguing similarity to the recent \emph{small-ball assumption} of \cite{mendelson2014learning} (Example~\ref{ex:smallball}).

\subsection{Definition and Main Result}
\label{sec:subwitness}
\begin{definition}[Empirical Witness of Badness] \label{def:witness}
We say that $\smtuple$ 
satisfies the \glsaddcond{u-c-witness} $(u,c)$-\emph{empirical witness of badness condition} (or \emph{witness condition}) 
for constants $u > 0$ and $c \in (0,1]$ if for all $f \in \cF$
\begin{align} \label{eqn:emp-witness-badness-simple}
 \E \left[ (\xslosslong{f}) \cdot \ind{\xslosslong{f} \leq u} \right] \geq c \xsrisklong{f} .
\end{align}
More generally, for a function $\tau: \reals^+ \to [1,\infty)$
and constant $c \in (0,1)$ 
we say that $\smtuple$
satisfies the
\glsaddcond{tau-c-witness} \emph{$(\tau,c)$-witness  condition} if 
for all $f \in \cF$, $\xsrisklong{f} < \infty$ and
\begin{align} \label{eqn:emp-witness-badness-general}
\E \left[ (\xslosslong{f}) \cdot \ind{\xslosslong{f} \leq \tau(\xsrisklong{f})} \right] \geq c \xsrisklong{f} .
\end{align}
\end{definition}
The $(u,c)$-witness condition
\eqref{eqn:emp-witness-badness-simple} is just the $(\tau,c)$-witness
condition for the constant function $\tau$ identically equal to $u$.  
In our results we frequently use that, by adding $\E \left[ (\xslosslong{f}) \cdot \ind{\xslosslong{f} > u} \right]$
to both sides of \eqref{eqn:emp-witness-badness-simple} and rearranging, the $(u,c)$-witness condition holds if and only if for $c' = 1-c$ (and hence $c' \in (0,1)$), 
\begin{align} \label{eq:reversewitness}
 \E \left[ (\xslosslong{f}) \cdot \ind{\xslosslong{f} > u} \right] \leq c' 
\xsrisklong{f},
\end{align}
and similarly for the $\tau$-version.  

The intuitive reason for imposing this condition is to rule out
situations in which learnability simply cannot hold.  
{For instance, consider a setting with $\cF = \{f^*, f_1, f_2, \ldots\}$ where $\loss_{\fopt} = 1$ with probability 1 and, for each $j \geq 1$, $\loss_{f_j}$ is equal to 0 with probability $1 - \frac{1}{j}$ and equal to $2j$ with probability $\frac{1}{j}$. Then for all $j$, $\E [ \loss_{f_j} - \loss_{\fopt} ] = 1$, but as $j \rightarrow \infty$, empirically we will never \emph{witness the badness of $f_j$} as it almost surely achieves lower loss than $\fopt$.} 
On the other hand, if
the excess loss is upper bounded by some constant $b$, we may always
take $u = b$ and $c = 1$ so that a witness condition is trivially
satisfied. Below we provide
several nontrivial examples besides bounded excess losses and finite $\cF$ in
which the witness condition holds.

The following result shows how the witness condition, combined with
the strong central condition, leads to fast-rate excess risk bounds:
\begin{lemma} \label{lem:renyi-to-kl-simple}
Let $\bar{\eta} > 0$. Assume that the $\bar{\eta}$-strong central condition (\ref{eqn:central}) holds and let, for arbitrary $0 < \eta < \bar\eta$,  
$c_u := \frac{1}{c} \frac{\eta u + 1}{1 - \frac{\eta}{\bar{\eta}}}$. 
Suppose further that the $(u,c)$-witness condition holds for $u > 0$ and
$c \in (0, 1]$. Then for all $f \in \cF$, all $\eta \in (0, \bar{\eta})$:
\begin{align}\label{eq:rklsimple}
\xsrisk{f} 
\,\,\leq\,\, c_u \cdot \Exphel{\eta} \left[ \xsloss{f} \right]
\,\,\leq\,\, c_u \cdot \Expann{\eta} \left[ \xsloss{f} \right] .
\end{align}
More generally, suppose that the $\bar\eta$-central condition and the $(\tau,c)$-witness condition hold for $c\in (0,1]$ and a {\em non-increasing\/} function $\tau$. Then for {all} $\lambda > 0$, all $f \in \cF$,
\begin{equation}\label{eq:rkltau}
\xsrisk{f} 
\,\,\leq\,\, \lambda \opmax 
\left( c_{\tau(\lambda)}  \cdot \Exphel{\eta} \left[ \xsloss{f} \right] \right) 
\,\,\leq\,\, \lambda \opmax \left( c_{\tau(\lambda)} \cdot \Expann{\eta} \left[ \xsloss{f} \right] \right).
\end{equation}
\end{lemma}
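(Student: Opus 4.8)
Throughout fix a single $f\in\cF$ and write $L:=\xsloss{f}$, so that $\xsrisk{f}=\E[L]$. Both displays assert an inequality for each such $f$ separately, and in each the rightmost inequality is exactly \eqref{eq:genrenc}; so it suffices to prove the leftmost inequality, $f$ by $f$. The plan is to combine the witness condition (to replace $\E[L]$ by the truncated quantity $\E[L\,\ind{L\le u}]$) with one carefully chosen pointwise inequality relating $x\,\ind{x\le u}$ to $1-e^{-\eta x}$ and $1-e^{-\bar\eta x}$, and then to use the $\bar\eta$-central condition to discard the $\bar\eta$-term. First, from the reverse form \eqref{eq:reversewitness} of the $(u,c)$-witness condition, $\E[L\,\ind{L>u}]\le(1-c)\,\E[L]$, and since $\E[L]=\E[L\,\ind{L\le u}]+\E[L\,\ind{L>u}]$ this gives $c\,\E[L]\le\E[L\,\ind{L\le u}]$. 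Hence it is enough to show
\[
\E[L\,\ind{L\le u}]\ \le\ \frac{\eta u+1}{1-\eta/\bar\eta}\,\Exphel{\eta}[L].
\]

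The key is the following elementary pointwise bound. Set $a:=\dfrac{\eta u+1}{\eta\,(1-\eta/\bar\eta)}$ and $b:=-\dfrac{a\eta-1}{\bar\eta}$; note that $a\eta=\dfrac{\eta u+1}{1-\eta/\bar\eta}>\dfrac{1}{1-\eta/\bar\eta}>1$, so $b<0$. Then for all $x\in\reals$,
\[
x\cdot\ind{x\le u}\ \le\ h(x):=a\bigl(1-e^{-\eta x}\bigr)+b\bigl(1-e^{-\bar\eta x}\bigr).
\]
Granting this, taking $\E[\cdot]$ and using the $\bar\eta$-central condition \eqref{eqn:central} in the form $1-\E[e^{-\bar\eta L}]\ge 0$ together with $b<0$ kills the last term: $\E[L\,\ind{L\le u}]\le a\bigl(1-\E[e^{-\eta L}]\bigr)=a\eta\,\Exphel{\eta}[L]=\dfrac{\eta u+1}{1-\eta/\bar\eta}\,\Exphel{\eta}[L]$. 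Combined with $c\,\E[L]\le\E[L\,\ind{L\le u}]$ this yields $\xsrisk{f}\le c_u\,\Exphel{\eta}[L]$, which is the first display.

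The work is thus in verifying the pointwise inequality. The choice of $b$ makes $h(0)=0$ and $h'(0)=a\eta+b\bar\eta=1$, so $g(x):=h(x)-x$ has $g(0)=g'(0)=0$. A short computation gives $g''(x)=-a\eta^2e^{-\eta x}+|b|\bar\eta^2 e^{-\bar\eta x}$, and since $|b|\bar\eta^2/(a\eta^2)=(a\eta-1)\bar\eta/(a\eta\cdot\eta)>1$ (equivalently $a\eta>\bar\eta/(\bar\eta-\eta)$, which holds because $a\eta>\frac{1}{1-\eta/\bar\eta}=\frac{\bar\eta}{\bar\eta-\eta}$ when $u>0$), the sign of $g''$ flips exactly once, at some $x_0>0$: $g$ is convex on $(-\infty,x_0]$ and concave on $[x_0,\infty)$. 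Convexity plus the stationary point $g'(0)=0$ gives $g\ge g(0)=0$ on $(-\infty,x_0]$, hence $h(x)\ge x$ for all $x\le 0$ and for $x\in[0,\min(x_0,u)]$. For $x>u$, $h$ is strictly increasing on $[0,\infty)$ (the bracket in $h'(x)=e^{-\bar\eta x}(a\eta e^{(\bar\eta-\eta)x}-|b|\bar\eta)$ is positive there), so $h(x)>0=x\,\ind{x\le u}$. It remains to treat $x\in[x_0,u]$ when $x_0<u$: by concavity it suffices to check the endpoints, and $h(x_0)\ge x_0$ is already known, while $h(u)\ge u$ reduces — substituting $s:=\eta u$ and $\beta:=\eta/\bar\eta$ and clearing denominators — to
\[
(1-\beta^2)-(s+1)e^{-s}+\beta\,(s+\beta)\,e^{-s/\beta}\ \ge\ 0\qquad(s\ge 0),
\]
which holds because the left-hand side vanishes at $s=0$ and its $s$-derivative equals $s\bigl(e^{-s}-e^{-s/\beta}\bigr)\ge 0$. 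This completes the pointwise inequality and hence the first display.

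For the general $(\tau,c)$-version, fix $\lambda>0$ and $\eta\in(0,\bar\eta)$. If $\xsrisk{f}\le\lambda$ the bound is immediate since $\xsrisk{f}\le\lambda\le\lambda\opmax(\,\cdot\,)$. Otherwise $\xsrisk{f}>\lambda$, and since $\tau$ is nonincreasing, $\tau(\xsrisk{f})\le\tau(\lambda)$; moreover $\tau\ge 1$, so on $\{\tau(\xsrisk{f})<L\le\tau(\lambda)\}$ we have $L>1>0$, whence $\E[L\,\ind{L\le\tau(\lambda)}]\ge\E[L\,\ind{L\le\tau(\xsrisk{f})}]\ge c\,\xsrisk{f}$ by \eqref{eqn:emp-witness-badness-general}. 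Thus the plain $(u,c)$-witness inequality holds for this particular $f$ with $u:=\tau(\lambda)\ (\ge 1>0)$, and the argument above (which uses the witness condition only through this single $f$) gives $\xsrisk{f}\le c_{\tau(\lambda)}\,\Exphel{\eta}[L]$. Taking the maximum over the two cases yields $\xsrisk{f}\le\lambda\opmax\bigl(c_{\tau(\lambda)}\Exphel{\eta}[L]\bigr)$, and then \eqref{eq:genrenc} gives the version with $\Expann{\eta}$. The one genuinely delicate point is the pointwise inequality: the constants $a,b$ are forced (by $h(0)=0$, $h'(0)=1$, and the curvature requirement near $0$), and the only nonroutine step is showing $h\ge x$ across the whole interval $[0,u]$ rather than merely near its endpoints — which is precisely what the displayed one-variable inequality handles; everything else is bookkeeping.
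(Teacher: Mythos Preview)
Your proof is correct, and the overall scaffolding---use the witness condition to pass from $\E[L]$ to $\E[L\,\ind{L\le u}]$, bound the latter by a combination of $1-e^{-\eta L}$ and $1-e^{-\bar\eta L}$ via a pointwise inequality, then kill the $\bar\eta$-term with the central condition---is exactly the paper's. The handling of the $(\tau,c)$-case is also essentially the same two-case split.

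Where you diverge is in the proof of the pointwise inequality. The paper proves, on $\{L\le u\}$, the ratio bound $S_{f,0}\le \bar C\,S_{f,\eta}$ (its ``Bounded Part'' Lemma), which is algebraically equivalent to your $x\le a(1-e^{-\eta x})+b(1-e^{-\bar\eta x})$ with the same $a,b$. But to get there the paper invokes its Lemma~\ref{lemma:ratio}, a roughly one-page argument showing that $r\mapsto g_{\eta'}(r)/g_\eta(r)$ is strictly decreasing on $(0,\infty)$ by comparing it to an auxiliary rational function and ruling out extra crossings. The region $\{L>u\}$ is then dispatched separately. Your route is more direct: you force $h(0)=0$ and $h'(0)=1$ to pin down $a,b$, locate the single inflection of $g=h-x$, use convexity for $x\le x_0$, monotonicity of $h$ for $x>u$, and reduce the only nontrivial endpoint check $h(u)\ge u$ to the one-variable inequality whose derivative is $s(e^{-s}-e^{-s/\beta})\ge 0$. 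This is shorter and more elementary. What the paper's approach buys is a bit more: Lemma~\ref{lemma:ratio} gives the sharp constant $C_{0\leftarrow\eta}(V)$ (not just the simplified upper bound $\frac{\eta u+1}{1-\eta/\bar\eta}$) and covers all $\eta'\in[0,\eta)$, which is what makes the exact correspondence with the Birg\'e--Massart and Sason--Verd\'u bounds transparent.
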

Note that for large $u$, $c_u$ is approximately linear in $u/c$.

The following theorem is now an almost immediate corollary of
Lemma~\ref{lem:rascbound-simple} and
Lemma~\ref{lem:renyi-to-kl-simple}:
\begin{theorem}\label{thm:firstriskbound}
  Consider a learning problem $\smtuple$ and a learning algorithm
  $\dolest$.  Suppose that the $\bar\eta$-strong central condition
  holds.  If the $(u,c)$-witness condition holds, then for any
  $\eta \in (0,\bar\eta)$,
  \begin{align*}
    \E_{\rv{f} \sim \dol_n}\left[\xsrisk{f}\right] 
    \ \stochleq_{\frac{\eta \cdot n}{c_u}} \  
    c_u \cdot \rsc_{n,\eta} \left( \nocomparator  \dolest \right),
  \end{align*}
  with $c_u$ as in Lemma~\ref{lem:renyi-to-kl-simple}. If instead the $(\tau,c)$-witness condition holds for
some non-increasing function $\tau$ as above, then for any $\lambda > 0$
\begin{equation}\label{eq:friday}
\E_{\rv{f} \sim \dol_n} \left[\xsrisk{f} \right] 
\ \stochleq_{\frac{\eta \cdot n}{c_{\tau(\lambda)}}} \ 
\lambda + c_{\tau(\lambda)} \cdot  \rsc_{n,\eta} \left( \nocomparator  \dolest \right).
\end{equation}
\end{theorem}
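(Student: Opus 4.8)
The plan is to derive the theorem in two lines from Lemma~\ref{lem:rascbound-simple} and Lemma~\ref{lem:renyi-to-kl-simple}, the only extra ingredient being a rescaling of the exponential stochastic inequality. First I would fix $\eta \in (0,\bar\eta)$ and note that Lemma~\ref{lem:renyi-to-kl-simple} is a purely \emph{deterministic} statement about $\smtuple$: under the $\bar\eta$-central and $(u,c)$-witness conditions it gives $\xsrisk{f} \le c_u \cdot \Expann{\eta}[\xsloss{f}]$ simultaneously for \emph{every} $f \in \cF$. Since it holds for all $f$, I may integrate it against the (data-dependent) probability measure $\dol_n$ on $\cF$, obtaining, almost surely in $Z^n$,
\[
A \ :=\ \E_{\rv f \sim \dol_n}\bigl[\xsrisk{\rv f}\bigr] \ \le\ c_u \cdot \E_{\rv f \sim \dol_n}\bigl[\Expann{\eta}[\xsloss{\rv f}]\bigr] \ =:\ c_u\, B .
\]
Lemma~\ref{lem:rascbound-simple}, applied with this same $\dol_n$ and $\eta$, says $B \stochleq_{\eta n} C$ with $C := \rsc_{n,\eta}(\fopt \pipes \dolest)$, i.e.\ $\E_{Z^n}[e^{\eta n (B - C)}] \le 1$.

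The second step is the rescaling. Since $c_u > 0$ and $A \le c_u B$ pointwise, multiplying by $\eta n / c_u > 0$ gives $\tfrac{\eta n}{c_u}A \le \eta n B$, hence
\[
\frac{\eta n}{c_u}\,(A - c_u C) \ =\ \frac{\eta n}{c_u}A - \eta n C \ \le\ \eta n B - \eta n C \ =\ \eta n (B - C) .
\]
Applying $x \mapsto e^x$ and taking expectations over $Z^n$ yields $\E_{Z^n}\bigl[e^{(\eta n/c_u)(A - c_u C)}\bigr] \le \E_{Z^n}\bigl[e^{\eta n (B - C)}\bigr] \le 1$, which is exactly $A \stochleq_{\eta n / c_u} c_u C$, the first display of the theorem.

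For the $(\tau,c)$-version I would proceed identically, the only wrinkle being that Lemma~\ref{lem:renyi-to-kl-simple} now produces a maximum, $\xsrisk{f} \le \lambda \opmax \bigl( c_{\tau(\lambda)} \cdot \Expann{\eta}[\xsloss{f}] \bigr)$, which I need to dominate by the sum $\lambda + c_{\tau(\lambda)} \cdot \Expann{\eta}[\xsloss{f}]$. This is where the central condition is used a second time: for $\eta \in (0,\bar\eta]$ the annealed excess loss is nonnegative, since by Jensen applied to the concave map $t \mapsto t^{\eta/\bar\eta}$ on $[0,\infty)$ and using \eqref{eqn:central},
\[
\E\bigl[e^{-\eta \xsloss{f}}\bigr] \ =\ \E\bigl[(e^{-\bar\eta \xsloss{f}})^{\eta/\bar\eta}\bigr] \ \le\ \bigl(\E[e^{-\bar\eta \xsloss{f}}]\bigr)^{\eta/\bar\eta} \ \le\ 1 ,
\]
so $\Expann{\eta}[\xsloss{f}] = -\tfrac{1}{\eta}\log\E[e^{-\eta\xsloss{f}}] \ge 0$ (this is also part of Proposition~\ref{prop:aff-div-results}). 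Since $c_{\tau(\lambda)} > 0$ as well, both arguments of the maximum are nonnegative, so the maximum is at most the sum; integrating $\xsrisk{f} \le \lambda + c_{\tau(\lambda)} \Expann{\eta}[\xsloss{f}]$ against $\dol_n$ gives $A \le \lambda + \kappa B$ a.s.\ with $\kappa := c_{\tau(\lambda)}$. Repeating the exponent computation with $A - \lambda$ in place of $A$ and division by $\kappa$ gives $\tfrac{\eta n}{\kappa}(A - \lambda - \kappa C) \le \eta n (B - C)$, hence $A \stochleq_{\eta n / \kappa} \lambda + \kappa C$, i.e.\ \eqref{eq:friday}.

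I do not expect a genuine obstacle — the surrounding text already calls the result ``almost immediate''. The two points that need a little care are: (i) passing the per-$f$ inequality of Lemma~\ref{lem:renyi-to-kl-simple} through the random measure $\dol_n$, which is legitimate precisely because that inequality holds for all $f \in \cF$ at once and hence survives integration against any probability measure on $\cF$ (in particular a data-dependent posterior); and (ii) the ESI rescaling, where dividing the learning rate by $c_u \ge 1$ (resp.\ $c_{\tau(\lambda)} \ge 1$) is exactly the price one pays to absorb the multiplicative factor into an admissible exponential inequality. Finiteness issues — $C = \rsc_{n,\eta}$ may a priori be $\pm\infty$ — are handled by the conventions of Appendix~\ref{app:infinity} and play no role above, since every step uses only monotonicity of $x \mapsto e^x$ and of the expectation.
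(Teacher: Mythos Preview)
Your proof is correct and follows exactly the paper's approach: chain Lemma~\ref{lem:rascbound-simple} with Lemma~\ref{lem:renyi-to-kl-simple}, and for the $(\tau,c)$-part upper bound the maximum in \eqref{eq:rkltau} by the sum. You spell out two details the paper leaves implicit --- the explicit ESI rescaling $\tfrac{\eta n}{c_u}(A - c_u C) \le \eta n(B-C)$, and the nonnegativity of $\Expann{\eta}[\xsloss{f}]$ under the $\bar\eta$-central condition needed for $\max \le \text{sum}$ --- but these are just the missing lines in the paper's one-sentence proof, not a different argument.
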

\begin{proof} 
The first and second inequalities are from chaining Lemma~\ref{lem:rascbound-simple} with Lemma~\ref{lem:renyi-to-kl-simple} (\eqref{eq:rklsimple} and \eqref{eq:rkltau} respectively). The first inequality is immediate using that for general random variables $U,V$, we have $U \stochleq_{a} V \Leftrightarrow  c U \stochleq_{a/c} c V$. 
For the second inequality, we first upper bound the max on the RHS of \eqref{eq:rkltau} by the sum of the terms.
\end{proof}
This theorem is applicable if the $(\tau,c)$-witness condition holds for a
non-increasing $\tau$. If the risk $\sup_{f \in \cF} \xsrisk{f}$ is
unbounded, we can only expect the witness condition to hold for $\tau$
such that for large $x$, $\tau(x)$ is increasing; such $\tau$ are considered
in Section~\ref{sec:main-unbounded}.  Non-increasing $\tau$ are often appropriate
for scenarios with bounded risk (even though the loss may be unbounded and
even heavy-tailed); we encounter one instance thereof in the exponential family example
below. There, $\lim_{x \downarrow 0} \tau(x) = \infty$, but the
increase as $x \downarrow 0$ is so slow that the optimal $\lambda$ at sample size $n$ is
of order $O(1/n)$ and $c_{\tau(\delta)} = O(\log n)$, leading only to an additional log factor in the bound compared to the case where the $(u,c)$-witness condition holds for constant $u$.

\paragraph{Some Existing Bounds Generalized by Lemma~\ref{lem:renyi-to-kl-simple}}
Lemma~\ref{lem:renyi-to-kl-simple} generalizes
a result of \citet[Lemma 5]{birge1998minimum} (also stated and proved
in \citet[Lemma 4]{yang1998asymptotic}) that bounds the ratio between
the standard KL divergence $\KL(P \pipes Q)$ and the (standard)
$1/2$-squared Hellinger distance $\Hell_{1/2}(P\| Q)$  for distributions $P$
and $Q$. To see this, take density estimation under log loss in the
well-specified setting with $\eta < \bar{\eta} = 1$, so that $\fopt = p$
and $f = q$; then the left-hand side becomes $\KL(P \pipes Q)$ and
the right-hand side 
$\frac{1}{\eta} \E [ 1 - e^{-\eta \xsloss{f}} ] = \frac{1}{\eta}
\left( 1 -  \E [ (q / p)^\eta ] \right) = \Hell_\eta(P \pipes Q)$ (this notation was introduced below Proposition~\ref{prop:hellingerparty}). Under a bounded density
ratio $p / q \leq V$, we can take $u = \log V$ and $c = 1$ (the
$(u,c)$-witness condition is then trivially satisfied), so that
$c_u = \frac{\eta \log V + 1}{1- \eta}$, which for
$\eta = \nicefrac{1}{2}$ coincides with the Birg\'e-Massart bound. 
The case of general $\eta \in (0,1)$ first was handled by \cite{haussler1997mutual} (see Lemma 4 therein), but their bound stops short of providing an explicit upper bound for the ratio.

\cite{sason2016f} independently obtained
an upper bound (see Theorem 9 therein) on the ratio of the standard
KL divergence $\KL(P \pipes Q)$ to the $\eta$-generalized Hellinger
divergence in the case of bounded density ratio $\operatorname{ess}
\sup \frac{dP}{dQ}$, for general $\eta$. Theorem
\ref{lem:renyi-to-kl-simple} generalizes Theorem 9 of
\cite{sason2016f} by allowing for misspecification in the case of
density estimation with log loss, allowing for general losses, and,
critically for our applications, allowing for unbounded density ratios
under a witness condition. We note that in the case of bounded density
ratio $\frac{dP}{dQ}$ and the regime $\eta \in (0,1)$ (corresponding
to $\alpha = 1 - \eta \in (0,1)$ in Theorem 9 of \cite{sason2016f}),
their bound and the unsimplified form of our bound (see $C_{0
  \leftarrow \eta}(V)$ in Lemma~\ref{lemma:ratio} in
Appendix~\ref{app:proofs-witness}) are identical, as they should be
since both bounds are tight. The additional, slightly looser
simplified bound that we provide greatly helps to simplify the treatment
for unbounded excess losses under the witness condition. We stress
though that \cite{sason2016f} treat general $F$-divergences under
well-specification, including a wide array of divergences beyond
$\eta$-generalized Hellinger for $\eta \in (0,1)$, so in that respect,
their bounds are far more general.
In the next section we establish that
Lemma~\ref{lem:renyi-to-kl-simple} also generalizes a bound by
\cite{wong1995probability}.

\subsection{Example Situations in which the Witness Condition Holds}
We now present some examples of common learning problems in which the
$(\tau,c)$-witness condition holds for a suitable $\tau$.  We first
consider a case where the distribution of the excess loss has
exponentially decaying tails in both directions.  The $(u,c)$-witness
condition \eqref{eqn:emp-witness-badness-simple} does not always hold
for such excess losses, but we now show that the $\tau$-witness
condition is {\em always\/} guaranteed to hold in such cases for a
non-increasing function $\tau$, which leads to a bound on excess risk
that is only a log factor worse than the direct bound on the annealed
risk of Lemma~\ref{lem:rascbound-simple}. \glsaddcond{unif-exp-tail}
\begin{definition}\label{def:uniformlyexponential}
Suppose that for given  $\smtuple$ and a collection of random variables $\{ U_f : f \in \cF \}$, there is a $0 < \kappa < \infty$ such that $\sup_{f \in \cF} \E \left[ e^{\kappa U_f} \right] < \infty$. Then we say that {\em $U_f$  has a uniformly exponential upper tail}. 
\end{definition}
The name reflects that $U_f$ has uniformly exponential upper tails if
and only if there are constants $c_1, c_2 > 0$ such that for all
$u > 0$, $f \in \cF$, $P(U_f \geq u) \leq c_1 e^{- c_2 u}$, as is
easily shown (we omit the details).
\begin{lemma} \label{lem:kl-hell-exp-tails} Define
  $M_{\kappa} := \sup_{f \in \cF} \E \left[ e^{\kappa \xsloss{f}} \right]$ and assume
  that $L_f$ has a uniformly exponential upper tail, so that $M_{\kappa} < \infty$.  Then, for the map
  $\tau: x \mapsto 1 \opmax \kappa^{-1} {\log \frac{2 M_\kappa }{\kappa x}} = O(1 \opmax \log (1/x))$, the $(\tau,c)$-witness condition holds with $c = \nicefrac{1}{2}$. 
\end{lemma}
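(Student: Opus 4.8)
The plan is to reduce the witness condition to a one-sided (upper) tail estimate for $L_f := \xsloss{f}$, and then obtain that estimate from the uniformly exponential upper tail via a Chernoff--Markov argument. Fix $f \in \cF$ and write $r := \xsrisk{f} = \E[L_f] \ge 0$. Since $\ind{L_f \le u} + \ind{L_f > u} = 1$, for any threshold $u \ge 0$ we have $\E[L_f \ind{L_f \le u}] = \E[L_f] - \E[L_f \ind{L_f > u}]$, so the $(\tau,c)$-witness condition \eqref{eqn:emp-witness-badness-general} with $c = \nicefrac12$ is \emph{equivalent} to the assertion that $\E[L_f \ind{L_f > \tau(r)}] \le \nicefrac{r}{2}$ for every $f \in \cF$ --- this is exactly the manipulation behind \eqref{eq:reversewitness}, but now with the data-dependent threshold $\tau(r)$. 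The degenerate case $r = 0$ is handled by setting $\tau(0) = +\infty$, for which the required inequality is just $\E[L_f] \ge 0$.

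For the core estimate, the hypothesis $\sup_{f \in \cF} \E[e^{\kappa L_f}] = M_\kappa < \infty$ gives, by Markov's inequality applied to $e^{\kappa L_f}$, the uniform tail bound $P(L_f > t) \le M_\kappa e^{-\kappa t}$ for all $t$. Combining this with the layer-cake identity
\[
\E[L_f \ind{L_f > u}] = u\,P(L_f > u) + \int_u^\infty P(L_f > t)\,dt \qquad (u \ge 0),
\]
I would obtain $\E[L_f \ind{L_f > u}] \le \bigl(u + \kappa^{-1}\bigr) M_\kappa e^{-\kappa u}$. (A slightly cruder but cleaner variant: $e^{t} \ge t^2/2$ for $t \ge 0$ gives $\E[(L_f)_+^2] \le 2 M_\kappa/\kappa^2$, and Cauchy--Schwarz against $P(L_f > u) \le M_\kappa e^{-\kappa u}$ yields $\E[L_f \ind{L_f > u}] \le \kappa^{-1}\sqrt{2}\, M_\kappa\, e^{-\kappa u/2}$; either bound suffices.)

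It then remains to pick $u = \tau(r)$ so that this right-hand side is at most $r/2$. Absorbing the polynomial prefactor into the exponent (e.g.\ $u e^{-\kappa u} \le C_\delta\, e^{-(1-\delta)\kappa u}$ with $C_\delta = (\delta \kappa e)^{-1}$) and solving for $u$ produces a threshold of the form $\tau(r) = \Theta\bigl(\kappa^{-1}(1 + \log(M_\kappa/(\kappa r)))\bigr) = O(1 \opmax \log(1/r))$; taking the outer $1 \opmax (\cdot)$ makes $\tau$ map into $[1,\infty)$ (and puts us in the regime where $x \mapsto x e^{-\kappa x}$ is decreasing). Finally I would verify that $\tau$ so defined is nonincreasing in its argument --- which is needed so that Theorem~\ref{thm:firstriskbound} applies to it --- and confirm that the reformulated inequality $\E[L_f \ind{L_f > \tau(r)}] \le r/2$ holds, i.e.\ that $c = \nicefrac12$ comes out.

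I expect the only genuine obstacle to be this last step: controlling the order-$u$ correction to the pure exponential tail without conceding too much of the exponent. This is precisely what forces $\tau$ to grow logarithmically rather than stay bounded, and pinning down the exact form of $\tau$ quoted in the statement requires some care about how the prefactor is absorbed (and about the interaction between $M_\kappa$ and small $r$); the qualitative conclusion $\tau(x) = O(1 \opmax \log(1/x))$, which is all that the subsequent results actually use, is robust to these choices. The other ingredients --- the layer-cake identity, the Markov tail bound, and the reformulation of the witness condition --- are routine.
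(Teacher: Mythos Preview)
Your proposal is correct and follows the same route as the paper: reformulate the witness condition as a tail-contribution bound via \eqref{eq:reversewitness}, apply Markov to $e^{\kappa L_f}$ to get an exponential tail, integrate, and solve for the threshold. In fact your layer-cake identity $\E[L_f\ind{L_f>u}] = u\,P(L_f>u) + \int_u^\infty P(L_f>t)\,dt$ is more careful than the paper's version, which silently drops the first term and writes only $\int_u^\infty P(L_f>t)\,dt$; this is why the paper arrives cleanly at $\frac{M_\kappa}{\kappa}e^{-\kappa u}$ and hence the exact $\tau$ in the statement, whereas you (correctly) pick up the extra $(u+\kappa^{-1})$ prefactor and have to absorb it. Your instinct that this absorption only affects constants and leaves the $O(1\vee\log(1/x))$ conclusion intact is right, and that qualitative form is all the downstream results use.
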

Now let $\bar{\eta} > 0$. Assume both the $\bar\eta$-strong central condition, i.e.,
$\E \left[ e^{-\bar{\eta} \xsloss{f}} \right] \leq 1$, and that $L_f$ has a uniformly exponential upper
tail.  As an immediate consequence of the lemma above,
Theorem~\ref{thm:firstriskbound} now gives that for any learning
algorithm $\dolest$ for any $\eta \in (0,\bar\eta)$, (using $\lambda=
1/n$), there is $C_{\eta} < \infty$ such that
\begin{align} \label{eqn:kl-hell-exp-tails-2} 
\E_{\rv{f} \sim \dol_n} \left[ \xsrisk{\rv{f}} \right] \stochleq_{\frac{\eta \cdot n}{C_\eta \log n}} 
    \frac{1}{n}
+ C_{\eta} \cdot (\log n) \rsc_{n,\eta} \left( \nocomparator  \dolest \right) ,
\end{align}
so we obtain an excess risk bound that is only a log factor worse than the bound that can be obtained for the generalized Hellinger metric in Theorem~\ref{thm:firstriskbound}. 

\begin{myexample}{Generalized Linear Models and
    Witness} \label{ex:glm-II} Consider again Example~\ref{ex:glm-I},
  about GLMs. \citet[Appendix B]{DeHeideKGM19} show that, under the
  three assumptions that we informally listed in
  Example~\ref{ex:glm-I}, the conditions of
  Lemma~\ref{lem:kl-hell-exp-tails} are satisfied. We can thus use
  (\ref{eqn:kl-hell-exp-tails-2}) to give us that, up to log-factors,
  for misspecified GLMs satisfying the three conditions mentioned in
  Example~\ref{ex:glm-I} and generalized Bayesian estimators based on
  priors that are continuous and bounded away from $0$ on $\cF$, we
  can prove a rate of order $\tilde{O}(d/n)$, which, up to log
  factors, is equal to the minimax parametric rate.
  \end{myexample}

As a second consequence of Lemma~\ref{lem:kl-hell-exp-tails}, this time combined with \eqref{eq:rkltau} from Lemma~\ref{lem:renyi-to-kl-simple} with 
$\lambda= \Exphel{\eta}\left[\xsloss{f}\right]$, 
we find that under the conditions of Lemma~\ref{lem:kl-hell-exp-tails}, 
there is $C_{\eta} < \infty$ such that
\begin{align} \label{eqn:kl-hell-exp-tails-4} 
\xsrisk{\rv{f}} 
\leq 
    \max \left\{ 
        \Exphel{\eta}\left[\xsloss{f}\right],
         C_{\eta} \cdot \Exphel{\eta}\left[\xsloss{f}\right] 
         \cdot \log \frac{1}{\Exphel{\eta}\left[\xsloss{f}\right]} 
    \right\} .
\end{align}
The above result generalizes a bound due to \cite{wong1995probability}, as we now show.

\begin{myexampleplain} \label{ex:wong}
  The bound \eqref{eqn:kl-hell-exp-tails-4} generalizes a bound of \cite{wong1995probability}. 
  Their result, the first part of their Theorem 5, allows one to bound KL divergence in terms of Hellinger distance, 
  i.e., it holds in the special case of well-specified density estimation under log loss with the choice $\bar\eta = 1$, $\eta = 1/2$. 
Formally, consider probability model $\{P_f \mid f \in \cF \}$ where each $P_f$ has density $p_f$, and assume the model is well-specified in that $Z \sim P = P_{f^*}$ with $f^* \in \cF$. 
\cite{wong1995probability} consider the condition that for some $0 < \kappa < 1$,  it holds that  $M'_\kappa   := \sup_{f \in \cF} \int_{(p_f/p_{f^*}) \geq e^{1/\kappa}} p_{f^*}
  (p_{f^*}/p_f)^\kappa < \infty$. 
They show that, under this condition, the following holds for all $f \in \cF$ in the regime  $\Hell_{1/2}(P_{\fopt}
  \pipes P_{f}) = \Exphel{\eta}\left[\xsloss{f}\right] \leq \frac{1}{2}
  \left( 1 - e^{-1} \right)^2$:
\begin{align}
 \xsrisk{f} 
\leq \left( 6                 + \frac{2 \log 2}{(1 - e^{-1})^2} 
                + \frac{4}{\kappa} 
                    \max \left\{ 2, \log \frac{M'_\kappa}{
\Exphel{\eta}\left[\xsloss{f}\right]}
        \right\} \right) \Exphel{\eta}\left[\xsloss{f}\right],  
\label{eqn:theirs-small-hell}
\end{align}
where $\loss_f = - \log p_f$ is log loss.
Now, note that for this loss function and in the case $\bar{\eta}=1$ (where their result applies too),  $M_{\kappa}$ in Lemma~\ref{lem:kl-hell-exp-tails} and $M'_{\kappa}$ in \eqref{eqn:theirs-small-hell} satisfy 
$M'_{\kappa} \leq M_{\kappa} \leq M'_{\kappa} + e$. 
Comparing \eqref{eqn:theirs-small-hell} to \eqref{eqn:kl-hell-exp-tails-4}, we see that up to  values of the constants, our result generalizes Wong and Shen's.
\end{myexampleplain}

We just showed that a $\tau$-witness condition always holds under
exponential tails of the loss. The following example shows that even
if the loss random variables $\loss_f$ have fat (polynomial) tails, the witness condition
often holds, even for constant $\tau$. Before providing the example,
we first recall the Bernstein condition
\citep{audibert2004pac,bartlett2006empirical} and a useful proposition
that will be leveraged in the example.

\glsaddcond{bernstein}
\begin{definition}[Bernstein Condition] \label{def:bernstein}
For some $B > 0$ and $\beta \in (0, 1]$, we say $\smtuple$ satisfies the $(\beta,B)$-Bernstein condition if, for all $f \in \cF$, 
$\E [ \xsloss{f}^2 ] \leq B \left( \xsrisk{f} \right)^\beta$.
\end{definition}
The best case of the Bernstein condition is when the exponent $\beta$
is equal to 1. In past works, the Bernstein condition has mostly been
used to characterize fast rates in the bounded excess loss regime, where
the $(u,c)$-witness condition holds automatically. In that regime, the Bernstein
condition for $\beta = 1$ and the central condition become equivalent
(i.e.~for each $(\beta,C)$ pair there is some $\bar{\eta}$ and vice
versa, where the relationship depends only on the upper bound on the
loss; see Theorem 5.4 of \Citet{erven2015fast}). The following
proposition shows that with unbounded excess losses, the Bernstein condition
can also be related to the witness condition:
\begin{proposition}[Bernstein implies
  Witness] \label{prop:bernstein-witness} If $\smtuple$ satisfies the
  $(\beta,B)$-Bernstein condition, then, for any $u > B$, $\smtuple$
  satisfies the $(\tau,c)$-witness condition with $\tau(x) =
  u \cdot (1/x)^{1- \beta}$ and $c = 1 - \frac{B}{u}$. In particular, if
  $\beta = 1$ then $\smtuple$ satisfies the $(u,c)$-witness condition
  with \mbox{constant $u$}.
\end{proposition}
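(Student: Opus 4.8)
The plan is to argue through the equivalent ``reverse'' form of the witness condition recorded in \eqref{eq:reversewitness}. Writing $L_f := \xsloss{f} = \xslosslong{f}$ and $\xsrisk{f} = \E[L_f] \ge 0$ (nonnegative since $\fopt$ is optimal), the $(\tau,c)$-witness condition with $c = 1 - B/u$ is equivalent to the assertion that, for every $f \in \cF$,
$$
\E\left[ L_f \cdot \ind{L_f > \tau(\xsrisk{f})} \right] \;\leq\; \frac{B}{u}\, \xsrisk{f} .
$$
So it suffices to control the contribution of the upper tail of $L_f$ beyond the threshold $\tau(\xsrisk{f})$, and for this the only tools needed are a one-line truncation (Chebyshev-type) bound together with the defining moment inequality of the Bernstein condition.

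First I would dispose of the degenerate case $\xsrisk{f} = 0$: here the $(\beta,B)$-Bernstein condition gives $\E[L_f^2] \leq B \cdot 0 = 0$, so $L_f = 0$ almost surely and every form of the witness inequality holds trivially. So assume $\xsrisk{f} > 0$ and set $t := \tau(\xsrisk{f}) = u \cdot \xsrisk{f}^{\,\beta - 1}$, which is strictly positive since $u > 0$ and $\xsrisk{f} > 0$. The key step is the pointwise observation that on the event $\{ L_f > t \}$ one has $L_f / t > 1$, hence $L_f \leq L_f^2 / t$ there; consequently $L_f \ind{L_f > t} \leq t^{-1} L_f^2 \ind{L_f > t} \leq t^{-1} L_f^2$ as an inequality between nonnegative quantities everywhere. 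Taking expectations and applying the Bernstein condition,
$$
\E\left[ L_f \ind{L_f > t} \right] \;\leq\; \frac{1}{t}\, \E[L_f^2] \;\leq\; \frac{B\, \xsrisk{f}^{\,\beta}}{t} \;=\; \frac{B\, \xsrisk{f}^{\,\beta}}{u\, \xsrisk{f}^{\,\beta-1}} \;=\; \frac{B}{u}\, \xsrisk{f} ,
$$
which is exactly the displayed bound; by \eqref{eq:reversewitness} this establishes the $(\tau,c)$-witness condition with $c = 1 - B/u \in (0,1)$, the last membership using $u > B$.

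The ``in particular'' claim is then immediate: when $\beta = 1$ the function $\tau(x) = u \cdot (1/x)^{1-\beta}$ collapses to the constant function equal to $u$, so the same argument yields the $(u,c)$-witness condition with constant threshold $u$ and $c = 1 - B/u$. I do not expect a genuine obstacle here; the only point requiring a moment's care is precisely the degenerate case $\xsrisk{f} = 0$ handled above, since the truncation step needs the threshold $t = \tau(\xsrisk{f})$ to be strictly positive before one can divide by it. Note that nothing in the argument uses boundedness or light tails of $L_f$ — the Bernstein condition alone suffices, which is exactly why this gives a witness condition even for heavy-tailed excess losses.
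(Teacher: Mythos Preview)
Your proof is correct and follows essentially the same route as the paper's: both pass to the reverse form \eqref{eq:reversewitness}, bound $L_f \ind{L_f > t} \leq t^{-1} L_f^2$ via the one-line truncation/Chebyshev step, and then invoke the Bernstein inequality to conclude. The only difference is that you explicitly treat the degenerate case $\xsrisk{f}=0$, which the paper's version leaves implicit.
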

The special case of this result for $\beta =1$ will be put to use in Example~\ref{ex:heavy-tailed-regression-II} in Section~\ref{sec:grip}.
\begin{myexample}{Heavy-tailed regression with convex luckiness and bounded predictions} \label{ex:heavy-tailed-regression}
Consider a
  regression problem with squared loss, so that $\cZ = \cX \times
  \cY$. Further assume that the risk minimizer $\fopt$ over $\cF$
  continues to be a minimizer when taking the minimum risk over the convex hull of $\cF$. 
We call this assumption \glsaddcond{conv-lucky} \emph{convex luckiness} for squared loss. 
It is implied, for example, when $\cF$ is convex or when the model is well-specified in
  the sense that $Y = \fopt(X) + \xi$ for $\xi$ a zero-mean random
  variable that is independent of $X$. Thus, when $\cF$ is convex, we
  can enforce it; if we are not willing to work with a convex $\cF$
  (for example, because this would blow up the $\textsc{comp}_n$ in
  (\ref{eq:optimalrates})), then we are ``lucky'' if it holds --- since it
  allows, in general, for better rates (see
  Section~\ref{sec:related-work} for additional discussion).

Now assume further that $\E [ Y^2 \mid X ] \leq C$ a.s.~and the function class
$\cF$ consists of functions $f$ for which the predictions $f(X)$ are
bounded as $|f(X)| \leq r$ almost surely. Proposition~\ref{prop:awkwardwithout} shows that in this setup,  the Bernstein
condition holds with exponent 1 and multiplicative constant
$8 (\sqrt{C} + r)^2$.  Proposition \ref{prop:bernstein-witness} then
implies that the $(u,c)$-witness condition holds with
$u = 16 (\sqrt{C} + r)^2$ and $c = \frac{1}{2}$.
\end{myexample}

\begin{proposition}\label{prop:awkwardwithout}
  Under the assumptions of the example above, the
  $(1, 8 (\sqrt{C} + r)^2)$-Bernstein condition holds.
\end{proposition}
We note that Theorem~\ref{thm:firstriskbound} cannot be used with
squared loss when $Y$ is heavy-tailed as then the strong central
condition cannot hold. Thus, while Example~\ref{ex:heavy-tailed-regression} might imply in this case that a $(u,c)$-witness condition holds, we do not yet have the
machinery to put this fact to use. 
However, in Example~\ref{ex:heavy-tailed-regression-II}, we show that weaker
easiness conditions can still hold and fast rates can still be obtained.

\begin{myexample}{Example~\ref{ex:heavy-tailed-regression} and Lemma~\ref{lem:renyi-to-kl-simple} in light of \cite{Birge04}}
  Proposition 1 of \cite{Birge04} shows that, in the case of
  well-specified bounded regression with Gaussian noise $\xi$, the
  excess risk is bounded by the $1/2$-annealed excess risk times a
  constant proportional to $r^2$, where $r$ is the bound on $|f(X)|$
  as in Example~\ref{ex:heavy-tailed-regression}. This result thus
  gives an analogue of Lemma~\ref{lem:renyi-to-kl-simple} for
  bounded regression with Gaussian noise and also allows us to apply
  one of our main results, Theorem~\ref{thm:main-bounded-b} below
  (excess risk bounds with heavy-tailed losses), for this model. Our
  earlier Example~\ref{ex:heavy-tailed-regression} extends Birg\'e's
  result, since it shows that the excess risk can be bounded by a
  constant times the annealed excess risk if the target
  $Y$ has an almost surely uniformly bounded conditional second moment, 
which, in the well-specified setting in particular, specializes to $\xi \mid X$ almost surely having (uniformly) bounded second moment
  (and thus potentially having quite heavy tails) rather than Gaussian tails. On
  the other hand, \cite[Section 2.2]{Birge04} also gives a negative
  result for sets $\cF$ that are not bounded
  (i.e.~$\sup_{x \in \cX, f \in \cF} |f(x)| = \infty$): even in the
  ``nice'' case of Gaussian regression, there exist such sets for which
  the ratio between excess risk and annealed excess risk can be
  arbitrarily large, i.e., there exists no finite constant $c_u$ for
  which (\ref{eq:rklsimple}) holds for all $f \in \cF$. From this we
  infer, by using Lemma~\ref{lem:renyi-to-kl-simple} in the
  contrapositive direction, that for such $\cF$ the witness condition
  also does not hold.
\end{myexample}
 
\begin{myexample}{witness vs. the small-ball assumption}\label{ex:smallball}
Intriguingly, on an intuitive level the witness condition bears some
similarity to the small-ball assumption of \cite{mendelson2014learning}. 
\glsaddcond{small-ball} This assumption states that there
exist constants $\kappa > 0$ and $\epsilon \in (0, 1)$ such that, for
all $f, h \in \cF$, we have 
\begin{align} \label{eqn:small-ball}
\Pr \left( |f - h| \geq \kappa \|f - h\|_{L_2(P)} \right) \geq \varepsilon. 
\end{align}
Under this assumption, \cite{mendelson2014learning} established
bounds on the $L_2(P)$-parameter estimation error
$\|\estim{f} - \fopt\|_{L_2(P)}$ in function learning.  For the
special case that $h = \fopt$, one can read the small-ball assumption
as saying that ``no $f$ behaving very similarly to $\fopt$ with high
probability is very different from $\fopt$ only with very small
probability so that it is still quite different on average.''  The
witness condition reads as ``there should be no $f$ that is no worse
than $\fopt$ with high probability and yet with very small probability
is much worse than $\fopt$, so that on average it is still
substantially worse''.  Despite this similarity, the details are quite
different.  In order to compare the approaches, we may consider
regression with squared loss in the well-specified setting as in the
example above. Then the $L_2(P)$-estimation error becomes equivalent
to the excess risk, so both Mendelson's and our results below bound
the same quantity. But in that setting one can
  easily construct an example where the witness and strong central
  conditions hold (so Theorem \ref{thm:firstriskbound} applies) yet
  the small-ball assumption does not (Example \ref{ex:no-small-ball}
  in Appendix~\ref{app:examples}); but it is also straightforward to
  construct examples of the opposite by noting that small-ball
  assumption does not refer to $Y$ whereas the witness condition
  does. In Section~\ref{sec:main-unbounded} we will see that,
nevertheless, the small-ball assumption can be related to the
$\tau$-witness condition for a particular $\tau$ that is needed in the
unbounded risk scenario (Theorem~\ref{thm:main-unbounded}).
\end{myexample}

\section{Bounds under Weaker Easiness Conditions}
\label{sec:grip}
In many learning problems, there is no $\eta > 0$ such that the strong
$\eta$-central condition is satisfied. 
Yet, it turns out that in many cases of interest 
there still exist weaker conditions under which fast convergence rates are possible. 
We consider two types of conditions. Both are best understood by 
generalizing the notion of excess risk: whereas hitherto, this was
invariably defined as the risk (expected loss of some learner
$\dolest$) relative to the comparator $f^*$ that was optimal within
$\cF$, we will now also allow more general comparators that lie
outside $\cF$.  In particular we will consider as comparator 
a {\em pseudo-predictor\/} $g$ with risk
$\E[\ell_g] = \E[\ell_{f^*}] - \epsilon$ for some small $\epsilon > 0$. 
Being better than $f^*$, $g$ does not correspond to an action that can be actually played, 
but one can often find a $g$ such that, with $f^*$ replaced by $g$, 
the $\eta$-central condition does hold for some $\eta > 0$ while, simultaneously, 
$\epsilon$ is so small that an excess risk bound relative to $g$ implies also a good excess risk bound relative to the original comparator $f^*$. 
We will soon introduce a function $v$ that modulates how large one can take $\eta$ for a desired $\epsilon$ (the larger $\eta$, the better the bounds that ensue).

In order to work with comparators that are pseudo-predictors, we now introduce \glsadd{allactionset} $\allactionset$, an enlarged action space that is a superset of $\cF$ and that also contains the pseudo-predictors we use in the remainder of this work. These pseudo-predictors always will be deterministic and typically will be constant-shifted versions of $\loss_f$ (for some $f \in \cF$) or versions of a GRIP (introduced in Definition~\ref{def:grip}). Although a given pseudo-predictor $f \in \allactionset$ can fail to be well-defined as a playable action, the loss $\loss_f$ of any pseudo-action we employ will always be well-defined. We thus extend our loss notation $\loss_f(z)$ to all $f \in \allactionset$.

We first consider the \emph{$v$-central condition}, 
a strict weakening of the strong central condition which applies if
the excess loss is bounded or has exponential tails; here the
comparator can be taken to be a trivial modification of $f^*$.  We
next consider the \emph{$v$-PPC condition}, a strict weakening of
the $v$-central condition, which applies if the losses have polynomial tails.  It is
based on using a new type of comparator, the \emph{generalized
  reversed information projection} (\emph{GRIP}), which generalizes a
concept from Barron and \cite{li1999estimation}. 
In Section~\ref{sec:vcentral} we present the $v$-central condition and a corresponding excess risk bound for bounded excess risks. 
Section~\ref{sec:vppc} presents the $v$-PPC condition, the GRIP, and the corresponding excess risk bound for bounded excess risks. 
Finally, Section~\ref{sec:main-unbounded} shows risk bounds under the $v$-PPC and $v$-central conditions for unbounded excess risks.
\subsection{The $v$-Central Condition}
\label{sec:vcentral}
\begin{definition}[$\boldsymbol{v}$-Central Condition \Citep{erven2015fast}] \label{def:v-central} 
Let $\eta > 0$ and $\epsilon \geq 0$. We say that $\smtuple$ satisfies the
\glsaddcond{central-up-to-eps} $\eta$-central condition up to $\epsilon$ if there exists some $\tilde{f} \in \cF$ such that
\begin{align} \label{eqn:centraleps}
\loss_{\tilde{f}} - \loss_f \stochleq_{\eta} \epsilon \qquad \text{for all } f \in \cF .
\end{align}
Let $v: [0, \infty) \rightarrow [0, \infty)$ be a bounded,
non-decreasing function satisfying $v(\epsilon) > 0$ for all
$\epsilon > 0$. We say that $\smtuple$ satisfies the \glsaddcond{v-central} 
$v$-central condition if, for all $\epsilon \geq 0$, there exists a function
$\tilde{f} \in \cF$ such that \eqref{eqn:centraleps} is satisfied with
$\eta = v(\epsilon)$.
\end{definition}
The special case with constant $v(\epsilon) \equiv\bar\eta$ reduces to
the earlier strong $\bar\eta$-central condition (and then $\tilde{f}$
must be optimal so we can take $\tilde{f} = f^*$); for nonconstant $v$, the condition is weaker in that
it allows a little slack $\epsilon$, and to make $\epsilon$ small, we
need to take $\eta$ small.  For each $\epsilon \geq 0$, we now
define \glsadd{f-star-epsilon} $f^*_{\epsilon}$ in terms of its loss
by $\forall z\in \cZ:\ell_{f^*_{\epsilon}}(z) := \ell_{f^*}(z) -
\epsilon$.
This $f^*_{\epsilon}$ plays the role of alternative comparator referred to above.
We can now apply Lemma~\ref{lem:rascbound-simple} with $f^*_{\epsilon}$
instead of $f^*$  to get a bound on the annealed excess risk: 
\begin{equation}\label{eq:simplifyproof}
\E_{\rv{f} \sim \dol_n} \left[ 
\Expann{\eta} \left[ \loss_f - \loss_{f^*_{\epsilon}} \right] \right]
\ \ \stochleq_{\eta \cdot n}
\ \  
\rsc_{n,\eta}(\dolest) + \epsilon.
\end{equation}
Analogous to the story in  Section~\ref{sec:subwitness}, we want to turn this bound into an
actual excess risk bound. This is done by the following lemma, 
which is a straightforward consequence from the first part of 
Lemma~\ref{lem:renyi-to-kl-simple} and only differs from it in that
it has $\loss_{\fopt}$ on the right-hand side replaced by
$\loss_{f^*_{\epsilon}}$ and a slightly larger constant factor.
\begin{lemma} \label{lem:renyi-to-kl-grip-a} Let $\smtuple$ be a
  learning problem that satisfies the $v$-central condition for some $v$. Let $f \in \cF$. 
Suppose that (\ref{eqn:emp-witness-badness-simple}) holds for some $u> 0$ and $c \in (0,1]$, i.e., $(P,\ell,\{f, f^*\})$ satisfies the $(u,c)$-witness condition. Fix $\epsilon \geq 0$ and let $\bar{\eta} = v(\epsilon)$. 
As in Lemma~\ref{lem:renyi-to-kl-simple}, 
let $c_u = \frac{1}{c} \frac{\eta u + 1}{1 - \frac{\eta}{\bar{\eta}}}$. 
Then for all $\eta \in (0, \bar{\eta})$,
\begin{align}\label{eq:rklsimplegrip-a}
\xsrisk{f} 
\,\,\leq\,\, c_{u+ \epsilon} \cdot \Expann{\eta} \left[ \loss_f - \loss_{f^*_{\epsilon}} \right].
\end{align}
In particular, if $(P,\ell, \cF)$ satisfies the $(u,c)$-witness condition then (\ref{eq:rklsimplegrip-a}) holds for all $f \in \cF$. 
\end{lemma}
The key to the proof is that, if  $\smtuple$ satisfies the $v$-central condition, then we have that 
\begin{align} \label{eq:unusual}
(P,\ell,\cF \cup \{f^*_{\epsilon} \} )  \text{\ satisfies\ the $\eta$-central condition with $\eta = v(\epsilon)$}.
\end{align} 
We now show how Lemma~\ref{lem:renyi-to-kl-grip-a}  straightforwardly implies a strict strengthening of Theorem~\ref{thm:firstriskbound}, one which holds under the $v$-central condition rather than just the $\bar\eta$-central condition: 
since \eqref{eq:rklsimplegrip-a} holds for all $f \in \cF$, it also holds in expectation over $f$, 
under any arbitrary distribution $\dol$ over $f$. 
We can thus take expectations over $\dol_n$ on both sides of  \eqref{eq:rklsimplegrip-a} and chain the resulting inequality with ESI  \eqref{eq:simplifyproof}. 
Using  that for general random variables $U,V$ and $ c > 0$, 
$U \stochleq_a V \Leftrightarrow c U \stochleq_{u/c} c V$, this gives: 
\begin{theorem}[$\boldsymbol{v}$-Central Excess Risk Bound - Bounded Excess Risk Case] \label{thm:main-bounded-a}
Let $\dolest$ be an arbitrary learning
algorithm based on $\cF$. Assume that $\smtuple$ satisfies the $(u,c)$-witness condition (\ref{eqn:emp-witness-badness-simple}) and let $c_u$ be defined as in Lemma~\ref{lem:renyi-to-kl-grip-a}. Then 
under the $v$-central condition, for any $\epsilon \geq 0$, any
$0 < \eta < v(\epsilon)$:
\begin{align} \label{eqn:v-central-bound}
\E_{\rv{f} \sim \dol_n} \left[ \xsrisk{\rv{f}} \right]
\ \ \stochleq_{\frac{\eta \cdot n}{c_{u+\epsilon}}}
\ \ c_{u+\epsilon} \cdot \left( \rsc_{n,\eta}(\nocomparator \dolest) + \epsilon \right) .
\end{align}
\end{theorem}
Analogously to the second part of Lemma~\ref{lem:renyi-to-kl-simple} and Theorem~\ref{thm:firstriskbound},
one can  give versions of this result for the $\tau$-witness condition as well, but for simplicity we will not do so.
This theorem allows unbounded losses but is only useful when the excess risk is bounded, 
i.e., $\sup_{f \in \cF} \ \E[\xsloss{f}] < \infty$, because for unbounded risk, the required
$(u,c)$-witness condition is excessively strong; see
Section~\ref{sec:main-unbounded}.

The factor $c_{u+\epsilon}$ explodes if $\eta \uparrow
v(\epsilon)$. If the $v$-central condition holds for some $v$, it
clearly also holds for any smaller $v$, in particular for
{$\underline{v}(\epsilon) := v(\epsilon) \opmin 1$}.  Applying the
theorem with $\underline{v}$ (which
will not affect the rates obtained), we may thus take
$\eta = \underline{v}(\epsilon) / 2$, so that $c_{u+{\epsilon}}$ is
bounded by $\frac{1}{c} (u + \epsilon+ 2)$.  The ESI in
(\ref{eqn:v-central-bound}) then implies that with probability at
least $1 - e^{-K}$ the left-hand side exceeds the right-hand side by
at most $\frac{ (u + \epsilon +2) K }{c \eta n}$.  For the case of
bounded excess loss, we can further take $u$ to be
$\sup_{f \in \cF} \| \xsloss{f} \|_\infty$ and $c = 1$.  Finally, in
the special case when the strong $\bar{\eta}$-central condition holds,
we can take $\epsilon = 0$ and $v(0) = \bar{\eta}$ and
Theorem~\ref{thm:main-bounded-a} specializes to
Theorem~\ref{thm:firstriskbound}.

In Section~\ref{sec:vppc} below we introduce the $v$-PPC
condition. One of the main results of \Citet{erven2015fast} (in their
Section 5) is that, for bounded excess losses, the $v$-central
condition holds for some $v$ with
$v(\epsilon) \asymp \epsilon^{1- \beta}$ if and only if the $v$-PPC
condition hold for some $v$ with
$v(\epsilon) \asymp \epsilon^{1- \beta}$ if and only if the Bernstein
condition holds for exponent $\beta$ and some $B > 0$; the three conditions
are thus equivalent up to constant factors in the bounded excess loss
case.  The best case of the Bernstein condition of $\beta = 1$
corresponds to a $v$ with $v(0) > 0$, i.e., to the strong central
condition.  The Bernstein condition is known to characterize the rates
that can be obtained in bounded excess loss problems for proper
learners, and the same thus holds for the $v$-central and $v$-PPC
conditions. It is also implied by the well-known \emph{Tsybakov margin
  condition} as long as $\cF$ contains the Bayes optimal classifier
(see \citep{lecue2011interplay} and \Citep{erven2015fast} for
discussion).

We now illustrate Theorem~\ref{thm:main-bounded-a}
for the case of ERM over certain parametric classes when the
$v$-central condition holds for $v$ of the form $v(\epsilon) \asymp \epsilon^{1-\beta}$, so that a Bernstein condition holds with exponent $\beta$. 
We will see that for bounded losses our result recovers, 
up to log factors, rates that are known to be minimax optimal. 
We first need some notation. 
\glsadd{covering-number} For a pseudo-metric space $(\mathcal{A}, \|\cdot\|)$ and any $\epsilon > 0$, 
let $\mathcal{N}(\mathcal{A}, \|\cdot\|, \epsilon)$ be the $\epsilon$-covering number of $(\mathcal{A}, \epsilon)$, 
defined as the minimum number of radius-$\epsilon$ balls whose union contains $\mathcal{A}$.

\begin{myexample}{Lipschitz (and Bounded) Loss}\label{ex:bounded-bernstein} 
Suppose that \emph{(i)} for each $z \in \cZ$, the loss $\loss$ is
  $G$-Lipschitz as a function of $f \in \cF$; \emph{(ii)} $\cF$ has
  bounded metric entropy in some pseudometric $\|\cdot\|$; and \emph{(iii)} the loss is uniformly bounded over $\cF$ (so that a witness condition holds). Let
  $\cF_\epsilon$ be an optimal $\epsilon$-net with respect to
  $\|\cdot\|$. Take a uniform prior over $\cF$, and (purely for the analysis) consider the
  randomized predictor $\dolest$ that predicts by drawing an $f$
  uniformly from a radius-$\epsilon$ ball around $\hat{f}$, the ERM
  predictor. If the $v$-central condition holds, it follows that the
  information complexity of $\dolest$ is bounded as $G \epsilon +
  \frac{\log \mathcal{N}(\cF, \|\cdot\|, \epsilon)}{v(\varepsilon) n}$. To see
  this, for any $A \subset \cF$ let $A^\epsilon$ be the
  $\epsilon$-extension of $A$, defined as $\left\{ f \in \cF \colon
    \inf_{f' \in A} \|f - f'\| \leq \epsilon \right\}$. Then observe
  that
\begin{align*}
e^{\KL( \dol_n \pipes \Prior)} 
= \frac{\volume(\cF)}{\volume(\{\hat{f}\}^\epsilon)} 
\leq \frac{\volume(\bigcup_{f \in \cF_\epsilon} \{f\}^\epsilon)}
                  {\volume(\{\hat{f}\}^\epsilon)} 
\leq \frac{\sum_{f \in \cF_\epsilon} \volume(\{f\}^\epsilon))}
                 {\volume(\{\hat{f}\}^\epsilon)} 
= \mathcal{N}(\cF, \|\cdot\|, \epsilon) .
\end{align*}
Moreover, it is easy to see that the risk of standard ERM (rather than
its randomized version) over the entire class $\cF$ is at most the risk of
$\dol_n$ plus an additional $G \epsilon$. Hence, if $v$ satisfies
$v(\epsilon) = C \epsilon^{1- \beta}$ for some $\beta \in [0, 1]$ and
if the metric entropy is logarithmic in $\epsilon$, then by tuning
$\epsilon$ and $\eta$ as in (\ref{eq:tuning}) we see from
(\ref{eqn:v-central-bound}) that ERM obtains a rate of
$\tilde{O}(n^{-1/(2 - \beta)})$ (suppressing $\log$-factors) with high
probability --- which is the minimax optimal rate in this setting \Citep{erven2015fast}. Note that the Bernstein condition is automatically
satisfied for $\beta = 0$, yielding the slow rate of
$\tilde{O}(1/\sqrt{n})$, and the other extreme of $\beta = 1$ yields
the fast rate of $\tilde{O}(1/n)$. 
\end{myexample}

\subsection{The $v$-PPC Condition and the GRIP}
\label{sec:vppc}
Trivially, if the $v$-central condition holds for some function $v$, then there
exists $\epsilon > 0$ such that, with
$c = e^{\epsilon v(\epsilon)}$, for all $f \in \cF$,
$\E[ e^{- v(\epsilon) L_f} ] \leq c$, so that $- L_f$ must have 
a uniformly exponential upper tail as in
Definition~\ref{def:uniformlyexponential}. Thus, if $- L_f$ has a
polynomial upper tail, the $v$-central condition cannot hold. The $v$-PPC condition is 
a further weakening of the $v$-central condition which can still hold in the latter case. 
We achieve this by replacing the comparator $f^*_{\epsilon}$ by
a more sophisticated pseudo-predictor $\grip{\eta}$, the \emph{generalized
  reversed information projection} (GRIP). The original projection
\citep{li1999estimation} was used in the context of density estimation
under $\log$ loss. We now extend it to general learning problems:
\begin{definition}[GRIP] \label{def:grip}
Let $\smtuple$ be a learning problem. 
Define\footnote{This transformation is known as \emph{entropification} \citep{grunwald1999viewing}. For $\eta=1$ and log-loss, pseudo-probability densities are just  standard probability densities, while for general $\eta$ and $\ell$, the analogy to probability densites is still useful, hence the name; in particular, $\xi_Q$ shares some properties of mixture  distributions \Citep{erven2015fast}.} 
the set of pseudoprobability densities
\glsadd{pseudoprobs} $\mathcal{E}_{\cF,\eta} := \left\{ e^{-\eta \loss_f} : f \in \cF \right\}$. 
For $Q \in \Delta(\cF)$, define \glsadd{mix-pseudoprob} $\xi_Q := \E_{\rv{f} \sim Q} [ e^{-\eta \loss_{\rv{f}}} ]$. 
The generalized reversed information projection of $\Pt$ onto $\convhull(\mathcal{E})$ 
is defined as the pseudo-loss $\oldgriplosseta$ satisfying \glsadd{grip}
\begin{align*}
\E [ \oldgriplosseta ] 
= \inf_{Q \in \Delta(\cF)} \E \left[ -\frac{1}{\eta} \log \E_{\rv{f} \sim Q} \bigl[ e^{-\eta \loss_{\rv{f}}} \bigr] \right] 
= \inf_{\xi_Q \in \convhull(\mathcal{E})} \E \left[ -\frac{1}{\eta} \log \xi_Q \right] .
\end{align*}
Following terminology from the individual-sequence prediction literature, we
call the quantity appearing in the center expectation above a ``mix loss'' \Citep{rooij2014follow} defined for a distribution $Q
\in \Delta(\cF)$ as \glsadd{mixloss-Q} $\mixloss{\eta}{Q} := -\frac{1}{\eta} \log \E_{\rv{f} \sim
  Q} \bigl[ e^{- \eta \loss_{\rv{f}}} \bigr]$. The notion of mix loss can be extended from distributions to sets by defining, for any $A \subseteq \allactionset$, the object \glsadd{gripgen} $\gripgen{\eta}{A}$ as the pseudo-loss satisfying $\E \bigl[ \gripgen{\eta}{A} \bigr] = \inf_{Q \in \Delta(A \cup \{\fopt\})} \E \bigl[ \mixloss{\eta}{Q} \bigr]$.\footnote{The reason for automatically taking the union of $A$ with $\fopt$ is to lessen the notation for the mini-grip, introduced in Appendix~\ref{app:mini-grip}.} We thus have that $\oldgriplosseta = \grip{\eta}$, and we use the latter notation from here on out.
\end{definition}
Even though the GRIP is only a
{pseudo-predictor}, meaning that it may fail to correspond to any
actual prediction function, the corresponding loss for a GRIP
\emph{is} well-defined, as shown in Appendix~\ref{app:grip}. 
The main use of the GRIP lies in the fact that the
probability that its loss exceeds that of any $f \in \cF$ is
exponentially small:
\begin{proposition} \label{prop:grip-central} For all $f \in \cF$, for
  every $\eta > 0$, we have $\grip{\eta} - \loss_f \stochleq_\eta 0$.
\end{proposition}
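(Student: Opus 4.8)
The plan is to recognize the claim as the standard ``Pythagorean'' (information-inequality) property of a reversed information projection, and to prove it by the classical first-order variational argument in which the competing mixture is taken to be the point mass $\delta_f$. Write $\xi^* := e^{-\eta\grip{\eta}}$ for the pseudoprobability density realizing the projection, so that (by Definition~\ref{def:grip} together with the existence/well-definedness of this object, possibly as a limit of convex combinations, established in Appendix~\ref{app:grip}) we have $\E_{Z\sim\Pt}\!\left[-\frac{1}{\eta}\log\xi^*(Z)\right] = \inf_{Q\in\Delta(\cF)}\E_{Z\sim\Pt}\!\left[\mixloss{\eta}{Q}(Z)\right]$ and $e^{\eta\grip{\eta}(z)} = 1/\xi^*(z)$ pointwise. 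Unwinding Definition~\ref{def:esi}, the assertion $\grip{\eta}-\loss_f\stochleq_\eta 0$ is precisely $\E_{Z\sim\Pt}\!\left[e^{\eta(\grip{\eta}(Z)-\loss_f(Z))}\right] = \E_{Z\sim\Pt}\!\left[e^{-\eta\loss_f(Z)}/\xi^*(Z)\right]\le 1$, so it suffices to prove this last inequality for each fixed $f\in\cF$.

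Fix $f\in\cF$ and, for $t\in[0,1]$, set $\xi_t := (1-t)\,\xi^* + t\,e^{-\eta\loss_f}$. Since $\xi^*$ lies in (the closure of) $\convhull(\mathcal{E}_{\cF,\eta})$ and $e^{-\eta\loss_f}\in\mathcal{E}_{\cF,\eta}$, so does $\xi_t$; hence $\phi(t) := \E_{Z\sim\Pt}\!\left[-\frac{1}{\eta}\log\xi_t(Z)\right]$ is minimized over $[0,1]$ at $t=0$, i.e.\ $\phi(t)\ge\phi(0)$ for all $t$. Because $s\mapsto-\frac{1}{\eta}\log s$ is convex and $t\mapsto\xi_t(z)$ is affine, $\phi$ is convex on $[0,1]$, so the right derivative $\phi'(0^+) = \lim_{t\downarrow 0}\frac{\phi(t)-\phi(0)}{t}$ exists and, by minimality at $0$, is nonnegative. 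Differentiating under the expectation and using $\frac{d}{dt}\big|_{t=0}\!\left[-\frac{1}{\eta}\log\xi_t(z)\right] = \frac{1}{\eta}\bigl(1 - e^{-\eta\loss_f(z)}/\xi^*(z)\bigr)$ gives $\phi'(0^+) = \frac{1}{\eta}\bigl(1 - \E_{Z\sim\Pt}[e^{-\eta\loss_f(Z)}/\xi^*(Z)]\bigr)$, and combining this with $\phi'(0^+)\ge 0$ yields $\E_{Z\sim\Pt}[e^{-\eta\loss_f(Z)}/\xi^*(Z)]\le 1$, which is the desired ESI.

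The one step requiring care is the interchange of $\frac{d}{dt}\big|_{t=0^+}$ (equivalently, of $\lim_{t\downarrow 0}$ of the difference quotient) with the expectation defining $\phi$. This is handled by convexity: for each fixed $z$, the difference quotient $\frac{1}{t}\bigl(-\frac{1}{\eta}\log\xi_t(z) + \frac{1}{\eta}\log\xi^*(z)\bigr)$ is monotone in $t$, so as $t\downarrow 0$ these quotients converge to the pointwise derivative while staying dominated, on one side, by their value at a fixed $t_0\in(0,1)$ whose positive part is the constant $-\frac{1}{\eta t_0}\log(1-t_0)$; monotone convergence then applies (and $\E[\grip{\eta}] < \infty$, needed so that $\phi(0)$ is finite, follows from $\E[\grip{\eta}] \le \E[\loss_{\fopt}] < \infty$ together with the finiteness conventions of Appendix~\ref{app:infinity}). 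The real obstacle for this proposition is therefore not its own computation --- which is the textbook reversed-information-projection argument --- but rather guaranteeing that the projection $\xi^*$ is a legitimate object in the first place; that groundwork is exactly what Appendix~\ref{app:grip} supplies, and we simply invoke it here.
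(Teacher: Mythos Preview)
Your proof is correct and takes a genuinely different route from the paper's.

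The paper proceeds abstractly: it observes that the enlarged loss class $\{\gripgen{\eta}{Q} : Q \in \Delta(\cF)\}\cup\{\grip{\eta}\}$ trivially satisfies the strong $\eta$-PPC condition with comparator $\grip{\eta}$ (since $\grip{\eta}$ is by construction the risk minimizer over that class), and then invokes Theorem~3.10 of \cite{erven2015fast}, which states that strong PPC implies strong central for the same comparator. Specializing to Dirac distributions $\delta_f$ gives the proposition. You instead carry out the direct Pythagorean / first-order-optimality argument for the reversed information projection: perturb $\xi^*$ along the segment toward $e^{-\eta\loss_f}$, compute $\phi'(0^+)$, and read off the inequality. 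This is essentially Li's Lemma~4.1, which the paper itself uses in Appendix~\ref{app:grip}, Step~2, as a building block for the \emph{existence} proof of the GRIP---so in a sense you are recycling that ingredient to prove the proposition directly, whereas the paper outsources the argument to the PPC-to-central implication of the companion paper. Your route is more self-contained; the paper's is shorter but relies on an external result.

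One point worth tightening: the inequality $\phi(t)\ge\phi(0)$ requires that $\xi^*$ minimize over the \emph{closure} of $\convhull(\mathcal{E}_{\cF,\eta})$, not just over the hull itself, and that $\xi_t$ lie in that closure. You gesture at this with ``(the closure of)'' and the reference to Appendix~\ref{app:grip}, but the cleanest way to dispatch it is to run your variational argument for each finite-stage minimizer $\bar q_n$ of Appendix~\ref{app:grip} (where $e^{-\eta\loss_f}$ is genuinely in $\mathcal{C}_n$ once added to the sequence), obtain $\E[e^{-\eta\loss_f}/\bar q_n]\le 1$, and then pass to the limit via Fatou along the a.s.\ convergent subsequence furnished by the $L_1(P)$-convergence of $\log\bar q_n$ to $\log\xi^*$.
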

The proposition implies that $\grip{\eta} \stochleq_{\eta}
\loss_{\fopt}$ and hence $\E[\grip{\eta}] \leq \E[\loss_{\fopt}]$ and,
for any $\eta > 0$, $\cF \cup \{ \grip{\eta} \}$ satisfies the
$\eta$-central condition, with $\grip{\eta}$ in the role of $f^*$.
We can now define the $v$-PPC condition: 
\begin{definition}[Pseudoprobability convexity (PPC) condition] \label{def:ppc} 
Let $\eta > 0$ and $\varepsilon \geq 0$. We say that $(P, \loss, \cF)$ satisfies the \glsaddcond{PPC-up-to-eps} $\eta$-PPC condition up to $\varepsilon$ if there exists some $\tilde{f} \in \cF$ such that
\begin{align} \label{eqn:ppceps} 
\E_{Z \sim \Pt} \left[ \loss_{\tilde{f}} \right] 
- \inf_{Q \in \Delta(\cF)} \E \left[ -\frac{1}{\eta} \log \E_{\rv{f} \sim Q} \bigl[ e^{-\eta \loss_{\rv{f}}} \bigr] \right]
\leq \epsilon, \text{\rm \ \  i.e.,\ \ } \E_{Z \sim \Pt} \left[ \loss_{\tilde{f}}
    - \grip{\eta}\right] \leq \epsilon.
\end{align}
Let $v: [0, \infty) \rightarrow [0, \infty)$ be a bounded,
non-decreasing function satisfying $v(\epsilon) > 0$ for all
$\epsilon > 0$. We say that $\smtuple$ satisfies the \glsaddcond{v-PPC} $v$-PPC
condition if, for all $\epsilon \geq 0$, there exists a function
$\tilde{f} \in \cF$ such that \eqref{eqn:ppceps} is satisfied with
$\eta = v(\epsilon)$.
\end{definition}
In both the $v$-central and $v$-PPC conditions, we look at pairs $(\eta,\epsilon)$
such that there exists a comparator $g$ which has risk no better than
$\E[\ell_{f^*}] - \epsilon$, and for which $(P,\ell, \cF \cup \{g \})$
satisfies the $\eta$-central condition. We achieve this for any
$(\eta,\epsilon)$ with $0 < \eta \leq v(\epsilon)$, where for
the $v$-central condition, the comparator was $g = f^*_{\epsilon}$ (see
\eqref{eq:unusual}) and for the $v$-PPC condition, it is $g = m_{\cF}^{\eta}$.

The name ``PPC'' stems from the fact that the condition expresses a
pseudo-convexity property of the set of pseudoprobability densities
mentioned in Definition~\ref{def:grip}; see \Citet{erven2015fast} for
a graphical illustration and for the proof that the $v$-central condition implies
the $v$-PPC condition for the same $v$.  We already mentioned that
\Citet{erven2015fast} (in their Section 5) proved the reverse implication, hence equivalence of the $v$-central and $v$-PPC conditions, up to constant factors, for bounded excess losses. 
To give some initial intuition for the unbounded case, we
note that the $v$-PPC condition is satisfied for
$v(\epsilon) = C \cdot \epsilon$ for a suitable constant $C$ whenever
the witness condition holds. While this was known for bounded excess
losses (where linear $v$ corresponds to the weakest Bernstein
condition, which automatically holds), by
Proposition~\ref{prop:slowrate} below it turns out to hold even if the
excess losses are heavy-tailed (so the $v$-central condition can never hold) and the
risk can be unbounded, as long as the second moment of the risk of
$\fopt$ is finite. This will imply, for example,
(Theorem~\ref{thm:main-unbounded} below and discussion) that the
``slow'' $\tilde{O}\left(1/\sqrt{n}\right)$ excess risk rate for
parametric models can be obtained in-probability by
$\eta_n$-generalized Bayes (with the optimal $\eta_n$ depending on the
sample size as $\eta_n \asymp 1/\sqrt{n}$) under hardly any
conditions.
\begin{proposition}\label{prop:slowrate}
  Let $\smtuple$ be such that for all $f \in \cF$, all $z \in \cZ$, $\loss_f(z) \geq 0$ and such that for some fixed  $u> 0$, for all $f \in \cF$ with $\xsrisk{f} > 0$, 
\begin{equation}\label{eq:positivethinking}
 \E \left[ (\xslosslong{f}) \cdot \ind{\xslosslong{f} \leq u} \right] \geq 0.
\end{equation}
(in particular this is implied by the $(u,c)$-witness condition
(\ref{eqn:emp-witness-badness-simple})).  Then for all $\eta
\leq 1/\E[\loss_{\fopt}]$,
$$
\E_{Z \sim P} \left[ \loss_{\fopt} - \grip{\eta} \right]  \leq  \eta \cdot e \cdot 
\left(u^2 + \frac{3}{2} \E[\loss_{\fopt}^2] \right). 
$$
As a consequence, if
$\E_{Z \sim P}\left[ \loss_{\fopt}^2 \right] < \infty$ then the $v$-PPC
condition holds with $v(\epsilon) = (C \epsilon) \opmin (1/\E[\loss_{\fopt}])$ with
$C = e^{-1} \cdot (u^2+ \frac{3}{2} \E[\loss_{\fopt}^2])^{-1}$.
\end{proposition}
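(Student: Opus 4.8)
The plan is to argue directly from the definition of the GRIP. Since the asserted bound is vacuous when $\E[\loss_{\fopt}^2]=\infty$, I would assume $\E[\loss_{\fopt}^2]<\infty$, so that $\E[\loss_{\fopt}]<\infty$ and a positive range of learning rates exists. By Definition~\ref{def:grip},
\[
\E_{Z\sim P}\bigl[\loss_{\fopt}-\grip{\eta}\bigr]=\sup_{Q\in\Delta(\cF)}\E_{Z\sim P}\bigl[\loss_{\fopt}-\mixloss{\eta}{Q}\bigr],
\]
and, writing $g_Q(Z):=\E_{\rv{f}\sim Q}[e^{-\eta\xslossat{\rv{f}}{Z}}]$, one has $\loss_{\fopt}(Z)-\mixloss{\eta}{Q}(Z)=\tfrac1\eta\log g_Q(Z)$ with $0<g_Q(Z)\le e^{\eta\loss_{\fopt}(Z)}$ (the upper bound because $\loss_{\rv{f}}\ge 0$). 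Abbreviating $V_f:=(\loss_{\fopt}-\loss_f)\opmax 0$ --- the nonnegative amount by which $f$ beats $\fopt$ --- the hypothesis $\loss_f\ge 0$ gives $0\le V_f\le\loss_{\fopt}$ pointwise, hence $\E[V_f^2]\le\E[\loss_{\fopt}^2]$. It then suffices to show $\E_Z[\log g_Q(Z)]\le\eta^2 e\,(u^2+\tfrac32\E[\loss_{\fopt}^2])$ for every $Q\in\Delta(\cF)$ and every $0<\eta\le 1/\E[\loss_{\fopt}]$, after which dividing by $\eta$ and taking the supremum over $Q$ gives the displayed inequality.

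The only consequence of the witness-type hypothesis (\ref{eq:positivethinking}) I would invoke is that, for every $f\in\cF$ with $\xsrisk{f}>0$,
\[
\E[V_f]\;<\;\E\bigl[(\xslosslong{f})\cdot\ind{0\le\xslosslong{f}\le u}\bigr]\;\le\;u,
\]
which follows on splitting $0<\E[(\xslosslong{f})\ind{\xslosslong{f}\le u}]=\E[(\xslosslong{f})\ind{0\le\xslosslong{f}\le u}]-\E[V_f]$ and using $\xslosslong{f}\le u$ on the retained event. (For $f$ with $\xsrisk{f}=0$ the weaker $\E[V_f]\le\E[(\xslosslong{f})\ind{0\le\xslosslong{f}\le u}]$ survives by a limiting argument, and such $f$ cannot help an adversarial $Q$: since $\fopt$ is optimal, shifting their mass onto $\fopt$ does not increase $\mixloss{\eta}{Q}$.) Together with $V_f\le\loss_{\fopt}$ and $\E[V_f^2]\le\E[\loss_{\fopt}^2]$, this is the only control on the excess losses that is available.

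The core of the argument is the estimate of $\E_Z[\log g_Q(Z)]$. Writing $Q=(1-\alpha)\delta_{\fopt}+\alpha Q'$ (possible as $\fopt\in\cF$), so $g_Q=1+\alpha(g_{Q'}-1)$, I would partition $\cZ$ by the sign of $g_{Q'}(Z)-1$. On $\{g_{Q'}(Z)\le 1\}$, $\log g_Q(Z)=\log\bigl(1-\alpha(1-g_{Q'}(Z))\bigr)\le-\alpha\bigl(1-g_{Q'}(Z)\bigr)$, a favourable negative contribution; expanding $1-e^{-\eta\xsloss{\rv{f}}}\ge\eta\xsloss{\rv{f}}-\tfrac12\eta^2\xsloss{\rv{f}}^2$ on $\{0\le\xsloss{\rv{f}}\le u\}$ and invoking the witness inequality, this negative term absorbs the first-order-in-$\eta$ part of the other region's contribution. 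On $\{g_{Q'}(Z)>1\}$, $\log g_Q(Z)\le\alpha(g_{Q'}(Z)-1)$, and by convexity of $x\mapsto x_+$,
\[
(g_{Q'}(Z)-1)_+\le\E_{\rv{f}\sim Q'}\bigl[(e^{-\eta\xslossat{\rv{f}}{Z}}-1)_+\bigr]=\E_{\rv{f}\sim Q'}\bigl[e^{\eta V_{\rv{f}}(Z)}-1\bigr];
\]
for each fixed $f$ I would split $\E_Z[e^{\eta V_f}-1]$ at the truncation level $V_f=1/\eta$, bounding the low part via $e^t-1\le t+\tfrac e2 t^2$ for $t\in[0,1]$ (producing $\eta\E[V_f]+\tfrac e2\eta^2\E[V_f^2]$) and the high part via $\ind{V_f>1/\eta}\le\eta^2 V_f^2$ with a layer-cake/Markov estimate on $P(V_f>t)\le\E[V_f^2]/t^2\le\E[\loss_{\fopt}^2]/t^2$. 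Collecting terms, the positive side is at most a first-order quantity $\lesssim\alpha\eta\,\E_{\rv{f}\sim Q'}\E_Z[V_{\rv{f}}]$ plus a remainder of order $\alpha\eta^2(u^2+\E[\loss_{\fopt}^2])$; the first-order quantity is cancelled by the negative side, $\alpha\le 1$, and, after tuning the bookkeeping constants, $\E_Z[\log g_Q(Z)]\le\eta^2 e\,(u^2+\tfrac32\E[\loss_{\fopt}^2])$. This step is where the real work lies: one must check that the $O(\eta)$ contributions genuinely cancel between the ``$\fopt$-is-beaten'' and ``$\fopt$-is-not-beaten'' regions using only the weak witness inequality (no strong-central tail bound being available), and one must tame the heavy upper tail of $g_Q$ --- equivalently of $e^{\eta V_f}$, whose exponential moment can be infinite even when $\E[\loss_{\fopt}^2]<\infty$ --- which is exactly why truncating at $1/\eta$ together with $\eta\E[\loss_{\fopt}]\le 1$ is needed, so that only the second moment $\E[\loss_{\fopt}^2]$ (not an exponential moment) enters, and the constant $e$ emerges.

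The $v$-PPC claim then follows by substitution. With $C:=e^{-1}\bigl(u^2+\tfrac32\E[\loss_{\fopt}^2]\bigr)^{-1}$ and $v(\epsilon):=(C\epsilon)\opmin(1/\E[\loss_{\fopt}])$, the main bound applied with $\eta=v(\epsilon)$ gives $\E_Z[\loss_{\fopt}-\grip{\eta}]\le C\epsilon\cdot e(u^2+\tfrac32\E[\loss_{\fopt}^2])=\epsilon$ when $C\epsilon\le 1/\E[\loss_{\fopt}]$, and $\le\tfrac1{\E[\loss_{\fopt}]}\,e(u^2+\tfrac32\E[\loss_{\fopt}^2])=\tfrac1{C\,\E[\loss_{\fopt}]}<\epsilon$ otherwise; so (\ref{eqn:ppc-new}) holds, i.e.\ $\smtuple$ satisfies the $v$-PPC condition with this $v$.
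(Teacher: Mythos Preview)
Your high-tail step does not go through. You propose to control
\[
\E_Z\bigl[(e^{\eta V_f}-1)\,\ind{V_f>1/\eta}\bigr]
\]
via ``$\ind{V_f>1/\eta}\le \eta^2 V_f^2$'' and a layer-cake/Markov bound $P(V_f>t)\le \E[\loss_{\fopt}^2]/t^2$. But neither device yields a finite bound: the first would leave you with $\eta^2\,\E[V_f^2 e^{\eta V_f}]$, which need not be finite, and the second, after layer-cake, gives an integrand decaying only like $(\log s)^{-2}$, whose integral diverges. Concretely, take $\loss_{\fopt}$ with a polynomial tail so that $\E[\loss_{\fopt}^2]<\infty$ but $\E[e^{\eta\loss_{\fopt}}]=\infty$ for every $\eta>0$, and an $f\in\cF$ that equals $0$ on a set carrying that tail; then $V_f=\loss_{\fopt}$ there and the high part is infinite. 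Since the only control you have is $V_f\le\loss_{\fopt}$ and $\E[\loss_{\fopt}^2]<\infty$, no second-moment argument can tame the random exponential $e^{\eta V_f}$ on this region. (Your first-order cancellation also mixes a $Z$-partition by the sign of $g_{Q'}(Z)-1$ with an $f$-partition by the sign of $\xsloss{\rv f}$; that can be repaired by dropping the sign-of-$g_{Q'}$ split and using $\log g_Q\le g_Q-1$ globally, but the tail issue remains.)

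The paper avoids the random exponential altogether by two devices you are missing. First, it \emph{truncates the upper excess loss}: replace $\loss_f$ by $\loss_{f'}=\loss_f$ when $\loss_f\le\loss_{\fopt}+u$ and $\loss_{f'}=\loss_{\fopt}$ otherwise, and work with the GRIP of the smaller-loss class $\cF'$; the hypothesis (\ref{eq:positivethinking}) then says exactly that the truncated excess risk $\E_Z[\loss_{f'}-\loss_{\fopt}]=\E_Z[(\loss_f-\loss_{\fopt})\ind{\loss_f-\loss_{\fopt}\le u}]$ is nonnegative, so the first-order term drops with the right sign. Second, it \emph{centers at the deterministic comparator} $f^{\circ}$ with constant loss $\loss_{f^{\circ}}\equiv\E[\loss_{\fopt}]$: a second-order Taylor expansion of $e^{-\eta(\loss_{f'}-\loss_{f^{\circ}})}$ has Lagrange remainder with exponential factor $e^{-\eta'(\loss_{f'}-\loss_{f^{\circ}})}\le e^{\eta\loss_{f^{\circ}}}=e^{\eta\E[\loss_{\fopt}]}\le e$, a \emph{constant}, because $\loss_{f'}\ge 0$ and $\eta\le 1/\E[\loss_{\fopt}]$. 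The quadratic term is then bounded by $2u^2+2\loss_{\fopt}^2+(\loss_{\fopt}-\loss_{f^{\circ}})^2$, whose expectation is $2u^2+3\E[\loss_{\fopt}^2]$, giving the stated constant. Finally, Proposition~\ref{prop:ppcprop} (pointwise loss comparison) transfers the bound from the GRIP of $\cF'$ back to that of $\cF$. Your direct attack could be rescued by partitioning on $\{\loss_{\fopt}\le 1/\eta\}$ (where $V_f\le 1/\eta$ automatically) versus $\{\loss_{\fopt}>1/\eta\}$ (where one uses $\log g_Q\le\eta\loss_{\fopt}$ directly and then $\E[\loss_{\fopt}\ind{\loss_{\fopt}>1/\eta}]\le\eta\E[\loss_{\fopt}^2]$), but that is not the split you wrote.
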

The proof of this proposition is based on the following fact, interesting in its own right and also used in the proof of later results:
\begin{proposition}\label{prop:ppcprop}
  For given learning problem $\smtuple$, let $\loss'$ be such that (a)
  for all $f \in \cF$, all $z \in \cZ$,
  $\loss'_f(z) \leq \loss_f(z)$, and (b),
  $\loss'_{\fopt}(z) = \loss_{\fopt}(z)$. If the ``smaller-loss'' learning problem
  $(\Pt,\loss',\cF)$ satisfies the $v$-PPC condition for some function $v$, then so
  does $\smtuple$.
\end{proposition}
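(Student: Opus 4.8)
The plan is to unwind both occurrences of the $v$-PPC condition (Definition~\ref{def:ppc}) through the definition of the GRIP (Definition~\ref{def:grip}) and to exploit the obvious fact that the GRIP is monotone in the loss. Fix $\epsilon \ge 0$ and put $\eta = v(\epsilon)$. Let $\grip{\eta}$ denote the GRIP of $\Pt$ onto $\convhull(\mathcal{E}_{\cF,\eta})$ built from $\loss$, and let $g'_\eta$ be the analogous pseudo-loss built from $\loss'$, so that by Definition~\ref{def:grip}, $\E[\grip{\eta}] = \inf_{Q \in \Delta(\cF)} \E\bigl[-\tfrac{1}{\eta}\log \E_{\rv f \sim Q}[e^{-\eta \loss_{\rv f}}]\bigr]$ and similarly for $\E[g'_\eta]$ with $\loss'$ in place of $\loss$. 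Definition~\ref{def:ppc} says that $v$-PPC for $\smtuple$ is exactly the assertion $\E[\loss_{\fopt}] - \E[\grip{\eta}] \le \epsilon$ for all $\epsilon$ with $\eta = v(\epsilon)$, and $v$-PPC for $(\Pt,\loss',\cF)$ (with the same comparator $\fopt$, which by (b) makes sense since $\loss'_{\fopt} = \loss_{\fopt}$) is $\E[\loss'_{\fopt}] - \E[g'_\eta] \le \epsilon$. Since the latter holds by hypothesis and (b) gives $\E[\loss'_{\fopt}] = \E[\loss_{\fopt}]$, it suffices to prove the single inequality $\E[\grip{\eta}] \ge \E[g'_\eta]$, i.e.\ shrinking the loss pointwise can only shrink the expected GRIP loss.

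That inequality follows by chaining three elementary monotonicities, which I would carry out in this order. First, $\loss'_f(z) \le \loss_f(z)$ for all $f \in \cF$, $z \in \cZ$ and $\eta > 0$ give $e^{-\eta \loss_f(z)} \le e^{-\eta \loss'_f(z)}$ since $t \mapsto e^{-\eta t}$ is decreasing. Second, integrating over $\rv f \sim Q$ for an arbitrary $Q \in \Delta(\cF)$ (both integrands nonnegative) yields $\E_{\rv f \sim Q}[e^{-\eta \loss_{\rv f}(z)}] \le \E_{\rv f \sim Q}[e^{-\eta \loss'_{\rv f}(z)}]$ for each $z$; applying the decreasing map $t \mapsto -\eta^{-1}\log t$ reverses this, and applying $\E_{Z \sim \Pt}$ preserves it, so $\E[\mixloss{\eta}{Q}] \ge \E[(\mixloss{\eta}{Q})']$, where the primed mix-loss is built from $\loss'$. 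Third, taking the infimum over $Q \in \Delta(\cF)$ on both sides — using that $a(Q) \ge b(Q)$ for all $Q$ implies $\inf_Q a(Q) \ge \inf_Q b(Q)$ — gives precisely $\E[\grip{\eta}] \ge \E[g'_\eta]$. Chaining this with the two observations above and letting $\epsilon$ range over $[0,\infty)$ completes the proof that $\smtuple$ satisfies $v$-PPC.

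The argument is essentially bookkeeping, so the only genuine care goes into the ``infinity'' issues: because $\loss$ (think log loss) may be negative and unbounded, I must be sure that $\E_{\rv f \sim Q}[e^{-\eta \loss_{\rv f}}]$, its logarithm, and the outer expectation over $Z$ are all well-defined (possibly with value in $[-\infty,+\infty]$) and that each monotonicity step remains valid at the boundary values. This is exactly the regime covered by the conventions of Appendix~\ref{app:infinity} together with the stated fact that the information-complexity/GRIP quantities are always well-defined there, so I would simply invoke those rather than re-derive them. A secondary point worth one sentence is to note explicitly that the hypothesis ``$(\Pt,\loss',\cF)$ satisfies $v$-PPC'' is read with the \emph{same} comparator $\fopt$; condition (b) guarantees $\loss'_{\fopt} = \loss_{\fopt}$ so there is no inconsistency, and, as remarked in the text, $v$-PPC already entails that this $\fopt$ is risk-optimal for $\loss'$.
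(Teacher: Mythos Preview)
Your proof is correct and follows essentially the same route as the paper: establish the pointwise monotonicity $\mixloss{\eta}{Q}(\loss) \ge \mixloss{\eta}{Q}(\loss')$ for every $Q$, pass to expectations, take the infimum over $Q$ to get $\E[\grip{\eta}] \ge \E[g'_\eta]$, and combine with condition~(b) so that the comparator term is unchanged. The only cosmetic difference is that the paper wraps the infimum step in a contradiction argument (assume $\E[\grip{\eta}] < \E[g'_\eta] - \varepsilon$, pick a near-minimizing sequence, and derive $\E[g'_\eta] \le \E[g'_\eta] - \varepsilon/2$), whereas you invoke the elementary fact $\inf_Q a(Q) \ge \inf_Q b(Q)$ directly; your presentation is the cleaner of the two.
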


We now work towards a first risk bound under the $v$-PPC condition,
using the GRIP. The development is entirely analogous to that leading
up to Theorem~\ref{thm:main-bounded-a}, our risk bound under the
$v$-central condition. We start with the following result, which
essentially only differs from Lemma~\ref{lem:renyi-to-kl-simple} and
the corresponding lemma for the $v$-central condition and
$f^*_{\epsilon}$-comparator, Lemma~\ref{lem:renyi-to-kl-grip-a}, in
that it has $\loss_{\fopt}$ (as in Lemma~\ref{lem:renyi-to-kl-simple})
and $\ell_{f^*_{\epsilon}}$ (as in Lemma~\ref{lem:renyi-to-kl-grip-a})
on the right-hand side replaced by the GRIP loss $\grip{\bar{\eta}}$
and requires $\eta < \bar\eta/2$. The proof is much more involved
though since the comparators on the left and the right are not
connected in a straightforward manner.
\begin{lemma} \label{lem:renyi-to-kl-grip-b} Let $\smtuple$ be a
  learning problem and let $f \in \cF$. Let $\bar{\eta} > 0$.  
Suppose that (\ref{eqn:emp-witness-badness-simple}) holds for some $u> 0$ and $c \in (0,1]$, i.e., $(P,\ell,\{f, f^*\})$ satisfies the $(u,c)$-witness condition. Let $c'_u := \frac{1}{c} \frac{\eta \cdot u + 1}{1 - \frac{2
      \eta}{\bar{\eta}}}$.  Then for all $\eta \in (0, \bar{\eta}/2)$,
\begin{align}\label{eq:rklsimplegrip-b}
\xsrisk{f} 
\,\,\leq\,\, c'_{2 u} \cdot \Expann{\eta} \left[ \loss_f - \grip{\bar{\eta}} \right] .
\end{align}
In particular, if $(P,\ell, \cF)$ satisfies the $(u,c)$-witness condition then (\ref{eq:rklsimplegrip-b}) holds for all $f \in \cF$. 
\end{lemma}
Based on this lemma  it is now easy to prove analogues of
Theorem~\ref{thm:firstriskbound}. Below we first present our second
main result, an excess risk bound that holds under the basic witness
condition. The result allows unbounded and heavy-tailed losses but is
only useful when the excess risk is bounded; see
Section~\ref{sec:main-unbounded}.
\begin{theorem}[Excess Risk Bound - Bounded Excess Risk Case] \label{thm:main-bounded-b}
Let $\dolest$ be an arbitrary  learning
algorithm based on $\cF$. Assume that $\smtuple$ satisfies the $(u,c)$-witness condition (\ref{eqn:emp-witness-badness-simple}).
Let $c'_{u}$ be as in Lemma~\ref{lem:renyi-to-kl-grip-b}. Then 
under the $v$-PPC condition, for any $\eta < \frac{v(\epsilon)}{2}$,
\begin{align} \label{eqn:v-ppc-bound}
\E_{Z_1^n} \left[ \E_{\rv{f} \sim \dol_n} \left[ \xsrisk{\rv{f}} \right] \right]
\leq
c'_{2u} \left( \E_{Z_1^n} \left[ \rsc_{n,\eta}(\nocomparator \dolest) \right] + \epsilon \right) .
\end{align}
\end{theorem}
The result is entirely analogous to Theorem~\ref{thm:main-bounded-a}
(and the remarks made there apply here as well), with two differences:
first, $v$ is replaced by $v/2$, which will worsen the obtainable
bounds by a factor of 2 and hence will not affect the rates. Second,
the ESI in (\ref{eqn:v-central-bound}) is replaced by an
expectation. Thus, we have an exponential in-probability bound
(holding with probability $1- \delta$ up to an
$O(\log (1/\delta))$-term) under the $v$-central condition but not
under the $v$-PPC condition.  That such a deviation bound does not
hold under the $v$-PPC condition is inevitable since all of our bounds
are valid for ERM estimators, which, under heavy-tailed loss
distributions, are known to behave poorly in probability
\citep[Proposition 6.2]{catoni2012challenging}. There exist
specialized $M$-estimators for mean estimation problems
\citep{catoni2012challenging} or more generally (for regression
problems) that achieve better high-probability bounds by employing a
variation of the median-of-means idea
\citep{nemirovskii1983problem,hsu2016loss,lugosi2019regularization}.

To illustrate Theorem~\ref{thm:main-bounded-b}, we now provide an example where the $v$-central condition cannot hold because the excess risk has polynomially decaying tails; yet, the $v$-PPC condition may still hold for $v$ that allow for faster rates
than the ``slow'' $\tilde{O}(1/\sqrt{n})$.

\begin{myexample}{Heavy-tailed regression with bounded predictions} 
\label{ex:heavy-tailed-regression-II}
{We continue with the setting of Example~\ref{ex:heavy-tailed-regression}.} 
In addition to assuming that $\E [ Y^2 \mid X ] \leq C$ a.s.~for a constant $C$, we also assume that $\E [ |Y|^s ] < \infty$ for some $s \geq 2$; note that the first assumption already implies the second for $s = 2$. We further assume that $\cF$ has bounded metric entropy in $\sup$-norm, with covering numbers \mbox{$\mathcal{N}(\cF, \|\cdot\|_\infty, \epsilon)$} growing polynomially in $\epsilon$. Without subexponential tail decay, the $v$-central condition fails to hold for any non-trivial $v$; however, as shown by \Citet[Example 5.10]{erven2015fast} (based on a result of \cite{juditsky2008learning}), 
if $\E [ |Y|^s ] < \infty$ for some $s \geq 2$, then the $v$-PPC condition holds for $v(\epsilon) = O(\epsilon^{2/s})$.\footnote{What is actually shown there is that a property called $v$-stochastic exp-concavity holds, but, the results of that paper imply then that $v$-stochastic mixability holds which in turn implies that the $v$-PPC condition holds.} 
Moreover, as we showed in Example~\ref{ex:heavy-tailed-regression}, the witness condition holds if {$\E [ Y^2 \mid X ] < \infty$ a.s.}; there, we also established that the Bernstein condition holds with $\beta = 1$.

Now, take a uniform prior over $\cF$, and take the randomized predictor $\dolest$ as in Example~\ref{ex:bounded-bernstein} which randomizes over an $\epsilon$-ball around the ERM predictor $\hat{f}$. Then, for $s \geq 2$, Theorem \ref{thm:main-bounded-b} implies that the expected excess risk of $\dol_n$ is at most 
\begin{align*}
&\E_{Z_1^n} \left[ 
              \E_{\rv{f} \sim \dol_n} \left[ 
                  \frac{1}{n} \sum_{j=1}^n \xslossat{\rv{f}}{Z_j} 
              \right] 
          \right]
          + \frac{\log \mathcal{N}(\cF, \|\cdot\|, \epsilon)}{v(\epsilon) n} + \epsilon .
\end{align*}
The first term can be bounded as
\begin{align*}
&\E_{Z_1^n} \left[ 
              \E_{\rv{f} \sim \dol_n} \left[ 
                  \frac{1}{n} \sum_{j=1}^n \left( \xslossat{\hat{f}}{Z_j} + \loss_{\rv{f}}(Z_j) - \loss_{\hat{f}}(Z_j) \right)
              \right]
       \right] 
\\
&\leq 
\E_{Z_1^n} \left[ 
              \E_{\rv{f} \sim \dol_n} \left[ 
                  \frac{1}{n} \sum_{j=1}^n \left( \loss_{\rv{f}}(Z_j) - \loss_{\hat{f}}(Z_j) \right)
              \right]
       \right] 
\\
&= \E_{Z_1^n} \left[ 
              \E_{\rv{f} \sim \dol_n} \left[ 
                  \frac{1}{n} \sum_{j=1}^n \left( \rv{f}^2(X_j) - \hat{f}^2(X_j) + 2 Y_j (\hat{f}(X_j) - \rv{f}(X_j)) \right)
              \right]
           \right] \\
&\leq \E_{Z_1^n} \left[ 
               2 \epsilon \left( \|\cF\|_\infty + \left( \frac{1}{n} \sum_{j=1}^n Y_j^2 \right)^{1/2} \right)
           \right] ,
\end{align*}
which is at most $2 \epsilon \left( \|\cF\|_\infty + \|Y\|_{L_2(P)} \right) = O(\epsilon)$, and it is simple to verify that the ERM predictor $\hat{f}$ satisfies the same bound.  
Tuning $\epsilon$ in $O \left( \epsilon + \frac{\log \mathcal{N}(\cF, \| \cdot\|, \epsilon)}{\epsilon^{2/s} n} \right)$ yields a rate of $\tilde{O}(n^{-s/(s+2)})$ in expectation, where the notation hides log factors.
\end{myexample}
Two remarks are in order about the rate obtained in the above example. 

First, \cite{juditsky2008learning} previously obtained this rate for finite classes $\cF$ without the assumption that $\E [ Y^2 \mid X ]$ is almost surely uniformly bounded; their result is achieved by an online-to-batch conversion of a sequential algorithm which, after the conversion, plays actions in the convex hull of $\cF$. It is unclear if we truly need the assumption on the conditional second moment of $Y$ or if the need for this assumption is just an artifact of our analysis. In the regime where our stronger assumption holds, in the case of convex luckiness (see Example~\ref{ex:heavy-tailed-regression}) the rates obtained in the present paper match those of \cite{juditsky2008learning}. 
However, if convex luckiness does not hold, then the results of \cite{juditsky2008learning} still enjoy the rate of $\tilde{O}(n^{-s/(s+2)})$ whereas we cannot guarantee this rate. 
This is not surprising: without convex luckiness, ``improper learners'' that play in the convex hull of $\cF$ are inherently more powerful than (randomized) proper learners.

Second, even when convex luckiness does hold, the rate obtained in Example~\ref{ex:heavy-tailed-regression-II} above is not optimal.
The reason is that in the setting of this example, a Bernstein condition with $\beta = 1$ does hold, as was established earlier in Example~\ref{ex:heavy-tailed-regression}. 
Thus, via Corollary 6.2 of \cite{audibert2009fast} it is possible to obtain the better rate of
$\tilde{O}(1/n)$ in expectation using Audibert's SeqRand algorithm. 
Notably, the SeqRand algorithm for statistical learning involves using
a sequential learning algorithm which incorporates a second-order
loss-difference term. For new predictions, SeqRand employs an
online-to-batch conversion based on drawing functions uniformly at
random from the set of previously played functions. It is thus a
randomized proper learning algorithm. There are now two possibilities. The
first is that there exist $\cF$ satisfying the condition of
Example~\ref{ex:heavy-tailed-regression} for which ERM and
$\eta$-generalized Bayes simply do not achieve the rate of
$\tilde{O}(1/n)$; in that case either SeqRand's second-order nature or
its online-to-batch step may be needed to get the fast rate. The other
possibility is that ERM and $\eta$-generalized Bayes do generally attain the
fast rate under the Bernstein condition and a.s.~bounded
$\E[Y^2 \mid X]$-condition, in which case
Theorem~\ref{thm:main-bounded-b} is suboptimal for this situation --- we
return to this issue in the Discussion
(Section~\ref{sec:related-work}). In any case, SeqRand is
computationally intractable for most infinite classes, and we are not
aware of any polynomial-time learning algorithms that match the rate of SeqRand.

\subsection{Bounds for Unbounded Excess Risk}
\label{sec:main-unbounded}
We now present a result for a learning problem $\smtuple$ with
unbounded excess risk. 
Once again, the result follows (now with some work) from
Lemma~\ref{lem:renyi-to-kl-grip-b}, but now we need to be careful
because the $(u,c)$-witness condition with fixed $u$ and $c$ cannot be
expected to hold: it would become an exceedingly strong condition
for $\xsrisk{f} \rightarrow \infty$. We will thus require the
$\tau$-witness condition for a particular, easier $\tau$, namely
$\tau(x) = u (1 \opmax x)$ for some $u \geq 1$, so that for large $x$,
$\tau(x) \asymp x$. We first show, in
Proposition~\ref{prop:smallballagain} below, that at least for the
squared loss this condition can be expected to hold in a variety of
situations. The price to pay for using this $\tau$ is that we only get
in-probability results --- we show those in
Theorem~\ref{thm:main-unbounded} (we do not know whether
in-expectation results hold as well). Note that one could obtain better
constants in that theorem if one employed $\tau(x) = a \opmax (b x)$ for
the best possible $a$ and $b$, but for simplicity we did not do this.
\begin{proposition}[Bernstein plus small-ball implies unbounded witness]
\label{prop:smallballagain}
Consider the setting of Example~\ref{ex:heavy-tailed-regression}, 
i.e., regression with $\loss$ the squared loss and convex luckiness. 
We still assume convex luckiness and make the weaker assumption $\E[Y^2] < \infty$, 
but now we do not \emph{not} assume that the risk is bounded; 
i.e., we can have $\sup_{f \in \cF} \E[ \loss_f ] = \infty$. 
Fix some $b >0$ and suppose that there exists constants $\kappa > 0, \epsilon \in (0,1)$ such that
\begin{enumerate} 
\item for all $f \in \cF$ with $\xsrisk{f} > b$, Mendelson's (\citeyear{mendelson2014learning}) small-ball assumption (\ref{eqn:small-ball}) holds with constants $\epsilon,\kappa$ for $f, \fopt$ (i.e.~with $\fopt$ in the role of $h$),
\item For all $c_0 > b$, all  $f \in \cF$ with $\xsrisk{f} \leq c_0$, there is a $B$ such that the $(1, B)$-Bernstein condition holds, i.e., $\E [ \xsloss{f}^2 ] \leq B \xsrisk{f}$.
\end{enumerate}
Then $\smtuple$ satisfies the $(\tau,c)$-witness condition, with $\tau(x) = u (1 \opmax x)$ for some $u \geq 1$ and with $c \in (0, 1]$ which depends only on $\kappa$, $\epsilon$, $b$, and $\E [ \loss_{\fopt} ]$.
\end{proposition}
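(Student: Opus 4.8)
The plan is to write the squared excess loss in residual form and then split the model into a ``small excess risk'' regime, handled by the Bernstein condition as in Proposition~\ref{prop:bernstein-witness}, and a ``large excess risk'' regime, handled by the small-ball assumption via a two-sided sandwich on the excess loss plus crude control of its negative tail.

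First I would fix $f \in \cF$, abbreviate $\Delta := f(X) - \fopt(X)$ and $\xi := Y - \fopt(X)$, and record the identity $\xslosslong{f} = \Delta^2 - 2\xi\Delta$. Convex luckiness, applied to the one-sided derivative at $\lambda=0$ of $\lambda \mapsto \E[(Y-(1-\lambda)\fopt(X)-\lambda f(X))^2]$, gives $\E[\xi\Delta]\le 0$, hence (with $t := \|\Delta\|_{L_2(P)}$ and $\sigma^2 := \E[\loss_{\fopt}] = \|\xi\|_{L_2(P)}^2$)
\begin{align*}
t^2 \ \le\ \xsrisk{f} \ \le\ t^2 + 2\sigma t .
\end{align*}
In particular $\xsrisk{f}=0$ forces $t=0$, i.e.\ $f=\fopt$ $P$-a.s., so the witness inequality is trivial there. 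I will fix a threshold $c_0>b$ depending only on $b$, $\E[\loss_{\fopt}]$, $\kappa$, $\epsilon$ (a maximum of a few multiples of $b$, $\sigma^2$, $\sigma^2/(\kappa^2\epsilon)$, $\sigma^2/(\kappa^4\epsilon^2)$ and $1$), large enough that several inequalities below hold; the right bound above then also shows $\xsrisk{f} > c_0 \Rightarrow t^2 \ge \xsrisk{f}/2$, so $t \asymp \sqrt{\xsrisk{f}}$ in the large-risk regime.

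In the regime $0<\xsrisk{f}\le c_0$, condition~2 (with this fixed $c_0$) yields a constant $B=B(c_0)$ with $\E[\xsloss{f}^2]\le B\,\xsrisk{f}$; then, exactly as in the proof of Proposition~\ref{prop:bernstein-witness} with $\beta=1$ applied to this single $f$ (using $\xslosslong{f}\le \xslosslong{f}^2/u'$ on $\{\xslosslong{f}>u'\}$), $\E[\xslosslong{f}\,\ind{\xslosslong{f}>u'}]\le (B/u')\,\xsrisk{f}$, so with $u'=2B$ one gets the $(u',\tfrac12)$-witness inequality for $f$. Since the target threshold satisfies $\tau(x)=u(1\opmax x)\ge u\ge u'$ whenever $u\ge 2B$, and $\xslosslong{f}>0$ on $\{u'<\xslosslong{f}\le \tau(\xsrisk{f})\}$, this upgrades to $\E[\xslosslong{f}\,\ind{\xslosslong{f}\le\tau(\xsrisk{f})}]\ge \tfrac12\xsrisk{f}$. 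In the regime $\xsrisk{f}>c_0\,(\ge b)$, small-ball~(\ref{eqn:small-ball}) holds for $f,\fopt$, so $E:=\{|\Delta|\ge\kappa t\}$ has $P(E)\ge\epsilon$; intersecting $E$ with $\{|\xi|\le M\}$, $M^2:=2\sigma^2/\epsilon$, and $\{|\Delta|\le N\}$, $N:=2t/\sqrt\epsilon$, two Markov bounds give $P(E'')\ge\epsilon/4$ for $E'':=E\cap\{|\xi|\le M\}\cap\{|\Delta|\le N\}$. Choosing $c_0$ large enough that $\kappa t>4M$ on this regime, on $E''$ one has the sandwich
\begin{align*}
\frac{\kappa^2 t^2}{2} \ \le\ |\Delta|(|\Delta|-2|\xi|) \ \le\ \xslosslong{f} \ \le\ N^2+2MN \ \le\ \Bigl(\frac{4}{\epsilon}+\frac{\kappa}{\sqrt\epsilon}\Bigr)\xsrisk{f}.
\end{align*}
Hence for $u\ge 4/\epsilon+\kappa/\sqrt\epsilon$ (and $c_0\ge 1$, so $1\opmax\xsrisk{f}=\xsrisk{f}$) we get $E''\subseteq\{0\le\xslosslong{f}\le\tau(\xsrisk{f})\}$; combined with $\xslosslong{f}\ge -2|\Delta||\xi|$ on $\{\xslosslong{f}<0\}$ (reverse triangle inequality) and Cauchy--Schwarz,
\begin{align*}
\E[\xslosslong{f}\,\ind{\xslosslong{f}\le\tau(\xsrisk{f})}] \ \ge\ \frac{\kappa^2 t^2}{2}P(E'') - 2t\sigma \ \ge\ \frac{\kappa^2\epsilon}{16}\xsrisk{f} - 2\sigma\sqrt{\xsrisk{f}} \ \ge\ \frac{\kappa^2\epsilon}{32}\xsrisk{f},
\end{align*}
the last step holding since $c_0$ was chosen $\gtrsim (\sigma/(\kappa^2\epsilon))^2$. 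Taking $u := \max\{1,\,2B(c_0),\,4/\epsilon+\kappa/\sqrt\epsilon\}\ge1$ and $c := \min\{\tfrac12,\,\kappa^2\epsilon/32\}$ — so that $c$ depends only on $\kappa,\epsilon$ — and combining the two regimes (and the trivial $\xsrisk{f}=0$ case) gives the $(\tau,c)$-witness condition for all $f\in\cF$ with $\tau(x)=u(1\opmax x)$.

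The main obstacle is controlling the \emph{negative} tail of the excess loss under misspecification: convex luckiness only delivers $\E[\xi\Delta]\le 0$, not the pointwise structure available when the model is well-specified, so one must verify that the $O(\sqrt{\xsrisk{f}})$ lower-order term $-2t\sigma$ (and the loss of a factor from $t^2 \le \xsrisk{f}$ vs.\ $t^2 \ge \xsrisk{f}/2$) is genuinely dominated by the $\Theta(\xsrisk{f})$ contribution of the small-ball event once $\xsrisk{f}$ is large. This is exactly what forces the splitting threshold $c_0$ to be an \emph{explicit} function of $b$, $\E[\loss_{\fopt}]$, $\kappa$, $\epsilon$; and one must check that, although enlarging $c_0$ changes the Bernstein constant $B(c_0)$ and hence $u$, it leaves the final witness constant $c$ untouched, which is what the proposition requires.
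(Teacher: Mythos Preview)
Your argument is correct. The two proofs agree on the small-risk regime (Bernstein $\Rightarrow$ $(u,c)$-witness via Proposition~\ref{prop:bernstein-witness}) but differ in how the small-ball assumption is exploited in the large-risk regime.

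The paper works with the equivalent \emph{upper-tail} formulation~(\ref{eq:reversewitness}): it bounds $\E[\xsloss{f}\cdot\ind{\xsloss{f}\ge c'\,\xsrisk{f}}]$ by splitting $\xsloss{f}=S_f+T_f$ with $S_f=(f-\fopt)^2$, $T_f=2(\fopt-Y)(f-\fopt)$, controls $\E[|T_f|]$ by Cauchy--Schwarz, and handles the $S_f$-part via a separate lemma (Proposition~\ref{prop:smallballfun}) which converts the small-ball assumption into the ``weakened'' form $\E[S_f\cdot\ind{S_f<C_1'\E[S_f]}]\ge C_2'\E[S_f]$. You instead work with the \emph{lower-tail} formulation directly: you build an explicit event $E''$ (small-ball intersected with two Markov bounds on $|\xi|$ and $|\Delta|$) on which $\xsloss{f}$ is sandwiched between positive multiples of $\xsrisk{f}$, and you subtract off the total negative part $\E[\xsloss{f}\cdot\ind{\xsloss{f}<0}]\ge -2\sigma t$ via Cauchy--Schwarz. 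Your route is slightly more hands-on (you need both sides of the sandwich, and the constants get a bit worse) but it avoids the auxiliary Proposition~\ref{prop:smallballfun} and makes the dependence of the threshold $c_0$ on $\kappa,\epsilon,\E[\loss_{\fopt}]$ completely explicit. Either way the crux is identical: the cross term contributes $O(\sigma\sqrt{\xsrisk{f}})$, which is dominated by the $\Theta(\xsrisk{f})$ small-ball contribution once $\xsrisk{f}$ exceeds a threshold of order $\sigma^2/(\kappa^2\epsilon)^2$.
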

\begin{myexample}{Heavy-Tailed Regression,
    Continued} \label{ex:smallballagain} Mendelson provides several
  examples of convex $\cF$ for which the small-ball assumption holds; {the proposition above} shows that for all these examples, the
  $\tau$-witness condition holds as well as soon as, for $f$ with
  small excess risk, the Bernstein condition holds. For example,
  under the following ``meta''-condition the small-ball assumption
  holds (see \cite[Lemma 4.1]{mendelson2014learning}) and, as we show
  in Appendix~\ref{app:heavy-tailed-regression-example}, the Bernstein
  condition holds as well for $\cF_{c_0} := \{f \in \cF: \xsrisk{f} <
  c_0\}$, for all $c_0 \geq b$, as long as we assume convex luckiness (see Example~\ref{ex:heavy-tailed-regression}).
  \begin{align*}
\E[\loss_{\fopt}^2] < \infty \text{\quad and \quad} &\text{for some $A > 0$, for all $f \in \cF_{c_0}$}, \\
&\E[(f(X)-\fopt(X))^4]^{1/2} \leq A \cdot  \E[(f(X)- \fopt(X))^2] . 
  \end{align*}
We stress however that our theorem below does not recover Mendelson's
rates for $L_2(P)$-estimation error (Section~\ref{sec:related-work}),
which rely on further highly sophisticated analysis of the squared loss situation; 
our goal here is merely to show that our $\tau$-witness condition for the unbounded risk case is not a very strong one.
\end{myexample} 
\begin{theorem}[Excess Risk Bound - Unbounded Excess Risk
  Case] \label{thm:main-unbounded} Assume that $\smtuple$ satisfies
  the $(\tau,c)$-witness condition
  \eqref{eqn:emp-witness-badness-general} with
  $\tau : x \mapsto u(1 \opmax {x})$ for some $u \geq 1$ and constant
  $c$. Let $\epsilon_1, \epsilon_2, \ldots$ and $\eta_1, \eta_2, \ldots$ be
  sequences such that
  $$\epsilon_n \rightarrow 0, \ \ \ \ n \eta_n \rightarrow \infty.$$
  Let
  $c_u := \frac{u}{c} \frac{\eta_n + 1}{1 -
    \frac{\eta_n}{v(\epsilon_n)}}$ and
  $c'_{u} := \frac{u}{c} \frac{\eta_n + 1}{1 - \frac{2
      \eta_n}{v(\epsilon_n)}}$.
  Suppose that $\rsc_{n,\eta} := \rsc_{n,\eta}(\nocomparator \dolest)$ is nontrivial
  in the sense that $\E[{\rsc}_{n,\eta_n}] \rightarrow 0$.
  \begin{enumerate} \item Let {$\dolest \equiv (\estim{f},\Prior)$}
    represent a deterministic estimator. Suppose that, for given
    function $v$, the $v$-PPC condition holds and that for all $n$,
    $0 < \eta_n < v(\epsilon_n)/2$. Then for all $n$ larger than some
    $n_0$, the right-hand side of the following equation is bounded by
    $1$, and for all such $n$, for all $\delta > 0$, with
    probability at least $1-\delta$,
\begin{align}\label{eq:deterministicboundppc}
\xsrisk{\estim{f}}
\leq  \left( c'_{2u} \cdot  \frac{1}{\delta} \right) \cdot \bound, \ \ \text{with}\ \ \bound = 
 \left( \E\left[{\rsc}_{n,\eta_n}\right]  + \epsilon_n  \right).
\end{align} 
Now suppose that, more strongly, the $v$-central condition holds as well. 
Let $\overline{\rsc}_{n,\eta}$ be any upper bound on
  $\rsc_{n,\eta}(\fopt \| \dolest)$ that is nontrivial in that 
$\E[\overline{\rsc}_{n,\eta_n}] \rightarrow 0$. Let
$C_{n,\delta}$ be a function of
$\delta\in (0,1)$ such that for all $\delta \in (0,1)$,
$C_{n,\delta} > 2 \log(2/\delta)$ and
\begin{equation}\label{eq:rscupperbound}
P\left(\overline{\rsc}_{n,\eta_n} \geq C_{n,\delta} \cdot \E\left[\overline{\rsc}_{n,\eta_n}\right] \right) \leq \delta.
\end{equation} 
Then for all $n$ larger than some
    $n_0$, the right-hand side of the following equation is bounded by
    $1$, and for all such $n$, for all $0< \delta < 1$, with
    probability at least $1-\delta$, 
\begin{align}\label{eq:deterministicboundcentral}
\xsrisk{\estim{f}}
\leq  \left( c'_{u+\epsilon_n} \cdot C_{n,\delta} \right) \cdot \bound, \ \ \text{with}\ \ \bound = 
 \left( \E\left[\overline{\rsc}_{n,\eta_n}\right]  + \epsilon_n + \frac{2}{n\eta_n}  \right).
\end{align} 
\item Now let $\dolest$ be a general,
  potentially nondeterministic estimator, suppose that the $v$-PPC condition holds and let
  $\overline{\rsc}_{n,\eta_n}$ be any bound on $\rsc(\nocomparator \dolest)$
  that is slightly larger than ${\rsc}_{n,\eta_n}$, i.e., there
  exist a sequence $a_1, a_2, \ldots \rightarrow \infty$ such that,
  for all $n$, all $z^n$,
  $\overline{\rsc}_{n,\eta_n}\geq  a_n {\rsc}_{n,\eta_n}$. Then
\begin{equation} \label{eq:ggvtypebound}
\dol_n\left(\left\{ f \in \cF : \xsrisk{f} > c'_{2u} \cdot \bound
\right\} \right) \rightarrow 0 \text{\ in $P$-probability},
\end{equation}
with $\bound = 
\E\left[\overline{\rsc}_{n,\eta_n}\right]  + \epsilon_n $.
\end{enumerate}\end{theorem}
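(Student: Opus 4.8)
The plan is to reduce all three assertions to two tools already established: the GRIP risk bound \eqref{eq:rkltaugrip} of Lemma~\ref{lem:renyi-to-kl-grip}, instantiated with the stated $\tau(x) = u(1\opmax x)$, and Lemma~\ref{lem:rascbound-simple}, applied \emph{not} with comparator $\fopt$ but with the pseudo-comparator $g := \grip{v(\epsilon_n)}$ (admissible since $\E[g]\le\E[\loss_{\fopt}]<\infty$, cf.\ the remark below Proposition~\ref{prop:transavia}); write $\bar\eta := v(\epsilon_n)$. I would first record two elementary facts. \emph{(i) The annealed GRIP-risk is nonnegative:} by Proposition~\ref{prop:grip-central}, $\E[e^{-\bar\eta(\loss_f-g)}]\le1$, so Jensen applied to $x\mapsto x^{\eta/\bar\eta}$ gives $\E[e^{-\eta(\loss_f-g)}]\le1$ and hence $\Expann{\eta}[\loss_f-g]\ge0$ for all $f\in\cF$ and $0<\eta\le\bar\eta$; this is exactly what the Markov steps below need. \emph{(ii) Changing the comparator costs only $\epsilon_n$:} $\rsc_{n,\eta}(g\pipes\dolest)=\rsc_{n,\eta}(\fopt\pipes\dolest)+\frac1n\sum_{i=1}^n(\loss_{\fopt}(Z_i)-g(Z_i))$, whose $P$-expectation is at most $\E[\rsc_{n,\eta}]+\epsilon_n$ under $v$-PPC, and whose second term moreover satisfies $\frac1n\sum_i(\loss_{\fopt}(Z_i)-g(Z_i))\stochleq_{\bar\eta n}\epsilon_n$ under $v$-central (tensorizing \eqref{eq:honger} over the i.i.d.\ sample).

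Next I would turn \eqref{eq:rkltaugrip} into a dichotomy. For the present $\tau$, \eqref{eq:rkltaugrip} reads $\xsrisk f\le c'_{2u(1\opmax\xsrisk f)}\cdot\Expann{\eta}[\loss_f-g]$ (this bound is per-$f$ and uses the witness condition only at the given $f$, so the nominal ``nonincreasing $\tau$'' hypothesis of Lemma~\ref{lem:renyi-to-kl-grip} may be replaced, for each fixed $f$, by the constant threshold $\tau(\xsrisk f)$). When $\xsrisk f\le1$ the factor is the constant $c'_{2u}$; when $\xsrisk f>1$ one has $c'_{2u\xsrisk f}\le\xsrisk f\cdot\tilde c$ with $\tilde c:=\frac1c\frac{2\eta u+1}{1-2\eta/\bar\eta}$, and the displayed inequality then forces $\Expann{\eta}[\loss_f-g]>\theta_0:=1/\tilde c$. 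Contrapositively: if $\Expann{\eta}[\loss_f-g]\le\theta_0$ then $\xsrisk f\le1$, hence $\xsrisk f\le c'_{2u}\Expann{\eta}[\loss_f-g]$. Equivalently, writing $c'_2$ for the constant in the statement, for every $T\le\theta_0$ we have the set inclusion $\{f:\xsrisk f>c'_2 T\}\subseteq\{f:\Expann{\eta}[\loss_f-g]>T\}$. This self-improvement step is the heart of the argument: it is what rescues the bound despite $\tau$ being eventually increasing, and it is usable because, by Lemma~\ref{lem:rascbound-simple} together with $n\eta_n\to\infty$ and $\E[\rsc_{n,\eta_n}]\to0$, the annealed GRIP-risk will lie below $\theta_0$ with high probability once $n$ is large.

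The three conclusions now follow by feeding fact (ii) and Lemma~\ref{lem:rascbound-simple} into this dichotomy, the endgame differing only in which moments of $\rsc$ are controlled. For a deterministic $\estim f$ under $v$-PPC: fact~(ii) and Proposition~\ref{prop:drop}(i) give $\E[\Expann{\eta_n}[\loss_{\estim f}-g]]\le\E[\rsc_{n,\eta_n}(g\pipes\dolest)]\le\E[\rsc_{n,\eta_n}]+\epsilon_n=\text{\sc bound}$; since this quantity is nonnegative (fact (i)), two Markov applications---one to control $P(\Expann{\eta_n}[\loss_{\estim f}-g]\ge\theta_0)$ (so the GRIP branch is active, which also forces $\text{\sc bound}\le1$ for $n\ge n_0$) and one to control its size---yield $\xsrisk{\estim f}\le(c'_2/\delta)\cdot\text{\sc bound}$ with probability $\ge1-\delta$. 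Adding $v$-central: bound $\rsc_{n,\eta_n}(g\pipes\dolest)\le\overline\rsc_{n,\eta_n}+\frac1n\sum_i(\loss_{\fopt}(Z_i)-g(Z_i))$ and concentrate both terms---the first via the hypothesis \eqref{eq:rscupperbound}, the second via the tensorized ESI of fact (ii) and Proposition~\ref{prop:drop}(ii)---and pass the ESI $\Expann{\eta_n}[\loss_{\estim f}-g]\stochleq_{\eta_n n}\rsc_{n,\eta_n}(g\pipes\dolest)$ to an in-probability statement, absorbing all $\log(1/\delta)$ terms into $C_{n,\delta}$ (possible since $C_{n,\delta}>2\log(2/\delta)$) and into the extra $\frac1{n\eta_n}$ term; a union bound then places $\xsrisk{\estim f}\le c'_2 C_{n,\delta}\cdot\text{\sc bound}$ on an event of probability $\ge1-\delta$ via the GRIP branch. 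For a general $\dolest$ under $v$-PPC: apply Markov twice in the manner of \eqref{eq:antwerp} and Example~\ref{ex:ggv}---by the set inclusion above and Markov over $\dol_n$ (using fact (i)), $\dol_n(\{f:\xsrisk f>c'_2\cdot\text{\sc bound}\})\le\text{\sc bound}^{-1}\,\E_{\rv f\sim\dol_n}[\Expann{\eta_n}[\loss_{\rv f}-g]]$; taking $P$-expectations and using Lemma~\ref{lem:rascbound-simple} plus fact (ii) bounds the right side in expectation by $\text{\sc bound}^{-1}(\E[\rsc_{n,\eta_n}]+\epsilon_n)$, and the hypothesis $\overline\rsc_{n,\eta_n}\ge a_n\rsc_{n,\eta_n}$ with $a_n\to\infty$ makes this ratio vanish, so a final Markov step gives convergence in $P$-probability.

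I expect the main obstacle to be the structural bookkeeping forced by the pseudo-comparator $g=\grip{v(\epsilon_n)}$ together with the eventually-increasing $\tau$: one must check that \eqref{eq:rkltaugrip} is genuinely pointwise in $f$ so it can be invoked with this $\tau$ (equivalently, apply Lemma~\ref{lem:renyi-to-kl-grip} at the fixed threshold $\tau(\xsrisk f)$ for each $f$), and that $g$---a pseudo-predictor depending on all of $\cF$ and on the $n$-dependent exponent $v(\epsilon_n)$---is an admissible comparator in Lemma~\ref{lem:rascbound-simple} with $\E[g]<\infty$, so that the comparator-change identity of fact~(ii) is legitimate. Everything else---the exact constants $c'_2,C_{n,\delta}$, the threshold $n_0$, and the role of $a_n\to\infty$---is routine once these two points are in place.
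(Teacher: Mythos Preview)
Your proposal is correct and follows essentially the same route as the paper's proof: the paper likewise combines Lemma~\ref{lem:rascbound-complex} (the comparator-$\phi$ extension of Lemma~\ref{lem:rascbound-simple}) with $\phi\equiv\gripbase{v(\epsilon_n)}$ and \eqref{eq:rkltaugrip} instantiated at the per-$f$ threshold $\tau(\xsrisk f)$, obtaining bounds on $\E_{\rv f\sim\dol_n}[\xi(\xsrisk{\rv f})]$ with $\xi(x)=1\wedge x$, which is exactly your ``dichotomy/set-inclusion'' device in different clothing. The endgames (one Markov step for deterministic PPC, union of the ESI tail with \eqref{eq:rscupperbound} for $v$-central, and two nested Markov steps with the $a_n\to\infty$ slack for Part~2) match as well; your explicit remark that the nominally ``nonincreasing $\tau$'' hypothesis of Lemma~\ref{lem:renyi-to-kl-grip} is harmless here because \eqref{eq:rkltaugrip} is proved per-$f$ is a point the paper uses but leaves implicit.
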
 
When $\dolest$ represents a deterministic
  estimator $\estim{f}$ such as an $\eta$-two part MDL estimator, the result 
   is just a standard convergence-in-probability result. For learning algorithms that output a distribution such as generalized Bayes, the result seems fairly weak as nothing is said about the rate at which the deviation probability goes to $0$. Note, however, that the same holds for most standard results about posterior convergence in Bayesian statistics; for example, the results of GGV (see Example~\ref{ex:ggv}) are stated in exactly the same manner. 

   Note that the factor for the PPC-results increases quickly with
   $\delta$; depending on how strong a bound (\ref{eq:rscupperbound})
   can be given, the $v$-central results can thus become substantially
   stronger asymptotically. This is the case even though their bound
   has an additional $1/ (n \eta_n)$ term. Indeed, this extra term is
   of the right order, comparable to the upper bound on
   $\rsc_{n,\eta_n}$ given by (\ref{eq:optimalrates}). Therefore, for
   $v(x) \asymp x^{1-\beta}$, optimization of $\epsilon_n$ and $\eta_n$
   can be done in the same way as for the bounded risk case, leading to
   a rate of $\tilde{O}(n^{-1/(2- \beta)})$ as in
   (\ref{eq:tuning}). To give an example in which the bound for
   the $v$-central condition gets a better dependence on $\delta$ than $v$-PPC
   consider generalized Bayesian posteriors under the GGV condition
   (\ref{eq:ggva}) discussed in Section~\ref{sec:complexity};
   in that case, we get the bound (\ref{eq:antwerp}) which implies
   (\ref{eq:rscupperbound}) for a $C_{n,\delta} = o(\delta^{-1/2})$
   (rather than the $O(\delta^{-1})$ in the PPC-result) and with
   $\epsilon_n$, as defined there used as an upper bound on
   $\rsc_{n,\eta}$.  Still, in this example $C_{n,\delta}$ is
   polynomial in $\delta$ whereas Theorem~\ref{thm:main-bounded-a} had
   only a logarithmic dependence on $\delta$. As mentioned earlier,
   this stronger dependence on $\delta$ is unavoidable as the results
   under the $v$-PPC condition apply to methods like ERM, which have
   poor deviation properties.

   To derive further corollaries from this theorem, we mention the
   following extension of Proposition~\ref{prop:slowrate}:
 \begin{proposition}[when $(\tau,c)$-witness implies $v$-PPC] \label{prop:slowerrate}
   Suppose that the $(\tau,c)$-witness condition holds for given
   learning problem $\smtuple$ with $\tau : x \mapsto u(1 \opmax {x})$
   for some $u \geq 1$ and constant $c \in (0, 1]$ as in
   Theorem~\ref{thm:main-unbounded}. Further suppose that that
   $\loss_f(z) \geq 0$ for all $f \in \cF$ and all $z \in \cZ$.  Then
   the $v$-PPC condition holds with
   $v(\epsilon) = (C \epsilon) \opmin (1/\E[\loss_{\fopt}])$, where
   {$C = e^{-1} \cdot ( u^2\left(1 \opmax \left( \E[\ell_{f^*}]/c\right)^2  \right)  + \frac{3}{2}
   \E[\loss_{\fopt}^2])^{-1}$}.
 \end{proposition} 
 The above proposition implies that if the $\tau$-witness condition holds with
 $\tau$ as in Theorem~\ref{thm:main-unbounded} above, then the results
 (\ref{eq:deterministicboundppc}) and (\ref{eq:ggvtypebound})
 automatically hold with choice
 $2 \eta_n < (C \epsilon_n) \opmin (1/\E[\loss_{\fopt}])$, which for large
 $n$ is equivalent to $\eta_n < C \epsilon_n / 2$. For parametric
 $\cF$ we can take $\epsilon_n \asymp 1/\sqrt{n}$, so that the $v$-PPC condition is
 satisfied with $\eta_n \asymp 1/\sqrt{n}$. 
Thus, under quite weak conditions 
(for all $f,z$, $\loss_f(z) \geq 0$, $\E[\loss_{\fopt}^2] < \infty$, and the $\tau$-witness condition holds as above), 
but with unbounded, heavy tailed losses and without
 explicitly imposing any GRIP conditions, 
we get in all three cases of Theorem~\ref{thm:main-unbounded}, by choosing
 $\eta_n \asymp 1/\sqrt{n}$, 
that $\bound = \tilde{O}\left( 1/ \sqrt{{n}} \right)$. 
Consequently, even under very weak assumptions, 
we still get convergence for generalized $\eta_n$-Bayesian estimators, albeit at the ``slow'' rate.

\section{Discussion \& Open Questions}
\label{sec:related-work}
In this paper we presented several theorems that gave convergence
rates for general estimators, including pseudo-Bayesian and ERM
estimators, under general ``easiness conditions''. We end by
putting these conditions in context and discussing some of the
limitations of our approach, thereby pointing to avenues for future
work.
\paragraph{Easiness Conditions} We proved our 
convergence rates under the \emph{GRIP} conditions (the $v$-central and
$v$-PPC conditions) and the $\tau$-\emph{witness} condition, and we
provided some relations to other conditions such as convex luckiness
for squared loss (defined in Example~\ref{ex:heavy-tailed-regression}),
Bernstein conditions (Definition~\ref{def:bernstein}), and uniformly exponential tails (Definition~\ref{def:uniformlyexponential}). As promised in
the beginning of this paper, our conditions and results complement
those of \Citet{erven2015fast} which are mostly 
for the bounded case. The most important conditions of that paper that
did not show up here are \mbox{(a) the} extension of \emph{convex luckiness}
beyond the squared loss (it is formally defined for general losses by
\Citet{erven2015fast} under the name ``Assumption B'') and (b) the \emph{$v$-stochastic mixability} condition (see Definition 5.9 of \Citet{erven2015fast}). 
We will restrict discussion of the $v$-stochastic mixability condition to the case where the decision set $\cF_d$ from \Citet{erven2015fast} is equal to $\convhull(\cF)$. In the present paper, where the set $\mathcal{P}$ from \Citet{erven2015fast} is always equal to the singleton $\{P\}$, it is easy to see that $v$-stochastic mixability is equivalent to the $v$-PPC condition but with the minimizer $\fopt$ over $\cF$ replaced by the minimizer $\fopt_{\convhull}$ over $\convhull(\cF)$. 
\Citet{erven2015fast} show that for bounded excess losses, $v$-stochastic mixability characterizes obtainable rates for \emph{improper} learners that are allowed to play in the convex hull of $\cF$. 
$v$-stochastic mixability is in turn implied by the easiness conditions of \cite{juditsky2008learning}, (for constant $v$) by conditions on the loss function such as mixability and exp-concavity \citep{CesaBianchiL06}, and by strong convexity. 
For clarity we give an overview of the relevant implications between our conditions and those of \Citet{erven2015fast} in Figure~\ref{fig:overview}.
\begin{figure}
\begin{center}
  \begin{tabular}{|p{1.7cm}|p{1.5cm}|p{2cm}|p{8cm}|} \hline
    excess loss is... & condition type & loss function & result \\
    \hline \hline bounded & GRIP & general & $v$-PPC
    $\Leftrightarrow$ $v$-central  (vE) \\
    & & & $x^{1-\beta}$-PPC $\Leftrightarrow$ $x^{\beta}$-Bernstein (vE) \\
    \hline
    & witness & general & $(u,c)$-witness always holds (trivial) \\
    \hline \hline unbounded & GRIP & general & convex luckiness $+$
    $v$-stochastic mixability $\Rightarrow$ $v$-PPC (vE)
    \\
    & & general & $v$-central $\Rightarrow$ $v$-PPC (vE) \\
    & & general & $v$-central $\Rightarrow$ $L_f$ has uniformly
    exponential lower tail (vE)
    \\
    & & log loss & convex luckiness $\Rightarrow$ $1$-central (vE)
    \\
    & & squared loss & convex luckiness $+$ bounded predictions $+$ $Y \mid X$ has a.s.~uniformly bounded 2${^\mathrm{nd}}$ moment
    $\Rightarrow$ $(1,B)$-Bernstein (GM,
    Example~\ref{ex:heavy-tailed-regression}) \\ \hline
    unbounded  & witness & general  & $(\beta,B)$-Bernstein $\Rightarrow$ $(\tau,c)$-witness, $\tau(x) \asymp x^{\beta-1}$ (GM, Proposition~\ref{prop:bernstein-witness}) \\
    &  &  general  & $L_f$ has uniformly exponential upper tail  $\Rightarrow$ $(\tau,c)$-witness, $\tau(x) \asymp 1 \opmax \log(1/x)$  (GM,  Lemma~\ref{lem:kl-hell-exp-tails}) \\
    & & log loss, correct model & Wong-Shen $\Leftrightarrow$ $L_f$
    has uniformly
    exponential tails (GM, Example~\ref{ex:wong}) \\
    \hline
\end{tabular}\caption{\label{fig:overview}GM stands for ``established in the present paper'', vE refers to \Citet{erven2015fast}. All implications hold up to constant factors. Note that boundedness always refers to {\em excess loss}. For example, for Lipschitz losses on a bounded domain, the losses themselves may have heavy tails but the excess loss will be bounded. }
\end{center}\end{figure}
\paragraph{Misspecification}
We showed that our methods are particularly well-suited for proving a form of consistency 
for (generalized Bayesian) density estimation under
misspecification; under only the $\bar\eta$-central condition, a weak condition on the support of $p_{f^*}$, and using a prior such that the weakened GGV condition (\ref{eq:ggvb}) holds, 
we can show that for any $\eta < \bar\eta$, the $\eta$-generalized Bayesian posterior is consistent in the sense of our misspecification metric (see Proposition~\ref{prop:entrobound} and discussion below it). 
As stated there, an interesting open question is under which conditions the metric entropy for the misspecified case is of the same order as the metric entropy for the well-specified case, as then the misspecification metric dominates the standard Hellinger metric.

\paragraph{Proper vs.~Improper}  
There exist learning problems $\smtuple$ on which no proper learner --- one which always predicts inside $\cF$ --- can achieve a rate as good as that of an improper learner, which can select $\hat{f}_n \not \in \cF$
\Citep{audibert2007progressive,erven2015fast}.  In this paper we considered
\emph{randomized} proper estimators, to which the same lower bounds
apply;
hence, they cannot in general compete with improper methods 
such as exponentially weighted average forecasters and other aggregation methods. 
Such methods achieve fast rates under conditions such as stochastic exp-concavity \citep{juditsky2008learning}, 
which imply the ``stochastic mixability'' condition that, 
as explained by \Citet{erven2015fast}, is sufficient for fast rates for aggregation methods. 
To get rates comparable to those of improper learners, we invariably need to
make a ``convex luckiness'' assumption under which, as again shown by
\Citet{erven2015fast}, $v$-stochastic mixability implies the $v$-PPC condition 
(see also Figure~\ref{fig:overview}); the latter allows for fast rates for
randomized proper learners. An interesting question for future work is
whether our proof techniques can be extended to incorporate, and get the
right rates for, improper methods such as the empirical star estimator 
\citep{audibert2007progressive} and Q-aggregation
\citep{lecue2014optimal}. Since the original analysis of these methods
bears some similarity to our techniques, this might very well be
possible.

While superior rates for improper learners are inevitable, it is more worrying that the rate we showed for ERM in heavy-tailed bounded regression is worse than the rate for the SeqRand algorithm, which is also randomized proper (see Example~\ref{ex:heavy-tailed-regression-II} and text below it). 
We do not know whether the rate we obtain is the actual worst-case rate that ERM achieves under our conditions, or whether ERM achieves the same rate as SeqRand, or something in between. In the latter two cases, it would mean that our bounds are suboptimal. Sorting this out is a major goal for future work.

\paragraph{Empirical process vs Information-theoretic}
Broadly speaking, one can distinguish approaches to proving excess
risk bounds into two main groups: on the one hand are approaches based
on empirical process theory (EPT) such as
\citep{bartlett2005local,bartlett2006empirical,koltchinskii2006local,mendelson2014learning,liang2015learning,dinh2016fast}
and most work involving VC dimension in classification.  On the other
hand are information-theoretic approaches based on prior measures,
change-of-measure arguments, and KL penalties such as PAC-Bayesian and
MDL approaches
\citep{barron1991minimum,li1999estimation,catoni2003pac,audibert2004pac,Grunwald07,audibert2009fast}.
A significant advantage of EPT approaches is that they often can achieve optimal rates of convergence for ``large'' models $\cF$ with metric entropy $\log \mathcal{N}(\cF,\| \cdot \|, \epsilon)$ that increases polynomially in $1/\epsilon$, where $\| \cdot \|$ is the $L_1(P)$ or $L_2(P)$-metric. 
Prior-based approaches (including the one in this paper) may yield suboptimal rates in such cases (see \cite{audibert2009fast} for discussion). A closely related advantage of EPT approaches is that they can handle empirical covers of ${\cal F}$, thus allowing one to prove bounds for VC classes, among others.

An advantage of prior-based approaches is that they inherently
penalize, so that whenever one has a countably infinite union of
classes $\cF = \bigcup_{j \in {\mathbb N}} \cF_j$, the approaches
automatically adapt to the rate that can be obtained as if the best
$\cF_j$ containing $\fopt$ were known in advance; this adaptation was
illustrated at various places in this paper (see final display in
Proposition~\ref{prop:transavia}, equation \eqref{eq:aggregation}). This happens even if
for every $n$, there is a $j$ and $f \in \cF_j$ with empirical error
$0$; in such a case unpenalized methods as often used in EPT methods
would overfit. In the paper \citep{grunwald2017a}, a companion paper
to the present one, we show for bounded excess losses that the two
approaches may be combined.  In fact one can provide a single excess
risk bound in which the information complexity is replaced by a
strictly smaller quantity and instead of a prior one uses a more
general ``luckiness function'' \citep{Grunwald07} that is better suited
for dealing with penalized estimators.  For some choices of luckiness
function, one gets a slight strengthening of the excess risk bounds
given in this paper; for other choices, one gets bounds in terms of
Rademacher complexity, $L_2(P)$ and empirical $L_2(P_n)$ covering
numbers.  Thus, the best of both worlds is achievable, but for the
time being only for bounded excess losses.

Another major goal for future work is thus to provide such a combined EPT-information theoretic bound for unbounded excess losses that allows for heavy-tailed excess loss. 
Within the EPT literature, some work has been done: \cite{mendelson2014learning,mendelson2017learning} provides bounds on 
the $L_2(P)$-estimation error $\| \hat{f} - \fopt\|_{L_2(P)}^2$ 
and \cite{liang2015learning} on the related squared loss risk. 
For other loss functions not much seems to be known: 
\cite{mendelson2017learning} shows that improved $L_2(P)$-estimation
error rates may be obtained by using other, proxy loss functions
during training; 
however, the target remains $L_2(P)$-estimation. 
In contrast, our approach allows for general loss functions $\loss$
including density estimation, but we do not specially study proxy training losses. 

These last three EPT-based works can deal with $\smtuple$ with
unbounded excess (squared loss) risk. This is in contrast to earlier
papers in the information-theoretic/PAC-Bayes tradition; as far as we
know, our work is the first one that allows one to prove excess risk
convergence rates in the unbounded risk case
(Theorem~\ref{thm:main-unbounded}) for general models including
countable infinite unions of models as in
Proposition~\ref{prop:transavia}. Previous works dealing with {unbounded excess loss} all rely on a Bernstein condition --- we are aware of
\citep{zhang2006epsilon}, requiring $\beta = 1$; 
\citep{audibert2004pac}, for the transductive setting rather than our
inductive setting; and, the most general, \citep{audibert2009fast}.
However, for convex or linear losses, a Bernstein condition can
\emph{never} hold if $\sup_{f \in \cF} \xsrisk{f}$ is unbounded, as follows
trivially from inspecting Definition~\ref{def:bernstein}, whereas the
$v$-central and PPC-conditions \emph{can} hold. 
See for instance Example \ref{ex:no-bernstein-unbounded} in Appendix~\ref{app:examples}, 
where $\cF$ is just the densities of the normal location family without any bounds on the mean: 
here the Bernstein condition must fail, yet the strong central condition and the witness condition both hold and thus Theorem~\ref{thm:main-unbounded} applies (for some moderate $M$).

In the unbounded-excess-loss-yet-bounded-risk case, the difference between
these works and ours opaques:
there may well be cases (though we have not produced one) where the Bernstein condition holds for some $\beta$ but the $v$-PPC condition does not hold for $v(\epsilon) \asymp \epsilon^{1-\beta}$; the opposite certainly can happen (note however that in the bounded excess loss case these two conditions are equivalent; see Figure~\ref{fig:overview}). 
Indeed, Example~\ref{ex:no-bernstein-bounded} in Appendix~\ref{app:examples} exhibits an $\cF$ for which the excess risk is bounded but its second moment is not, whence the Bernstein condition fails to hold for \emph{any} positive exponent, while both the strong central condition and the witness condition hold. Theorem \ref{thm:main-bounded-b} therefore applies whereas the results of \cite{audibert2009fast} and \cite{zhang2006information} do not. 
Finally we note that \cite{audibert2009fast} proves his bounds for his ingenious SeqRand learning algorithm, whereas Zhang's and our bounds hold for general estimators. 

Yet another major goal for current work is thus to disentangle the
role of the PPC condition and the Bernstein condition for unbounded excess losses; 
ideally we would extend our bounds to cover faster rates under a weaker
condition implied by either of the Bernstein or PPC conditions.

\paragraph{Additional future work: learning $\eta$}
A general issue with generalized Bayesian and MDL methods, but one that is avoided by ERM, 
is the fact that they depend on the learning rate parameter $\eta$. 
While this is often pragmatically resolved by cross-validation 
(see e.g.~\cite{audibert2009fast} and many others), 
\cite{grunwald2011safe,Grunwald12} give a method for learning $\eta$
that provably finds the ``right'' $\eta$ (i.e.~optimal for the best
Bernstein condition that holds for the given learning problem) for
{bounded excess loss functions} and likelihood ratios; experiments
\Citep{GrunwaldO17} indicate that this ``safe Bayesian'' method works
excellently in the unbounded case as well.  While it seems that the
proof technique to handle learning $\eta$ carries over to the present
unbounded setting, actually proving that the SafeBayes method still
works remains a task for future work.

\section{Acknowledgments}
We would like to thank Bob Williamson (who saw the use of bringing
structure in the various existing 'easiness conditions'), Tim van
Erven (who brought our attention to the fact that GGV's entropy
condition is not needed whenever $\eta < 1$, as earlier noted by Tong
Zhang) and Bas Kleijn (who brought \cite{wong1995probability} to our
attention), Andrew Barron (for various discussions), and Alice Kirichenko (for pointing out numerous small mistakes). An anonymous referee made some highly useful suggestions.  This
research was supported by the Netherlands Organization for Scientific
Research (NWO) VICI Project Nr.  639.073.04.
\bibliography{fast_and_unbounded}

\appendix

\section{Proofs for Section~\ref{sec:zhang}}

\begin{proof}{\bf (of Proposition~\ref{prop:aff-div-results})}
First, we prove (a), i.e., $\lim_{\eta \downarrow 0} -\frac{1}{\eta} \log \E [ e^{-\eta X} ] 
= \lim_{\eta \downarrow 0} \frac{1}{\eta} \left( 1 - \E [ e^{-\eta X} ] \right)
= \E [ X ]$.

Define $y_\eta := \E [ e^{-\eta X} ]$; we will use the fact that $\lim_{\eta \downarrow 0} \E [ e^{-\eta X} ] = 1$ (from Fatou's Lemma, using the nonnegativity of $e^{-\eta x}$).

Now, from Lemma 2 of \Citet{erven2014renyi}, for $y \geq \frac{1}{2}$ we have
$(y - 1) \left( 1 + \frac{1 - y}{2} \right) \leq \log y \leq  y - 1$. 
Hence, 
\begin{align*}
\lim_{\eta \downarrow 0} -\frac{1}{\eta} \log \E [ e^{\eta X} ] 
= \lim_{\eta \downarrow 0} -\frac{1}{\eta} \log y_\eta 
= \lim_{\eta \downarrow 0} -\frac{1}{\eta} (y_\eta - 1) 
= \lim_{\eta \downarrow 0} \frac{1}{\eta} \E [ 1 - e^{-\eta X} ] ,
\end{align*}
which completes the proof of the first equality.

Now, for all $x$ the function $\eta \rightarrow \frac{1}{\eta} (1- e^{-\eta x})$ is non-increasing, as may be verified since 
$\sign(x e^{-\eta x} - \frac{1 - e^{-\eta x}}{\eta}) = -\sign(e^{\eta x} - (\eta x + 1)) \leq 0$. 

Next, we rewrite the following Hellinger-divergence-like quantity:
\begin{align*}
\E \left[ \frac{1}{\alpha \bar{\eta}} \left( 1 - e^{-\alpha \bar{\eta} X} \right) \right] 
= \E \left[ \frac{1}{\alpha \bar{\eta}} \left( 1 - e^{-\alpha \bar{\eta} X} \right)
                 - \frac{1}{\bar{\eta}} (1 - e^{-\bar{\eta} X}) \right] 
    + \frac{1}{\bar{\eta}} \E \left[ 1 - e^{-\bar{\eta} X} \right] .
\end{align*}
Now take any decreasing sequence $\alpha = \alpha_j \in (\alpha_i)_{i
  \geq 1}$ going to zero with $\alpha_1 < 1$. We have for all $j$ that
$x \mapsto \frac{1}{\alpha_j \bar{\eta}} \left( 1 - e^{-\alpha_j
    \bar{\eta} x} \right) - \frac{1}{\bar{\eta}} (1 - e^{-\bar{\eta}
  x})$ is a positive function, and the corresponding sequence with
respect to $j$ is non-decreasing. Hence, the monotone convergence
theorem applies and we may interchange the limit and expectation,
yielding
\begin{align*}
&\lim_{\alpha \downarrow 0} 
    \E \left[ \frac{1}{\alpha \bar{\eta}} \left( 1 - e^{-\alpha \bar{\eta} X} \right)
                  - \frac{1}{\bar{\eta}} (1 - e^{-\bar{\eta} X}) \right] 
    + \frac{1}{\bar{\eta}} \E \left[ 1 - e^{-\bar{\eta} X} \right] \\
&= \E \left[ \lim_{\alpha \downarrow 0} \frac{1}{\alpha \bar{\eta}} \left( 1 - e^{-\alpha \bar{\eta} X} \right)
                   - \frac{1}{\bar{\eta}} (1 - e^{-\bar{\eta} X}) \right] 
      + \frac{1}{\bar{\eta}} \E \left[ 1 - e^{-\bar{\eta} X} \right] \\ &= \E \left[ \lim_{\eta \downarrow 0} \frac{1 - e^{-\eta X}}{\eta} \right] = \E \left[ \frac{\lim_{\eta \downarrow 0} X e^{-\eta X}}{1} \right] = \E [ X ] ,
\end{align*}
where the penultimate equality follows from L'H\^opital's rule. This concludes the proof of the second part of (a).
Next, we show (b). Observe that for any $\eta' \leq \eta$, the concavity of $x \mapsto x^{\eta' / \eta}$ together with Jensen's inequality implies that
\begin{align*}
-\frac{1}{\eta'} \log \E \left[ e^{-\eta' X} \right] 
= -\frac{1}{\eta'} \log \E \left[ \left( e^{-\eta X} \right)^{\eta' / \eta} \right] 
\geq -\frac{1}{\eta'} \log \left( \E \left[ e^{-\eta X} \right] \right)^{\eta' / \eta} 
= -\frac{1}{\eta} \log \E \left[ e^{-\eta X} \right] .
\end{align*}
\end{proof}

\subsection{Proof of Lemma~\ref{lem:rascbound-complex}, extending
  Lemma~\ref{lem:rascbound-simple}}
\label{app:rascboundproof}
We begin with an extension of Lemma~\ref{lem:rascbound-simple}. This
more general result will be used in the proof of Theorem
\ref{thm:main-bounded-b}.  It generalizes
Lemma~\ref{lem:rascbound-simple} in that it allows general comparators
$\phi(f)$, which depend on the $f$ being compared, instead of just the
risk-minimizing $\fopt$
(and it continues to hold even if $\cF$ does
not contain an optimal $\fopt$). 
Formally, let $(\Pt,\loss,{\cF})$ be
a learning problem.  For $f \in \cF$, we work with the excess loss
$\loss_f - \loss_{\phi(f)}$, where
$\phi: \cF \rightarrow \allactionset$ is a \emph{comparator
  map}\footnote{The set $\allactionset$ is defined at the beginning of
  Section~\ref{sec:grip}.} which, in the special case of
Lemma~\ref{lem:rascbound-simple}, is simply the trivial function
mapping each $f \in \cF$ to $f^*$.
\begin{lemma}\label{lem:rascbound-complex} 
Let $\smtuple$ represent a learning problem. 
Let $\dolest$ be a learning algorithm for this learning problem that outputs distributions on $\cF$. 
Let $\phi: \cF \rightarrow \allactionset$ be any deterministic function mapping the predictor $\rv{f} \sim \dol_n$ to a set of nontrivial comparators. 
Then for all $\eta > 0$, we have:  
\begin{align}\label{eq:mainrascbound}
\E_{\rv{f} \sim \dol_n} \left[ \Expann{\eta}_{Z \sim \Pt} \left[ \loss_{\rv{f}} - \loss_{\phi(\rv{f})} \right] \right]
 \stochleq_{\eta \cdot n }  \rsc_{n,\eta} \left( \phi(\rv{f}) \pipes  \dolest \right) .
\end{align}
where $\rsc_{\eta}$ is the (generalized) information complexity, defined as
\begin{align}\label{eqn:rsc-general} 
\rsc_{n,\eta} \left( \phi(\rv{f}) \pipes \dolest \right)
& := 
\Exp_{\rv{f} \sim \dol_n} \left[ \frac{1}{n} \sum_{i=1}^{n} \left( 
      \loss_{\rv{f}}(Z_i) -
      \loss_{\phi(\rv{f})}(Z_i) \right) \right]
  + \frac{\KL( \dol_n \pipes \Prior)}{\eta \cdot n  } .
\end{align}
\end{lemma}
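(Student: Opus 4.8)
The plan is to prove Lemma~\ref{lem:rascbound-complex} by a standard change-of-measure argument (the ``Donsker–Varadhan'' / Gibbs variational trick), which is the technique underlying Zhang's information-complexity bound. Throughout, fix $\eta > 0$ and write $\ell^n_g := \sum_{i=1}^n (\loss_g(Z_i) - \loss_{\phi(g)}(Z_i))$ for the cumulative empirical excess loss of $g$ relative to its comparator $\phi(g)$. The ESI to be shown, unpacked via Definition~\ref{def:esi}, asks us to prove
\begin{align*}
\E_{Z^n \sim \Pt} \exp\!\left( \eta n \left( \E_{\rv{f} \sim \dol_n}\!\big[ \Expann{\eta}_{Z \sim \Pt}[\loss_{\rv{f}} - \loss_{\phi(\rv{f})}] \big] - \rsc_{n,\eta}(\phi(\rv{f}) \pipes \dolest) \right) \right) \le 1 .
\end{align*}
Plugging in the definition~\eqref{eqn:rsc-general} of $\rsc_{n,\eta}$, the exponent equals $\eta n \E_{\rv{f}\sim\dol_n}[\Expann{\eta}[\loss_{\rv f}-\loss_{\phi(\rv f)}]] - \eta n \E_{\rv f \sim \dol_n}[\tfrac1n \ell^n_{\rv f}] - \KL(\dol_n \pipes \Prior)$, where $\dol_n = \dol \mid Z^n$ is itself a function of $Z^n$.

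First I would handle the $\KL$ term by the Gibbs variational principle: for any probability measures $\dol_n \ll \Prior$ on $\cF$ and any measurable $h: \cF \to \reals$,
\begin{align*}
\E_{\rv f \sim \dol_n}[h(\rv f)] - \KL(\dol_n \pipes \Prior) \le \log \E_{\rv f \sim \Prior}[ e^{h(\rv f)} ],
\end{align*}
with equality at the Gibbs measure proportional to $e^{h}\,d\Prior$. Apply this with $h(g) := \eta n \Expann{\eta}[\loss_g - \loss_{\phi(g)}] - \eta \ell^n_g$ and $\dol_n$ the (data-dependent) output of the algorithm. This bounds the exponent above by $\log \E_{\rv f \sim \Prior} \exp( \eta n \Expann{\eta}[\loss_{\rv f} - \loss_{\phi(\rv f)}] - \eta \ell^n_{\rv f})$, uniformly over $Z^n$. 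Hence it suffices to show
\begin{align*}
\E_{Z^n \sim \Pt} \, \E_{\rv f \sim \Prior} \exp\!\big( \eta n \Expann{\eta}[\loss_{\rv f} - \loss_{\phi(\rv f)}] - \eta \ell^n_{\rv f} \big) \le 1 .
\end{align*}
Since $\Prior$ does not depend on $Z^n$, Tonelli lets me swap the two expectations (the integrand is nonnegative), so it is enough to prove, for each fixed $g \in \cF$, that $\E_{Z^n}\exp(\eta n \Expann{\eta}[\loss_g - \loss_{\phi(g)}] - \eta \ell^n_g) \le 1$. But by the definition~\eqref{eq:genren} of the annealed expectation, $\exp(\eta \Expann{\eta}[\loss_g - \loss_{\phi(g)}]) = 1/\E_{Z \sim \Pt}[e^{-\eta(\loss_g - \loss_{\phi(g)})}]$, so the left side equals $\big( \E_{Z}[e^{-\eta(\loss_g - \loss_{\phi(g)})}] \big)^{-n} \cdot \E_{Z^n}\prod_{i=1}^n e^{-\eta(\loss_g(Z_i) - \loss_{\phi(g)}(Z_i))}$, and by the i.i.d.\ assumption the product factorizes into $\prod_{i=1}^n \E_{Z_i}[e^{-\eta(\loss_g - \loss_{\phi(g)})}] = (\E_Z[e^{-\eta(\loss_g - \loss_{\phi(g)})}])^n$, which exactly cancels the normalizer, giving $1$. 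This closes the argument.

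The main obstacle is not the algebra but the measure-theoretic bookkeeping forced by the fact that $\loss$ may be unbounded both above and below (the log-loss case): one must ensure $\Expann{\eta}[\loss_g - \loss_{\phi(g)}]$ is well-defined (possibly $\pm\infty$), that $\dol_n \ll \Prior$ so $\KL$ is meaningful, that the Gibbs variational inequality and the Tonelli swap are licit with possibly-infinite integrands, and that the cancellation step does not produce an indeterminate $0 \cdot \infty$ or $\infty/\infty$. These are precisely the issues the paper defers to Appendix~\ref{app:infinity} via conditions \eqref{eq:loglossregulara} and \eqref{eq:loglossregularc}; under those regularity assumptions the quantities above are controlled and the interchanges are justified, so I would invoke them at the relevant points. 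I would also remark that when $\phi$ is the constant map to $\fopt$, $\rsc_{n,\eta}(\phi(\rv f)\pipes\dolest)$ reduces to $\rsc_{n,\eta}(\fopt \pipes \dolest)$ of~\eqref{eqn:rsc} and one recovers Lemma~\ref{lem:rascbound-simple} verbatim.
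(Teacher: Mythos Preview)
Your proof is correct and is essentially the same as the paper's: both apply the Donsker--Varadhan (Gibbs variational) inequality with the test function $h(g) = -\eta\,\ell^n_g - \log \E_{\bar Z^n}[e^{-\eta\,\ell^n_g}]$ (which equals your $h$ once one uses that the $Z_i$ are i.i.d.), then swap the $\Prior$-expectation with the $Z^n$-expectation via Tonelli so that the right-hand side collapses to $1$. The only cosmetic difference is that the paper keeps the joint-sample normalizer $\E_{\bar Z^n}[\cdot]$ and invokes the i.i.d.\ assumption at the very end, whereas you build the factorized normalizer into $h$ from the start.
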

By the finiteness considerations of Appendix~\ref{app:infinity-new}, $\rsc_{n,\eta}(\phi(\rv{f}) \pipes \dolest)$ is always well-defined but may in some cases be equal to $- \infty$ or $\infty$. The explicit use above of a comparator function $\phi$ differs from Zhang's statement, in which the ability to use such a mapping was left quite implicit; however, inspection of the proof of Theorem 2.1 of \cite{zhang2006information} reveals that our version above with comparator functions is also true. Comparator functions will be critical to our application of Lemma \ref{lem:rascbound-complex}. For completeness, we provide a proof of this generalized result.

\begin{Proof}[Proof (of Lemma~\ref{lem:rascbound-complex})] 
For  any measurable function $\psi : \cF \times \cZ^n \rightarrow \reals$ it holds that
\begin{align} \label{eqn:duality}
\E_{f \sim \dol_n} [ \psi(f, Z^n) ] - \KL(\dol_n \pipes \Prior) 
\leq \log \E_{f \sim \Prior} \left[ e^{\psi(f, Z^n)} \right] .
\end{align}
This result, a variation of the ``Donsker-Varadhan variational bound'' follows from convex duality; see \cite{zhang2006information} for an explicit proof.

Define the function $R_n \colon \cF \times \cZ^n \rightarrow \reals$ 
as $R_n(f, z^n) = \sum_{j=1}^n \left( \loss_f(z_j) - \loss_{\phi(f)}(z_j) \right)$. 
Then \eqref{eqn:duality} with the choice $\psi(f, Z^n) = -\eta R_n(f, Z^n) - \log \E_{\bar{Z}^n \sim P^n} \left[ e^{-\eta R_n(f, \bar{Z}^n)} \right]$ yields
\begin{align*}
\E_{f \sim \dol_n} \left[ -\eta R_n(f, Z^n) - \log \E_{\bar{Z}^n} \left[ e^{-\eta R_n(f, \bar{Z}^n)} \right] \right] - \KL(\dol_n \pipes \Prior) 
\leq \log \E_{f \sim \Prior} \left[ 
               \frac
                 {e^{-\eta R_n(f, Z^n)}}
                 {\E_{\bar{Z}^n} \left[ e^{-\eta R_n(f, \bar{Z}^n)} \right]}
               \right] ,
\end{align*}
which, after exponentiating and taking the expectation with respect to $Z^n \sim P^n$, gives
\begin{align*}
&\E_{Z_n} \left[ \exp \left( \E_{f \sim \dol_n} \left[ -\eta R_n(f, Z^n) - \log \E_{\bar{Z}^n} \left[ e^{-\eta R_n(f, \bar{Z}^n)} \right] \right] - \KL(\dol_n \pipes \Prior) \right) \right] \\
&\leq \E_{Z^n} \left[ \E_{f \sim \Prior} \left[ 
               \frac
                 {e^{-\eta R_n(f, Z^n)}}
                 {\E_{\bar{Z}^n} \left[ e^{-\eta R_n(f, \bar{Z}^n)} \right]}
               \right] \right] .
\end{align*}
From the Tonelli-Fubini theorem (see e.g.~\cite[p.~137]{dudley2002real}), we can exchange the two outermost expectations on the RHS, and so the RHS is at most 1. Using ESI notation, we then have
\begin{align*}
\E_{f \sim \dol_n} \left[ -\log \E_{\bar{Z}^n} \left[ e^{-\eta R_n(f, \bar{Z}^n)} \right] \right] 
\stochleq_1 
    \E_{f \sim \dol_n} \left[ \eta R_n(f, Z^n) \right] 
    + \KL(\dol_n \pipes \Prior)  .
\end{align*}
Using that the $\bar{Z}_1, \ldots, \bar{Z}_n$ are drawn i.i.d.~from $P$ and dividing by $\eta \cdot n$ then yields
\begin{align*}
\E_{f \sim \dol_n} \left[ 
    -\frac{1}{\eta} \log \E_Z \left[ e^{-\eta (\loss_f(Z) - \loss_{\phi(f)}(Z))} \right] 
\right] 
\stochleq_{\eta \cdot n} 
    \E_{f \sim \dol_n} \left[ 
        \frac{1}{n} \sum_{j=1}^n \left( \loss_f(Z_j) - \loss_{\phi(f)}(Z_j) \right) 
    \right] 
    + \frac{1}{\eta}\KL(\dol_n \pipes \Prior)  .
\end{align*}
\end{Proof}

\begin{Proof}[Proof (of Proposition~\ref{prop:transavia})]
\cite{zhang2006epsilon} showed the first inequality in
(\ref{eqn:rscmix}) and (\ref{eqn:twopartcodemix}). The equality of the
first and third terms and the inequality in (\ref{eqn:rscmix}) are
``folklore'' in the individual sequence-prediction and MDL communities. For completness we provide a proof.

The two equalities in \eqref{eqn:rscmix} are easy to see after rewriting the center term as
\begin{align*}
n \cdot \inf_{\dolest \in \textsc{RAND}} \rsc_{n,\eta}(\nocomparator \dolest )  
= -\frac{1}{\eta} \sup_{\dol \in \Delta(\cF)} \left\{ 
        -\sum_{j=1}^n \xslossat{f}{Z_j} - \KL(\dol \pipes \Prior) 
    \right\} .
\end{align*}
Now, from Legendre duality, we have for some map $\varphi: \cX \rightarrow \reals$ that
\begin{align*}
\sup_{\nu \in \Delta(\cX)} \bigl\{ \E_{X \sim \nu} [ \varphi(X) ] - \KL(\nu \pipes \mu) \bigr\} 
= \log \E_{X \sim \mu} \left[ e^{\varphi(X)} \right] ,
\end{align*}
and the supremum is achieved by taking $\nu(dx) = \frac{e^\varphi(dx)}{\E_{X \sim \mu} \left[ e^{\varphi(X)} \right]}$. This proves the equalities in \eqref{eqn:rscmix}. 

To see \eqref{eqn:rscmixpre} and \eqref{eqn:rscmixb}, observe that for any $A \subset \cF$, we have
\begin{align*}
-\frac{1}{\eta} \log \E_{\rv{f} \sim \Prior} \left[ e^{-\sum_{j=1}^n \xslossat{\rv{f}}{Z_j}} \right] 
&= -\frac{1}{\eta} \log \E_{\rv{f} \sim \Prior} \left[ 
          \left( \ind{\rv{f} \in A} + \ind{\rv{f} \notin A} \right)
          e^{-\sum_{j=1}^n \xslossat{\rv{f}}{Z_j}} 
      \right] \\ \leq
 -\frac{1}{\eta} \log \E_{\rv{f} \sim \Prior} \left[ 
              \ind{\rv{f} \in A} 
              \cdot e^{-\sum_{j=1}^n \xslossat{\rv{f}}{Z_j}} 
          \right] 
&= -\frac{1}{\eta} \log \Prior(A) 
      - \frac{1}{\eta} \log \E_{\rv{f} \sim \Prior \mid A} \left[ 
             e^{-\sum_{j=1}^n \xslossat{\rv{f}}{Z_j}} 
          \right] \\
&\leq -\frac{1}{\eta} \log \Prior(A) 
          + \E_{\rv{f} \sim \Prior \mid A} \left[ \sum_{j=1}^n \xslossat{\rv{f}}{Z_j} \right] ,
\end{align*}
where the last line follows from Jensen's inequality. Together with the second equality in the already-established (\ref{eqn:rscmix}), the third line implies (\ref{eqn:rscmixpre}); the last line implies (\ref{eqn:rscmixb}).

For \eqref{eqn:twopartcodemix}, the first inequality is obvious since the infimum over \textsc{DET} is at least the infimum over \textsc{RAND}. The equality is immediate from the definition of the two-part MDL estimator. The second inequality follows as a special case of the inequality in \eqref{eqn:rscmix}. 
\end{Proof}

\section{Proofs for Section~\ref{sec:strongcentral}}
\label{sec:strongcentralproofs}
\begin{Proof}[Proof (of Theorem~\ref{thm:metric})] 
\glsadd{renyi} The R\'enyi divergence \Citep{erven2014renyi} of order $\alpha$ is defined as
$D_{\alpha}(p \| q) = \frac{1}{\alpha-1} \log  \int p^{\alpha} q^{1- \alpha} d \mu$,
so that, for $0 < \alpha < 1$,  with $\eta = (1- \alpha) \bar\eta$,
\begin{align*}
D_{\alpha}(p_{\fopt,\bar\eta} \| p_{f,\bar\eta}) 
&= \frac{1}{\alpha-1} \log  
\int  p(z) \frac{e^{- {\alpha \bar\eta}
  \xsloss{\fopt}} \cdot e^{
- {(1-\alpha) \bar\eta}
  \xsloss{f}}}{
(\E[e^{-\bar\eta  \xsloss{\fopt}(Z)}])^{\alpha}  
(\E[e^{-\bar\eta \xsloss{f}(Z)}])^{1-\alpha}} d\mu \\
&= \frac{1}{\alpha-1} \log  
\int  p(z) \frac{ e^{
- {(1-\alpha) \bar\eta}
  \xsloss{f}}}{
(\E[e^{-\bar\eta \xsloss{f}(Z)}])^{1-\alpha}} d\mu 
= - \frac{\bar\eta}{\eta} \left(\log \E[ e^{- \eta \xsloss{f}} ] - \frac{\eta}{\bar{\eta}} 
\log  \E[e^{-\bar\eta \xsloss{f}(Z)}]\right) \\ &= \bar\eta \Expann{\eta}[\xsloss{f}] + \log  \E[e^{-\bar\eta \xsloss{f}(Z)}] 
\leq \bar\eta  \Expann{\eta}[\xsloss{f}] ,
\end{align*}
where we used the $\bar\eta$-central condition. 
\Citet{erven2014renyi} show that the squared Hellinger distance between two densities $p$ and $q$ is always bounded by their R\'enyi divergence of order $1/2$ and also that the latter is bounded by the R\'enyi divergence of order $0 < \alpha< 1/2$ via 
$D_{1/2}(p \|q) \leq \frac{1-\alpha}{\alpha} D_{\alpha}(p \|q)$,
so that we get
\begin{align*}
\dhel^2_{\bar\eta}(f,f')
\leq \frac{1}{\bar{\eta}} \cdot \frac{1-\alpha}{\alpha} \cdot \bar\eta  \Expann{\eta}[\xsloss{f}] 
= \frac{\eta}{\bar{\eta} - \eta}  \Expann{\eta}[\xsloss{f}] .
\end{align*}
The result is now immediate from Lemma~\ref{lem:rascbound-simple}.  
\end{Proof}

\begin{proof}{\bf (of Proposition~\ref{prop:entrobound}, cont.)}
  We use the familiar rewrite of the KL divergence
  $\E_{Z \sim P_{f^*}} [L_f] = D(f^* \| f)$ as
  $\E_{Z \sim P_{f^*}} [L_f] = \E[L_f + S]$, with
  $S= (p_f(Z)/p_{f^*}(Z))-1$, where as is well-known, 
  $L_f +S$ is nonnegative on $\cZ$.  Using this in the second
  inequality below gives:
\begin{align*}
\E_{Z \sim P_{f^*}} [L_f \opmax 0] & =
\E_{Z \sim P_{f^*}} [\ind{L_{f} \geq 0} (L_f   + S)] - \E_{Z \sim P_{f^*}}[\ind{L_f \geq  0} S]
\leq
\E_{Z \sim P_{f^*}} [L_f] +  \E_{Z \sim P_{f^*}}[ |S|] \\
& = \E_{Z \sim P_{f^*}} [L_f] + \int p_{f^*} \left|\frac{p_{f} - p_{f^*}}{p_{f^*}} \right|  d \mu(z) 
\leq D(f^* \| f) + \int \left| p_{f} - p_{f^*} \right| d \mu,
\end{align*}
and the result follows by Pinsker's inequality. 
\end{proof}

\section{Proofs for Section~\ref{sec:witness} and Example~\ref{ex:smallballagain}}
\label{app:proofs-witness}

\subsection{Proof of Lemma~\ref{lem:renyi-to-kl}, extending Lemma~\ref{lem:renyi-to-kl-simple}}
Below we state and prove Lemma \ref{lem:renyi-to-kl} which generalizes
Lemma~\ref{lem:renyi-to-kl-simple} in the main text in that it allows general comparators $\phi(f)$, as introduced above Lemma~\ref{lem:rascbound-complex}. This extension is pivotal for our results in
Section~\ref{sec:grip} involving the GRIP.
\begin{lemma} \label{lem:renyi-to-kl} Let $\bar{\eta} > 0$. Let $\phi$ be 
  any comparator map $\phi$ such that 
for any given $f$, $\phi(f)$ satisfies $\E [ \loss_{\phi(f)} ] \leq \E [ \loss_f ]$. Assume that the strong
  $\bar{\eta}$-central condition is satisfied with respect to 
  comparator $\phi$ for some fixed $f \in \cF$ , i.e.,
\begin{align}\label{eq:baby}
\loss_f - \loss_{\phi(f)} \stochleq_{\bar{\eta}} 0.
\end{align}
Furthermore assume that the $(u,c)$-witness condition holds for this $f$,
relative to $\phi(f)$, for some constants $u > 0$ and $c \in (0, 1]$,
i.e.,
\begin{align}\label{eq:witnessagain}
c \xsrisk{f} \leq \E [ (\loss_f - \loss_{\phi(f)}) \cdot \ind{\loss_f - \loss_{\phi(f)} \leq u} ]. 
\end{align}
Then for all $\eta \in (0, \bar{\eta})$
\begin{align}\label{eq:outwitnessed}
\xsrisk{f} 
\,\,\leq\,\, c_u \cdot \Exphel{\eta} \left[ \loss_f - \loss_{\phi(f)} \right]
\,\,\leq\,\, c_u \cdot \Expann{\eta} \left[ \loss_f - \loss_{\phi(f)} \right] ,
\end{align}
with $c_u := \frac{1}{c} \frac{\eta u + 1}{1 - \frac{\eta}{\bar{\eta}}}$. 
Moreover, suppose that the $(\tau,c)$-witness condition holds for a
non-increasing $\tau$ and $c$ as in Definition~\ref{def:witness}, for
all $f \in \cF$, relative to comparator $\phi(\cdot)$, i.e., $\E
\bigl[ (\loss_f - \loss_{\phi(f)}) \cdot \ind{\loss_f -
  \loss_{\phi(f)} \leq \tau(\E [ \loss_f - \loss_{\phi(f)}])} \bigr]
\geq c \xsrisk{f}$.  For all $f \in \cF$, all $\eta
\in (0, \bar{\eta})$, all $\epsilon > 0$, we have:
\begin{equation}\label{eq:rkltaub}
\xsrisk{f} 
\,\,\leq\,\, \epsilon \opmax c_{\tau(\epsilon)}  \cdot \Exphel{\eta} \left[ \loss_f - \loss_{\phi(f)} \right]
\,\,\leq\,\, \epsilon \opmax c_{\tau(\epsilon)} \cdot \Expann{\eta} \left[ \loss_f - \loss_{\phi(f)} \right].
\end{equation}
\end{lemma}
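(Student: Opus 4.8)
The plan is to prove Lemma~\ref{lem:renyi-to-kl} by following the same line of argument as the proof of Lemma~\ref{lem:renyi-to-kl-simple}, but carrying the comparator $\phi(f)$ throughout instead of the fixed $\fopt$. Write $L := \loss_f - \loss_{\phi(f)}$ for the excess loss of $f$ relative to its comparator, so the hypotheses read $L \stochleq_{\bar\eta} 0$ (i.e.\ $\E[e^{-\bar\eta L}] \le 1$), $c\,\E[L] \le \E[L\cdot\ind{L\le u}]$, and $\E[L] \ge 0$. The target is $\E[L] \le c_u\cdot\frac{1}{\eta}\bigl(1-\E[e^{-\eta L}]\bigr) \le c_u\cdot\bigl(-\frac{1}{\eta}\log\E[e^{-\eta L}]\bigr)$, where the second inequality is just $1-x\le-\log x$ as in \eqref{eq:genrenc}.

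The core estimate is to bound the ``witnessed'' part $\E[L\cdot\ind{L\le u}]$ from above by a multiple of $1-\E[e^{-\eta L}]$. First I would split $\E[1-e^{-\eta L}]$ into the region $\{L \le u\}$ and its complement. On $\{L\le u\}$ one uses the elementary inequality $1-e^{-\eta t} \ge \frac{\eta}{\eta u + 1}\, t$ valid for all $t \le u$ (one checks this by noting both sides agree suitably and the concave function $1-e^{-\eta t}$ lies above the chord/line through the relevant points on $[-\infty,u]$; more precisely $1-e^{-\eta t}\ge \eta t e^{-\eta u}\ge \frac{\eta t}{\eta u+1}$ when $t\ge0$, and a direct sign check handles $t<0$). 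This gives $\E\bigl[(1-e^{-\eta L})\ind{L\le u}\bigr] \ge \frac{\eta}{\eta u+1}\,\E[L\cdot\ind{L\le u}] \ge \frac{\eta c}{\eta u+1}\,\E[L]$ by the witness condition. On $\{L>u\}$, the central condition is what saves us: I would show $\E\bigl[(1-e^{-\eta L})\ind{L>u}\bigr] \ge -\frac{\eta}{\bar\eta}\,\E[L]$ (a mild negative lower bound) using that $\E[e^{-\bar\eta L}]\le1$ together with the power-mean / Jensen step $\E[e^{-\eta L}] = \E[(e^{-\bar\eta L})^{\eta/\bar\eta}] \le (\E[e^{-\bar\eta L}])^{\eta/\bar\eta}\le1$ restricted appropriately — or, cleaner, bound the negative contribution of the upper tail directly by $-\frac{\eta}{\bar\eta}$ times the upper-tail expectation, which is itself controlled by $\E[L]$ through the central condition. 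Adding the two pieces yields $\frac{1}{\eta}\bigl(1-\E[e^{-\eta L}]\bigr) \ge \bigl(\frac{c}{\eta u+1} - \frac{1}{\bar\eta}\bigr)\E[L]$, i.e.\ $\E[L]\le c_u\cdot\frac{1}{\eta}(1-\E[e^{-\eta L}])$ with $c_u = \frac{1}{c}\frac{\eta u+1}{1-\eta/\bar\eta}$ after rearranging — this is exactly \eqref{eq:outwitnessed}.

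For the $\tau$-version \eqref{eq:rkltaub}: fix $\epsilon>0$ and $f\in\cF$, and set $x := \E[L] = \xsrisk{f}$. If $x \le \epsilon$ the bound is trivial since the left side is $\le\epsilon\le\epsilon\opmax(\cdots)$. If $x > \epsilon$, then since $\tau$ is nonincreasing we have $\tau(x) \le \tau(\epsilon)$, and applying the constant-$u$ argument above with the threshold $u = \tau(x)$ — which is legitimate because the $(\tau,c)$-witness hypothesis gives $c\,x \le \E[L\cdot\ind{L\le\tau(x)}]$ — yields $x \le c_{\tau(x)}\cdot\frac{1}{\eta}(1-\E[e^{-\eta L}]) \le c_{\tau(\epsilon)}\cdot\frac{1}{\eta}(1-\E[e^{-\eta L}])$, where the last step uses monotonicity of $a\mapsto c_a$ in $a$ and $\tau(x)\le\tau(\epsilon)$. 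Combining the two cases gives \eqref{eq:rkltaub}, and the final inequality in each display is \eqref{eq:genrenc}.

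The main obstacle I anticipate is the bookkeeping around the upper tail $\{L>u\}$ and making the ``negative part is at least $-\frac{\eta}{\bar\eta}\E[L]$'' step fully rigorous with the comparator $\phi(f)$ in place — in particular handling the possibility that $\E[L]$ is finite while $\E[e^{-\eta L}]$ and individual tail integrals require care about $\pm\infty$ (the finiteness conventions of Appendix~\ref{app:infinity} and the standing assumption $\E[\loss_{\phi(f)}]\le\E[\loss_f]<\infty$ should cover this, but it needs to be invoked cleanly). Everything else is a routine adaptation of the $\fopt$ case, since the central and witness conditions are used only as the two scalar inequalities $\E[e^{-\bar\eta L}]\le1$ and $c\,\E[L]\le\E[L\cdot\ind{L\le\tau(\E[L])}]$, and these hold by hypothesis regardless of whether the comparator is $\fopt$ or $\phi(f)$.
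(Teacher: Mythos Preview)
Your approach has a genuine gap in the treatment of negative excess losses. The pointwise inequality you rely on,
\[
1 - e^{-\eta t} \;\ge\; \frac{\eta}{\eta u + 1}\, t \qquad\text{for all } t \le u,
\]
is \emph{false} for $t < 0$. Indeed, for $t < 0$ both sides are negative, and since $e^{\eta|t|} - 1 \ge \eta|t| \ge \frac{\eta|t|}{\eta u + 1}$ we get $1 - e^{-\eta t} \le \frac{\eta t}{\eta u + 1}$, the reverse inequality. (A concrete check: $t = -1$, $\eta = u = 1$ gives $1 - e \approx -1.72$ on the left and $-1/2$ on the right.) Your justification ``a direct sign check handles $t < 0$'' therefore does not go through, and the bound $\E[(1 - e^{-\eta L})\ind{L \le u}] \ge \frac{\eta c}{\eta u + 1}\E[L]$ is unproven.

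The underlying confusion is about which tail each hypothesis controls. The central condition $\E[e^{-\bar\eta L}] \le 1$ controls the \emph{lower} tail of $L$ (where $L \ll 0$), while the witness condition controls the \emph{upper} tail (where $L \gg 0$). You invoke the central condition on $\{L > u\}$, but there it is not needed: $1 - e^{-\eta L} \ge 0$ already. The difficulty lives entirely in $\{L < 0\} \subset \{L \le u\}$, and that is precisely where your pointwise inequality breaks down and where the central condition must enter.

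The paper's proof handles this by subtracting a common baseline before comparing. With $h_\eta := \frac{1}{\eta}(1 - e^{-\eta L})$ it defines $S_{f,\eta} := h_\eta - h_{\bar\eta}$ and $S_{f,0} := L - h_{\bar\eta}$, both of which are nonnegative for all values of $L$ (including negative ones). One then needs the pointwise bound $S_{f,0} \le C_{\bar\eta,\eta,u}\, S_{f,\eta}$ on $\{L \le u\}$ with $C_{\bar\eta,\eta,u} = \frac{\eta u + 1}{1 - \eta/\bar\eta}$; this is the content of the ``Bounded Part'' Lemma, whose proof reduces (via the substitution $r = e^{-\bar\eta L}$) to a delicate monotonicity statement about the ratio of the functions $g_\alpha(r) = \alpha^{-1}(1 - r^\alpha) - (1 - r)$ for different $\alpha$. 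After this, the central condition enters only once, to drop the term $\E[h_{\bar\eta}] \ge 0$, and the witness condition finishes the argument exactly as you describe. Your $\tau$-version argument (the reduction to the constant-$u$ case via monotonicity of $\tau$ and of $a \mapsto c_a$) is correct and matches the paper.
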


\begin{Proof}\ \\
  \emph{Proof of \eqref{eq:outwitnessed}}. Define
  $L'_{f} := \loss_f - \loss_{\phi(f)}$. For any $\eta \in [0,
  \bar{\eta}]$, define:
\begin{align*}
h_{f,\eta} := \frac{1}{\eta} \left( 1 - e^{-\eta L'_f} \right) 
\qquad 
S_{f,\eta} := h_{f,\eta} - h_{f,\bar{\eta}} 
\qquad 
\hellp{\eta} := \Exphel{\eta}[L'_f]  = \E [ h_{f,\eta} ] .
\end{align*}
It is easy to verify that the map $\eta \mapsto h_{f,\eta}$ is non-increasing, and hence $S_{f,\eta}$ is a positive random variable for any $\eta \in [0, \bar{\eta}]$. It also is easy to verify that $\lim_{\eta \downarrow 0} h_{f,\eta} = L'_f$. We thus can define 
$h_{f,0} = L'_f$ and 
$S_{f,0} = L'_f - h_{f,\bar\eta}$ 
and hence can rewrite the excess risk of $f$ (with respect to $\phi(f)$) as
\begin{align*}
\E [ L'_f ] 
= \E [ h_{f,0} - h_{f,\bar{\eta}} + h_{f,\bar{\eta}} ] = \E [ S_{f,0} ] + \hellp{\bar{\eta}} .
\end{align*}
Splitting up the expectation into two components, we have
\begin{align*}
\E [ S_{f,0} \cdot \ind{L'_f \leq u} ] 
+ \E [ S_{f,0} \cdot \ind{L'_f > u} ] 
+ \hellp{\bar{\eta}} .
\end{align*}
Now, from Lemma \ref{lemma:bounded-part} (stated and proved immediately after this proof), the positivity of $S_{f,\eta}$, and using $\bar{C} := C_{\bar{\eta},\eta,u}$ to avoid cluttering notation, we have
\begin{align*}
\E [ L'_f ] 
&\leq \bar{C} \E [ S_{f,\eta} \cdot \ind{L'_f \leq u} ] 
          + \E [ S_{f,0} \cdot \ind{L'_f > u} ] 
          + \hellp{\bar{\eta}} \leq \bar{C} \E [ S_{f,\eta} ] 
          + \E [ S_{f,0} \cdot \ind{L'_f > u} ] 
          + \hellp{\bar{\eta}} \\
&= \bar{C} \left( \hellp{\eta}  - \hellp{\bar{\eta}} \right)
      + \E [ S_{f,0} \cdot \ind{L'_f > u} ] 
      + \hellp{\bar{\eta}}
= \bar{C} \hellp{\eta}  - (\bar{C} - 1) \hellp{\bar{\eta}} 
      + \E [ S_{f,0} \cdot \ind{L'_f  > u} ] .
\end{align*}
We observe that $\hellp{\bar{\eta}} \geq 0$ since $\hellp{\bar{\eta}} = \frac{1}{\bar{\eta}} \E \left[ 1 - e^{-\bar{\eta} L'_f} \right] \geq 0$, where the inequality is implied by the strong $\bar{\eta}$-central condition (i.e.~$\E \left[ e^{-\bar{\eta} L'_f} \right] \leq 1$). 
Therefore, since it always holds that $\bar{C} \geq 1$ we have
\begin{align}\label{eq:h0}
\E [ L'_f ] 
&\leq \bar{C} \hellp{\eta}  
          + \E [ S_{f,0} \cdot \ind{L'_f > u} ] .
\end{align}
Next, we claim that $\E [ S_{f,0} \cdot \ind{L'_f> u} ] \leq \E [ L'_f \cdot \ind{L'_f > u} ]$. To see this, observe that $S_{f,0} = L'_f + \frac{1}{\bar{\eta}} \left( e^{-\bar{\eta} L'_f} - 1 \right)$, and that the second term is negative on the event $L'_f > u$. We thus have
\begin{align*}
\E [ L'_f ] - \E [ L'_f \cdot \ind{L'_f > u} ] 
&\leq \bar{C} \hellp{\eta} ,
\end{align*}
which can be rewritten as
\begin{align}\label{eq:commonality}
\E [ L'_f \cdot \ind{L'_f \leq u} ] 
&\leq \bar{C} \hellp{\eta} ,
\end{align}
Now, since we assume (\ref{eq:witnessagain}), 
the first inequality in \eqref{eq:outwitnessed} is proved, and the second then follows from (\ref{eq:genrenc}):
\begin{align*}
\xsrisk{f} 
\leq \frac{\bar{C}}{c} \hellp{\eta} .
\end{align*}
{\em Proof of \eqref{eq:rkltaub}}. 
Fix arbitrary $f \in \cF$. 
We know that for this particular $f$, either $\xsrisk{f} \leq \epsilon$ in which case there is nothing to prove, or $\xsrisk{f} > \epsilon$. Then for this $f$, 
the $(u,c)$-witness condition holds with $u = \tau(\xsrisk{f}) \leq \tau(\epsilon)$. But then the result follows as above. 
\end{Proof}

\begin{lemma}[``Bounded Part'' Lemma] \label{lemma:bounded-part}
For $u, \bar{\eta} > 0$ and $\eta \in [0, \bar{\eta})$, we have
\begin{align*}
\E [ S_{f,0} \cdot \ind{\loss_f - \loss_{\phi(f)} \leq u} ] 
\leq C_{\bar{\eta},\eta,u} \E [ S_{f,\eta} \cdot \ind{\loss_f - \loss_{\phi(f)} \leq u} ] ,
\end{align*}
where $C_{\bar{\eta},\eta,u} := \frac{\eta u + 1}{1 - \frac{\eta}{\bar{\eta}}}$.
\end{lemma}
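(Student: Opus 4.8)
The plan is to reduce the claimed inequality in expectation to a pointwise comparison and then run an elementary single–variable analysis. Since $S_{f,\eta}\ge 0$ and $C_{\bar\eta,\eta,u}\ge 1$, it suffices to show that for every real $t\le u$
\[
 s_0(t)\ \le\ C_{\bar\eta,\eta,u}\,s_\eta(t),\qquad\text{where } s_\alpha(t):=h_\alpha(t)-h_{\bar\eta}(t),\ \ h_\alpha(t):=\frac{1}{\alpha}\bigl(1-e^{-\alpha t}\bigr)\ \ (h_0(t):=t),
\]
and then substitute $t=\loss_f(Z)-\loss_{\phi(f)}(Z)$, multiply by $\ind{\loss_f-\loss_{\phi(f)}\le u}$ (which kills both sides when $t>u$), and take expectations. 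The case $\eta=0$ is trivial ($s_0\equiv s_\eta$, $C_{\bar\eta,0,u}=1$), so assume $0<\eta<\bar\eta$; then $C:=C_{\bar\eta,\eta,u}>1$. First I would record the elementary fact $e^{-s}(1+s)\le 1$ for all real $s$, which gives that $\alpha\mapsto h_\alpha(t)$ is nonincreasing (hence $s_0,s_\eta\ge 0$) and which is used again below.

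\textbf{Reduction to the sign of one function.} The displayed pointwise bound is equivalent to $F(t)\le 0$ on $(-\infty,u]$, where
\[
 F(t):=h_0(t)-C\,h_\eta(t)+(C-1)\,h_{\bar\eta}(t)=t-\frac{C}{\eta}\bigl(1-e^{-\eta t}\bigr)+\frac{C-1}{\bar\eta}\bigl(1-e^{-\bar\eta t}\bigr).
\]
Using the identities $C=\frac{\bar\eta(\eta u+1)}{\bar\eta-\eta}$ and $C-1=\frac{\eta(\bar\eta u+1)}{\bar\eta-\eta}$, one checks that $F(0)=F'(0)=0$ and that $F''(t)=C\eta e^{-\eta t}-(C-1)\bar\eta e^{-\bar\eta t}$ changes sign exactly once, from negative to positive, at $t^{*}=(\bar\eta-\eta)^{-1}\log\frac{\bar\eta u+1}{\eta u+1}>0$. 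Consequently $F'$ is strictly decreasing on $(-\infty,t^{*})$ and strictly increasing on $(t^{*},\infty)$, with $F'(0)=0$ and $F'(+\infty)=1$; this forces $F'>0$ on $(-\infty,0)$, so $F$ is increasing there and $F<0$ on $(-\infty,0)$, while on $[0,\infty)$ the function $F$ decreases and then increases to $+\infty$, so it has a single zero $t_0>0$ and $F\le 0$ exactly on $(-\infty,t_0]$. Hence $F\le 0$ on $(-\infty,u]$ iff $F(u)\le 0$, which is the only remaining point.

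\textbf{Verifying $F(u)\le 0$.} Multiplying $F(u)$ by $\eta\bar\eta(\bar\eta-\eta)>0$ and writing $a:=\eta u$, $b:=\bar\eta u$ (so $0<a<b$), a direct computation turns $F(u)\le 0$ into
\[
 ab(b-a)-b^2(a+1)(1-e^{-a})+a^2(b+1)(1-e^{-b})\ \le\ 0.
\]
Dividing by $ab$ and setting $\mu(x):=\frac{(x+1)(1-e^{-x})}{x}=1+\nu(x)$ with $\nu(x):=\frac{1-(1+x)e^{-x}}{x}\ge 0$, this is equivalent to $b\,\nu(a)\ge a\,\nu(b)$, i.e.\ to $\zeta(a)\ge\zeta(b)$ for $\zeta(x):=\nu(x)/x=\frac{1-(1+x)e^{-x}}{x^2}$. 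Since $1-(1+x)e^{-x}=\int_0^x se^{-s}\,ds$, we have $\zeta(x)=x^{-2}\int_0^x se^{-s}\,ds$, and $x^3\zeta'(x)=x^2e^{-x}-2\int_0^x se^{-s}\,ds=:\chi(x)$ satisfies $\chi(0)=0$ and $\chi'(x)=-x^2e^{-x}<0$; hence $\chi<0$ and $\zeta'<0$ on $(0,\infty)$, so $\zeta$ is decreasing and $\zeta(a)\ge\zeta(b)$. This gives $F(u)\le 0$, completing the proof.

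\textbf{Main obstacle.} The one substantive step is the algebraic reduction of $F(u)\le 0$ to the monotonicity of $\zeta$; the rest is bookkeeping with $e^{-s}(1+s)\le 1$ and its consequences. A secondary point to keep in mind is that the pointwise comparison is genuinely needed for \emph{all} $t\le u$, negative values included, which is exactly why the sign of $F$ on $(-\infty,0)$ must be controlled and not merely on $[0,u]$.
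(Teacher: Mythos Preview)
Your proof is correct and takes a genuinely different route from the paper. Both arguments reduce to the pointwise inequality $s_0(t)\le C\,s_\eta(t)$ on $\{t\le u\}$, but from there they diverge. The paper makes the change of variable $r=e^{-\bar\eta t}$, rewrites the pointwise claim as a bound on the ratio $g_0(r)/g_{\eta'}(r)$ (with $\eta'=\eta/\bar\eta$ and $g_\alpha(r)=\alpha^{-1}(1-r^\alpha)-(1-r)$), and proves in a separate Lemma that this ratio is \emph{strictly decreasing} in $r$; that monotonicity argument is fairly intricate (it compares with an auxiliary rational function, applies a further substitution $y=r^{\eta-1}$, and analyzes $(y^a-1)/(y-1)$). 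The decreasing-ratio result yields the sharp constant $h_{0,\eta'}(e^{-\bar\eta u})$, which is then upper bounded by the simpler $C_{\bar\eta,\eta,u}$ via yet another differential-inequality step.

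Your route sidesteps all of that: you work directly with $F(t)=s_0(t)-C\,s_\eta(t)$, use the single sign change of $F''$ to pin down the shape of $F$, reduce everything to $F(u)\le 0$, and finish with the elementary monotonicity of $\zeta(x)=x^{-2}\int_0^x se^{-s}\,ds$. This is shorter and more self-contained for the lemma as stated. What you give up is the tight constant: the paper's ratio-monotonicity actually produces the exact value $C_{0\leftarrow\eta}(V)$ (which they note matches the sharp Hellinger-to-KL ratio of Sason and Verd\'u in the well-specified case), whereas your argument only delivers the already-simplified bound $C_{\bar\eta,\eta,u}$---which is all the lemma claims.
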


\begin{Proof}
It is sufficient to show that on the set $\{\loss_f - \loss_{\phi(f)} \leq u\}$, it holds that $S_{f,0} \leq C S_{f,\eta}$ for some constant $C$. This may be rewritten as wanting to show, for $\eta_0 \rightarrow 0$:
\begin{align*}
\frac{1}{\eta_0} (1 - e^{-\eta_0 (\loss_f - \loss_{\phi(f)})}) - \frac{1}{\bar{\eta}} (1 - e^{-\bar{\eta} (\loss_f - \loss_{\phi(f)})})
\leq C \left( \frac{1}{\eta} (1 - e^{-\eta (\loss_f - \loss_{\phi(f)})}) - \frac{1}{\bar{\eta}} (1 - e^{-\bar{\eta} (\loss_f - \loss_{\phi(f)})}) \right) .
\end{align*}
Letting $r = e^{-\bar{\eta} (\loss_f - \loss_{\phi(f)})}$, this is equivalent to showing that
\begin{align*}
\frac{1}{\bar{\eta}} \left( \frac{1}{\eta_0 / \bar{\eta}} (1 - r^{\eta_0/\bar{\eta}}) - (1 - r) \right)
\leq \frac{C}{\bar{\eta}} \left( \frac{1}{\eta / \bar{\eta}} (1 - r^{\eta/\bar{\eta}}) - (1 -r) \right) .
\end{align*}
Now, for any $\eta \geq 0$, define\footnote{Note that the $g_\eta$ used here is \emph{not} a GRIP.} the function $g_\eta$ as $g_\eta(r) = \frac{1}{\eta} (1 - r^\eta) - (1 - r)$. From Lemma \ref{lemma:ratio}, for any $\eta' \geq 0$, if $r \geq \frac{1}{V}$ for some $V > 1$ then
$g_0(r) \leq \frac{1}{1 - \eta'} (\eta' \log V + 1) g_{\eta'}(r)$.

Applying this inequality, taking $\eta_0 \rightarrow 0$ and $\eta' := \frac{\eta}{\bar{\eta}}$, and observing that on the set $\{\loss_f - \loss_{\phi(f)} \leq u\}$ we may take $V = e^{\bar{\eta} u} > 1$, we see that whenever $\loss_f - \loss_{\phi(f)} \leq u$,
\begin{align*}
\left( \frac{1}{\eta_0} (1 - r^{\eta_0}) - (1 - r) \right)
\leq \frac{1}{1 - \eta'} (\eta' \bar{\eta} u + 1) \left( \frac{1}{\eta'} (1 - r^{\eta'}) - (1 -r) \right) .
\end{align*}
Thus, $S_{f,0} \leq C_{\bar{\eta},\eta,u} S_{f,\eta}$ indeed holds for $C_{\bar{\eta},\eta,u} = \frac{\eta u + 1}{1 - \frac{\eta}{\bar{\eta}}}$.
\end{Proof}

\begin{figure}[ht]
\hspace*{0.25\textwidth}\includegraphics[width=0.5\textwidth]{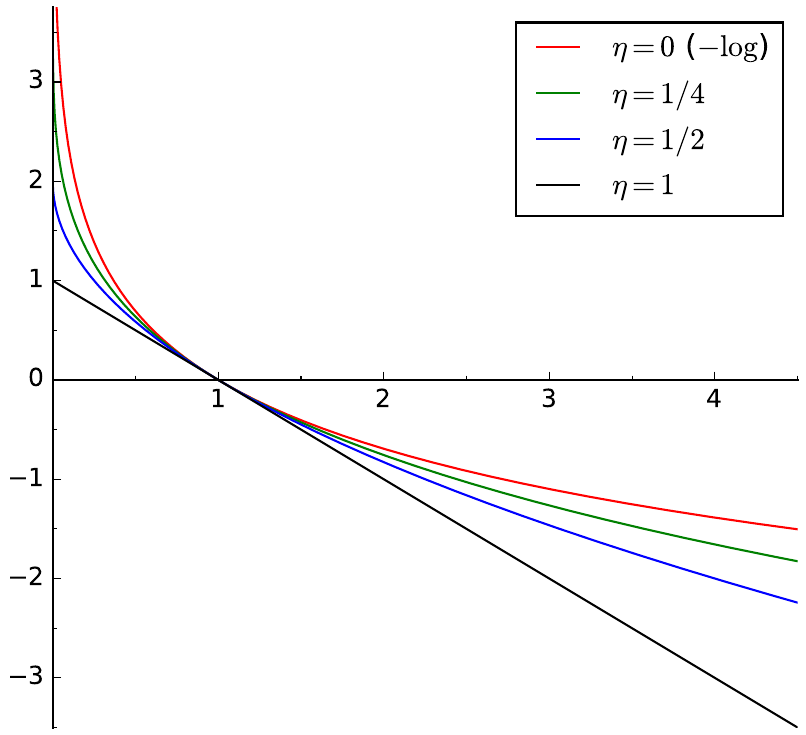}
\caption{\label{fig:moeilijk} The function $r: \rightarrow \eta^{-1}(1-r^{\eta})$ for various values of $r$. $g_{\eta}(r)$ is the difference of the line for $\eta$ at $r$ and the line for $\eta = 1$ at $r$, which is always positive.}
\end{figure}

\begin{lemma}\label{lemma:ratio}
Let $0 \leq  \eta' < \eta < 1$ and $1 < V < \infty$. 
Define $g_{\eta}(r) := \eta^{-1} \left(1-r^\eta \right) - \left(1- r\right)$, a positive function. 
Then for $\eta' > 0$ and $r \geq \frac{1}{V}$:
\begin{align*}
g_{\eta'}(r)
\leq C_{\eta' \leftarrow \eta}(V) g_{\eta}(r) ,
\end{align*}
where 
$C_{\eta' \leftarrow \eta}(V) \leq  ((\eta')^{-1} - 1)/(\eta^{-1} -1)$, 
and
\begin{align*}
\lim_{\eta' \downarrow 0} g_{\eta'}(r)
\leq C_{0 \leftarrow \eta}(V) g_{\eta}(r) ,
\end{align*}
where 
$C_{0 \leftarrow \eta}(V)= \frac{ \log V -(1-V^{-1})}{\frac{1}{\eta}(1- V^{-\eta})
  -(1- V^{-1})} \leq \frac{\eta}{1- \eta} \log V + \frac{1}{1-\eta}$.
\end{lemma}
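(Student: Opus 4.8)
The plan is to reduce both inequalities to elementary one‑variable facts about the family $g_s(r)=s^{-1}(1-r^{s})-(1-r)$, $s\in[0,1)$, $r>0$. First I would record the basic shape: differentiating in $r$ gives $g_s'(r)=1-r^{s-1}$, so $g_s$ is strictly decreasing on $(0,1)$, strictly increasing on $(1,\infty)$, with $g_s(1)=0$; hence $g_s\ge 0$ and $g_s(1+t)=\tfrac{1-s}{2}t^{2}+O(t^{3})$. Moreover $g_0(r):=\lim_{s\downarrow 0}g_s(r)=r-1-\log r$, while $\lim_{r\downarrow 0}g_s(r)=s^{-1}-1$ for $s>0$ and $=+\infty$ for $s=0$ (this last gap is why the $\eta'\downarrow 0$ case needs $V$). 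This is exactly the picture drawn in Figure~\ref{fig:moeilijk}.

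For the first inequality (the case $\eta'>0$), the key observation is the identity $\dfrac{g_s(r)}{s^{-1}-1}=1+\phi_r(s)$ with $\phi_r(s):=\dfrac{sr-r^{s}}{1-s}$, together with the claim that $s\mapsto\phi_r(s)$ is non-decreasing on $[0,1)$ for each fixed $r>0$. A direct computation gives $\phi_r'(s)=\dfrac{r-r^{s}\bigl(1+(1-s)\log r\bigr)}{(1-s)^{2}}=\dfrac{r^{s}\bigl(e^{(1-s)\log r}-1-(1-s)\log r\bigr)}{(1-s)^{2}}\ge 0$ by $e^{x}\ge 1+x$. Consequently, for $0<\eta'<\eta<1$ and \emph{every} $r>0$, $\dfrac{g_{\eta'}(r)}{(\eta')^{-1}-1}=1+\phi_r(\eta')\le 1+\phi_r(\eta)=\dfrac{g_{\eta}(r)}{\eta^{-1}-1}$, i.e. $g_{\eta'}(r)\le\dfrac{(\eta')^{-1}-1}{\eta^{-1}-1}\,g_{\eta}(r)$. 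In particular this holds for $r\ge 1/V$, so the best constant $C_{\eta'\leftarrow\eta}(V):=\sup_{r\ge 1/V}g_{\eta'}(r)/g_{\eta}(r)$ is bounded by $\bigl((\eta')^{-1}-1\bigr)/(\eta^{-1}-1)$, with equality only approached as $r\downarrow 0$.

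For the second inequality ($\eta'\downarrow 0$), the constant above blows up, so I argue separately and use $r\ge 1/V$ genuinely. Set $G(r):=C\,g_{\eta}(r)-g_{0}(r)$ with $C=\tfrac{\eta\log V+1}{1-\eta}$. One checks $G(1)=G'(1)=0$, and $G''(r)=r^{-2}\bigl(C(1-\eta)r^{\eta}-1\bigr)$ changes sign exactly once, from $-$ to $+$, at $r_{0}=(C(1-\eta))^{-1/\eta}\le 1$ (using $C(1-\eta)=\eta\log V+1\ge 1$), while $G'(0^{+})=+\infty$ and $G'(\infty)=C-1>0$. Tracking the sign of $G'$ across $(0,r_{0})$ and $(r_{0},\infty)$ shows $G$ increases, then decreases to $0$ at $r=1$, then increases; hence $\inf_{r\ge a}G(r)=\min\{G(a),0\}$ for every $a>0$, so ``$G\ge 0$ on $[1/V,\infty)$'' is equivalent to the single inequality $G(1/V)\ge 0$. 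Writing $v:=1/V\in(0,1]$ and then $a:=-\log v\ge 0$, clearing denominators turns $G(1/V)\ge 0$ into $\Phi(a):=\tfrac{1-e^{-\eta a}}{\eta}-\eta(1-e^{-a})-a\bigl(e^{-\eta a}-\eta e^{-a}\bigr)\ge 0$; since $\Phi(0)=0$ and $\Phi'(a)=a\eta\bigl(e^{-\eta a}-e^{-a}\bigr)\ge 0$, this holds for all $a\ge 0$. Finally, the shape analysis of $G$ also identifies the smallest valid constant on $[1/V,\infty)$ as $C_{0\leftarrow\eta}(V)=g_0(1/V)/g_{\eta}(1/V)=\bigl(\log V-(1-V^{-1})\bigr)\big/\bigl(\tfrac1{\eta}(1-V^{-\eta})-(1-V^{-1})\bigr)$, and what we just proved is precisely $C_{0\leftarrow\eta}(V)\le \tfrac{\eta}{1-\eta}\log V+\tfrac1{1-\eta}$.

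The main obstacle I anticipate is the monotonicity/sign bookkeeping for $G$: getting the single sign change of $G''$ and the $r\to 0^{+}$ and $r\to\infty$ asymptotics of $G'$ exactly right, and concluding the ``increase–decrease–increase'' shape that collapses ``$G\ge 0$ on $[1/V,\infty)$'' to the endpoint check $G(1/V)\ge 0$. Everything downstream of that is routine: the first inequality is a one-line consequence of $e^{x}\ge 1+x$, and the final scalar inequality $\Phi(a)\ge 0$ is immediate once one differentiates. I would also double-check the boundary behaviour near $r=1$, where $g_{\eta'}$, $g_{\eta}$ and $g_0$ all vanish to second order, so that the various ratios are genuinely finite there (they tend to $\tfrac{1-\eta'}{1-\eta}$, resp. $\tfrac1{1-\eta}$).
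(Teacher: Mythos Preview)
Your proof is correct and takes a genuinely different route from the paper on both halves. For the $\eta'>0$ case, the paper proves that the ratio $h_{\eta',\eta}(r)=g_{\eta'}(r)/g_\eta(r)$ is strictly decreasing in $r$, via an indirect comparison with an auxiliary function $f(r)=(r^{\eta'-1}-1)/(r^{\eta-1}-1)$ and a further substitution $y=r^{\eta-1}$ (the authors even remark that they ``found neither a direct argument that $h'\le 0$ or that $h''>0$''). You instead show that $s\mapsto g_s(r)/(s^{-1}-1)=1+\phi_r(s)$ is non-decreasing in $s$, which follows in one line from $e^x\ge 1+x$; this yields the bound $((\eta')^{-1}-1)/(\eta^{-1}-1)$ for \emph{every} $r>0$ simultaneously and is considerably more elementary than the paper's argument. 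For the upper bound on $C_{0\leftarrow\eta}(V)$, the paper bounds $h_{0,\eta}'(r)$ from below by the derivative of the comparison function $\tilde h(r)=(1-\eta\log r)/(1-\eta)$ and integrates from $1$ down to $1/V$; you instead analyze $G(r)=Cg_\eta(r)-g_0(r)$ directly, reduce ``$G\ge 0$ on $[1/V,\infty)$'' to the endpoint check via the increase--decrease--increase shape, and verify $G(1/V)\ge 0$ through the clean identity $(1-\eta)G(1/V)=\Phi(\log V)$ with $\Phi'\ge 0$. The two approaches are comparable in length, but yours avoids manipulating the derivative of a ratio.

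One point to tighten: your shape analysis is carried out for the specific $C=(\eta\log V+1)/(1-\eta)$, so it directly proves only the final inequality $C_{0\leftarrow\eta}(V)\le(\eta\log V+1)/(1-\eta)$. To also conclude that the \emph{sharp} constant on $[1/V,\infty)$ is $g_0(1/V)/g_\eta(1/V)$ (as stated in the lemma), you need the same shape argument to apply with this smaller value $C'=g_0(1/V)/g_\eta(1/V)$ in place of $C$; that in turn requires $C'(1-\eta)>1$, i.e.\ $(1-\eta)g_0(v)>g_\eta(v)$ for $v=1/V\in(0,1)$. This is true (set $\psi(v)=(1-\eta)g_0(v)-g_\eta(v)$ and check $\psi(1)=\psi'(1)=0$, $\psi''(v)=(1-\eta)v^{-2}(1-v^\eta)>0$ on $(0,1)$), but you should state it explicitly rather than leave it implicit in ``the shape analysis of $G$ also identifies the smallest valid constant''.
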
 

\begin{Proof}
Let $0 \leq \eta' < \eta$. 
We will prove that, for all $r \geq \frac{1}{V}$, we have $g_{\eta'}(r) \leq C \cdot g_{\eta}(r)$ for some constant $C$.  Hence it suffices to bound
$$h_{\eta',\eta}(r) := \frac{g_{\eta'}(r)}{g_{\eta}(r)} =
\frac{(\eta')^{-1}(1- r^{\eta'})- (1-r)}{ \eta^{-1}(1- r^{\eta})- (1-
  r)}.$$ 
We can extend the definition of this function to $\eta'=0$ and $r=1$ so that it becomes well-defined for all $r > 0$, $0 \leq \eta' < \eta < 1$: 
$(0)^{-1} (1- r^0)$ is defined as
$\lim_{\eta'\downarrow 0} (\eta')^{-1}(1- r^{\eta'}) = - \log r$.
$h_{\eta',\eta}(1)$ is set to $\lim_{r \uparrow 1} h_{\eta',\eta}(r) = \lim_{r \downarrow 1} h_{\eta',\eta}(r)$ which is calculated using  L'H\^opital's
rule twice, together with the fact that for $0 \leq \eta \leq 1$ (note
$\eta = 0$ is allowed), $g_{\eta}'(r) = -r^{\eta - 1}+1, g_{\eta}''(r)
= (1-\eta) r^{\eta-2}$. Then, because $g_{\eta}(1) = g_{0}(1) =
g'_{\eta}(1)= g'_{0}(1)= 0$, we get:
$$h_{\eta',\eta}(1) := \lim_{r \downarrow 1} g_{\eta'}(r)/g_{\eta}(r) = \lim_{r
  \downarrow 1} g'_{\eta'}(r)/g'_{\eta}(r) = \lim_{r \downarrow 1}
g''_{\eta'}(r)/g''_{\eta}(r) = \frac{1- \eta'}{1-\eta}.$$ We have
$\lim_{r \rightarrow \infty} h_{\eta',\eta}(r) = 1$, and we show below
that $h_{\eta',\eta}(r)$ is strictly decreasing in $r$ for each $0
\leq \eta'< \eta< 1$, so the maximum value is achieved for the minimum
$r = 1/V$.  We have $h_{\eta',\eta}(1/V) \leq h_{\eta', \eta}(0)=
(\eta'^{-1} -1)/(\eta^{-1} -1)$ and $h_{0,\eta}(1/V) = (\log V -
(1-V^{-1}))/(\eta^{-1}(1- V^{-\eta}) - (1- V^{-1}))$.  The result
follows by defining $C_{\eta'\leftarrow \eta}(V) = h_{\eta',
  \eta}(1/V)$.  It only remains to show that $h_{\eta',\eta}(r)$ is
decreasing in $r$ and that the upper bound on $C_{0 \leftarrow
  \eta}(V)$ stated in the lemma holds.

\emph{Proof that $h$ is decreasing}:
The derivative of $h \equiv h_{\eta',\eta}$ for fixed $0 \leq \eta' <
\eta < 1$ is given by $ h'_{\eta',\eta}(r) = r^{-1} \cdot s(r), $
where
\begin{equation}\label{eq:woutertim}
s(r) = \frac{(-r^{\eta'} +r) \cdot g_{\eta}(r) + (r^{\eta} - r) \cdot g_{\eta'}(r)}{g_{\eta}(r)^2}.
\end{equation}
Although we tried hard, we found neither a direct argument that $h'\leq
0$ or that $h'' > 0$ (which would also imply the result in a
straightforward manner). We resolve the issue
by relating $h$ to a function $f$ which is easier to analyze. \eqref{eq:woutertim} shows that
for $r > 0, r \neq 1$, $h'(r) = 0$, i.e., $h$ reaches an
extremum, iff $s(r) = 0$, i.e., iff the numerator in
(\ref{eq:woutertim}) is $0$, i.e., iff $
\frac{g_{\eta'}(r)}{g_{\eta}(r)} = \frac{r^{\eta'}-r}{r^{\eta}- r}, $
i.e., iff
$$
h(r) = f(r),\ \ \  \text{where $f(r) := \frac{r^{\eta'-1}-1}{r^{\eta-1}- 1}$}.  
$$
We can extend  $f$ to its discontinuity point $r=1$ by using
L'H\^opital's rule similar to its use above, and then we find that
$f(1) = h(1)$; similarly, we find that the discontinuities of $f'(r)$
and $h'(r)$ at $r=1$ are also removable, again by aggressively using
L'H\^opital, which gives
\begin{equation}\label{eq:schmid}
f'(1) = \frac{1}{2} \cdot \frac{1- \eta'}{1- \eta} \left(\eta'- \eta\right)\ ,\ h'(1) = \frac{1}{3} \cdot \frac{1- \eta'}{1- \eta} \left(\eta'- \eta\right),
\end{equation}
and we note that both derivatives are $< 0$ and also that there is $L
< 1, R > 1$ such that
\begin{equation}\label{eq:johannes}
\text{$h < f$ on $(L,1)$ \ \ ; \ \ 
$h > f$ on $(1,R)$}.
\end{equation} 
Below we show that $f$ is strictly decreasing on $(0, \infty)$. But then $h$ cannot
have an extremum on $(0,1)$; for if it had,
there would be a point $0 < r_0 < 1$ with $h'(r_0) = 0$ and therefore  $h(r_0) =
f(r_0)$, so that, since $f'(r_0) < 0$, $h$ lies under $f$ in an open
interval to the left of $r_0$ and above $f$ to the right of $r_0$. But
by (\ref{eq:johannes}), this means that there is another point $r_1$
with $r_0 < r_1 < 1$ at which $h$ and $f$ intersect such that $h$ lies
{\em above\/} $f$ directly to the left of $r_1$. But we already showed
that at any intersection, in particular at $r_1$, $h'(r_1) = 0$. Since
$f'(r_1) < 0$, this implies that $h$ must lie {\em below\/} $f$
directly to the left of $r_1$, and we have reached a contradiction.
It follows that $h$ has no extrema on $(0,1)$; entirely analogously,
one shows that $h$ cannot have any extrema on $(1,\infty)$. By
(\ref{eq:schmid}), $h'(r)$ is negative in an open interval containing $1$,
so it follows that $h$ is decreasing on $(0,\infty)$.

It thus only remains to be shown that $f$ is strictly decreasing on
$(0,\infty)$. To this end we consider a monotonic variable
transformation, setting $y = r^{\eta-1}$ so that $r^{\eta'-1} =
y^{(1-\eta')/(1-\eta)}$ and, for $a > 1$, define $f_a(y) =
(y^a-1)/(y-1)$. Note that with $a = (1- \eta')/(1-\eta)$,
$f_a(r^{\eta-1}) = f(r)$. Since $0 < \eta < 1$, $y$ is strictly decreasing in
$r$, so it is sufficient to prove that, for all $a$ corresponding to
some choice of $ 0 \leq \eta'< \eta < 1$, i.e., for all $a > 1$,
$f_a$ is strictly increasing on $y > 0$. 
Differentiation with respect to $y$ gives that $f_a$ is strictly
increasing on interval $(a,b)$ if, for all $y \in (a,b)$,
$$
u_a(y) \equiv a y^a - y^a +1 - a y^{a-1} > 0.
$$
Straightforward differentiation and simplification gives that $u'_a(y)
= a y^{a-1} (a-1) (1- y^{-1})$ which is strictly negative for all
$y< 1$ and strictly positive for $y > 1$. Since trivially, $u_a(1) =
0$, it follows that $u_a(y) > 0$ on $(0,1)$ and $u_a(y) > 0$ on
$(1,\infty)$, so that $f_a$ is strictly increasing on $(0,1)$ and on
$(1,\infty)$. But then $f_a$ must also be strictly increasing at
$r=1$, so $f_a$ is strictly increasing on $(0,\infty)$, which is what
we had to prove.

\emph{Proof of upper bound on $C_{0 \leftarrow \eta}(V)$}: 
The right term in $s(r)$ as given by (\ref{eq:woutertim}) is positive for $r < 1$, and $g_{\eta'}(x) >
g_{\eta}(x)$, so setting $t(r)$ to $s(r)$, but with
$g_{\eta'}(r)$ in the right term in the numerator replaced by
$g_{\eta}(r)$, i.e.,  
$$t(r) : = \frac{(-r^{\eta'} +r) \cdot g_{\eta}(r) + (r^{\eta} - r) \cdot g_{\eta}(r)}{g_{\eta}(r)^2} = \frac{-r^{\eta'} + r^{\eta}}{g_{\eta}(r)} ,
$$ 
we have $t(r) \leq s(r)$ for all $r \leq 1$. We already know that $h_{\eta', \eta}$ is decreasing, so  that
$s(r) \leq 0$ for all $r$, so we have $t(r) \leq s(r) \leq 0$ for all
$r \leq 1$.  In particular, this holds for the case $\eta'= 0$, for
which $t(r)$ simplifies to $t(r) = (-1 + r^{\eta})/g_{\eta}(r) = -(1 -
r^{\eta})/(\eta^{-1} (1 - r^{\eta}) -(1-r))$. A simple calculation
shows that (a) $\lim_{r \downarrow 0} t(r) = -1 /(\eta^{-1} - 1) =
-\eta/(1-\eta)$ and (b) $t(r)$ is increasing on $0 < r < 1$ for all $0
< \eta < 1$.

Now define $\tilde{h}$ by setting $\tilde{h}(r) = (1/(1-\eta))\cdot
(1-\eta \log r)$ for $0 < r \leq 1$. Then $\tilde{h}'(r) = -
(\eta/(1-\eta)) r^{-1} \leq t(r) r^{-1} \leq s(r) r^{-1} =
h'_{0,\eta}(r)\leq 0$ by all the above together. Since $\tilde{h}(1) =
h_{0,\eta}(1)$, and for $r < 1$, $h_{0,\eta}$ is decreasing but
$\tilde{h}$ is decreasing even faster, we must have
$\tilde{h}(r) \geq h_{0,\eta}(r)$ for $0 < r < 1$. We can thus 
bound $h_{0,\eta}(1/V)$ by $\tilde{h}(1/V)$, and the result follows.
\end{Proof}

\subsection{Proof of Lemma~\ref{lem:kl-hell-exp-tails}}
\begin{Proof}
  Markov's inequality implies that for all $f \in \cF$, $\Pr(e^{\delta
    \xsloss{f}} > u) < \frac{M_\delta}{u}$ for any $u \geq
  0$. Therefore, for some map $\tau: \reals_+ \to \reals_+$ to be set
  later:
\begin{align}
\E \left[ \xsloss{f} \cdot \ind{\xsloss{f} > \tau(\xsrisk{f})} \right] 
&= \int_0^\infty \Pr(\xsloss{f} \cdot \ind{\xsloss{f} > \tau(\xsrisk{f})} > t) dt= \nonumber \\
\int_{\tau(\xsrisk{f})}^\infty \Pr(\xsloss{f} > t) dt 
&= \int_{\tau(\xsrisk{f})}^\infty \Pr(e^{\delta \xsloss{f}} > e^{\delta t}) dt \leq 
\int_{\tau(\xsrisk{f})}^\infty M_\delta e^{-\delta t} dt 
&= \frac{M_\delta}{\delta} e^{-\delta \tau(\xsrisk{f})} . \label{eqn:witness-bound-final}
\end{align}
Taking $\tau: x \mapsto 1 \opmax \frac{\log \frac{2 M_\delta}{\delta x}}{\delta}$, the last line above is bounded by $\frac{1}{2} \xsrisk{f}$, and so the $(\tau,c)$-witness condition holds with $c=  1/2$. 
\end{Proof}

\subsection{Proofs related to heavy-tailed regression}
\label{app:heavy-tailed-regression-example}

We start with some general facts. For squared loss, the excess
loss can be written as (abbreviating $f(X)$ and $\fopt(X)$ to $f$ and
$\fopt$, resp.),
\begin{align}\label{eq:bernsteinhandy}
\xsloss{f} & = (f(X) - \fopt(X)) \cdot (-2 Y + f(X) + \fopt(X)) \\
& = (f- \fopt) \cdot (\/(f-\fopt) + 2 (\fopt - Y)\/) \label{eq:bernsteinhandyb} \\
& \label{eq:smallballhandy}
= (f-\fopt)^2 + 2 (\fopt-Y)(f-\fopt) .
\end{align}
Now, recall that in both Examples~\ref{ex:heavy-tailed-regression} and~\ref{ex:smallballagain}, we assumed that the risk minimizer $\fopt$ over $\cF$ continues to be a minimizer when taking the minimum risk over the convex hull of $\cF$. This implies that for all $f \in \cF$,
\begin{align} \label{eqn:convexity-inequality}
\E \left (\fopt(X) - Y) (f(X) - \fopt(X)) \right] \geq 0 ,
\end{align}
To see this, we observe that if we instead consider the function class $\convhull(\cF)$, then $\fopt$ is still a minimizer and \eqref{eqn:convexity-inequality} holds for all $f \in \convhull(\cF)$ from \cite{mendelson2017aggregation} (see the text around equation (1.3) therein).  

But now \eqref{eqn:convexity-inequality} with (\ref{eq:smallballhandy}) implies that, under our assumptions,
\begin{align} \label{eq:l2excess}
\E \left[ (f(X) - \fopt(X))^2 \right] \leq \xsrisk{f}.
\end{align}

\begin{proof}{\bf (of Proposition~\ref{prop:bernstein-witness})}
Let $u > 0$ be a to-be-determined constant. Then
\begin{align*}
& \E \left[ \xsloss{f} \cdot \ind{\xsloss{f} >   \tau(\E[\xsloss{f}])} \right] 
\leq \E \left[ \xsloss{f} \cdot \frac{\xsloss{f}}{\tau(\E[\xsloss{f}])} \cdot \ind{\xsloss{f} \geq 0} \right] 
= \\
& \frac{1}{\tau(\E[\xsloss{f}])} \E \left[ \xsloss{f}^2 \cdot \ind{\xsloss{f} \geq 0} \right] 
\leq \frac{1}{\tau(\E[\xsloss{f}])} \E \left[ \xsloss{f}^2 \right] 
\leq \frac{B}{u} \frac{\left(\E \left[ \xsloss{f} \right]\right)^{\beta}}{
(\E[\xsloss{f}])^{\beta-1}} = \frac{B}{u} \E[\xsloss{f}],
\end{align*}
and the result follows.
\end{proof}

\begin{Proof}{\bf (of Proposition~\ref{prop:awkwardwithout})}
To see that a Bernstein condition holds if $\E [ Y^2 \mid X ] \leq C$ a.s.~and $|f(X)| \leq r$ almost surely, observe that from \eqref{eq:bernsteinhandy}, 
\begin{align*}
\xsloss{f}^2 
&\leq 2 (f(X) - \fopt(X))^2 \left( 4 Y^2 + (f(X) - \fopt(X))^2 \right) ,
\end{align*}
and hence
\begin{align*}
\E \left[ \xsloss{f}^2 \right] 
&\leq 8 \left( 
                 \E \left[ (f(X) - \fopt(X))^2 \E [ Y^2 \mid X ] \right]
                 + r^2 \E \left[ (f(X) - \fopt(X))^2 \right] 
          \right) \\
&\leq 8 (C + r^2) \E \left[ (f(X) - \fopt(X))^2 \right] ,
\end{align*}

Invoking (\ref{eq:l2excess}), we see that a Bernstein condition does indeed hold:
 \begin{align*}
\E \left[ \xsloss{f}^2 \right] 
\leq 8 (C + r^2) \xsrisk{f} .
\end{align*}
\end{Proof}

\begin{Proof}[Proof (of claim in Example \ref{ex:smallballagain})]
From (\ref{eq:bernsteinhandyb}), Cauchy-Schwarz, and our assumption, 
\begin{align}\label{eq:bernsteinisthere}
\E [ \xsloss{f}^2] 
\leq \sqrt{\E[(f(X) - \fopt(X))^4]}  \cdot \sqrt{C} 
\leq A \E[(f(X) - \fopt(X))^2]  \cdot \sqrt{C}
\leq A \xsrisk{f} \cdot \sqrt{C},
\end{align}
where the final inequality follows from \eqref{eq:l2excess} and
\begin{align*}
C & = \E[((f-\fopt) + 2 (Y-\fopt))^4] 
\leq \E [ (2(f-\fopt)^2 + 8 (Y-\fopt)^2)^2] \\ & \leq  \E [ 8(f-\fopt)^4 + 32 (Y-\fopt)^4] 
\leq 8 A^2 \E[(f-\fopt)^2]^2 + 32 \E[\loss_{\fopt}^2] \\
& \leq 8 A^2 \E[\xsloss{f}]^2 + 32 \E[\loss_{\fopt}^2] \leq 8 A^2 c_0^2 + 32 \E[\loss_{\fopt}^2],
\end{align*}
where the third and fifth inequality follow from our assumptions and the fourth follows from \eqref{eq:l2excess}. 
This quantity is bounded, so \eqref{eq:bernsteinisthere} implies the Bernstein condition. 
\end{Proof}

\section{Proofs for Section~\ref{sec:vcentral}}
\label{app:proofs-vcentral}
\subsection{Proof of Lemma~\ref{lem:renyi-to-kl-grip-a}}
We first prove (\ref{eq:unusual}) from the main text: suppose that $\smtuple$ satisfies the $v$-central condition. We then have 
for all $f \in \cF$, 
$$\Exp \left[e^{v(\epsilon) \cdot
(\ell_{f^*_{\epsilon}}-\ell_{f})} \right] 
=  \Exp \left[e^{v(\epsilon) \cdot
(\ell_{f^*}-\ell_{f})} \right] \cdot e^{- v(\epsilon)\epsilon} 
\leq 1,$$
where the inequality follows because $\smtuple$ satisfies the $v$-central condition.
Now suppose further that $(P,\ell, \{f \} \cup \{f^* \})$ satisfies the $(u,c)$-witness condition. This gives:
\begin{align*}
c \E[L_{f}] & \leq \E[( \ell_f - \ell_{f^*}) \cdot \ind{\ell_f - \ell_{f^*} \leq u}]  = \E[( \ell_f - \ell_{f^*}) \cdot \ind{\ell_f - \ell_{f^*_{\epsilon}} \leq u+ \epsilon}] \nonumber \\
& = \E[( \ell_f - (\ell_{f^*_{\epsilon}} + \epsilon)) \cdot \ind{\ell_f - \ell_{f^*_{\epsilon}} \leq u+ \epsilon}] \leq  \E[( \ell_f - \ell_{f^*_{\epsilon}}) \cdot \ind{\ell_f - \ell_{f^*_{\epsilon}} \leq u+ \epsilon}], 
\end{align*}
whence the $(u+\epsilon,c)$ witness condition holds for
$(P,\ell, \{f, f^*_{\epsilon} \})$. By this fact {and
  (\ref{eq:unusual}) (proven above)}, we can apply
Lemma~\ref{lem:renyi-to-kl} (our extension of
Lemma~\ref{lem:renyi-to-kl-simple} from the main text), with $\phi(f)$
set to $f^*_{\epsilon}$ (i.e.~$\phi(f)$ does not depend on $f$). The
result, (\ref{eq:rklsimplegrip-a}), follows.

\section{Proofs for Section~\ref{sec:vppc}}
\label{app:proofs-vcentral}
\subsection{Proof of Propositions~\ref{prop:grip-central}--\ref{prop:ppcprop}}
\begin{proof}(of Proposition~\ref{prop:grip-central})
Consider the learning problem $(P, \tilde{\loss}, \tilde{\cF})$ with
\begin{align*}
  \tilde{\cF} := \left\{ \mixloss{\eta}{Q} : Q \in \Delta(\cF) \right\} \cup \{ \grip{\eta} \}
\end{align*}
and $\tilde{\loss}_{\tilde{f}} := \tilde{f}$ for $\tilde{f} \in \tilde{\cF}$.

We will show that the {strong} $\eta$-PPC condition \Citep{erven2015fast} holds for this problem with $\grip{\eta}$ taking the role of the optimal action. That is,
\begin{align} 
\E \left[ \grip{\eta} \right] 
\leq \inf_{\tilde{Q} \in \Delta(\tilde{\cF})} \E \left[ -\frac{1}{\eta} \log \E_{\tilde{f} \sim \tilde{Q}} \left[ e^{-\eta \tilde{\loss}_{\tilde{f}}} \right] \right] . \label{eqn:tilde-ppc}
\end{align}
In one of their main results, \Citep[Theorem 3.10 and Corollary 3.11]{erven2015fast}, again extending an argument of \cite{li1999estimation}, show that the strong $\eta$-PPC condition implies the strong $\eta$-central condition for any tuple $(P,\ell, \tilde{\cF})$ under the sole assumption that $\tilde{\cF}$ contains a risk minimizer, i.e., there exists $f' \in \tilde{\cF}$ with  $\min_{f \in \tilde{\cF}}\E[ \ell_f] = \E[\ell_{f'}]$. But we construct $\tilde{\cF}$ so that this holds, since it contains $\grip{\eta}$. Thus, 
if \eqref{eqn:tilde-ppc} indeed holds (as we will soon show), then $(P, \tilde{\loss}, \tilde{\cF})$ also satisfies the 
the strong $\eta$-central condition. But this implies that, for all $\tilde{f} \in \tilde{\cF}$,
\begin{align*}
\E \left[ e^{-\eta ( \tilde{\loss}_{\tilde{f}} - \grip{\eta} )} \right] \leq 1 .
\end{align*}
The statement above holds in particular for any $\tilde{f} = \mixloss{\eta}{Q}$, which includes the special case of the Dirac mix losses of the form $\mixloss{\eta}{\delta_f} = \loss_f$ for any $f \in \cF$, and hence we have, for all $f \in \cF$,
\begin{align*}
\E \left[ e^{-\eta ( \loss_f - \grip{\eta} )} \right] \leq 1 \qquad \text{ for all } f \in \cF ,
\end{align*}
which is what we wanted.

Let us now prove inequality \eqref{eqn:tilde-ppc}. 
We start with the RHS of \eqref{eqn:tilde-ppc} and, via a sequence of lower bounds, will arrive at the LHS. 
First, observe that the RHS can be rewritten as
\begin{align*}
&\inf_{\alpha \in [0, 1]} \, \inf_{\tilde{Q} \in \Delta(\Delta(\cF))} 
\E \left[ -\frac{1}{\eta} \log \left( \alpha e^{-\eta \grip{\eta}} + (1 - \alpha) \E_{Q \sim \tilde{Q}} \left[ e^{-\eta \mixloss{\eta}{Q}} \right] \right) \right] \\
&= \inf_{\alpha \in [0, 1]} \, \inf_{Q \in \Delta(\cF)} 
\E \left[ -\frac{1}{\eta} \log \left( \alpha e^{-\eta \grip{\eta}} + (1 - \alpha) \mixloss{\eta}{Q} \right) \right] .
\end{align*}

Next, for each $\alpha$ and $Q$, we introduce a function $\Gamma_{\alpha,Q} \colon \reals \rightarrow \reals$, defined as
\begin{align*}
\Gamma_{\alpha,Q}(x) = -\frac{1}{\eta} \log \left( \alpha e^{-\eta x} + (1 - \alpha) \mixloss{\eta}{Q} \right) ,
\end{align*}
so that the last line in the above display may be rewritten as
\begin{align*}
\inf_{\alpha \in [0, 1]} \, \inf_{Q \in \Delta(\cF)} \E \left[ \Gamma_{\alpha,Q}(\grip{\eta}) \right] .
\end{align*}

Now, as we show in Appendix~\ref{app:grip}, there exists a sequence $(Q_n)_{n \geq 1}$ such that $\mixloss{\eta}{Q_n}$ converges to $\grip{\eta}$ in $L_1(P)$. 
For any $n \geq 1$, we have
\begin{align}
\E \left[ \Gamma_{\alpha,Q}(\grip{\eta}) \right] 
= \E \left[ \Gamma_{\alpha,Q}(\mixloss{\eta}{Q_n}) \right] 
    + \E \left[ \Gamma_{\alpha,Q}(\grip{\eta}) - \Gamma_{\alpha,Q}(\mixloss{\eta}{Q_n}) \right] \label{eqn:gamma-everywhere}
\end{align}
Note that $\Gamma_{\alpha,Q}$ is 1-Lipschitz, since (for any choice of $\alpha$ and $Q$),
\begin{align*}
\frac{d \Gamma_{\alpha,Q}}{dx} \Gamma_{\alpha,Q}(x) 
= -\frac{1}{\eta} \frac{-\eta \alpha e^{-\eta x}}{\alpha e^{-\eta x} + (1 - \alpha) e^{-\eta \mixloss{\eta}{Q}}} 
= \frac{\alpha e^{-\eta x}}{\alpha e^{-\eta x} + (1 - \alpha) e^{-\eta \mixloss{\eta}{Q}}} 
\in [0, 1] .
\end{align*}
Consequently, it holds that \eqref{eqn:gamma-everywhere} is lower bounded by
\begin{align*}
\E \left[ \Gamma_{\alpha,Q}(\mixloss{\eta}{Q_n}) \right] 
    - \E \left[ \left| \Gamma_{\alpha,Q}(\grip{\eta}) - \Gamma_{\alpha,Q}(\mixloss{\eta}{Q_n} \right| \right]
\geq \E \left[ \Gamma_{\alpha,Q}(\mixloss{\eta}{Q_n}) \right] 
    - \E \left[ \left| \grip{\eta} - \mixloss{\eta}{Q_n} \right| \right] .
\end{align*}
Next, since $\mixloss{\eta}{Q_n}$ converges to $\grip{\eta}$ in $L_1(P)$, taking the limit as $n \rightarrow \infty$, the RHS of the last line above converges to $\E \left[ \Gamma_{\alpha,Q}(\mixloss{\eta}{Q_n}) \right]$. 
Thus, we have shown that
\begin{align*}
\E \left[ \Gamma_{\alpha,Q}(\grip{\eta}) \right] 
\geq \lim_{n \rightarrow \infty} \E \left[ \Gamma_{\alpha,Q}(\mixloss{\eta}{Q_n}) \right] ,
\end{align*}
and so:
\begin{align*}
&\inf_{\alpha \in [0, 1]} \inf_{Q \in \Delta(\cF)} 
\E \left[ -\frac{1}{\eta} \log \left( \alpha e^{-\eta \grip{\eta}} + (1 - \alpha) e^{-\eta \mixloss{\eta}{Q}} \right) \right] \\
&\geq \inf_{\alpha \in [0, 1]} \inf_{Q \in \Delta(\cF)} 
\lim_{n \rightarrow \infty} \E \left[ -\frac{1}{\eta} \log \left( \alpha e^{-\eta \mixloss{\eta}{Q_n}} + (1 - \alpha) e^{-\eta \mixloss{\eta}{Q}} \right) \right] \\
&= \inf_{\alpha \in [0, 1]} \inf_{Q \in \Delta(\cF)} 
\lim_{n \rightarrow \infty} \E \left[ \mixloss{\eta}{\alpha Q_n + (1 - \alpha) Q} \right] \\
&\geq  \inf_{\alpha \in [0, 1]} \inf_{Q \in \Delta(\cF)} 
\lim_{n \rightarrow \infty} \E \left[ \grip{\eta} \right] \\
&= \E \left[ \grip{\eta} \right] ,
\end{align*}
where we used that the quantity inside $\lim_{n\rightarrow \infty}$ is equal
to $\E[ m^{\eta}_{Q'}]$ for some $Q' \in \Delta(\cF)$, and hence by
definition not smaller than $\E[\grip{\eta}]$. Thus, inequality
\eqref{eqn:tilde-ppc} indeed holds.
\end{proof}

\begin{proof}(of Proposition~\ref{prop:slowrate})
  Fix $\eta > 0$ and let $u$ be as in (\ref{eq:positivethinking}). 
For each $f \in \cF$,
  let $f'$ be defined by $\loss_{f'} = \loss_{f}$ if
  $\loss_f \leq \loss_{\fopt} + u$ and $\loss_{f'} = \loss_{\fopt}$ otherwise
  and let $\cF'$ be the resulting model. Then $\gripgen{\eta}{\cF'}$ is the GRIP relative to $\eta$ and the class $\cF'$; from Appendix~\ref{app:grip} this GRIP is guaranteed to exist. 
By definition, for every $\delta > 0$ there is a distribution $Q'$ on $\cF'$ such that 
$\E_{Z \sim P}[ \mixloss{\eta}{Q'} - \gripgen{\eta}{\cF'} ] \leq \delta$. 
Define $f^{\circ}$ such that it has constant loss, 
i.e., for all $z \in \cZ$, $\loss_{f^{\circ}}(z) := \E[ \loss_{\fopt}]$. 
By using $- \log x \geq 1-x $ and  we have for each $z \in \cZ$, for some $\eta' \in (0,\eta)$:
\begin{align*}
\mixloss{\eta}{Q'} - \loss_{f^{\circ}} 
= -\frac{1}{\eta} \log \E_{f' \sim Q'} e^{- \eta (\loss_{f'} - \loss_{f^{\circ}})} 
&\geq \frac{1}{\eta} \left(1 - \E_{f' \sim Q'} e^{- \eta (\loss_{f'} - \loss_{f^{\circ}})} \right) \\
&= \E_{f' \sim Q'}\left[ \loss_{f'}- \loss_{f^{\circ}}\right] 
- \frac{1}{2} \eta \E(\loss_{f'} - \loss_{f^{\circ}})^2 e^{- \eta' (\loss_{f'} - \loss_{f^{\circ}})} \\
&\geq \E_{f' \sim Q'}\left[ \loss_{f'}- \loss_{f^{\circ}}\right] 
- \frac{1}{2} e^{\eta \loss_{f^{\circ}}} \cdot \eta \E_{f' \sim Q'}(\loss_{f'} - \loss_{f^{\circ}})^2 .
\end{align*}
Now use that 
\begin{align*}
\E_{f' \sim Q'} \left[ (\loss_{f'} - \loss_{f^{\circ}})^2 \right] 
& = \E_{f' \sim Q'} \left[ \left( (\loss_{f'} - \loss_{\fopt}) + (\loss_{\fopt} - \loss_{f^{\circ}}) \right)^2 \right] \nonumber \\  & \leq  2\left( \E_{f' \sim Q'} \left[ (\loss_{f'} - \loss_{\fopt})^2 \right] + (\loss_{\fopt} - \loss_{f^{\circ}})^2 \right) \nonumber 
\\ & \leq 2\left( \E_{f' \sim Q'} \left[\ind{\loss_{f'} > \loss_{\fopt}}
(\loss_{f'} - \loss_{\fopt})^2 + \ind{\loss_{f'} \leq \loss_{\fopt}} (\loss_{f'} - \loss_{\fopt})^2 \right] + (\loss_{\fopt} - \loss_{f^{\circ}})^2 \right) \nonumber \\ &  \leq 2 u^2 + 2 \loss_{\fopt}^2 + (\loss_{\fopt}- \loss_{f^{\circ}})^2 .
\end{align*}
Combining this with the previous inequality and taking the expectation with respect to $Z$ yields
\begin{align*}
\E_{Z \sim P} \left[ \gripgen{\eta}{\cF'} - \loss_{\fopt} \right] 
&= \E_{Z \sim P} \left[ \mixloss{\eta}{Q'} - \loss_{f^{\circ}} \right] - \delta \\
&\geq \E_{Z \sim P} \E_{f' \sim Q'}\left[ \loss_{f'} - \loss_{\fopt}\right] 
           - \frac{1}{2} \eta e^{\eta \loss_{f^{\circ}}} 
              \cdot \left( 2 u^2 + \E_{Z \sim P} \left[ 2 \loss_{\fopt}^2 + (\loss_{\fopt} - \loss_{f^{\circ}})^2 \right] \right) 
           - \delta \\
&\geq \E_{Z \sim P} \E_{f' \sim Q'}\left[ (\loss_{f'} - \loss_{\fopt} ) \cdot  \ind{\loss_{f'} - \loss_{\fopt} \leq u} \right] 
            - \frac{1}{2} \eta e^{\eta \E[\loss_{\fopt}]} \cdot \left( 2 u^2 + 3 \E[\loss_{\fopt}^2]\right) 
            - \delta \\
&= \E_{f \sim Q} \E_{Z \sim P} \left[ (\loss_f - \loss_{\fopt} ) \cdot  \ind{\loss_f - \loss_{\fopt} \leq u} \right] 
            - \frac{1}{2} \eta e^{\eta \E[\loss_{\fopt}]} \cdot \left( 2 u^2 + 3 \E[\loss_{\fopt}^2]\right) 
            - \delta \\
& \geq -\frac{1}{2} \eta e^{\eta \E[\loss_{\fopt}]} \cdot \left( 2 u^2 + 3 \E[\loss_{\fopt}^2]\right) 
            - \delta ,
\end{align*}
where $Q \in \Delta(\cF)$ is the distribution defined by taking $dQ(f) = dQ'(f')$ (where we make use of the bijection between $\cF$ and $\cF'$ from the definition of $\loss_{f'}$ in terms of $f$, for all $f' \in \cF$), and the final inequality invokes (\ref{eq:positivethinking}). 
We now take $\eta \leq  1/\E[\loss_{\fopt}]$, yielding
\begin{align}\label{eq:laborate}
\E_{Z \sim P} \left[ \loss_{\fopt} - \gripgen{\eta}{\cF'} \right] 
\leq \eta \cdot e \cdot \left(u^2 + \frac{3}{2} \E[\loss_{\fopt}^2] \right) + \delta .
\end{align}
The result now follows from Proposition~\ref{prop:ppcprop}, using that  
the reasoning above holds for every $\delta > 0$.
\end{proof}

\begin{proof}{\bf (of Proposition~\ref{prop:ppcprop})}
Define the set $\cF'$ such that for each $f \in \cF$, there is an $f' \in \cF$ with $\loss'_f = \loss_{f'}$ and vice versa. 
Note that we must have:
\begin{align}\label{eq:aborate}
\E_{Z \sim P} \left[ \gripgen{\eta}{\cF'} \right] \leq \E_{Z \sim P} \left[ \grip{\eta} \right] .
\end{align}
To see this, assume for contradiction that there exists some $\varepsilon > 0$ such that $\E_{Z \sim P} \left[ \grip{\eta} \right] \leq \E_{Z \sim P} \left[ \gripgen{\eta}{\cF'} \right] - \varepsilon$. 
Let $(Q_j)_{j \geq 1}$ be a sequence for which $ \E_{Z \sim P} [ \mixloss{\eta}{Q_j} ] \leq \E_{Z \sim P} [ \grip{\eta} ] + \frac{\varepsilon}{2}$. We will make use of the fact that, for each $Q' \in \Delta(\cF')$, $\mixloss{\eta}{Q'} \leq \mixloss{\eta}{Q}$ since for each $f'$ the corresponding $f$ has, on all $z$, either the same or larger loss. This setup then implies the following contradiction:
\begin{align*}
\E_{Z \sim P} [ \gripgen{\eta}{\cF'} ] 
\leq \E_{Z \sim P} [ \mixloss{\eta}{Q'_j} ]
\leq \E_{Z \sim P} [ \mixloss{\eta}{Q_j} ]
\leq \gripgen{\eta}{\cF} + \frac{\varepsilon}{2} 
\leq \gripgen{\eta}{\cF'} - \frac{\varepsilon}{2} .
\end{align*}
Now, since by assumption 
$\loss_{\fopt} \equiv \loss_{(\fopt)'}$,  (\ref{eq:aborate}) implies that 
\begin{align*}
\E_{Z \sim P} \left[ \loss_{\fopt} - \gripgen{\eta}{\cF} \right] 
\leq \E_{Z \sim P} \left[ \loss_{\fopt} - \gripgen{\eta}{\cF'} \right] 
\end{align*}
which implies the statement of the proposition.
\end{proof}

\subsection{Proof of Lemma~\ref{lem:renyi-to-kl-grip-b}}
The proof of Lemma~\ref{lem:renyi-to-kl-grip-b} is based on relating
the loss $\gripgen{\bar\eta}{\cal F}$ of the GRIP comparator appearing
in that lemma to the loss of a related ``dynamic'' comparator
$\gripgen{\bar\eta}{f}$ (which we will call ``mini-GRIP'') that {\em
  varies\/} with $f$. This requires us to first re-define the witness
condition for such dynamic comparators, relate this dynamic witness
condition to the standard witness condition, and relate the GRIP loss
to the mini-GRIP loss; this is all achieved in the following
subsection.
\subsubsection{Witness Protection and Mini-grip}
\label{app:mini-grip}

\begin{assumption}[Advanced Empirical Witness of Badness] \label{ass:adv-emp-witness-badness}
Let $M \geq 1$ be a parameter of the assumption.
\glsaddcond{advanced-witness} We say that $\smtuple$ satisfies the \emph{empirical witness of badness condition} (abbreviated as \emph{witness condition}) with respect to \emph{dynamic} comparator $\phi$ if there exist constants $u > 0$ and $c \in (0,1]$ such that for all $f \in \cF$,
\begin{align} \label{eqn:adv-emp-witness-badness}
\E \left[ (\loss_f - \loss_{\phi(f)}) \cdot \ind{\loss_f - \loss_{\phi(f)} \leq u (1 \opmax (M^{-1} \xsrisk{f}))} \right] \geq c \E [ \loss_f - \loss_{\phi(f)} ] .
\end{align}
\glsaddcond{weak-advanced-witness}
If we modify the RHS of \eqref{eqn:adv-emp-witness-badness} so that the term $\E [ \loss_f - \loss_{\phi(f)} ]$ is replaced by the potentially smaller $\xsrisklong{f}$, then we call the condition the \emph{weak empirical witness of badness condition} (abbreviated as \emph{weak witness condition}).
\end{assumption}

In practice, we will assume only that the witness condition holds for the \emph{static} comparator $\psi: f \mapsto \fopt$ (so named because the comparator does not vary with $f$), as can already be handled through the simpler witness condition of Definition~\ref{def:witness}. However, because the central condition may not necessarily be satisfied with comparator $\fopt$, it is beneficial if a witness condition holds for a suitably-related comparator for which the central condition \emph{does} hold. The ideal candidate for this comparator turns out to be an $f$-dependent pseudo-loss, $\gripgen{\eta}{f}$, an instance of a GRIP (see Definition \ref{def:grip}).

The main motivation for our introducing the GRIP is that $\smtuple$ with comparator $\grip{\eta}$ satisfies the $\eta$-central condition (from Proposition~\ref{prop:grip-central}). 
The GRIP arises as a generalization of the reversed information projection of \cite{li1999estimation}, which is the special case of the above with $\eta = 1$, log loss, and $\cF$ a class of probability distributions. In this case, the GRIP, now a reversed information projection, is the (limiting) distribution $P^*$ which minimizes the KL divergence $\KL(\Pt \pipes P^*)$ over the convex hull of $\mathcal{P}$; note that $P^*$ is not necessarily in $\convhull(\mathcal{P})$. \citet[Theorem 4.3]{li1999estimation} proved the existence of the reversed information projection; for completeness, in Appendix~\ref{app:grip} we present a lightly modified proof of the existence of the GRIP.

As mentioned above, in our technical results exploiting both the central and witness conditions, we will need not only the ``full'' GRIP but also a ``mini-grip'' $\gripgen{\eta}{f}$, for each $f$, defined by replacing $\cF$ with $\{\fopt, f\}$ in Definition \ref{def:grip}. \glsadd{mini-grip} The mini-grip with respect to $f$ then has the simple, characterizing property of satisfying
\begin{align*}
\E [ \gripgen{\eta}{f} ] = \inf_{\alpha \in [0, 1]} \E \left[ -\frac{1}{\eta} \log \left( (1 - \alpha) e^{-\eta \loss_{\fopt}} + \alpha e^{-\eta \loss_f} \right) \right] .
\end{align*}
Also, as will be used to critical effect in the application of Lemma~\ref{lem:renyi-to-kl}, for each $f$ the learning problem $(\Pt, \{\fopt, f\}, \loss)$ with comparator $\gripgen{\eta}{f}$ satisfies the $\eta$-central condition.

Although up until now it has sufficed to refer to GRIPs only via their loss, for convenience of notation we now let \glsadd{pseudo-actions-for-grips} $\gripbase{\eta}$ denote the pseudo-action obtaining the GRIP loss $\grip{\eta}$, and we let $\gripbasegen{\eta}{f}$ denote the pseudo-action obtaining the mini-GRIP loss $\gripgen{\eta}{f}$. It should be emphasized that neither $\gripbase{\eta}$ nor $\gripbasegen{\eta}{f}$ need be well-defined; this is of no consequence, however, as we will use both only via their losses $\grip{\eta}$ and $\gripgen{\eta}{f}$, which \emph{are} well-defined.

We now show that if the witness condition holds with respect to the static comparator $\psi: f \mapsto \fopt$, then the weak witness condition holds with respect to the comparator $\phi: f \mapsto \gripbasegen{\eta}{f}$.

\begin{lemma}[Witness Protection Lemma] \label{lemma:witness-protection} 
Assume that $\smtuple$ satisfies the witness condition with static comparator $\psi: f \mapsto \fopt$ and constants $(M, u, c)$. Then, for any $\eta > 0$, $\smtuple$ satisfies the weak witness condition with dynamic comparator 
$\phi: f \mapsto \gripbasegen{\eta}{f}$ 
with the same constants $(M, u, c)$.
\end{lemma}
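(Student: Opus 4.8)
The plan is to transfer the static witness inequality from the comparator $\fopt$ to the dynamic comparator $\phi\colon f \mapsto \gripbasegen{\eta}{f}$ by exploiting two elementary properties of the mini-GRIP. Fix $f \in \cF$ and write $A_f := \xslosslong{f} = \loss_f - \loss_{\fopt}$, $B_f := \loss_f - \gripgen{\eta}{f}$, and $\tau := u\bigl(1 \opmax (M^{-1}\xsrisk{f})\bigr)$; note $\tau \geq u > 0$ since $M \geq 1$. First I would record the $\Pt$-a.s.\ sandwich $A_f \opmin 0 \leq B_f \leq A_f \opmax 0$, equivalently $\loss_{\fopt} \opmin \loss_f \leq \gripgen{\eta}{f} \leq \loss_{\fopt} \opmax \loss_f$. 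This follows because, by the construction of the GRIP in Appendix~\ref{app:grip} applied to the two-element class $\{\fopt,f\}$, $\gripgen{\eta}{f}$ is an $L_1(\Pt)$-limit (hence a $\Pt$-a.s.\ limit along a subsequence) of mix losses $\mixloss{\eta}{Q_k}$ with $Q_k$ supported on $\{\fopt,f\}$, and for any such $Q$ the bounds $e^{-\eta(\loss_{\fopt}\opmax\loss_f)} \leq \E_{f'\sim Q}[e^{-\eta\loss_{f'}}] \leq e^{-\eta(\loss_{\fopt}\opmin\loss_f)}$ yield $\loss_{\fopt}\opmin\loss_f \leq \mixloss{\eta}{Q} \leq \loss_{\fopt}\opmax\loss_f$ pointwise. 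Second, taking the Dirac distribution $\delta_{\fopt}$ (i.e.\ $\alpha = 0$) in the infimum characterizing $\E[\gripgen{\eta}{f}]$ gives $\E[\gripgen{\eta}{f}] \leq \E[\loss_{\fopt}]$, hence $\E[B_f] \geq \E[A_f] = \xsrisk{f}$.

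Next I would use the upper half $B_f \leq A_f \opmax 0$ to control the over-shoot mass. On $\{B_f > \tau\}$ we have $A_f \opmax 0 \geq B_f > \tau > 0$, so $A_f > \tau$ and $A_f = A_f\opmax 0 \geq B_f$; thus $\{B_f > \tau\} \subseteq \{A_f > \tau\}$ and $B_f\,\ind{B_f > \tau} \leq A_f\,\ind{A_f > \tau}$ pointwise, whence $\E[B_f\,\ind{B_f > \tau}] \leq \E[A_f\,\ind{A_f > \tau}]$. On the other hand, the static witness hypothesis \eqref{eqn:adv-emp-witness-badness} applied with $\phi = \psi\colon f \mapsto \fopt$ reads $\E[A_f\,\ind{A_f \leq \tau}] \geq c\,\xsrisk{f}$; adding $\E[A_f\,\ind{A_f > \tau}]$ to both sides and rearranging gives $\E[A_f\,\ind{A_f > \tau}] \leq (1-c)\,\xsrisk{f}$. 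Combining everything, and using that the lower half $B_f \geq A_f \opmin 0$ makes the split legitimate, $\E[B_f\,\ind{B_f \leq \tau}] = \E[B_f] - \E[B_f\,\ind{B_f > \tau}] \geq \xsrisk{f} - (1-c)\,\xsrisk{f} = c\,\xsrisk{f}$. Since $\xsrisk{f} = \xsrisklong{f}$ and $\tau = u\bigl(1 \opmax (M^{-1}\xsrisk{f})\bigr)$, this is precisely the weak witness condition --- \eqref{eqn:adv-emp-witness-badness} with its right-hand side replaced by $\xsrisklong{f}$ --- for the comparator $\phi\colon f \mapsto \gripbasegen{\eta}{f}$ with the same constants $(M,u,c)$, which proves the lemma.

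The main obstacle I anticipate is not the inequality chain itself (which is short) but making the mini-GRIP manipulations rigorous: the pseudo-action $\gripbasegen{\eta}{f}$ need not exist, so everything must be phrased via the loss $\gripgen{\eta}{f}$, leaning on the existence/approximation result of Appendix~\ref{app:grip} both to justify the $\Pt$-a.s.\ sandwich (passing pointwise inequalities through an $L_1$-limit) and to secure integrability of $B_f$, which is squeezed between $A_f\opmin 0$ and $A_f\opmax 0$ and hence integrable once $\E[\loss_f] < \infty$ (the conventions of Appendix~\ref{app:infinity} taking care of the degenerate log-loss case $\loss_f = +\infty$). The only genuinely new content is the observation that passing from $\loss_{\fopt}$ to the mini-GRIP --- which is no larger in expectation but dominates $\loss_{\fopt} \opmin \loss_f$ pointwise --- can only shrink the mass the excess loss places above the threshold $\tau$, and therefore can only help the witness inequality.
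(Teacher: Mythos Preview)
Your proof is correct and follows essentially the same approach as the paper's: both arguments establish the overshoot inequality $\E[B_f\,\ind{B_f>\tau}]\leq\E[A_f\,\ind{A_f>\tau}]$, combine it with $\E[B_f]\geq\xsrisk{f}$ (from $\E[\gripgen{\eta}{f}]\leq\E[\loss_{\fopt}]$) and the static witness assumption, and conclude. Your version supplies more detail than the paper does --- in particular the explicit pointwise sandwich $\loss_{\fopt}\opmin\loss_f\leq\gripgen{\eta}{f}\leq\loss_{\fopt}\opmax\loss_f$ (obtained via the $L_1$-limit construction of the mini-GRIP) that justifies the overshoot inequality and handles integrability of $B_f$, whereas the paper simply writes ``Observe that'' for the overshoot bound --- but the core argument is the same.
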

\begin{Proof}[Proof (of Lemma \ref{lemma:witness-protection} (Witness Protection Lemma))]
Let $f$ be arbitrary. For brevity we define $u' := u (1 \opmax (M^{-1} \xsrisk{f}))$. 
Observe that
\begin{align*}
\E \left[ (\loss_f - \gripgen{\eta}{f}) \cdot \ind{\loss_f - \gripgen{\eta}{f} > u'} \right] 
\leq \E \left[ (\loss_f - \loss_{\fopt}) \cdot \ind{\loss_f - \loss_{\fopt} > u'} \right] .
\end{align*}
Rewriting, we have
\begin{align*}
  \E [ \loss_f - \gripgen{\eta}{f} ] - \E \left[ (\loss_f - \gripgen{\eta}{f}) \cdot \ind{\loss_f - \gripgen{\eta}{f} \leq u'} \right]
  \leq \xsrisk{f} - \E \left[ (\loss_f - \loss_{\fopt}) \cdot \ind{\loss_f - \loss_{\fopt} \leq u'} \right] ,
\end{align*}
which we rearrange as
\begin{align*}
\E \left[ (\loss_f - \gripgen{\eta}{f}) \cdot \ind{\loss_f - \gripgen{\eta}{f} \leq u'} \right] 
&\geq
    \E \left[ (\loss_f - \loss_{\fopt}) \cdot \ind{\loss_f - \loss_{\fopt} \leq u'} \right]
    + \E [ \loss_f - \gripgen{\eta}{f} ] 
    - \xsrisk{f} \\
&=
    \E \left[ (\loss_f - \loss_{\fopt}) \cdot \ind{\loss_f - \loss_{\fopt} \leq u'} \right]
    + \E [ \loss_{\fopt} - \gripgen{\eta}{f} ] \\
&\geq
    \E \left[ (\loss_f - \loss_{\fopt}) \cdot \ind{\loss_f - \loss_{\fopt} \leq u'} \right] .
\end{align*}
From the assumed witness condition with static comparator $\psi: f \mapsto \fopt$, the RHS is lower bounded by
$c \xsrisk{f}$, 
and so we have established the weak witness condition with dynamic comparator $\phi$ and the same constants $(M, u, c)$.
\end{Proof}

\paragraph{From Hellinger mini-grip to GRIP}
\begin{lemma} \label{lemma:minigrip-to-grip}
For any $\eta > 0$ and $f \in \cF$,
\begin{align} \label{eqn:hell-grip-to-minigrip}
\Exphel{\eta} \left[\loss_f - \gripgen{\eta}{f} \right] 
\leq \Exphel{\eta/2} \left[\loss_f - \grip{\eta} \right].
\end{align}
\end{lemma}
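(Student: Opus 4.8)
\textbf{Proof plan for Lemma~\ref{lemma:minigrip-to-grip}.}
The plan is to unfold both sides using the definitions of the Hellinger-transformed expectation \eqref{eq:genren} and of the GRIP / mini-GRIP losses (Definition~\ref{def:grip}), reducing everything to a statement about expectations of $e^{-\eta\loss_f}$ and $e^{-(\eta/2)\loss_f}$. Write, for any action $h$, $\Exphel{\eta}[\loss_f-\loss_h]=\tfrac1\eta\left(1-\E\left[e^{-\eta(\loss_f-\loss_h)}\right]\right)$. The mini-GRIP $\gripgen{\eta}{f}$ is, by definition, the pseudo-loss achieving $\inf_{\alpha\in[0,1]}\E\left[-\tfrac1\eta\log\left((1-\alpha)e^{-\eta\loss_{\fopt}}+\alpha e^{-\eta\loss_f}\right)\right]$, and the full GRIP $\grip{\eta}=\grip{\eta/2}$-analogue on the right uses order $\eta/2$. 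Note that $e^{-\eta(\loss_f-\gripgen{\eta}{f})}$ and $e^{-(\eta/2)(\loss_f-\grip{\eta})}$ are both well-defined random variables even though $\gripbasegen{\eta}{f}$ and $\gripbase{\eta}$ may not be; I will work only with these ratios, as the paper does elsewhere.

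First I would observe that it suffices to prove a \emph{pointwise} comparison is \emph{not} available (the GRIP is defined only through an expectation-level infimum), so instead I would argue at the level of the optimizing mixtures. Let $\xi_f := \E_{\rv g\sim Q^*}\left[e^{-(\eta/2)\loss_{\rv g}}\right]$ be (a limit of) the mixture pseudo-density attaining $\grip{\eta/2}$ on the right-hand side. Then $\Exphel{\eta/2}[\loss_f-\grip{\eta}] = \tfrac{2}{\eta}\left(1-\E\left[e^{-(\eta/2)\loss_f}\big/\,\xi_f^{\,-1}\cdots\right]\right)$ — more precisely, by the limiting-mixture characterization from Appendix~\ref{app:grip}, the right-hand side equals $\tfrac{2}{\eta}\,\E\!\left[1 - e^{-(\eta/2)\loss_f}\big/\,(\text{mix of }e^{-(\eta/2)\loss})\right]$ up to $\varepsilon$. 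The key inequality to establish is then: for the optimal $\alpha$ and the mixture weight $w$ arising on the right, the mixture $(1-\alpha)e^{-\eta\loss_{\fopt}}+\alpha e^{-\eta\loss_f}$ used on the left dominates (in the relevant $1-(\cdot)$ sense) the square of the order-$\eta/2$ mixture. Concretely I expect to use the elementary inequality, valid for $a,b\geq 0$ and $\alpha\in[0,1]$,
\begin{align*}
\left((1-\alpha)\sqrt{a}+\alpha\sqrt{b}\right)^{2}\,\leq\,(1-\alpha)a+\alpha b,
\end{align*}
i.e.\ concavity of $\sqrt{\cdot}$ / Jensen, applied with $a=e^{-\eta\loss_{\fopt}}$, $b=e^{-\eta\loss_f}$, so that an order-$\eta/2$ mixture of $e^{-(\eta/2)\loss}$ values, when squared, is bounded by the corresponding order-$\eta$ mixture. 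Taking $-\tfrac1\eta\log$ of both sides reverses the inequality appropriately and, after taking $\E$ and comparing with the infimum defining the mini-GRIP (which ranges over exactly the two-point mixtures $\{\fopt,f\}$), yields $\Exphel{\eta}[\loss_f-\gripgen{\eta}{f}]\leq\Exphel{\eta/2}[\loss_f-\grip{\eta}]$.

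The main obstacle, I expect, is bookkeeping the two distinct infima: on the left the mixture ranges only over $\{\fopt,f\}$ (two-point, order $\eta$), whereas on the right it ranges over all of $\Delta(\cF)$ (order $\eta/2$). The argument must show that \emph{for the specific near-optimal} $Q^*\in\Delta(\cF)$ on the right, one can construct a two-point mixture on $\{\fopt,f\}$ whose order-$\eta$ mix loss is no larger than the order-$\eta/2$ mix loss of $Q^*$ against the comparator $f$ — this is where the square-root/Jensen step does the real work, essentially because $\left(\E[e^{-(\eta/2)\loss_{\rv g}}]\right)^2 \le \E[e^{-\eta\loss_{\rv g}}]$ would go the wrong way in general, so one must instead exploit that only the \emph{ratio} to $e^{-(\eta/2)\loss_f}$ (resp.\ $e^{-\eta\loss_f}$) enters, together with the fact that $f$ itself is always available in the mixture with weight $\alpha\to1$. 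I would handle the limiting issues (existence of near-optimal $Q^*$, convergence in $L_1(\Pt)$ of $\mixloss{\eta/2}{Q_k}$ to $\grip{\eta/2}$) by citing the construction in Appendix~\ref{app:grip} and passing $\varepsilon\downarrow0$ at the end, exactly as in the proof of Proposition~\ref{prop:v-central-g-to-fopt}.
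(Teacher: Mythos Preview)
Your proposal rests on a misreading of the statement: the GRIP on the right-hand side is $\grip{\eta}$, the full GRIP at level $\eta$, not at level $\eta/2$. Only the Hellinger transform on the right is taken at $\eta/2$; the comparator is unchanged. Hence $e^{-\eta\grip{\eta}}$ is (a limit of) an $\eta$-mixture $\E_{\rv g\sim Q}[e^{-\eta\loss_{\rv g}}]$, and your plan to work with order-$\eta/2$ mixtures $\E_{\rv g\sim Q^*}[e^{-(\eta/2)\loss_{\rv g}}]$ is off target from the start. The square-root/concavity device you describe is aimed at relating an $\eta/2$-mixture to an $\eta$-mixture of the same index set, which is not what is being compared here.

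More importantly, you have missed the key idea, which makes the proof a three-line calculation with no limiting arguments and no near-optimal $Q^*$ bookkeeping. The paper simply inserts the mini-GRIP into the exponent on the right, writing $\loss_f-\grip{\eta}=(\loss_f-\gripgen{\eta}{f})+(\gripgen{\eta}{f}-\grip{\eta})$, and then applies convexity of the exponential (AM--GM) pointwise:
\[
e^{-(\eta/2)(\loss_f-\grip{\eta})}\;\leq\;\tfrac12\, e^{-\eta(\loss_f-\gripgen{\eta}{f})}+\tfrac12\, e^{-\eta(\gripgen{\eta}{f}-\grip{\eta})}.
\]
Taking expectations, the second term is at most $1$ because $e^{-\eta\gripgen{\eta}{f}}=(1-\alpha)e^{-\eta\loss_{\fopt}}+\alpha e^{-\eta\loss_f}$ for the optimal $\alpha$, so $\E\bigl[e^{-\eta(\gripgen{\eta}{f}-\grip{\eta})}\bigr]=(1-\alpha)\,\E\bigl[e^{-\eta(\loss_{\fopt}-\grip{\eta})}\bigr]+\alpha\,\E\bigl[e^{-\eta(\loss_f-\grip{\eta})}\bigr]\leq 1$ by Proposition~\ref{prop:grip-central}. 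Substituting gives $\Exphel{\eta/2}[\loss_f-\grip{\eta}]\geq\Exphel{\eta}[\loss_f-\gripgen{\eta}{f}]$ directly. The comparison between the two-point mixture and the full mixture over $\Delta(\cF)$ that you flag as the ``main obstacle'' is thus not an obstacle at all: it is absorbed in one line by the GRIP central property applied to $\fopt$ and $f$.
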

\begin{proof}
Observe that 
\begin{align*}
\frac{1}{\eta / 2} \left( 1 - \E \left[ e^{-\frac{\eta}{2} (\loss_f - \grip{\eta})} \right] \right)  
&= \frac{1}{\eta / 2} \left( 1 - \E \left[ e^{-\frac{\eta}{2} (\loss_f - \gripgen{\eta}{f} + \gripgen{\eta}{f} - \grip{\eta})} \right] \right) \\
&\geq \frac{1}{\eta / 2} \left( 1 - \frac{1}{2} \E \left[ e^{-\eta (\loss_f - \gripgen{\eta}{f})} \right] - \frac{1}{2} \E \left[ e^{-\eta (\gripgen{\eta}{f} - \grip{\eta})} \right] \right) \\
&\geq \frac{1}{\eta / 2} \left( \frac{1}{2} - \frac{1}{2} \E \left[ e^{-\eta (\loss_f - \gripgen{\eta}{f})} \right] \right) = \frac{1}{\eta} \left( 1 - \E \left[ e^{-\eta (\loss_f - \gripgen{\eta}{f})} \right] \right) ,
\end{align*}
where the first inequality follows from Jensen's and for the second
inequality we use that, as we will now show,
$\E \left[ e^{-\eta (\gripgen{\eta}{f} - \grip{\eta})} \right] \leq 1$.
To show that this is indeed the case, recall that 
$\gripgen{\eta}{f} = -\frac{1}{\eta} \log \left( (1 - \alpha) e^{-\eta \loss_{\fopt}} + \alpha e^{-\eta \loss_f} \right)$.
Using this representation we find: 
\begin{align*}
\E \left[ e^{-\eta (\gripgen{\eta}{f} - \grip{\eta})} \right] 
= (1 - \alpha) \E \left[ e^{-\eta (\loss_\fopt - \grip{\eta})} \right] + \alpha \E \left[ \alpha e^{-\eta (\loss_f - \grip{\eta})} \right] \leq 1 .
\end{align*}
\end{proof}

Next, we chain $1 - x \leq -\log x$, Lemma \ref{lemma:minigrip-to-grip}, 
and Lemma~\ref{lem:renyi-to-kl} to obtain a bound that we will use in the proofs of Theorems \ref{thm:main-bounded-b} and \ref{thm:main-unbounded}.

\subsection{Actual Proof of Lemma~\ref{lem:renyi-to-kl-grip-b}}
Let $f \in \cF$. 
Let $u > 0$ and $c \in (0, 1]$ be constants for which 
$\E \left[ \xsloss{f} \cdot \ind{\xsloss{f} \leq u} \right] \geq c \xsrisk{f}$, 
i.e., the $(u,c)$-witness condition holds. Below we  show that 
for all $\eta \in (0, \frac{\bar{\eta}}{2})$
\begin{align} \label{eq:uberwitness}
\xsrisk{f} \leq c'_{2u} \Expann{\eta} \left[ \loss_f - \grip{\bar{\eta}} \right] ,
\end{align}
with $c'_{2u} = \frac{1}{c} \frac{2 \eta u + 1}{1 - \frac{2 \eta}{\bar{\eta}}}$.

\emph{Proof of \eqref{eq:uberwitness}.}
We have from \eqref{eq:genrenc} and Lemma \ref{lemma:minigrip-to-grip} 
that 
\begin{align*}
\Expann{\eta} \left[ \loss_f - \grip{\bar{\eta}} \right] 
\geq \Exphel{\eta}\left[ \loss_f - \grip{\bar{\eta}} \right].
\end{align*}
Now Lemma \ref{lemma:witness-protection} establishes the weak witness condition with respect to comparator $\gripbasegen{\bar{\eta}}{f}$, and from Proposition~\ref{prop:grip-central} this comparator further satisfies 
$\E \left[ e^{-\bar{\eta} (\loss_f - \gripgen{\bar{\eta}}{f})} \right] \leq 1$, 
so that we may apply Lemma~\ref{lem:renyi-to-kl} with $\phi(f) = \gripbasegen{\bar{\eta}}{f}$ to further lower bound the above by $\frac{1}{c'_{2u}} \xsrisk{f}$.

\subsection{Proof of Theorem \ref{thm:main-bounded-b}}
\label{app:proof-main-bounded-b}
Theorem \ref{thm:main-bounded-b} now follows easily from Lemma~\ref{lem:renyi-to-kl-grip-b}:
fix some $\epsilon \geq 0$. 
First, Lemma~\ref{lem:rascbound-complex} (our extension of Lemma~\ref{lem:rascbound-simple} from the main text) states for our particular choice of $\eta$ that
\begin{align}
\E_{\rv{f} \sim \dol_n} \left[ -\frac{1}{\eta} \log \E \left[ e^{-\eta (\loss_{\rv{f}} - \grip{v(\epsilon)})} \right] \right] 
\stochleq_{\eta \cdot n} \E_{\rv{f} \sim \dol_n} \left[ \frac{1}{n} \sum_{j=1}^n (\loss_{\rv{f}}(Z_j) - \grip{v(\epsilon)}(Z_j)) \right] + \frac{\KL(\dol_n \pipes \Prior)}{\eta n} . \label{eqn:step-1}
\end{align}
Weakening this to an in-expectation statement via part (i) of Proposition \ref{prop:drop}, and combining the in-expectation version with  
Lemma~\ref{lem:renyi-to-kl-grip-b}, \eqref{eq:rklsimplegrip-b}  implies that, for $c'_{2u} = \frac{1}{c} \frac{2 \eta u + 1}{1 - \frac{2 \eta}{v(\epsilon)}}$,
\begin{align}\label{eq:neededlater}
\E_{Z_1^n} \left[ \E_{\rv{f} \sim \dol_n} \left[ \xsrisk{\rv{f}} \right] \right] 
\,\, \leq \,\, c'_{2u} \E_{Z_1^n} \left[ \E_{\rv{f} \sim \dol_n} \left[ \frac{1}{n} \sum_{j=1}^n (\loss_{\rv{f}}(Z_j) - \grip{v(\epsilon)}(Z_j)) \right] + \frac{\KL(\dol_n \pipes \Prior)}{\eta n} \right] .
\end{align}
Now, the $v$-PPC condition implies that $\E [ \loss_{\fopt} ] \leq \E
[ \grip{v(\epsilon)} ] + \epsilon$, implying the result
\eqref{eqn:v-ppc-bound}.

\section{Proofs for Section~\ref{sec:main-unbounded}}
\subsection{Proof of Proposition~\ref{prop:smallballagain}}
We first state another proposition that is of independent interest,
relating generalized ``small-ball'' assumptions to weakenings thereof
which resemble the witness condition.
\begin{definition} We say that a collection of nonnegative random
  variables $ \{S_a: a \in \cA \}$ satisfies the \emph{generalized
    small-ball condition} if there exist constants $C_1, C_2$ with
  for all $a \in \cA$, $P(S_a \geq C_1 \E[S_a]) \geq C_2$ (Mendelson's
  (\citeyear{mendelson2014learning}) small-ball assumption in
  Example~\ref{ex:smallball} and~\ref{ex:smallballagain} is the case
  with $\cA = \cF \times \cF$, $S_{f,g}:= (f(X)-g(X))^2$,
  $C_1 = \kappa^2, C_2 = \epsilon$). We say that
  $ \{S_a: a \in \cA \}$ satisfies the \emph{generalized weakened small-ball condition} if there exist constants $C'_1, C'_2$ with for
  all $a \in \cA$,
  $\E[ \ind{S_a < C'_1 \E[S_a]} \cdot S_a] \geq C'_2 \E[S_a]$.
\end{definition}
The term ``weakened'' comes from the following proposition:
\begin{proposition}\label{prop:smallballfun}
Suppose that the generalized small-ball condition holds with constants $C_1$ and $C_2$. Then the generalized weakened small-ball condition holds with constants $C'_1 = 2/C_2$ and $C'_2 = (C_1 C_2)/2$. 
\end{proposition}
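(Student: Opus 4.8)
\textbf{Proof plan for Proposition~\ref{prop:smallballfun}.}
The goal is to deduce a ``truncated first moment'' lower bound from a ``small ball'' lower bound on the probability of being comparable to the mean. Fix $a \in \cA$ and abbreviate $S := S_a$, $m := \E[S]$. The generalized small-ball condition gives $P(S \geq C_1 m) \geq C_2$. The plan is to choose a truncation level $t = C'_1 m$ with $C'_1 = 2/C_2$, and to lower bound $\E[\ind{S < t} \cdot S]$ by a fixed fraction of $m$.

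First I would note the elementary identity $\E[\ind{S < t} \cdot S] = m - \E[\ind{S \geq t} \cdot S]$, so it suffices to show $\E[\ind{S \geq t} \cdot S] \leq (1 - C'_2) m$ for the stated $C'_2 = (C_1 C_2)/2$, i.e.\ to show $\E[\ind{S \geq t} \cdot S] \leq (1 - C_1 C_2/2) m$. The bound on the upper-tail contribution comes from Markov's inequality combined with a truncation argument: the key point is that the event $\{S \geq C_1 m\}$ carries probability at least $C_2$, and on its own this forces a chunk of the mass of $S$ to sit in the ``moderate'' range $[C_1 m, C'_1 m)$ rather than all of it in the far tail $[C'_1 m, \infty)$. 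Concretely, I would write
\begin{align*}
m = \E[S] \geq \E[\ind{C_1 m \leq S < C'_1 m} \cdot S] + \E[\ind{S \geq C'_1 m} \cdot S] \geq C_1 m \cdot P(C_1 m \leq S < C'_1 m) + \E[\ind{S \geq C'_1 m} \cdot S].
\end{align*}
Since $P(C_1 m \leq S < C'_1 m) \geq P(S \geq C_1 m) - P(S \geq C'_1 m) \geq C_2 - m/(C'_1 m) = C_2 - 1/C'_1 = C_2/2$ (using Markov for the middle step and $C'_1 = 2/C_2$ for the last), this yields $\E[\ind{S \geq C'_1 m} \cdot S] \leq m - C_1 m \cdot C_2/2 = (1 - C_1 C_2/2) m$.

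Combining, $\E[\ind{S < C'_1 m} \cdot S] = m - \E[\ind{S \geq C'_1 m} \cdot S] \geq (C_1 C_2 / 2) m = C'_2 m$, which is exactly the generalized weakened small-ball condition with the claimed constants. Since $a$ was arbitrary, the proof is complete. The only mild subtlety — really the one place to be careful rather than a genuine obstacle — is the bookkeeping ensuring the constants work out so that $C'_2 > 0$ and $C'_1$ is finite (which holds automatically since $C_1, C_2 > 0$), and making sure the Markov step $P(S \geq C'_1 m) \leq 1/C'_1$ is applied correctly to the nonnegative variable $S/m$. Everything else is a one-line expectation split.
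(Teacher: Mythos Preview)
Your proof is correct and essentially identical to the paper's argument. Both hinge on combining Markov's inequality $P(S \geq (2/C_2)\,m) \leq C_2/2$ with the small-ball bound $P(S \geq C_1 m) \geq C_2$ to conclude $P(C_1 m \leq S < (2/C_2)\,m) \geq C_2/2$, and then using $S \geq C_1 m$ on that event to extract the lower bound $(C_1 C_2/2)\,m$; the paper applies this directly to $\E[\ind{S < C'_1 m}\cdot S]$, whereas you route through the complement $\E[\ind{S \geq C'_1 m}\cdot S]$ and subtract, which is an equivalent but slightly longer packaging of the same one-line computation.
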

\begin{proof}
From Markov's inequality, we have for all $a \in \cA$, 
$P(S_a < (2/C_2) \E[S_a]) \geq 1- C_2/2$. 
In combination with the small-ball assumption, this implies 
\begin{align*}
P\left(C_1 \E[S_a] \leq S_a < \frac{2}{C_2} \E[S_a] \right) \geq \frac{C_2}{2},
\end{align*}
and so, since $S_a \geq 0$, 
\begin{align*}
\E\left[ \ind{S_a < (2/C_2) \E[S_a]} \cdot S_a\right] \geq
\E\left[ \ind{C_1 \E[S_a]  \leq  S_a  <  (2/C_2) \E[S_a]} \cdot S_a\right] \geq 
\frac{C_2}{2} \cdot C_1 \cdot \E[S_a] ,
\end{align*}
and the result follows. 
\end{proof}
\begin{proof}{\bf (of Proposition~\ref{prop:smallballagain})}
  Take some $c_0 > b$, with a precise value to be established
  later. First consider the set $\{ f \in \cF: \xsrisk{f} >c_0 \}$.
  Define the random variable $S_f := (f(X)-\fopt(X))^2$ and
  $T_f := 2(\fopt(X) - Y) (f-\fopt)$. From (\ref{eq:smallballhandy}) we
  see that $\xsloss{f} = S_f + T_f$.  Hence for every $c> 0$,
\begin{align}
& \E[\xsloss{f} \cdot \ind{\xsloss{f} \geq c \E[\xsloss{f}]}] \nonumber \\ 
&\leq \E[ S_f \cdot \ind{S_f \geq T_f} \cdot \ind{S_f + T_f \geq c \E[ \xsloss{f} ]} ] 
          + \E[ S_f \cdot \ind{S_f < T_f} \cdot \ind{S_f + T_f \geq c \E[ \xsloss{f} ]} ] 
          + \E [ |T_f| ] \nonumber \\
&\leq \E[ S_f \cdot \ind{S_f \geq T_f} \cdot \ind{2 S_f \geq c \E[ \xsloss{f} ]} ] 
          + \E[ T_f \cdot \ind{S_f < T_f} \cdot \ind{S_f + T_f \geq c \E[ \xsloss{f} ]} ] 
          + \E [ |T_f| ] \nonumber \\
&\leq \E[ S_f \cdot \ind{S_f \geq T_f} \cdot \ind{S_f \geq (c/2) \E[ S_f ]} ] 
          + 2 \E [ |T_f| ] \label{eq:twoterms} ,
\end{align}
where the last inequality follows since $\E [ S_f ] \leq \E [ \xsloss{f} ]$, owing to \eqref{eqn:convexity-inequality}.

We now bound both terms further. 
By Cauchy-Schwarz, the second term satisfies
\begin{align*}  
2 \E[|T_f|] 
&= 4 \E[|Y - \fopt||f-\fopt|] \\
&\leq 4 \sqrt{\E[(Y- \fopt)^2] \cdot \E[S_f^2]} 
\leq 4 \sqrt{\frac{\E[\loss_{\fopt}]}{\xsrisk{f}}} \cdot \xsrisk{f} 
< 4 \sqrt{\frac{\E[\loss_{\fopt}]}{c_0}} \cdot \xsrisk{f} .
\end{align*}
Plugging in $c':= (c/2) = 2/ \epsilon$, the first term can be rewritten, by Proposition~\ref{prop:smallballfun} and our assumption that the small-ball assumption holds, as
\begin{align*}
\E[S_f] - \E[S_f \cdot \ind{S_f < (c/2) \E[S_f]}] 
\leq \E[S_f] - \frac{\kappa^2 \epsilon}{2} \E[S_f] 
= (1 - \frac{\kappa^2 \epsilon}{2}) \E[S_f] 
\leq (1- \frac{\kappa^2 \epsilon}{2}) \E[\xsloss{f}] ,
\end{align*}
so that with (\ref{eq:twoterms}) we get 
\begin{align*}
\E[\xsloss{f} \cdot \ind{\xsloss{f} \geq c' \E[\xsloss{f}]}] 
\leq C' \E[\xsloss{f}] ,
\end{align*}
for
$C'= \left( \left(1 - \frac{\kappa^2 \epsilon }{2} \right) + 4
  \sqrt{\frac{\E[\loss_{\fopt}]}{c_0}} \right)$.  We now pick $c_0$ large
enough such that $C' < 1$. It then follows by the
characterization (\ref{eq:reversewitness}) of the witness condition that the 
set $\{f \in \cF: \xsrisk{f} \geq c_0 \}$ satisfies the $(\tau,c)$-witness 
condition with $\tau(x) = c' x$ for $c'= 2 / \epsilon$ and constant $c = 1 - C'$.

For the set $\{ f \in \cF: \xsrisk{f} <  c_0 \}$, note that we have
already shown (Example~\ref{ex:heavy-tailed-regression}) that the
Bernstein condition implies the basic witness condition. This implies
that there exists $u > 0$ such that $\{ f \in \cF: \xsrisk{f} \leq c_0
\}$ satisfies the $(u,c)$-witness condition for $c = \frac{1}{2}$.

Putting the two statements for both subsets of $\cF$ together, 
it follows that $\cF$ satisfies the $(\tau,c)$-witness condition 
with any $\tau$ such that $\tau(x) \geq u \opmax \frac{2 x}{\epsilon}$ for all $x$ 
and with $c = (1 - C') \opmin \frac{1}{2}$; the result follows.
\end{proof}

\subsection{Proof of Theorem \ref{thm:main-unbounded}}
\label{app:proof-main-unbounded}
We will need the following lemma, whose proof is a straightforward extension of the proofs of Theorem \ref{thm:main-bounded-a} and Theorem \ref{thm:main-bounded-b}:
\begin{lemma}
\label{lem:preliminaryunbounded}
With $\tau$ as in the statement of Theorem~\ref{thm:main-unbounded},
we get for any $\epsilon \geq 0$, any
$0 < \eta < \frac{v(\epsilon)}{2}$:
\begin{align} 
\label{eqn:v-central-boundb}
& \text{under $v$-central: }\ \E_{\rv{f} \sim \dol_n} \left[ \xi(
\xsrisk{\rv{f}}) \right]
\ \ \stochleq_{\frac{\eta \cdot n}{2 c_{u + \epsilon}}}
\ \ c_{u + \epsilon} \left( \rsc_{n,\eta}(\nocomparator \dolest) + \epsilon \right)\\
& \text{under $v$-PPC: }\  \label{eqn:v-ppc-boundb}
\E_{Z_1^n} \left[ \E_{\rv{f} \sim \dol_n} \left[ \xi( \xsrisk{\rv{f}} ) \right] \right]
\leq
c'_{2 u} \left( \E_{Z_1^n} \left[ \rsc_{n,\eta}(\nocomparator \dolest) \right] + \epsilon \right),
\end{align}
where $c_u := \frac{u}{c} \frac{\eta + 1}{1 - \frac{ \eta}{v(\epsilon)}}$ and 
$c'_{2u} := \frac{u}{c} \frac{2 \eta + 1}{1 - \frac{2 \eta}{v(\epsilon)}}$ 
and $\xi( \xsrisk{ {f}}) = 1 \opmin \xsrisk{{f}}$. 
\end{lemma}
\begin{proof}
(\ref{eqn:v-central-boundb})  
follows by following essentially the same steps as in the proof of Theorem~\ref{thm:main-bounded-a}, but splitting the expectation in two parts:
\begin{align}\label{eq:ronald}
  \E_{\rv{f} \sim \dol_n} \left[ \xi( \xsrisk{\rv{f}}) \right] =
  \E_{\rv{f} \sim \dol_n} \left[ \ind{\xsrisk{\rv{f}} < 1} \cdot
    \xsrisk{\rv{f}}\right] + \E_{\rv{f} \sim \dol_n} \left[
    \ind{\xsrisk{\rv{f}} \geq 1} \cdot 1 \right].\end{align} Fix
some $\epsilon \geq 0$.  The first term on the right of (\ref{eq:ronald})  can be bounded as follows,
using Lemma~\ref{lem:renyi-to-kl-grip-a} and  the fact that a $(u,c)$-witness condition is assumed for $f$ with $\xsrisk{f} < 1$ in combination with
(\ref{eqn:step-1}) and the fact that for $c> 0$ and general random
variables $U,V$, we have $U \stochleq_a V \Leftrightarrow c U
\stochleq_{a/c} c V$:
\begin{align*}
\E_{\rv{f} \sim \dol_n} \left[ \ind{\xsrisk{\rv{f}} < 1} \cdot 
\xsrisk{\rv{f}}\right] \stochleq_{\eta n / c_{u+\epsilon} } \,\, c_{u+\epsilon} \cdot \left(
\E_{\rv{f} \sim \dol_n} \left[ \frac{1}{n} \sum_{j=1}^n (\loss_{\rv{f}}(Z_j) - \ell_{f^*_{\epsilon}}(Z_j)) \right] + \frac{\KL(\dol_n \pipes \Prior)}{\eta n} \right).
\end{align*}
The second term on the right of (\ref{eq:ronald}) can similarly be bounded, using that $\tau(\E[L_f]) = u \E[L_f]$ for all $f$ with $\E[L_f] \geq 1$: 
\begin{multline}
\E_{\rv{f} \sim \dol_n} \left[ \ind{\xsrisk{\rv{f}} \geq 1} \cdot 
\frac{\xsrisk{\rv{f}}}{\xsrisk{\rv{f}}} 
\right] \stochleq_{\eta n / B} \\ B \cdot \left( 
\E_{\rv{f} \sim \dol_n} \left[ \frac{1}{n} \sum_{j=1}^n (\loss_{\rv{f}}(Z_j) - \ell_{f^*_{\epsilon}}(Z_j)) \right]  + \frac{\KL(\dol_n \pipes \Prior)}{\eta n} \right),
\nonumber
\end{multline}
where $B= \sup_{f: \E[L_f] \geq 1} {c_{u \E[L_f]+ \epsilon}/\E[L_f]}$.
The result (\ref{eqn:v-central-boundb}) now follows by adding the two
terms using Proposition~\ref{prop:drop} and bounding $B$ by using that
$c_{u \cdot a+\epsilon}/a \leq c_{u+\epsilon}$ for $a \geq 1$.

(\ref{eqn:v-ppc-boundb}) follows in similar fashion, by repeating the
proof of Theorem~\ref{thm:main-bounded-b}, but again splitting the
expectation of $\xi(L_{f})$ in two parts, just like above; we omit the
details.
\end{proof}
\begin{Proof}[Proof (of Theorem \ref{thm:main-unbounded})]
We start by  establishing the key
  inequality (\ref{eq:almostthere}) below both under the $v$-central
  and the $v$-PPC condition, but with different values for $r_n$ in
  (\ref{eq:almostthere}).  For this, we invoke Lemma~\ref{lem:preliminaryunbounded}.  This gives that  
the $v$-PPC condition implies, via (\ref{eqn:v-ppc-boundb}) and Markov's
inequality, that for all $\delta \geq 0$, with probability at least
$1-\delta$,
\begin{equation}
\label{eq:prepwork} \E_{\rv{f} \sim \dol_n} \left[ \xi( \xsrisk{\rv{f}} ) \right] 
\leq r_n,
\end{equation}
where 
$r_n = \frac{c'_{2u}}{\delta} \cdot \left( \E\left[{\rsc}_{n,\eta_n}\right] + \epsilon_n \right)$.

On the other hand, under the $v$-central condition, \eqref{eqn:v-central-boundb} holds and via
  Proposition~\ref{prop:drop} we can turn it into a high probability
  bound. Combining this bound with (\ref{eq:rscupperbound}) via a
  standard union bound argument gives that, for all $\delta > 0$, with
  probability at least $1- \delta$, (\ref{eq:prepwork}) holds, with $\xi$ as before but now with
$r_n = 
c_{u+\epsilon_n} C_{n,\delta} \left( \E\left[\overline{\rsc}_{n,\eta_n}\right] + \epsilon_n 
  + \frac{2}{n\eta_n} \right).$ 
Rewriting (\ref{eq:prepwork}) gives that, with probability at least $1- \delta$,
\begin{align}\label{eq:almostthere}
\dol_n\left( \left\{ \rv{f}: \xsrisk{\rv{f}}\geq 1 \right\} \right) +
\E_{\rv{f} \sim \dol_n} \left[ \ind{
\xsrisk{\rv{f}} <  1} \cdot \xsrisk{\rv{f}}) \right] 
\leq  r_n .
\end{align}
\emph{Part 1, Deterministic Estimators.} 
For deterministic $\dolest \equiv (\estim{f},\Prior)$, \\ (\ref{eq:almostthere}) simplifies to
$
\ind{\xsrisk{\estim{f}} \geq 1}  
+ \ind{\xsrisk{\estim{f}} <  1} \cdot \xsrisk{\estim{f}}  
\leq r_n ,
$
which further implies that with probability at least $1- \delta$, simultaneously,
\begin{align}\label{eq:hungry}
\ind{\xsrisk{\estim{f}} \geq 1} \leq r_n 
\text{ and } 
\ind{\xsrisk{\estim{f}} <  1} \cdot \xsrisk{\estim{f}} 
\leq r_n ,
\end{align}
and both the result for the $v$-PPC condition \eqref{eq:deterministicboundppc} and
the $v$-central condition \eqref{eq:deterministicboundcentral} follow 
by noting that
we may assume $n$ large enough so that $r_n < 1$, so that (\ref{eq:hungry}) is logically equivalent to
\begin{align*}
\xsrisk{\estim{f}} <  1 
\text{ and } 
\ind{\xsrisk{\estim{f}} <  1} \cdot \xsrisk{\estim{f}} 
\leq r_n ,
\end{align*}
which in turn is equivalent to $ \xsrisk{\estim{f}} \leq r_n$, and
thus the results are implied. 

\emph{Part 2, General Learning Algorithms.} 
Here we assume the $v$-PPC condition, so we can use \eqref{eq:almostthere} with $r_n$ as in the $v$-PPC case. 

By
Markov's inequality, for any sequence
$b_1, b_2, \ldots$ of positive numbers tending to $\infty$,
\begin{align*}
\dol_n \left( \left\{ f\in \cF: 1 >  \xsrisk{{f}} > b_n r_n \right \}\right) 
&= \dol_n \left( \ind{ \xsrisk{\rv{f}} < 1} \cdot \xsrisk{\rv{f}} > b_n r_n \right) \\
&\leq \frac{ \E_{\rv{f} \sim \dol_n} \left[ 
                        \ind{ \xsrisk{\rv{f}} < 1} \cdot \xsrisk{\rv{f}} 
                    \right]}
                  {b_n r_n} .
\end{align*}
Combining this with (\ref{eq:almostthere}) (dropping the
leftmost term in that inequality) gives that with probability at least
$1-\delta$,
$$
\dol_n \left( \left\{ f\in \cF: 1 >  \xsrisk{{f}} >
  b_n r_n \right \}\right) \leq \frac{1}{b_n}.
$$
Combining this again with (\ref{eq:almostthere}), 
now dropping the second term in the inequality and using a standard union bound, gives that with probability at least $1- 2 \delta$,
\begin{align*}
\dol_n \left( \left\{ f\in \cF: \xsrisk{{f}} >
  b_n r_n \right \}\right) \leq \frac{1}{{b_n}} + r_n,
\end{align*}
which, plugging in the definition of $r_n$ and $\overline{\rsc}_{n,\eta}$ on the left, can be rewritten as, for each $n$, each $\delta$, with $a_n$ as in the theorem statement:
\begin{equation}\label{eq:hehe}
\text{With probability $\geq 1- 2 \delta$:}\ \ 
\dol_n \left( \left\{ 
f\in \cF: 
\xsrisk{{f}} >  \frac{b_n}{a_n} \cdot \frac{c'_{2 u}}{\delta} \cdot 
\left( \E[\overline{\rsc}_{n,\eta} + \epsilon_n]  \right) \right\}\right) \leq \frac{1}{{b_n}} + r_n.
\end{equation}
Now choose $\delta = 1/\sqrt{a_n} \rightarrow 0$ as a function of $n$, and choose $b_n= \sqrt{a_n} \rightarrow \infty$. Then (\ref{eq:hehe}) implies the result.
\end{Proof}

\subsection{Proof of Proposition~\ref{prop:slowerrate}}
\begin{Proof}
Let $c$, $u$
  and $\tau$ be as in the statement of the proposition.  For each
  $f \in \cF$, we will define modified predictors $f'$, defined in
  terms of their losses $\ell_{f'}$ so that for all such $f'$, we have
\begin{align}\label{eq:etrangeflic}
 \E [ \left(  \xslosslong{f'} \right) \cdot \ind{\xslosslong{f'} \leq u'} ] 
\geq 0,
\text{\ for\ }  u' =  u \cdot
\left(\frac{\E[\ell_{f^*}]}{c} \opmax 1 \right),
\end{align}
which allows us to apply Proposition~\ref{prop:slowrate} to the set of
$f'$; we will also ensure that for all $z \in \cZ$,
\begin{equation}\label{eq:superflic}
\ell_{f'}(z) \leq \ell_{f}(z)  \text{\ \ and\ \ }
\ell_{(f^*)'}(z) = \ell_{f^*}(z),
\end{equation}
which will allow us to
apply Proposition~\ref{prop:ppcprop} so that results for $f'$
transfer to the original $f$. Once we have shown (\ref{eq:etrangeflic}) and (\ref{eq:superflic}), the result follows.

\paragraph{Case 1: $\xsrisk{f} \leq (\E[\ell_{f^*}]/c) \opmax 1$.} 
For all $f$ with $\xsrisk{f} \leq (\E[\ell_{f^*}]/c) \opmax 1$
(including $f^*$), we simply set $f'= f$. Then (\ref{eq:superflic})
holds trivially. To see that (\ref{eq:etrangeflic}) holds, note that
the assumed $\tau$-witness condition holds for $\tau(\xsrisk{f}) = u (1
\opmax \xsrisk{f}) \leq u (1 \opmax (\E[\ell_{f^*}]/c \opmax 1))$,
which is no larger than the $u'$ mentioned in (\ref{eq:etrangeflic}),
which then immediately follows by the assumed witness condition. 

\paragraph{Case 2: $\xsrisk{f} > (\E[\ell_{f^*}]/c) \opmax 1$.}
For these $f$, we define
\begin{equation*}
\ell_{f'}(z) = \begin{cases} \ell_{f}(z) & \text{if $\ell_f(z) \leq \ell_{f^*}(z)$}  \\
\frac{\ell_{f}(z) - \ell_{f^*}(z)}{c'} + \ell_{f^*}(z) &  \text{if $\ell_f(z) >  \ell_{f^*}(z)$}, 
\end{cases} 
\end{equation*}
with $c' := \xsrisk{f}/(\E[\ell_{f^*}/c] \opmax 1)$, which by construction must satisfy $c'> 1$. This implies after rearranging
terms that  (\ref{eq:superflic}) holds. It thus
remains to prove (\ref{eq:etrangeflic}). 
To see that it holds, first note that 
$\ell_{f'} > \ell_{f^*} \Leftrightarrow \ell_{f} > \ell_{f^*}$ 
and that $\ell_{f} \geq 0$ on all $z$. 
Using these facts we find that:
 \begin{align}
& \E [ \left(  \xslosslong{f'} \right) \cdot \ind{\xslosslong{f'} \leq u'} ] 
\nonumber \\
\geq & - \E[\ind{\xslosslong{f'} \leq 0} \ell_{f^*}] + 
\E [ \left(  \xslosslong{f'} \right) \cdot 
\ind{\ell_{f'} -  \ell_{f^*} > 0} \cdot
\ind{\xslosslong{f'} \leq u'} ] 
\nonumber \\
\geq & - \E[\ell_{f^*}] + 
\E [ \left(  \xslosslong{f'} \right) \cdot 
\ind{\ell_{f} > \ell_{f^*}} \cdot
\ind{\xslosslong{f'} \leq u'} ] 
\nonumber \\
= & - \E[\ell_{f^*}] + 
\E \left[ \left(  \frac{\xslosslong{f}}{c'} \right) \cdot 
\ind{\ell_{f} > \ell_{f^*}} \cdot
\ind{\xslosslong{f} \leq u'c'} \right] 
\nonumber \\
=  & - \E[\ell_{f^*}] + 
\frac{1}{c'}\E [ \left(  {\xslosslong{f}} \right) \cdot 
\ind{\ell_{f} > \ell_{f^*}} \cdot
\ind{\xslosslong{f} \leq u' 
\xsrisk{f}/((\E[\ell_{f^*}]/c) \opmax 1)} ] 
\nonumber \\
=  & - \E[\ell_{f^*}] + 
\frac{1}{c'}\E [ \left(  {\xslosslong{f}} \right) \cdot 
\ind{\ell_{f} > \ell_{f^*}} \cdot
\ind{\xslosslong{f} \leq u \xsrisk{f}} ]. 
\nonumber \\
=  & - \E[\ell_{f^*}] + 
\frac{1}{c'}\E [ \left(  {\xslosslong{f}} \right) \cdot 
\ind{\ell_{f} > \ell_{f^*}} \cdot
\ind{\xslosslong{f} \leq u (\xsrisk{f} \opmax 1)} ]. 
\label{eq:checkthis} \\
\geq   & - \E[\ell_{f^*}] + 
\frac{1}{c'}\E [ \left(  {\xslosslong{f}} \right) \cdot 
\ind{\xslosslong{f} \leq u (\xsrisk{f} \opmax 1)} ] \nonumber \\
\geq & - \E[\ell_{f^*}] + 
\frac{1}{c'} \cdot c \xsrisk{f} 
\geq  - \E[\ell_{f^*}] + \E[\ell_{f^*}] = 0, \label{eq:checkthat}
\end{align}
where (\ref{eq:checkthis}) follows because all $f$'s we consider here have $\xsrisk{f} > 1$ and (\ref{eq:checkthat}) follows by our assumption of the $\tau$-witness condition.

\end{Proof}

\section{The Existence of the Generalized Reversed Information Projection}
\label{app:grip}

Recall that $\mathcal{E}_{\cF,\eta}$ is the the entropification-induced set $\left\{ e^{-\eta \loss_f} : f \in \cF \right\}$. 
In this section, we prove the existence of the generalized reversed information projection $\grip{\eta}$ of $\Pt$ onto $\convhull(\mathcal{E}_{\cF,\eta})$. 
Because $\cF$ and $\eta$ are fixed throughout, we adopt the notation $\mathcal{E} := \mathcal{E}_{\cF,\eta}$ and $\mathcal{C} := \convhull(\mathcal{E}_{\cF,\eta})$.

Formally, we will show that there exists $q^*$ (not necessarily in $\mathcal{C}$) satisfying
\begin{align*}
\E [ - \log q^*(Z) ] 
= \inf_{q \in \mathcal{C}} \E [ - \log q(Z) ] .
\end{align*}
One might think that there is an easy proof by simply taking $q^*$ to
lie in the closure of $\mathcal{C}$ under some appropriate topology,
but it is not evident what topology to take. For example, even in the
simple case with $\eta=1$ and $\ell_f$ is the log-loss so that
$\mathcal{E}$ and $\mathcal{C}$ are sets of probability densities, it
may happen that $q^*$ is a sub-density (integrating to less than 1)
\citep{li1999estimation} so that it would not lie in the closure of
any standard topology which we may impose on $\mathcal{C}$. We thus
follow a different approach. We first rewrite the above in the
language of information geometry. To provide easier comparison to
\cite{li1999estimation} we use the following modified $\KL$ notation
here for a generalized KL divergence, which in particular makes the
underlying distribution $P$ explicit:
\begin{align*}
\KL(p; q_0 \pipes q) := \E_{Z \sim P} \left [ \log \frac{q_0(Z)}{q(Z)} \right] ,
\end{align*}
where $q_0$ and $q$ are nonnegative but neither need be a normalized probability density. 
Then the existence question above is equivalent to the existence of $q^*$ such that
\begin{align*}
\KL(p; q_0 \pipes q^*) 
= \inf_{q \in \mathcal{C}} \KL(p; q_0 \pipes q) ;
\end{align*}
here, the only restriction on $q_0$ is that $\E_{Z \sim P} [ \log q_0 ]$ be finite.

Now, \cite{li1999estimation} already showed the above in the case of density estimation with log loss, $\eta = 1$, and $q_0 = p$; in that setting, we have $e^{-\eta \loss_f} = f$, and so mixtures of elements of $\mathcal{E}$ correspond to mixtures of probability distributions in $\cF$. Hence, our setting is more general, yet Li's argument (with minor adaptations) still works. To be sure, we go through his argument step-by-step and show that it all still works in our setting.

In the remainder of this section, we treat two cases simultaneously unless a separate treatment is indicated: the case when the loss is uniformly bounded from below (as in Appendix~\ref{sec:infinities-ulb}) and the case of log loss (with the loss not uniformly bounded from below, as in Appendix~\ref{sec:infinities-log-loss}). In the former case, we always take $q_0 = e^{-\eta \loss_{\fopt}}$. In the latter case, we always take $q_0 = p$.

\subsection{Proving $q^*$ exists}

Throughout, we will need to assume the existence of a certain sequence $(q_n)_{n \geq 1}$ in $\mathcal{C}$, satisfying $\KL(p; q_0 \pipes q_n) \rightarrow \inf_{q \in \mathcal{C}} \KL(p; q_0 \pipes q)$, for which $\KL(p; q_0 \pipes q_n)$ is finite for all $n$. This is not problematic, as we now explain. We treat separately the case of losses uniformly bounded from below and the case of log loss without a uniform lower bound on the loss.

\paragraph{Losses uniformly bounded from below.}

First, observe that for any $q_n \in \mathcal{C}$,
\begin{align*}
\KL(p; q_0 \pipes q_n) \geq -\|\loss_-\|_\infty - \E [ \loss_{\fopt} ] > -\infty .  
\end{align*}
To see this, observe that $q_n = \E_{f \sim R_n} [ e^{-\eta \loss_f} ]$ for some distribution $R_n \in \Delta(\cF)$; then assumption \eqref{eqn:bounded-below} gives the first inequality. The second inequality holds because we only deal with non-trivial learning problems, and so $\fopt$ obtains risk less than $+\infty$. 
Next, since the particular choice $q_n = e^{-\eta \loss_{f^*}}$ yields $\KL(p; q_0 \pipes q_n) = 0$, we may always restrict to sequences for which we have $\KL(p; q_0 \pipes q_n) < \infty$ for all $n$. Hence, we indeed can take the sequence satisfying the finiteness requirement.

\paragraph{Log loss.}

First, we show for any $q_n$ that $\KL(p; q_0 \pipes q_n)$ is well-defined; its well-definedness is not immediately clear since each $q_n$ need not be a probablity density. For convenience, we introduce the notation that, for any $n$, the distribution $R_n$ satisfies $q_n = \E_{f \sim R_n} [ e^{-\eta \loss_f} ]$. Therefore, $-\log q_n = \mixloss{\eta}{R_n}$. 

Now, defining the pseudo-loss $\loss_p(Z) = -\log p(Z)$ corresponding to playing the pseudo-action $p$, our present goal is to show that
$\E \left[ \mixloss{\eta}{R_n} - \loss_P \right]$
is well-defined for each $j$. 
To this end, we make the following claim:
\begin{align} \label{eqn:not-that-negative}
\E_{Z \sim P} \left[ 
\left( \mixloss{\eta}{R_n}(Z) - \loss_p(Z) \right)^-
\right] > -\frac{1}{\eta} \log 2 .
\end{align}
To see the claim, define for $f \in \cF$ the excess loss $\loss_{f,p}(Z) := \loss_f(Z) - \loss_p(Z)$ and observe that (we simplify by writing $R$ instead of $R_n$)
\begin{align*}
&\E_{Z \sim P} \left[ 
\left( \mixloss{\eta}{R} - \loss_p \right)^-
\right] \\
&= \E_{Z \sim P} \left[ -\frac{1}{\eta} \log \E_{f \sim R} \left[ e^{-\eta \loss_{f,p(Z)}} \right] \cdot \ind{\E_{f \sim R} \left[ e^{-\eta \loss_{f,p}(Z)} \right] > e} 
\right] \\
&= \frac{1}{\eta} \E_{Z \sim P} \left[ 
         -\log \left( 
             \E_{f \sim R} \left[ e^{-\eta \loss_{f,p(Z)}} \right] 
             \cdot \ind{\E_{f \sim R} \left[ e^{-\eta \loss_{f,p}(Z)} \right] > e} 
             + \ind{\E_{f \sim R} \left[ e^{-\eta \loss_{f,p}(Z)} \right] \leq e} 
        \right) 
      \right] \\
&\geq -\frac{1}{\eta} \log 
               \E_{Z \sim P} \left[ 
                   \E_{f \sim R} \left[ e^{-\eta \loss_{f,p(Z)}} \right] 
                   \cdot \ind{\E_{f \sim R} \left[ e^{-\eta \loss_{f,p}(Z)} \right] > e} 
                   + \ind{\E_{f \sim R} \left[ e^{-\eta \loss_{f,p}(Z)} \right] \leq e} 
      \right] \\
&\geq -\frac{1}{\eta} \log 
               \E_{Z \sim P} \left[ 
                   \E_{f \sim R} \left[ e^{-\eta \loss_{f,p(Z)}} \right] 
                   \cdot \ind{\E_{f \sim R} \left[ e^{-\eta \loss_{f,p}(Z)} \right] > e} 
                   + 1
      \right] ,
\end{align*}
where Jensen's inequality was applied for the first inequality. 
It remains to show that 
\begin{align*}
\E_{Z \sim P} \left[ 
    \E_{f \sim R} \left[ e^{-\eta \loss_{f,p(Z)}} \right] 
    \cdot \ind{\E_{f \sim R} \left[ e^{-\eta \loss_{f,p}(Z)} \right] > e} 
\right] < \infty .
\end{align*}
Rewriting the LHS, we have
\begin{align*}
\E_{Z \sim P} \left[ 
    \E_{f \sim R} \left[ \left( \frac{p_f}{p} \right)^\eta \right] 
    \cdot \ind{\left( \frac{p_f}{p} \right)^\eta > e} 
\right] 
&\leq \E_{Z \sim P} \left[ 
            \E_{f \sim R} \left[ \left( \frac{p_f}{p} \right)^\eta \right] 
        \right] \\
&\leq \left( \E_{Z \sim P} \left[ 
            \E_{f \sim R} \left[ \frac{p_f}{p} \right] 
        \right] \right)^\eta \\
&= 1 ,
\end{align*}
where the inequality follows from $\eta \leq 1$, the concavity of the map $x \mapsto x^\eta$, and Jensen's inequality. The claim thus follows.

Now that we have shown that $\KL(p; q_0 \pipes q_n)$ is well-defined for all $n$, we also conclude from assumption \eqref{eqn:f-star-finite} that we may always take a sequence such that $\KL(p; q_0 \pipes q_n) < \infty$ for all $n$. 
Moreover, from \eqref{eqn:not-that-negative}, this can be strengthened to 
$\KL(p; q_0 \pipes q_n) \in [ -\eta^{-1} \log 2, \infty)$, and so this quantity is finite as desired.

In the remainder of this section, the two cases of loss assumptions are treated simulataneously (recall that $q_0$ is defined differently for each).

\subsubsection*{Step 1: Existence of minimizer $\bar{q}_n$ in convex hull of finite sequence}

Let $(q_n)_{n \geq 1}$ be a sequence in $\mathcal{C}$ for which $\KL(p; q_0 \pipes q_n) \rightarrow \inf_{q \in \mathcal{C}} \KL(p; q_0 \pipes q)$. 
From the argument above we may restrict the sequence to one for which $\KL(p; q_0 \pipes q_n)$ is finite for all $n$. 
Take $\mathcal{C}_n$ to be $\convhull( \{q_1, \ldots, q_n\} )$.

We introduce the representation $D(t): \Delta^{n-1} \rightarrow \reals_+$, where $D(t) = \KL(p; q_0 \pipes q_t)$ with $q_t = \sum_{j=1}^n t_j q_j$.

The first claim is that $t \mapsto D(t)$ is a continuous function. Li's Lemma 4.2 proves continuity of $D$ when $q_0 = p$, $\KL(p \pipes q_i) < \infty$ for $i \in [n]$ and each $q_i$ is a probability distribution. However, inspection of the proof reveals that the result still holds for general $q_0$ and when both $q_0$ and $q_i$ are only pseudoprobability densities, as long as we still have $\KL(p; q_0 \pipes q_i) < \infty$ for $i \in [n]$. But we already have established the latter requirement, and so $D$ is indeed continuous. Since $D$ also has compact domain, it follows that $D$ is globally minimized by an element in $\mathcal{C}_n$. Call this element $\bar{q}_n$.

\subsubsection*{Step 2: Beneficial properties of minimizer $\bar{q}_n$}

We claim for all $q \in \mathcal{C}_n$ that $\int p \frac{q}{\bar{q}_n} \leq 1$. This follows from a suitably adapted version of Li's Lemma 4.1. First, we observe that even though Li's Lemma 4.1 is for the case of the KL divergence $\KL(p \pipes q) = \int p \log \frac{p}{q}$, changing the $\log p$ term to $\log q_0$ has no effect on the proof. Therefore, this result also works for $\KL(p; q_0 \pipes q)$. Next, the proof works without modification even when its $q^*$ and $q$ are only pseudoprobability densities. To apply Li's Lemma 4.1, \emph{mutatis mutandis}, we instantiate its $\mathcal{C}$ as $\mathcal{C}_n$, its $p$ as $p$, its $q$ as $q$, and its $q^*$ as $\bar{q}_n$.

\subsubsection*{Step 3: $(\log \bar{q}_n)_n$ is Cauchy sequence in $L_1(P)$}

We can find a sequence $(\bar{q}_n)_{n \geq 1}$ such that $\{\KL(p; q_0 \pipes \bar{q}_n)\}$ both is non-increasing and converges to $\inf_{q \in \mathcal{C}} \KL(p \pipes q)$.

Next, let $n \leq m$ throughout the rest of this step and observe that
\begin{align*}
\KL(p; q_0 \pipes \bar{q}_n) - \KL(p; q_0 \pipes \bar{q}_m) 
= \int p \log \frac{p}{\frac{p \bar{q}_n}{\bar{q}_m} / c_{m,n}} + \log \frac{1}{c_{m,n}}
\end{align*}
with $c_{m,n} := \int \frac{p \bar{q}_n}{\bar{q}_m}$.

Now, due to the normalization by $c_{m,n}$ the first term on the RHS is a KL divergence and hence nonnegative. Also, since $c_{m,n} \leq 1$, the second term also is nonnegative.

Next, observe that $\KL(p; q_0 \pipes \bar{q}_n) - \KL(p; q_0 \pipes \bar{q}_m) \rightarrow 0$ as $n,m \rightarrow \infty$, and so we have
\begin{align*}
\int p \log \frac{p}{\frac{p \bar{q}_n}{\bar{q}_m} / c_{m,n}}
= \KL\left( p \pipes \frac{p \bar{q}_n}{\bar{q}_m} / c_{m,n} \right) \rightarrow 0
\end{align*}
as well as
\begin{align*}
\log \frac{1}{c_{m,n}} \rightarrow 0 
\quad \Rightarrow \quad 
c_{m,n} \rightarrow 1 .
\end{align*}

Next, we apply the following inequality due to Barron/Pinsker, holding for any probability distributions $p_1$ and $p_2$:
\begin{align*}
\int p_1 | \log(p_1) - \log(p_2) | \leq \KL(p_1 \pipes p_2)  + \sqrt{2 \KL(p_1 \pipes p_2)} .
\end{align*}
This yields
\begin{align*}
\int p \left| \log \frac{p}{\frac{p \bar{q}_n}{\bar{q}_m} / c_{m,n}} \right| \rightarrow 0.
\end{align*}
Since $c_{m,n} \rightarrow 1$, it therefore follows that
\begin{align*}
\int p | \log(\bar{q}_n) - \log(\bar{q}_m) | \rightarrow 0 .
\end{align*}

Therefore $(\log(\bar{q}_n))_{n \geq 1}$ is a Cauchy sequence in $L_1(P)$, and from the completeness of this space, $\log(\bar{q}_n)$ converges to some $\log(q^*) \in L_1(P)$.

Finally, we observe that $\KL(p; q_0 \pipes q^*) = \lim_{n \rightarrow \infty} \KL(p; q_0 \pipes \bar{q}_n)$ since
\begin{align*}
\KL(p; q_0 \pipes q^*) - \lim_{n \rightarrow \infty} \KL(p; q_0 \pipes \bar{q}_n)
&= \lim_{n \rightarrow \infty} \int p (\log \bar{q}_n - \log q^*) \\
&\leq \lim_{n \rightarrow \infty} \int p |\log \bar{q}_n - \log q^*| \\
&= 0 .
\end{align*}

\section{Definitions and conventions concerning $\infty$ and $- \infty$} 
\label{app:infinity-new}

For general losses we allow the loss to take on the value $\infty$, 
and for density estimation under log loss we allow the loss to take on the value $\infty$ and to be unbounded from below; see Appendix~\ref{sec:infinities-log-loss} for a full description of our assumptions in this latter setting. 
We thus need to take care to avoid ambiguous expressions such as $\infty - \infty$; here we follow the approach of \cite{grunwald2004game}. We generally permit operations on
the extended real line $[- \infty,\infty]$, with definitions and
exceptions as in \cite[Section 4]{rockafellar1970convex}. For a given
distribution $P$ on some space $\cU$ with associated $\sigma$-algebra,
we define the {\em extended random variable\/} $U$ as any measurable
function $U: \cU \rightarrow \reals \cup \{-\infty, \infty\}$. We
say that $U$ is {\em well-defined\/} if either $P(U = \infty) = 0$
or $P(U = - \infty) = 0$. Now let $U$ be a well-defined extended random variable.
For any function $f: [ - \infty, \infty ] \rightarrow  [ -
\infty,\infty]$, we say that $f(U)$ is well-defined if either  $P(f(U) = \infty) = 0$
or $P(f(U) = - \infty) = 0$ and we
abbreviate the expectation $\Exp_{U \sim P} [f(U)]$ to $\Exp[f]$,
hence we think of $f$ as an extended random variable itself.  If $f$ is bounded
from below and above $\Exp[f]$ is defined in the usual manner.
Otherwise we interpret $\Exp[f]$ as $\Exp[f^+] + \Exp[f^-]$ where
$f^+(u) := \max \{f(u),0 \}$ and $f^-(u) := \min \{f(u),0\}$, allowing
either $\Exp[f^+]= \infty$ or $\Exp[f^-] = - \infty$, but not both.
In the first case, we say that $\Exp[f]$ is well-defined; in the
latter case, $\Exp[f]$ is undefined. In the remainder of this section
we introduce conditions under which all extended random variables and
all expectations occurring in the main text are always well-defined.

The quantities which we need to show to be well-defined, both in the case of general losses and log loss, are (i) the risk for deterministic estimators; (ii) the risk for randomized estimators; (iii) the excess risk for either deterministic or randomized estimators; 
and (iv) certain ESIs and posterior expectations of annealed expectations. The GRIP is handled separately in Appendix~\ref{app:grip}.

\subsection{When the loss is uniformly bounded from below (general losses)}
\label{sec:infinities-ulb}

Here, we show that the relevant expressions are well-defined when the loss is uniformly bounded from below.

\subsubsection*{Risk for deterministic/randomized estimators and relevant comparators}

We first show that the risk of any deterministic estimator is well-defined. Our assumption that the loss is uniformly bounded from below is equivalent to the existence of a finite constant $\|\loss_-\|_\infty$ for which
\begin{align}
\inf_{f \in \cF} \inf_{z \in \cZ} \loss_f(Z) \geq -\|\loss_-\|_\infty . \label{eqn:bounded-below}
\end{align}
We thus have for any $f \in \cF$  that $\E_{Z \sim P}[ (\loss_f(Z))^{-}] > -  \infty$, and so the risk $\E_{Z \sim P} [ \loss_f(Z) ]$ is well-defined. Moreover, since $\inf_{f \in \cF} \E [ \loss_f(Z) ] > -\infty$, we also have that for any distribution $\dol$ on $\cF$ that $\E_{f \sim \dol} \left[ \E_{Z \sim P} [ \loss_f(Z) ] \right]$ is well-defined.

For all comparators $\tilde{f}$ used in this paper, assumption \eqref{eqn:bounded-below} also implies that
\begin{align*}
\inf_{z \in \cZ} \loss_{\tilde{f}}(Z) > -\infty .
\end{align*}
To see this, observe that the only comparators we use from the set $\allactionset \Setminus \cF$ are GRIPs (which for a given $z \in \cZ$ cannot obtain loss lower than $\inf_{f \in \cF} \loss_f(z)$) and versions of the loss of a GRIP or some $f \in \cF$ that are shifted by a finite constant. Thus, the risk is well-defined for all comparators used in this paper.

\subsubsection*{Excess risk for randomized estimators}

Next, the excess risk of any randomized estimator relative to a non-trivial comparator also is well-defined, since, by definition of a non-trivial comparator $\tilde{f}$ and the uniformly-bounded-below assumption, 
we have $-\infty < \E_{Z \sim P} \left[ \loss_{\tilde{f}}(Z) \right] < \infty$.

\subsubsection*{ESI / Posterior-expectation of annealed expectations}

Finally, we verify that all ESIs and annealed expectations of excess losses also are well-defined. The relevant quantities are (for all non-trivial comparators $\tilde{f}$)
\begin{align}
\E_{Z \sim P} \left[ e^{\eta \left( \loss_{\tilde{f}}(Z) - \loss_f(Z) \right)} \right] \quad \text{for all } f \in \cF \label{eqn:well-defined-esi}
\end{align}
and
\begin{align}
\E_{f \sim Q} \left[ -\frac{1}{\eta} \log \E_{Z \sim P} \left[ e^{\eta \left( \loss_{\tilde{f}}(Z) - \loss_f(Z) \right)} \right] \right] \quad \text{for all } Q \in \Delta(\cF) . \label{eqn:well-defined-annealed}
\end{align}

A potential issue with the ESI \eqref{eqn:well-defined-esi} being well-defined is that we can have both $\loss_{\tilde{f}}(z) = +\infty$ and $\loss_f(z) = +\infty$ for all $z$ in some set $A \subset \cZ$ of $P$-measure zero. To show that the expectation is well-defined, we define for $j = 1, 2, \ldots$ the random variable
\begin{align*}
g_j(Z) = \exp \left( \eta \left( \bigl(j \opmin \loss_{\tilde{f}}(Z) \bigr) - \loss_f(Z) \right) \right) .
\end{align*}
Now, for each $j = 1, 2, \ldots$, the expectation $\E [ g_j(Z) ]$ is
well-defined. Moreover, letting $A$ be precisely the subset of $\cZ$
for which $\loss_{\tilde{f}}(z) = +\infty$, it holds that $\{g_j\}$
converges to $\exp \left( \eta(\loss_{\tilde{f}} - \loss_f) \right)$
pointwise on $\cZ \Setminus A$. Hence, from Levi's
monotone convergence theorem, $\E_{Z \sim P} \left[ e^{\eta \left(
      \loss_{\tilde{f}}(Z) - \loss_f(Z) \right)} \right]$ is
well-defined.

Next, we show that annealed expectations of the form \eqref{eqn:well-defined-annealed} also are well-defined. From H\"older's inequality,
\begin{align*}
\E \left[ e^{\eta (\loss_{\tilde{f}}(Z) - \loss_f(Z))} \right] 
&= \E \left[ e^{\eta \loss_{\tilde{f}}(Z)} e^{-\eta \loss_f(Z)} \right] \\
&\leq e^{\|\loss_-\|_\infty} \E \left[ e^{\eta \loss_{\tilde{f}}(Z)} \right] \\
&< \infty ,
\end{align*}
where the final inequality follows because $\loss_{\tilde{f}}(Z) < \infty$ with probability 1. 
Therefore, the negative logarithm of the above is lower bounded by a finite negative constant that is independent of $f \in \cF$. It follows that \eqref{eqn:well-defined-annealed} is well-defined.

\subsection{Log loss}
\label{sec:infinities-log-loss}
In the common case of log loss with uncountable sample spaces, the
loss is not always uniformly bounded from below; see
Example~\ref{ex:densagain} below for a concrete illustration.  To
allow for this case while avoiding issues with infinities we need to
make the alternative assumptions of Section~\ref{sec:extended-intro},
which we now discuss. Recall that we assumed for all
$f \in \cF$ that $p_f$ is absolutely continuous with respect to a
common dominating measure $\mu$, and that furthermore we have (\ref{eqn:f-star-finite}) and 
(\ref{eqn:entropy-bounded-below}). 
To
motivate these assumptions, observe that $H(P)$ is the Bayes risk with
respect to all possible probability measures, whereas $\KL(P \pipes
P_{\fopt})$ is the approximation error due to playing the optimal
in-model predictor $\fopt$ rather than $P$. Now,
\eqref{eqn:f-star-finite} is a reasonable requirement, as it simply
means that the approximation error is finite; this is discussed
further in Example~\ref{ex:densagain}. 
Now, if we have $H(P) = -\infty$, then in light of
\eqref{eqn:f-star-finite}, we would also have to have 
$\E_{Z \sim P} \left[ \loss_{\fopt}(Z) \right] = -\infty$, 
which would imply that for any $f \in \cF$ with $\E[\loss_f] \neq \E [\loss_{\fopt}]$, 
the excess risk is infinite; this would make learning meaningless. 
We thus\footnote{A referee asked the natural question why we do not simply impose the more standard condition that $P \ll P_f$ for all $f \in \cF$, thus avoiding use of differential entropy. But this is not sufficient, as explained below \eqref{eq:refereeasks}.}

\subsubsection*{Risk for deterministic estimators}

Because for log loss we do not assume that losses are bounded from below, we need to ensure that the risk is well-defined.

We do this in two steps. First, we show that $\KL(P \pipes Q)$ is well-defined for any probability distribution $Q$ with density $q$ (with respect to $\mu$). 
We do this by showing that $\E \left[ \left( \log \frac{p}{q} \right)^- \right] > -\infty$:
\begin{align*}
\E[\ind{q/p > 1}(-\log q + \log p)] 
&= \E[-\log (\ind{q/p > 1} \cdot (q/p) + \ind{q/p \leq 1} \cdot 1) ] \\
&\geq - \log \E [ \ind{q/p > 1} \cdot (q/p) + \ind{q/p \leq 1} \cdot 1 ] \\
&\geq - \log 2 ,
\end{align*}
where the application of Jensen's inequality for the first inequality is legitimate because the expectation is of a nonpositive quantity. The above holds in particular for $q$ set to any $p_f$ (for $f \in \cF$). 
Next, we use the decomposition
\begin{align}\label{eq:refereeasks}
\E [ \loss_f ] = \E [ -\log p_f + \log p - \log p ] = \KL(P \pipes Q) + H(P) .
\end{align}
Since the KL divergence term is nonnegative and $H(P) < -\infty$ (recall assumption \eqref{eqn:entropy-bounded-below}), the above is well-defined.

We note that it is not sufficient to replace
(\ref{eqn:entropy-bounded-below}) by the standard requirement that
$P \ll P_f$ for all $f \in \cF$, for then (\ref{eq:refereeasks}) may
become undefined. To see this, note that, for two probability measures
$P$ and $R$, we may have $\KL(P \pipes R)= \infty$ even if $P \ll R$
(take, for example, $P$ a distribution on $\naturals$ with mass
function $p(i) \propto i^{-1 - \alpha}$ for $0 < \alpha \leq 1$ and
$R$ with mass function $r(i) = 2^{-i}$). Since $H(P)$ defined relative
to base measure $R$ is equal to $- \KL(P \pipes R)$ we may in general
also have $H(P) = -\infty$ even if $P$ has a density relative to
$R$. Thus, without the requirement \eqref{eqn:entropy-bounded-below} we could
have $\KL(P \pipes Q) + H(P) = \infty - \infty$ which is undefined.

\subsubsection*{Risk for randomized estimators}

The above argument can be trivially modified (adding an outer expectation over $f \sim \dol$ everywhere) to show that the risk of any randomized estimator $\dol$ is also well-defined.

\subsubsection*{Excess risk with respect to randomized estimators}

Finally, because we only consider situations in this paper for which the GRIP obtains risk less than positive infinity, the excess risk of any $\dol$ with respect to the GRIP is well-defined; the same is true for the excess risk with respect to the comparator $\fopt$, since we only consider situations where the risk of $\fopt$ is close to the risk of the GRIP.

\subsubsection*{ESI / Posterior-expectation of annealed expectations}

Finally, we verify that all ESIs and annealed expectations of excess losses also are well-defined. The relevant quantities are (for all non-trivial comparators $\tilde{f}$)
\begin{align}
\E_{Z \sim P} \left[ e^{\eta \left( \loss_{\tilde{f}}(Z) - \loss_f(Z) \right)} \right] \quad \text{for all } f \in \cF \label{eqn:well-defined-esi-ii}
\end{align}
and, taking the comparator to be the GRIP $\grip{\eta}$ as this is all that we require for annealed expectations in this paper,
\begin{align}
\E_{f \sim Q} \left[ -\frac{1}{\eta} \log \E_{Z \sim P} \left[ e^{\eta \left( \grip{\eta}(Z) - \loss_f(Z) \right)} \right] \right] \quad \text{for all } Q \in \Delta(\cF) . \label{eqn:well-defined-annealed-ii}
\end{align}

A potential issue with the ESI \eqref{eqn:well-defined-esi-ii} being well-defined is that we can have $\loss_{\tilde{f}}(z) = \loss_f(z) = +\infty$ or $\loss_{\tilde{f}}(z) = \loss_f(z) = -\infty$ for all $z$ in some set $A \subset \cZ$ of $P$-measure zero. To show that the expectation is well-defined, we define for $j = 1, 2, \ldots$ the random variable
\begin{align*}
g_j(Z) = \exp \left( \eta \left( \bigl[j \opmin \loss_{\tilde{f}}(Z) \bigr] - \bigl[ (-j) \opmax \loss_f(Z) \bigr] \right) \right) .
\end{align*}
Now, for each $j = 1, 2, \ldots$, the expectation $\E [ g_j(Z) ]$ is well-defined. Moreover, letting $A$ be precisely the subset of $\cZ$ for which either $\loss_{\tilde{f}}(z) = +\infty$ or $\loss_f(z) = -\infty$, it holds that $\{g_j\}$ converges to $\exp \left( \eta(\loss_{\tilde{f}} - \loss_f) \right)$ pointwise on $\cZ \Setminus A$. Hence, from Beppo Levi's monotone convergence theorem, $\E_{Z \sim P} \left[ e^{\eta \left( \loss_{\tilde{f}}(Z) - \loss_f(Z) \right)} \right]$ is well-defined.

Finally, we verify that \eqref{eqn:well-defined-annealed-ii} is well-defined. Indeed, it is well-defined as a trivial consequence of $\E_{Z \sim P} \left[ e^{\eta \left( \grip{\eta}(Z) - \loss_f(Z) \right)} \right] \leq 1$ which holds by virtue of the comparator being the GRIP.

\begin{myexample}{Density Estimation} \label{ex:densagain} Consider
  the Gaussian scale family with $\cZ = \reals$ and $\{ p_f \mid f \in
  \cF \}$ where $\cF = \reals^+$ and $p_f(y) \propto \exp(-y^2 /2 f)$,
  i.e., $p_f$ is the density, relative to standard Lebesgue measure, of
  the normal distribution with mean $0$ and variance $\sigma^2 :=
  f$. Then under log loss we have $\loss_f(y) = \frac{y^2}{f} +
  \frac{1}{2} \log (\pi(f))$. Obviously, we do not want to rule out a
  model as standard like this, yet the loss is unbounded from below,
  which illustrates the need for treating log-loss separately from
  other loss functions.  The requirements (\ref{eqn:f-star-finite})
  and (\ref{eqn:entropy-bounded-below}) above do allow for this model,
  as long as the underlying distribution $P$ (a) has a density
  relative to Lebesgue measure (otherwise
  (\ref{eqn:entropy-bounded-below}) does not hold); (b) is not
  too-heavy tailed (it needs to have a second moment, otherwise
  (\ref{eqn:f-star-finite}) does not hold), and (c) is not excessively
  peaked at $0$ (for example, the probability distribution $P$ on
  $(0,1/\exp(1))$ with density $p(x) = 1/(x \cdot \log^2 x)$ has $H(P)
  = -\infty$, but distribution $P'$ with density $p'(x) = 3/(x \cdot
  \log^4 x)$ has finite $H(P')$. If one restricts the model to contain
  only $f \geq \sigma^2_0$ for some $\sigma^2_0 > 0$, then the log
  loss is bounded from below, and the requirements
  (\ref{eqn:f-star-finite}) and (\ref{eqn:entropy-bounded-below}) do
  not need to be imposed; in that situation, one could allow for an
  underlying distribution $P$ with a point mass at some outcome, so
  that $P$ does not have a density relative to Lebesgue measure and
  $D(P \| P_{f^*}) = \infty$, yet all our concepts remain well-defined. 
\end{myexample}

\section{Comparative examples}
\label{app:examples}

\begin{myexample}{Bernstein condition does not hold, bounded excess risk} \label{ex:no-bernstein-bounded} 
Consider regression with squared loss, so that $\cZ = \cX \times \cY$. Select $\Pt$ such that $X$ and $Y$ are independent. 
Let $X$ follow the law $\Pt$ such that $\Pt(X = 0) = \Pt(X = 1) = \frac{a}{2}$, for $a := 2 - \frac{\pi^2}{6} \in (0, 1)$, 
and, for $j = 2, 3, \ldots$, $\Pt(X = j) = \frac{1}{j^2}$. 
Let $Y = 0$ surely. 
Take as $\cF$ the countable class $\{f_1, f_2, \ldots\}$ such that $f_1(1) = 0.5$ and $f_1$ is identically 0 for all other values of $x \in \cX$; for each $j = 2, 3, \ldots$, the function $f_j$ is defined as $f_j(0) = 1$, $f_j(j) = j$, and $f_j$ takes the value 0 otherwise.

It follows that $\fopt = f_1$, and for every $j > 1$ we have
$\xsrisk{f_j} = \frac{3 a}{8} + 1$. Thus, the excess risk is bounded
for all $f_j$. The witness condition holds because for all $j > 1$ we
have $\Pr(\xsloss{f_j} = 1) = a$ and
$\E [ \xsloss{f_j} \cdot \ind{\xsloss{f_j} \leq 1} ] \geq \frac{3
  a}{8}$. Also, it is easy to verify that the strong central condition
holds with $\eta = 2$. On the other hand, the Bernstein condition
fails to hold in this example because
$\E [ \xsloss{f_j}^2 ] = a + j^2 \rightarrow \infty$ as
$j \rightarrow \infty$, while the excess risk is finite. In fact, even
the variance of the excess risk is unbounded as
$j \rightarrow \infty$, precluding the use of a weaker variance-based
Bernstein condition as in equation (5.3) of
\cite{koltchinskii2006local}.  Therefore, Theorem
\ref{thm:main-bounded-a} still applies while, e.g., the results of
\cite{zhang2006information} and \cite{audibert2009fast} do not (see
Section \ref{sec:related-work}).
\end{myexample}

\begin{myexample}{Bernstein condition does not hold, unbounded excess risk } \label{ex:no-bernstein-unbounded} 
The setup of this example was presented in Example 5.7 of \Citet{erven2015fast} and is reproduced here for convenience. For $f_\mu$ the univariate normal density with mean $\mu$ and variance 1, let $\mathcal{P}$ be the normal location family and let $\cF = \{f_\mu : \mu \in \reals\}$ be the set of densities of the distributions in $\mathcal{P}$. Then, since the model is well-specified, for any $P \in \mathcal{P}$ with density $f_\nu$ we have $\fopt = f_\nu$. 
As shown in \Citet{erven2015fast}, the Bernstein condition does not hold in this example, although we note that the weaker, variance-based Bernstein condition of \cite[equation (5.3)]{koltchinskii2006local} does hold.  However, we are not aware of any analyses that make use of the variance-based Bernstein condition in the unbounded excess losses regime. 

Since the model is well-specified, the strong central condition holds with $\eta = 1$. Next, we show that the witness condition holds with $M = 2$, $u = 4$, and $c = 1 - \sqrt{\frac{2}{\pi}}$. 
From location-invariance, we assume $\nu > \mu = 0$ without loss of generality.

First, observe that the excess risk is equal to 
$\xsrisk{f_\mu} = \frac{1}{2} \nu^2$.

As $M = 2 < \infty$, the witness condition has two cases: the case of excess risk at least 2 and the case of excess risk below 2. We begin with the first case, in which $\nu \geq 1$. Then the contribution to the excess risk from the upper tail is
\begin{align*}
& \E \left[ \xsloss{f_\mu} \cdot \ind{\xsloss{f_\mu} > u \xsrisk{f_\mu}} \right] 
= \E \left[ \left( -\frac{\nu^2}{2} + X \nu \right) \cdot \ind{-\frac{\nu^2}{2} + X \nu > u \frac{\nu^2}{2}} \right] \\
= & \E \left[ \left( -\frac{\nu^2}{2} + X \nu \right) \cdot \ind{X > \frac{u \nu}{2} + \frac{\nu}{2}} \right] \leq \nu \E \left[ X \cdot \ind{X > \frac{u \nu}{2}} \right] ,
\end{align*}
which is at most
\begin{align*}
\nu \E \left[ X \cdot \ind{X - \nu > \left( \frac{u}{2} - 1 \right) \nu} \right] 
&= \nu \int_0^\infty \Pr(X \cdot \ind{X - \nu > (\frac{u}{2} - 1 ) \nu} > t) dt \\
&\leq \nu \frac{1}{\sqrt{2 \pi}} \frac{e^{-(\frac{u}{2} - 1)^2 \nu^2 / 2}}{(\frac{u}{2} - 1) \nu} = \frac{1}{\sqrt{2 \pi}} \frac{e^{-(\frac{u}{2} - 1)^2 \nu^2 / 2}}{(\frac{u}{2} - 1)} .
\end{align*}

Since $u = 4$, the above is at most $\frac{1}{\sqrt{2 \pi}}$ and so, in this regime, the witness condition indeed is satisfied with $c = 1 - \sqrt{2 / \pi}$.

Consider now the case of $\nu < 1$. In this case, the threshold simplifies to the constant $u$ and the upper tail's contribution to the excess risk is
\begin{align*}
\E \left[ \xsloss{f_\mu} \cdot \ind{\xsloss{f_\mu} > u} \right] 
&= \E \left[ \left( -\frac{\nu^2}{2} + X \nu \right) \cdot \ind{-\frac{\nu^2}{2} + X \nu > u} \right] \\
&= \E \left[ \left( -\frac{\nu^2}{2} + X \nu \right) \cdot \ind{X > \frac{u}{\nu} + \frac{\nu}{2}} \right] \leq \nu \E \left[ X \cdot \ind{X > \frac{u}{\nu}} \right] ,
\end{align*}
which is at most
\begin{align*}
\nu \E \left[ X \cdot \ind{X - \nu > \frac{u}{\nu} - \nu} \right] 
&= \nu \int_0^\infty \Pr(X \cdot \ind{X - \nu > \frac{u}{\nu} - \nu} > t) dt \\
&\leq \nu \frac{1}{\sqrt{2 \pi}} \frac{e^{-(\frac{u}{\nu} - \nu)^2 / 2}}{\frac{u}{\nu} - \nu} = \nu^2 \frac{1}{\sqrt{2 \pi}} \frac{e^{-(\frac{u}{\nu} - \nu)^2 / 2}}{u - \nu^2} .
\end{align*}
Since $u = 4$ and $\nu < 1$, the above is at most $\frac{\nu^2}{\sqrt{18 \pi}}$, and so the value of $c$ from before still works and the witness condition holds in this regime as well.
\end{myexample}

\begin{myexample}{Small-ball assumption violated} \label{ex:no-small-ball}
To properly compare to the small-ball assumption of \cite{mendelson2014learning}, we consider regression with squared loss in the well-specified setting, so that the parameter estimation error bounds of \cite{mendelson2014learning} directly transfer to excess loss bounds for squared loss. Take $X$ and $Y$ be independent. The distribution of $X$ is defined as, for $j = 1, 2, \ldots$, $\Pt(X = j) = p_j := \frac{1}{a} \cdot \frac{1}{j^2}$ for $a = \frac{\pi^2}{6}$. Let the distribution of $Y$ be zero-mean Gaussian with unit variance. For the class $\cF$, we take the following countable class of indicator functions: for each $j = 0, 1, 2, \ldots$, define $f_j(i) = \ind{i = j}$, for any positive integer $i$. Since $f_0(x) = \E [ Y \mid X = x ] = 0$ for all $x \in \{1, 2, \ldots\}$, we have $\fopt = f_0$. 

The small-ball assumption fails in this setting, since, for any constant $\kappa > 0$ and for all $j = 1, 2, \ldots$:
\begin{align*}
\Pr \left( |f_j - \fopt| > \kappa \|f_j - \fopt\|_{L_2(\Pt)} \right) 
\,\, \leq \,\, \Pr \left( |f_j - \fopt| > 0 \right) 
\,= \, p_j 
= \frac{1}{a j^2} \rightarrow 0 \text{ as } j \rightarrow \infty .
\end{align*}

On the other hand, the strong central condition holds with $\eta = \frac{1}{2}$, since, for all $j = 1, 2, \ldots$ and all $x$:
\begin{align*}
\E \left[ e^{-\eta \xsloss{f_j}} \right] 
= \E \left[ \frac{e^{-\eta (f_j(x) - Y)^2}}{e^{-\eta Y^2}} \right] 
= \int \frac{\frac{1}{\sqrt{2 \pi \eta^{-1}}} e^{-\eta (f_j(x) - Y)^2}}
                   {\frac{1}{\sqrt{2 \pi \eta^{-1}}}{e^{-\eta Y^2}}}
            p(Y) dy 
\end{align*}
which is equal to 1 for $\eta = \frac{1}{2}$, since $Y \sim \mathcal{N}(0, 1)$.

It remains to check the witness condition. 
Observe that, for each $j$, we have $\xsrisk{f_j} = p_j$. 

Next, we study how much of the excess risk comes from the upper tail, above some threshold $u$:
\begin{align}
\E \left[ \xsloss{f_j} \cdot \ind{\xsloss{f_j} > u} \right] 
&= \E \left[ \left( f_j^2(X) - 2 f_j(X) Y \right) \cdot \ind{f_j^2(X) - 2 f_j(X) Y > u} \right] \nonumber \\
&= p_j \E \left[ \left( 1 - 2 Y \right) \cdot \ind{1 - 2 Y > u} \right] \nonumber \\
&= p_j \left( \Pr \left( Y < \frac{1 - u}{2} \right) - 2 \E \left[ Y \cdot \ind{Y < \frac{1 - u}{2}} \right] \right) . \label{eqn:no-sb-yes-witness}
\end{align}
Now, let $K := \frac{u - 1}{2}$. It is easy to show that
\begin{align*}
\Pr \left( Y > K \right) 
\leq \frac{1}{\sqrt{2 \pi}} \frac{e^{-K^2 / 2}}{K} .
\end{align*}
In addition, for $u \geq 3$ (and hence $K \geq 1$), we have
\begin{align*}
\E \left[ Y \cdot \ind{Y > K} \right] 
&= \int_0^\infty \Pr(Y \cdot \ind{Y > K} > t) dt = \int_K^\infty \Pr(Y > t) dt \\
&\leq \int_K^\infty \frac{1}{\sqrt{2 \pi}} \frac{e^{-t^2 / 2}}{t} dt 
\leq \int_K^\infty \frac{1}{\sqrt{2 \pi}} e^{-t^2 / 2} dt \leq \frac{1}{\sqrt{2 \pi}} \frac{e^{-K^2 / 2}}{K} dt .
\end{align*}
Thus, taking $u = 3$, we see that \eqref{eqn:no-sb-yes-witness} is at most $p_j \sqrt{\frac{2}{\pi}} e^{-1/2} \leq \frac{p_j}{2}$, the witness condition therefore holds, and so we may apply the first part of Theorem \ref{thm:main-bounded-a}.
\end{myexample}

\end{document}